\newcolumntype{Y}{>{\RaggedRight\arraybackslash}X}
\newcommand{\bbR}{\mathbb{R}}
\def\eps{\varepsilon}
\newcommand{\one}{\mathds{1}}
\def\l{\left(}
\def\r{\right)}
\newcommand{\cE}{\mathcal{E}}
\def\TLp#1{\mathrm{TL}^{#1}}
\def\Wkp#1#2{\mathrm{W}^{#1,#2}}
\newcommand{\cP}{\mathcal{P}}
\newcommand{\cZ}{\mathcal{Z}}
\def\Lp#1{\mathrm{L}^{#1}}
\def\dd{\mathrm{d}}
\def\lb{\left\{}
\def\rb{\right\}}
\def\ls{\left[}
\def\rs{\right]}
\def\spaceBar{\, | \,}
\def\dTLp#1{d_{\TLp{#1}}}
\newcommand{\iid}{\stackrel{\mathrm{iid}}{\sim}}
\newcommand{\bbP}{\mathbb{P}}
\newcommand{\Id}{\mathrm{Id}}
\newcommand{\bbN}{\mathbb{N}}
\newcommand{\etaP}{\eta_{\mathrm{p}}}
\newcommand{\etaPTilde}{\tilde{\eta}_{\mathrm{p}}}
\newcommand{\Div}{\mathrm{div}}
\newcommand{\cF}{\mathcal{F}}
\newcommand{\cG}{\mathcal{G}}
\newcommand{\cS}{\mathcal{S}}
\newcommand{\cH}{\mathcal{H}}
\newcommand{\cI}{\mathcal{I}}
\newcommand{\cJ}{\mathcal{J}}
\newcommand{\cK}{\mathcal{K}}
\newcommand{\bbZ}{\mathbb{Z}}
\def\Ck#1{\mathrm{C}^{#1}}
\newcommand{\bbQ}{\mathbb{Q}}
\newcommand{\bbE}{\mathbb{E}}
\newcommand{\cN}{\mathcal{N}}
\newcommand{\E}{{\mathbb E}}
\newcommand{\dist}{\mathrm{dist}}
\def\Ckc#1{\Ck{#1}_{\mathrm{c}}}
\newcommand{\supp}{\mathrm{supp}}
\crefname{hypothesis}{Hypothesis}{Hypotheses}
\crefname{fact}{Fact}{Facts}
\definecolor{darkgreen}{rgb}{0.1,0.6,0.1}
\title{Analysis of Semi-Supervised Learning on Hypergraphs\thanks{
\textbf{Funding}: AW and AB were supported in part by NSF grant DMS-2152717. 
MT acknowledges the support of the EPSRC Mathematical and Foundations of Artificial Intelligence Probabilistic AI Hub (grant agreement EP/Y007174/1), the Leverhulme Trust through the Project Award ``Robust Learning: Uncertainty Quantification, Sensitivity and Stability'' (grant agreement RPG-2024-051) and the NHSBT award 177PATH25 ``Harnessing Computational Genomics to Optimise Blood Transfusion Safety and Efficacy''.}}
\author{Adrien Weihs \thanks{Department of Mathematics, University of California Los Angeles, Los Angeles, CA 90095, USA
  (\email{weihs@math.ucla.edu}, \email{bertozzi@math.ucla.edu}).}
\and Andrea L. Bertozzi \footnotemark[2] 
\and Matthew Thorpe \thanks{Department of Statistics, University of Warwick, Coventry, CV4 7AL, UK
  (\email{Matthew.Thorpe@warwick.ac.uk}).}}
\begin{document}

\maketitle

\begin{abstract}
Hypergraphs provide a natural framework for modeling multiway interactions. We analyze a class of variational semi-supervised learning problems posed on random geometric hypergraphs and establish asymptotic consistency in the large-data limit. In particular, we identify scaling regimes that ensure well-posedness—yielding nontrivial label propagation rather than collapse to a constant labeling—and show that discrete minimizers converge, in the continuum, to solutions of a density-weighted p-Laplacian equation. We also propose Higher-Order Hypergraph Learning (HOHL), a multiscale regularization scheme based on powers of Laplacians associated with hypergraph-induced subgraphs. For geometric point clouds, we analyze an efficient multiscale Laplacian surrogate for HOHL and prove convergence to a higher-order Sobolev-type seminorm. 
Numerical experiments on standard benchmarks support the practical utility of the resulting higher-order regularization.
\end{abstract}

\begin{keywords}
hypergraphs, non-parametric regression, semi-supervised learning, asymptotic consistency, multiscale problems
\end{keywords}

\begin{MSCcodes}
49J55, 49J45, 62G20, 65N12
\end{MSCcodes}

\section{Introduction}

This paper establishes continuum limits and well-posedness characterizations for variational semi-supervised learning on hypergraphs. Specifically, in the large-data regime we study the following model: given a point cloud with labels prescribed on a subset, we construct a data-driven hypergraph and recover the remaining labels as minimizers of an energy subject to interpolation constraints. In this formulation, the ambient geometry enters through the hypergraph structure, extending classical graph-based methods by allowing multiway couplings via hyperedges.

A central question is well-posedness: for which parameter configurations does the method yield nontrivial label propagation, rather than degenerating (or “collapsing”) to a constant labeling (see \cite{10.5555/2984093.2984243,elalaoui16} for illustrations of this phenomenon in graph learning)?

We address this through discrete-to-continuum limits as the number of data points tends to infinity. By identifying the continuum energies and Euler–Lagrange operators approximated by the discrete objectives, we characterize the parameter scalings---both in hypergraph construction and regularization strength---that prevent collapse, and we obtain principled guidance for hyperparameter selection. More broadly, the continuum viewpoint provides a common analytical language for comparing graph and hypergraph methods that may appear unrelated at finite sample size, revealing both shared behavior and meaningful differences in limiting regularization. This perspective motivates the taxonomy in Figure~\ref{fig:classificationAlgorithms}, which organizes several methods by their limiting Sobolev-type behavior. At the same time, belonging to the same limiting variational class should not be interpreted as strict finite-sample equivalence: as our numerical experiments show, discrete methods with related continuum limits may still differ substantially in their performance.



Our first contribution is a discrete-to-continuum study of Dirichlet-type
hypergraph energies \cite{scholkopfHyper2006}, obtained by aggregating pairwise
finite differences over hyperedges. We work with random geometric hypergraphs
endowed with product-type hyperedge weights that implement a soft
\(\varepsilon\)-clique rule: hyperedges are strongly weighted only when their
vertices are mutually \(\varepsilon\)-local, in contrast to
neighborhood-centered constructions. A technically substantial part of the analysis is the pointwise consistency of
the associated Euler--Lagrange operators. Unlike the graph case, the operator
contains products of many pairwise kernels and sums over several interacting
indices. Its continuum limit therefore requires controlling a genuinely
multi-index statistic, expanding the nonlinear \(p\)-Laplacian interaction, and
identifying the nontrivial constants produced by the geometry of the
product-type hyperedge kernel. This yields quantitative pointwise convergence
rates and a nondivergence-form representation of the limiting weighted
\(p\)-Laplacian operator induced by the hypergraph weights. We also establish \(\Gamma\)-convergence of the energies, obtaining a
sampling-density--weighted first-order Sobolev-type continuum limit and
delineating well-posed and ill-posed regimes in the semi-supervised setting as
functions of \(\varepsilon\), the regularization strength, and the ambient
dimension.


Our second contribution is Higher-Order Hypergraph Learning (HOHL), a multiscale regularization framework defined on general hypergraphs that leverages hyperedge structure beyond density reweighting of pairwise couplings. The abstract HOHL model penalizes powers of Laplacians on hypergraph-induced graphs. In the geometric point-cloud setting, where explicit hyperedge enumeration is computationally expensive, we study an efficient multiscale Laplacian surrogate that preserves the intended hierarchy of scales and regularity orders.
We then prove $\Gamma$-convergence to a higher-order Sobolev-type limit and characterize well-/ill-posedness thresholds as functions of the graph-construction parameters, the regularization order, and the dimension. Unlike the Dirichlet-type hypergraph energies, HOHL therefore allows control of how strongly the solution is smoothed locally across different regularity orders.


\begin{figure}
  \centering
  \includegraphics[width=\textwidth]{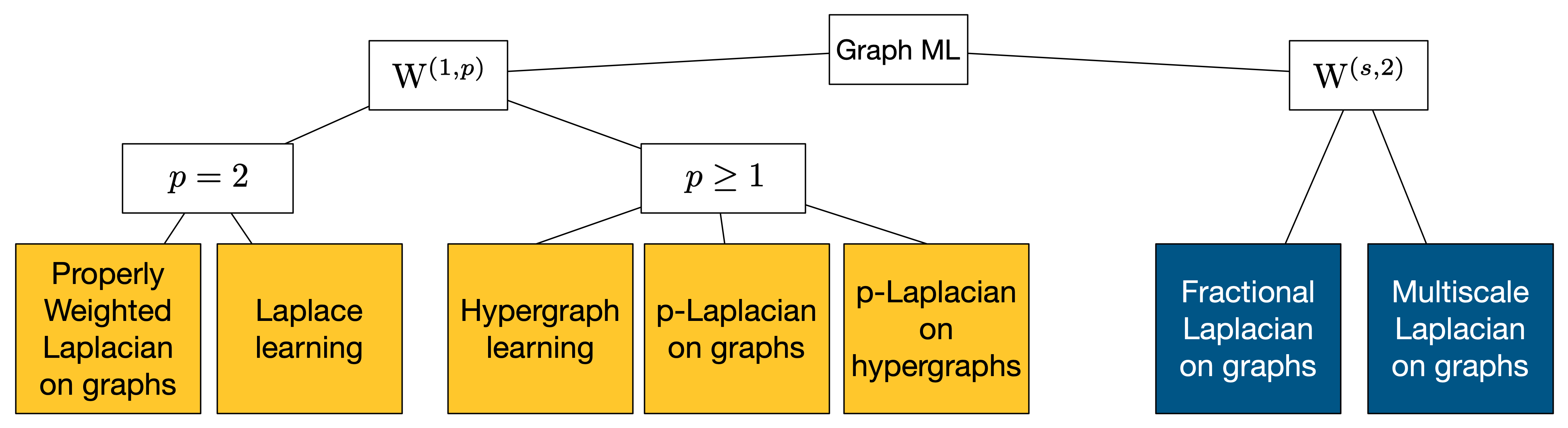}
  \caption{Classification of several algorithms based on their continuum limit. Edges indicate convergence to a Sobolev-type seminorm in the continuum limit, with each algorithm linked to its associated $\mathrm{W}^{(k,p)}$ space. Methods from left-to-right: \cite{calderSlepcev}, \cite{LapRef}, \cite{scholkopfHyper2006} (shown in this work), \cite{elalaoui16}, \cite{shi2025hypergraphplaplacianequationsdata}, \cite{Stuart}, \cite{Merkurjev} (shown in this work as the surrogate of HOHL) } \label{fig:classificationAlgorithms}
\end{figure}

The remainder of the paper is organized as follows.
Section~\ref{sec:background} reviews related work and the analytical framework used throughout, in particular $\Gamma$-convergence in $\TLp{p}$ spaces.
Section~\ref{sec:main} introduces the geometric hypergraph model, defines the various energies considered and states our main theorems.
Section~\ref{sec:proofs} provides detailed proofs of the pointwise and variational consistency results.
Section~\ref{sec:discussion} reports numerical experiments illustrating the practical implications of the theory.
Section~\ref{sec:conclusion} concludes and discusses directions for future research.

\section{Background} \label{sec:background}

In this section we review related works, introduce the $\TLp{p}$ topology, and briefly recall the notion of $\Gamma$-convergence. 
These tools underpin our variational convergence analysis, which concerns the convergence of minimizers of the discrete energies considered in this paper. 
This viewpoint is particularly natural in semi-supervised learning, where predictions are typically extracted from the minimizer (often real-valued) by a thresholding step. 

\subsection{Related works}

\paragraph{Discrete-to-continuum analysis for graph learning} Laplacian regularization on graphs~\cite{LapRef} is a foundational tool for label propagation and semi-supervised learning. 
A well-known limitation is degeneracy in low-label regimes when the number of unlabeled points is large~\cite{10.5555/2984093.2984243,elalaoui16}. 
Discrete-to-continuum analyses such as~\cite{Slepcev} provide a theoretical explanation and delineate well-posed versus degenerate regimes, yielding principled guidance for selecting graph-construction and regularization parameters. 

More broadly, asymptotic consistency analysis is a standard tool for understanding graph-based objectives in the large-data regime: one compares discrete energies $\cE_n$ on functions $v_n:\Omega_n\to\bbR$ over a point cloud $\Omega_n\subset\Omega\subset\mathbb{R}^d$ with a continuum energy $\cE_\infty$ acting on $v:\Omega\to\bbR$.
This program can be pursued in complementary ways:
\begin{itemize}
    \item \emph{Spectral convergence} studies convergence of eigenpairs of the discrete operators associated with $\cE_n$ to those of the corresponding continuum operator~\cite{NIPS2006_5848ad95,Trillos,CALDER2022123,10.1214/009053607000000640,JMLR:v12:pelletier11a,10.1093/imaiai/iaw016}.
    \item \emph{Pointwise convergence} establishes $\cE_n(v|_{\Omega_n})\to \cE_\infty(v)$ (equivalently, convergence of the associated Euler--Lagrange operators) as $n\to\infty$ for sufficiently smooth $v$~\cite{NIPS2006_5848ad95,COIFMAN20065,Gine,10.1007/11503415_32,10.1007/11776420_7,Singer,10.5555/3104322.3104459,weihs2023discreteToContinuum}.
    \item \emph{Variational convergence} (typically via $\Gamma$-convergence) concerns convergence of minimizers of $\cE_n$ to minimizers of $\cE_\infty$~\cite{calderGameTheoretic,cristoferi_thorpe_2020,Stuart,Trillos3,GARCIATRILLOS2018239,thorpe_theil_2019,Gennip}.
\end{itemize}
In the semi-supervised setting, continuum-limit results include \(p\)-Laplacian learning \cite{Slepcev}, fractional Laplacian regularization~\cite{weihs2023consistency}, Lipschitz- or \(\infty\)-Laplacian-type methods~\cite{pmlr-v40-Kyng15,doi:10.1137/18M1199241,Bungert,doi.org/10.48550/arxiv.2111.12370}, Poisson learning~\cite{98b487bb64994720ba648f45328e2135,bungert2024convergenceratespoissonlearning}, Ginzburg--Landau regularization~\cite{Gennip}, and reweighting strategies~\cite{shi2017weighted,shi2018generalization,shi2018error,calderSlepcev}.
Beyond semi-supervised learning, analogous limits have been established for other graph energies, including graph total variation~\cite{Trillos3}, graph cuts and Cheeger-type problems~\cite{JMLR:v17:14-490,trillos2017estimating,thorpeCheeger,doi:10.1137/16M1098309}, graph analogues of Mumford--Shah~\cite{Caroccia_2020}, and objectives arising in empirical risk minimization~\cite{garcia_trillos_murray_2017}.

\paragraph{Asymptotic consistency analysis for hypergraph regularization}

Hypergraphs have been advocated as a way to encode multiway relations beyond pairwise edges~\cite{zanette,neuhauser}, motivating a broad hypergraph learning literature~\cite{scholkopfHyper2006,fazeny,pmlr-v80-li18e,hgLearningPractice,hgPLaplacianGeometric}. 
Within variational regularization, one can distinguish two main strains. 
Let $G=(V,E)$ be a hypergraph, where $V$ is a vertex set and $E$ is a family of subsets $e\subset V$ (the hyperedges). 
Given a labeling function $v:V\to\bbR$ and hyperedge weight function $w:E \mapsto \bbR$, these strains can be represented schematically as follows:
\begin{enumerate}[label=(\roman*), leftmargin=*, itemsep=0.4em]

\item \emph{Pairwise-aggregation energies.}
Each hyperedge contributes through an \emph{aggregation of pairwise finite differences} over all vertex pairs it contains, thereby inducing weighted pairwise couplings~\cite{scholkopfHyper2006}:
\begin{equation}\label{eq:intro:clustering}
\sum_{e\in E}\;\sum_{\{x_i,x_j\}\subseteq e} \frac{w(e)}{\vert e \vert} \,\bigl(v(x_i)-v(x_j)\bigr)^2.
\end{equation}

\item \emph{Hyperedge-level couplings.}
Rather than aggregating over pairs, these methods assign a \emph{single} interaction term to each hyperedge, modifying the within-hyperedge coupling mechanism; a prototypical example uses a range/TV-type penalty motivated by Lov\'asz extensions of cut objectives and hypergraph total variation~\cite{TVHg,shi2025hypergraphplaplacianequationsdata}:
\begin{equation} \label{eq:TVHG}
    \sum_{e\in E} w(e)\,\max_{x_i,x_j\in e}\bigl|v(x_i)-v(x_j)\bigr|^{p}.
\end{equation}

\end{enumerate}

Much of the existing theory for hypergraph regularization is discrete (combinatorial or probabilistic), often comparing hypergraph constructions to graph analogues (e.g.,~\cite{hypergraphGraph,jostMulas,chitra,jost,mulas}). 
Compared with graphs, less is known about large-data limits and recovery-versus-collapse regimes for hypergraph regularizers.
Recent progress in this direction includes continuum-limit results for TV-type formulations in geometric settings~\cite{shi2025hypergraphplaplacianequationsdata}, which yield first-order (Sobolev-type) limiting regularization.

Our work complements these developments in two directions. First, we establish discrete-to-continuum limits and well/ill-posedness thresholds for the hypergraph energies of~\cite{scholkopfHyper2006} on random geometric hypergraphs (see Remark \ref{rem:weights} for a detailed comparison with the neighborhood hyperedge model used in \cite{shi2025hypergraphplaplacianequationsdata}). 
Second, we introduce and analyze in the large-data limit HOHL, an alternative operator-based hypergraph regularization framework. Instead of assigning each hyperedge a single Lovász/TV-type interaction term, HOHL builds graphs from hyperedges of different sizes and regularizes through powers of the
corresponding Laplacians. Thus the coupling induced by HOHL is multiscale and
operator-based; it also leads to higher-order Sobolev-type limits.

Finally, we note that asymptotic analysis of hypergraphs has also been studied in stochastic block model settings, where the hypergraph itself is random and sampled conditional on labels~\cite{NIPS2014_ca5fcdda,2da0e056-e747-371f-ab74-a1641e59236a}. 
This is fundamentally different from our setting, which studies regularization of label functions on a fixed (data-driven) hypergraph.

\paragraph{HOHL and connections to multiscale regularization}
Our HOHL model builds on higher-order Laplacian-based regularization \cite{Stuart,weihs2023consistency,pmlr-v15-zhou11b} and multiscale constructions~\cite{Merkurjev} on graphs.
This perspective is related to multi-hop and neighborhood-mixing operators in the graph neural network literature~\cite{NEURIPS2019_23c89427,AbuElHaija2019MixHop,monti}, but in contrast to (hyper)graph neural networks~\cite{besta2024demystifyinghigherordergraphneural} our approach is architecture-free and comes with continuum-limit guarantees.

\paragraph{Technical background}
Our proofs rely on nonlocal approximation results for \(\Wkp{1}{p}\) originating in~\cite{Bourgain01anotherlook} and developed via \(\Gamma\)-convergence in~\cite{ponce2004}, with discrete-to-continuum extensions in~\cite{Trillos3,Slepcev}. 
Related results for \(\Wkp{s}{2}\) approximations can be found in~\cite{Stuart,weihs2023consistency,trillos2022rates}, and quantitative rates in graph-based problems have been obtained in~\cite{calder2020rates,weihs2023discreteToContinuum,ElBouchairi}.

\subsection{The \texorpdfstring{$\TLp{p}$}{TLp} Space}

Let $\cP(\Omega)$ be the set of probability measures on $\Omega$ and $\cP_p(\Omega)$ be the set of probability measures on $\Omega$ with finite $p$th-moment.
We denote by $\Lp{p}(\mu)$ the set of functions $u$ that are measurable with respect to $\mu$ and such that $\int_{\Omega} \vert u(x)\vert^p \, \dd \mu(x) < +\infty$  .
The pushforward of a measure $\mu\in\cP(\Omega)$ by a map $T:\Omega\to \cZ$ is the measure $\nu\in\cP(\cZ)$ defined by
\[ \nu(A) = T_{\#} \mu(A) := \mu(T^{-1}(A)) = \mu\l \lb x\spaceBar T(x)\in A \rb\r \qquad \text{for all measurable sets } A. \]
For $\mu,\nu\in\cP_p(\Omega)$ we denote by $\Pi(\mu,\nu)$ the set of all probability measures on $\Omega\times\Omega$ such that the first marginal is $\mu$ and the second marginal is $\nu$, i.e. $(P_X)_{\#}\pi=\mu$ and $(P_Y)_{\#}\pi=\nu$ where $P_X: \Omega\times\Omega \ni (x,y) \mapsto x \in \Omega$ and $P_Y: \Omega\times\Omega \ni (x,y) \mapsto y \in \Omega$.
The following definition of the $\TLp{p}$ space and metric can be found in \cite{Trillos3}.

\begin{mydef}
For an underlying domain $\Omega$, define the set
\[ \TLp{p} = \lb (\mu,u) \spaceBar \mu \in \cP_p(\Omega), u \in \Lp{p}(\mu) \rb. \]
For $(\mu,u),(\nu,v) \in \TLp{p}$, we define the $\TLp{p}$ distance $\dTLp{p}$ as follows:
\[ \dTLp{p}((\mu,u),(\nu,v)) = \inf_{\pi \in \Pi(\mu,\nu)} \l \int_{\Omega \times \Omega} \vert x-y \vert^p + \vert u(x) - v(y) \vert^p \,\dd\pi(x,y) \r^{\frac{1}{p}}. \]
\end{mydef} 

The $\TLp{p}$ distance is related to the $p$-Wasserstein \cite{Santambrogio,villani2009} distance between the measures $\mu$ and $\nu$ and we refer to \cite{Trillos3} for more details. In particular, from the latter property, we can characterize convergence in the $\TLp{p}$ space as follows (\cite[Proposition 3.12]{Trillos3}).

\begin{proposition}
Let $(\mu,u) \in \TLp{p}$ where $\mu$ is absolutely continuous with respect to Lebesgue measure and let $\{(\mu_n,u_n)\}_{n=1}^\infty$ be a sequence in $\TLp{p}$. The following are equivalent:
\begin{enumerate}
    \item $(\mu_n,u_n)$ converges to $(\mu,u)$ in $\TLp{p}$;
    \item $\mu_n$ converges weakly to $\mu$ and there exists a sequence of transport maps $T_n$, $n=1\dots,\infty$, with $(T_{n})_\# \mu = \mu_n$ and $\int_\Omega \vert x - T_n(x) \vert \, \dd x \to 0$ such that
    \[
    \int_\Omega \vert u(x) - u_n(T_n(x))\vert^p \, \dd \mu(x) \to 0;
    \]
\end{enumerate}
\end{proposition}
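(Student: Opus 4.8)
The plan is to exploit the optimal-transport structure of the $\dTLp{p}$ distance: it dominates the $p$-Wasserstein distance $W_p$ between the measures (drop the function term in the infimum) and is dominated by the $\Lp{p}(\mu)$-distance along the diagonal coupling $(\mathrm{id}\times\mathrm{id})_\#\mu$; moreover $\mu\ll\mathrm{Leb}$ makes $\mu$ non-atomic, which is precisely what is needed to realize couplings by transport maps. The implication (2)~$\Rightarrow$~(1) is the soft one: given $T_n$ with $(T_n)_\#\mu=\mu_n$, the coupling $\pi_n:=(\mathrm{id}\times T_n)_\#\mu$ is admissible in the definition of $\dTLp{p}$, so $\dTLp{p}((\mu,u),(\mu_n,u_n))^p \le \int_\Omega |x-T_n(x)|^p\,\dd\mu + \int_\Omega |u(x)-u_n(T_n(x))|^p\,\dd\mu$, and both terms vanish by the hypotheses in (2) — the displacement term because, $\Omega$ being bounded, the $\Lp{1}$-control $\int_\Omega|x-T_n(x)|\,\dd\mu\to0$ upgrades to $\Lp{p}$-control via $|x-T_n(x)|\le\mathrm{diam}\,\Omega$.

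For (1)~$\Rightarrow$~(2) I would argue in four steps, converting near-optimal couplings into maps. Step~1 (preliminaries): since $\dTLp{p}\ge W_p$ on the measure components, $\TLp{p}$-convergence forces $W_p(\mu_n,\mu)\to0$, hence $\mu_n\rightharpoonup\mu$; and for each $n$ I pick a coupling $\pi_n\in\Pi(\mu,\mu_n)$ with $\int_{\Omega\times\Omega}|x-y|^p+|u(x)-u_n(y)|^p\,\dd\pi_n \le \dTLp{p}((\mu,u),(\mu_n,u_n))^p+\frac1n=:\delta_n^p\to0$. Step~2 (reduce to a continuous limit): continuous functions are dense in $\Lp{p}(\mu)$ and $\dTLp{p}((\mu,u),(\mu,\tilde u))\le\|u-\tilde u\|_{\Lp{p}(\mu)}$ via the diagonal coupling, so it suffices to carry out the construction — in a \emph{quantitative} form, with the output costs bounded by the input cost $\delta_n^p$ plus a controllable error — under the extra assumption that the limiting function is uniformly continuous, and then recover the general case by approximating $u$ in $\Lp{p}(\mu)$ by continuous functions and passing to a diagonal sequence of maps. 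Step~3 (block-product coupling): for continuous $u$ and a mesh parameter $\rho>0$, partition $\Omega$ into finitely many Borel cells $B_1,\dots,B_M$ of diameter $<\rho$ and set $\tilde\pi_n:=\sum_j \mu|_{B_j}\otimes\bigl(\nu_j^n/\mu(B_j)\bigr)$ with $\nu_j^n:=(P_Y)_\#\bigl(\pi_n|_{B_j\times\Omega}\bigr)$; this still lies in $\Pi(\mu,\mu_n)$, and a cell-by-cell comparison — using $\mathrm{diam}(B_j)<\rho$ on the $|x-y|^p$ term and the modulus of continuity $\omega_u(\rho)$ on the $|u(x)-u_n(y)|^p$ term — bounds each cost of $\tilde\pi_n$ by a dimensional constant times the corresponding cost of $\pi_n$, plus $O(\rho^p+\omega_u(\rho)^p)$. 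Step~4 (from $\tilde\pi_n$ to a map): each $\mu|_{B_j}$ is non-atomic, so there is a Borel map $S_j:B_j\to\Omega$ with $(S_j)_\#\bigl(\mu|_{B_j}\bigr)=\nu_j^n$; gluing these gives $T_n$ with $(T_n)_\#\mu=\mu_n$, and since $x$ is displaced only within a cell of diameter $\rho$ while the $Y$-marginal on each cell is preserved exactly, the costs of $T_n$ are controlled by those of $\tilde\pi_n$ up to a further $O(\rho^p+\omega_u(\rho)^p)$. Choosing $\rho=\rho_n\to0$ slowly enough yields $\int_\Omega|x-T_n(x)|^p\,\dd\mu\to0$ — hence the $\Lp{1}$-displacement condition by Jensen's inequality — and $\int_\Omega|u(x)-u_n(T_n(x))|^p\,\dd\mu\to0$.

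The step I expect to be genuinely delicate is Step~4, the conversion of couplings into maps, precisely because $u_n$ carries no regularity: one cannot approximate $\pi_n$ by a map whose image $T_n(x)$ merely stays close to the points $y$ to which $\pi_n$ sends the mass at $x$, since $|u_n(T_n(x))-u_n(y)|$ is not controlled by $|T_n(x)-y|$. The block-product device circumvents this: it preserves the $Y$-marginal of $\pi_n$ on each cell exactly, so the $u_n$-contribution of $T_n$ is inherited verbatim from that of $\pi_n$, and the error introduced on the $x$-side is harmless because — after the reduction in Step~2 — the limiting function $u$ is uniformly continuous while $x$ moves by at most the cell diameter. The remaining ingredients (weak and moment convergence of $\mu_n$, the two diagonal extractions in $n$, and the existence of the Borel maps $S_j$ from non-atomicity of $\mu$) are standard once this mechanism is in place.
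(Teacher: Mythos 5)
Your argument is correct and follows essentially the same route as the cited source \cite[Proposition 3.12]{Trillos3}: the easy implication via the coupling $(\mathrm{id}\times T_n)_{\#}\mu$, and the hard implication by reducing to the approximation of near-optimal transport plans by transport maps for a non-atomic source measure, which is exactly your block-product construction in Steps 3--4 combined with the density reduction of Step 2 (the non-atomicity coming from $\mu$ being absolutely continuous). One remark: you have implicitly, and correctly, read the displayed condition in item 2 as $\int_\Omega \vert u(x) - u_n(T_n(x))\vert^p \, \dd\mu(x) \to 0$; as printed the statement says $u(T_n(x))$, which is a typo.
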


To compare discrete functions with their continuum counterparts, we let
\(\mu_n := \frac1n\sum_{i=1}^n \delta_{x_i}\) denote the empirical measure associated with the samples \(\{x_i\}_{i=1}^n\), and let \(\mu\) denote the sampling measure on \(\Omega\).
To apply the above result, we require transport maps \(T_n\) pushing \(\mu\) forward to \(\mu_n\); the existence of such maps with suitable quantitative control is guaranteed by the following theorem~\cite[Theorem~1.1]{garcia_trillos_slepcev_2015}.

\begin{theorem}[Existence of transport maps]
\label{thm:Back:TLp:LinftyMapsRate}
Let $\Omega \subset \bbR^d$ be open, connected and bounded with Lipschitz boundary. Let $\mu$ be a probability measure on $\Omega$ with a density that is bounded above and below by positive constants. Let $x_i\iid\mu\in\cP(\Omega)$ and we denote the empirical measure of $\{x_i\}_{i=1}^n$ by $\mu_n$. 
Then, there exists a constant $C > 0$ such that $\bbP$-a.s., there exists a sequence of transport maps $\{T_n:\Omega \mapsto \Omega_n \}_{n=1}^\infty$ from $\mu$ to $\mu_n$ such that:
\begin{equation} \notag
\begin{cases}
\limsup_{n \to \infty} \frac{n^{1/2} \Vert \Id - T_n \Vert_{\Lp{\infty}} }{\log(\log(n))} \leq C & \text{if } d = 1; \\
\limsup_{n \to \infty} \frac{n^{1/2} \Vert \Id - T_n \Vert_{\Lp{\infty}} }{\log(n)^{3/4}} \leq C & \text{if } d = 2; \\
\limsup_{n \to \infty} \frac{n^{1/d} \Vert \Id - T_n \Vert_{\Lp{\infty}} }{\log(n)^{1/d}} \leq C &\text{if } d \geq 3.
\end{cases}
\end{equation}
\end{theorem}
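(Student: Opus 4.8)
The statement to be proved is Theorem~\ref{thm:Back:TLp:LinftyMapsRate}, the existence of transport maps $T_n:\Omega\to\Omega_n$ from $\mu$ to the empirical measure $\mu_n$ with the stated $\Lp\infty$ rates. Since this is quoted verbatim as \cite[Theorem 1.1]{garcia_trillos_slepcev_2015}, the honest answer is that I would not reprove it from scratch; instead the plan is to recall the architecture of the Garc\'ia Trillos--Slep\v{c}ev argument and indicate which pieces do the work. The high-level strategy is a two-step decomposition: first regularize the empirical measure by convolving at a small length scale $\delta_n$, producing an absolutely continuous intermediate measure $\tilde\mu_n$ that is close to $\mu$ in a strong sense; then transport $\mu$ to $\tilde\mu_n$ using classical stability/regularity of optimal transport between densities, and separately transport $\tilde\mu_n$ to $\mu_n$ by a local ``stitching'' map that moves each point only $O(\delta_n)$. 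Composing the two gives $T_n$, and $\Vert \Id - T_n\Vert_{\Lp\infty}$ is controlled by the sum of the two displacements.

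The key quantitative input is a bound on the $\infty$-transport distance between $\mu$ and $\mu_n$, equivalently on the maximal size of the cells in an optimal matching of the samples to the measure. First I would invoke the matching/combinatorial estimates: in $d\ge 3$ one uses that $n$ iid points are, with high probability, $\eps_n$-regularly spaced with $\eps_n\asymp (\log n/n)^{1/d}$, which is exactly the threshold for connectivity of the random geometric graph and also the scale at which a local bipartite matching between a regular grid of $\mu$-mass and the samples succeeds; the dyadic/scale-by-scale matching argument (transporting mass hierarchically across a sequence of nested partitions) then yields $\Vert \Id - T_n\Vert_{\Lp\infty}\lesssim (\log n/n)^{1/d}$. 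In $d=2$ the logarithmic power degrades to $(\log n)^{3/4}$, reflecting the well-known $\sqrt{\log n}$-type corrections in planar matching (the extra $1/4$ coming from the $\Lp\infty$ rather than $\Lp2$ control), and in $d=1$ one gets the iterated-log rate from order-statistics spacings of the uniform sample. Throughout, the reduction to $\mu$ having density bounded above and below by positive constants lets one pass between $\mu$ and Lebesgue measure on $\Omega$ up to multiplicative constants, and the Lipschitz-boundary, connectedness and boundedness of $\Omega$ are what make the hierarchical partitioning and the local matching well-defined near $\partial\Omega$.

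The main obstacle, and the technical heart of the cited proof, is the $d=2$ case: obtaining the sharp $(\log n)^{3/4}$ power in the $\Lp\infty$ norm. Naive concentration gives only $(\log n)^{1/2}$ in $\Lp2$ (the Ajtai--Koml\'os--Tusn\'ady bound) or a worse power in $\Lp\infty$, and upgrading to the correct exponent requires the careful multiscale construction: one builds the map on a geometrically decreasing sequence of scales, at each scale solving a matching problem whose error is controlled by a chaining/union-bound argument over dyadic squares, and one must track how the $\Lp\infty$ norms of the per-scale corrections aggregate. This is precisely where the $3/4$ arises as an interpolation between the $1/2$ of the $\Lp2$ theory and the extra half-power lost in passing to the sup norm. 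For the purposes of this paper, though, all of this is a black box: we only use the conclusion that $\Vert\Id-T_n\Vert_{\Lp\infty}\to 0$ (indeed at a polynomial-in-$n$ rate for $d\ge 3$), which suffices to verify condition~(2) of Proposition~\cite[3.12]{Trillos3} and hence to embed discrete and continuum functions in a common $\TLp p$ space.
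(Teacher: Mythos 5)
The paper does not prove this theorem at all: it is recalled verbatim as \cite[Theorem 1.1]{garcia\_trillos\_slepcev\_2015} and used purely as a black box, which is exactly the stance you take, so there is nothing in the paper to diverge from. Your sketch of the cited proof's architecture (classical $\infty$-matching results of Leighton--Shor for $d=2$, Shor--Yukich/Talagrand for $d\geq 3$, order statistics for $d=1$, then extension to general Lipschitz domains and bounded densities) is essentially faithful, modulo the minor point that the original argument bounds the $\infty$-transport distance directly via a domain decomposition into bi-Lipschitz images of cubes rather than through a convolution-regularized intermediate measure.
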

The probability measure $\mathbb{P}$ is defined in Section \ref{sec:main}.
In terms of the assumptions we introduce later, the conditions in the above theorem are given by~\ref{ass:Main:Ass:S1}, \ref{ass:Main:Ass:M1}, \ref{ass:Main:Ass:M2} and~\ref{ass:Main:Ass:D1}.

\subsection{\texorpdfstring{$\Gamma$}{Gamma}-Convergence} 

The appropriate framework to describe the convergence of variational problems is $\Gamma$-convergence from the calculus of variations. We only recall the key properties used in this paper similarly to what be found in \cite{Trillos3,Slepcev,weihs2023consistency} and refer to \cite{gammaConvergence} for more details.

\begin{mydef} 
Let $(Z,d_Z)$ be a metric space and $F_n:Z \to \bbR$ a sequence of functionals.
We say that $F_n$ $\Gamma$-converges to $F$ with respect to $d_Z$ if:
\begin{enumerate}
\item For every $z \in Z$ and every sequence $\{z_n\}$ with $d_Z(z_n,z) \to 0$:
\[ \liminf_{n \to \infty} F_n(z_n) \geq F(z); \]
\item For every $z \in Z$, there exists a sequence $\{z_n\}$ with $d_Z(z_n,z) \to 0$ and
\[ \limsup_{n\to \infty} F_n(z_n) \leq F(z). \]
\end{enumerate}
\end{mydef}

The notion of $\Gamma$-convergence allows one to derive the convergence of minimizers from compactness.

\begin{mydef} 
We say that a sequence of functionals $F_n:Z \to \bbR$ has the compactness property if the following holds: if $\{n_k\}_{k \in \bbN}$ is an increasing sequence of integers and $\{z_k\}_{k \in \bbN}$ is a bounded sequence in $Z$ for which $\sup_{k\in \bbN} F_{n_k}(z_k) < \infty$, then the closure of $\{z_k\}$ has a convergent subsequence. 
\end{mydef}

\begin{proposition}
\label{prop:Back:Gamma:minimizers}
Convergence of minimizers.
Let $F_n:Z \mapsto [0,\infty]$ be a sequence of functionals which are not identically equal to $\infty$.
Suppose that the functionals satisfy the compactness property and that they $\Gamma$-converge to $F:Z \mapsto [0,\infty]$.
Then
\[ \lim_{n\to \infty} \inf_{z\in Z} F_n(z) = \min_{z \in Z} F(z). \]
Furthermore, the closure of every bounded sequence $\{z_n\}$ for which \begin{equation} \label{eq:Back:Gamma:MinConv}
\lim_{n \to \infty} \left(F_n(z_n) - \inf_{z \in Z} F_n(z) \right) = 0
\end{equation}
has a convergent subsequence and each of its cluster points is a minimizer of $F$.
In particular, if $F$ has a unique minimizer, then any sequence satisfying~\eqref{eq:Back:Gamma:MinConv} converges to the unique minimizer of $F$.
\end{proposition}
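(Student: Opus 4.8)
The plan is to follow the classical argument for the fundamental theorem of $\Gamma$-convergence, splitting the proof into (i) convergence of the infima and (ii) precompactness of almost-minimizers together with optimality of their cluster points; the $\limsup$ (recovery sequence) inequality drives the upper bound on $\inf F_n$, while the $\liminf$ inequality combined with the compactness property drives the lower bound. Throughout, write $m_n = \inf_{z\in Z} F_n(z)$ and $m = \inf_{z \in Z} F(z)$.

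First I would prove $\limsup_{n\to\infty} m_n \le m$. Fix an arbitrary $z \in Z$ and take a recovery sequence $z_n \to z$ with $\limsup_{n} F_n(z_n) \le F(z)$, guaranteed by the second defining property of $\Gamma$-convergence. Since $m_n \le F_n(z_n)$, this gives $\limsup_n m_n \le F(z)$, and taking the infimum over $z$ yields $\limsup_n m_n \le m$. Next I would exhibit a bounded sequence $\{z_n\}$ with $F_n(z_n) - m_n \to 0$ (this is possible because each $F_n \not\equiv \infty$; boundedness is ensured in our applications by the hard label constraints, which confine admissible functions to a bounded subset of $\TLp{p}$ and make $F_n$ identically $+\infty$ elsewhere). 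By the previous step $m_n$ is bounded above for large $n$, so $\sup_n F_n(z_n) < \infty$, and the compactness property produces a subsequence $z_{n_k} \to z^\star$ for some $z^\star \in Z$. Applying the $\liminf$ inequality along this subsequence,
\[ F(z^\star) \;\le\; \liminf_{k\to\infty} F_{n_k}(z_{n_k}) \;=\; \liminf_{k\to\infty} m_{n_k} \;\le\; \liminf_{n\to\infty} m_n. \]
Chaining with the upper bound gives $m \le F(z^\star) \le \liminf_n m_n \le \limsup_n m_n \le m$, so all quantities coincide: the infimum $m$ is attained at $z^\star$, hence equals $\min_{z\in Z} F(z)$, and $\lim_{n} m_n = \min_{z} F(z)$, which is the first claim.

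For the "furthermore" part, let $\{z_n\}$ be any bounded sequence satisfying \eqref{eq:Back:Gamma:MinConv}. Since $m_n \to m < \infty$ and $F_n(z_n) - m_n \to 0$, we have $F_n(z_n) \to m$ and in particular $\sup_n F_n(z_n) < \infty$, so the compactness property again gives a convergent subsequence; repeating the $\liminf$-inequality computation above shows any such subsequential limit $z^\star$ satisfies $F(z^\star) \le \lim_n F_n(z_n) = m$, hence is a minimizer of $F$. If $F$ has a unique minimizer $z^\dagger$, then every subsequence of $\{z_n\}$ has a further subsequence converging to $z^\dagger$, and by the standard subsequence characterization of convergence in a metric space this forces $z_n \to z^\dagger$.

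I do not expect a genuine obstacle here: the mathematical content is entirely contained in the two $\Gamma$-convergence inequalities and the compactness hypothesis, all of which are assumed. The only points requiring care are bookkeeping around possibly infinite infima and, more substantively, verifying that the almost-minimizing sequences one works with are bounded so that the compactness property (stated for bounded sequences) applies — in the semi-supervised setting this is immediate, since the admissible set is cut out by hard constraints and is bounded in the $\TLp{p}$ metric, so almost-minimizers are automatically bounded.
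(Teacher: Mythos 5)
Your argument is the standard proof of the fundamental theorem of $\Gamma$-convergence, and the paper in fact gives no proof of this proposition — it is recalled as background from the $\Gamma$-convergence literature — so there is nothing to compare against; your two-step structure ($\limsup$ of the infima via recovery sequences, then compactness plus the $\liminf$ inequality for the lower bound and for optimality of cluster points) is exactly the intended argument, and the subsequence-of-a-subsequence step for the uniqueness claim is correct. The one point that deserves the emphasis you give it is real: because the paper's compactness property is stated only for \emph{metrically bounded} sequences, the first claim $\lim_n \inf_Z F_n = \min_Z F$ does not follow from the stated hypotheses alone — one additionally needs some almost-minimizing sequence to be bounded (otherwise minimizers can escape to infinity and $\liminf_n \inf_Z F_n$ can be strictly smaller than $\min_Z F$). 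You correctly flag this and observe that in the paper's applications the constraint set is bounded in $\TLp{p}$, which closes the gap there; strictly speaking this is a lacuna in the proposition as stated rather than in your proof.
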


In this paper, we show that our discrete objectives $\Gamma$-converge (with respect to the $\TLp{p}$-topology) to the appropriate continuum objectives. Then, we prove that the sequence of discrete minimizers is precompact in $\TLp{p}$ and, using Proposition \ref{prop:Back:Gamma:minimizers}, deduce that the latter converge to the continuum minimizers.

\section{Main results} \label{sec:main}

In this section, we present our main results as well as the relevant notation and assumptions used for our proofs. 

\subsection{General notation and setting}

For $z \in \bbR^d$ and $A \in \bbR^{d \times d}$, we denote by $(z)_i$ the $i$-th coordinate of $z$ and by $(A)_{ij}$ the $ij$-th element of $A$. We denote the Sobolev space of functions with $k$-th order derivatives in $\Lp{p}$ as \( \mathrm{W}^{k,p} \) \cite{leoni2017first}.

We will use the same probabilistic setting detailed in \cite{weihs2023consistency}. In particular, the idea is to consider a probability space with measure $\bbP$ in which elements are sequences $\{x_i\}_{i=1}^\infty$. Our results will be formulated in terms of $\bbP$, showing that certain properties hold for a set $\Psi$ of sequences $\{x_i\}_{i=1}^\infty$ with $\bbP(\Psi) = 1$.

\subsection{Hypergraph setting} 

Given a set of $n$ feature vectors $\Omega_n = \{x_i\}_{i=1}^n\subset \Omega \subset \bbR^d$ where we assume that $x_i\iid \mu\in\cP(\Omega)$, $\mu$ has density $\rho$, a length-scale $\eps >0$ and a function $\eta:[0,\infty) \mapsto [0,\infty)$, we can define weights $w_{\eps,ij}$ between vertices $x_i$ and $x_j$ as follows:
\( \notag 
w_{\eps,ij} = \eta\l\frac{\vert x_i - x_j \vert}{\eps}\r.
\)
The graph $(\Omega_n,W_{n,\eps})$ where $W_{n,\eps} = \{w_{\eps,ij}\}_{i,j=1}^n$ is called a random geometric graph \cite{DBLP:books/ox/P2003}.

We now define essential matrices related to such graphs. Let $D_{n,\eps}$ be the diagonal matrix with entries $d_{n,\eps,ii} = \sum_{j = 1}^n w_{\eps,ij}$ and define $\sigma_\eta = \frac{1}{d}\int_{\bbR^d} \eta(\vert h \vert) \vert h \vert^2 \,\dd h < \infty$.
The graph Laplacian is defined as
\( \notag
\Delta_{n,\eps} := \frac{2}{\sigma_\eta n\eps^{d+2}} (D_{n,\eps} - W_{n,\eps}).
\)
 The latter can be interpreted as a matrix $\Delta_{n,\eps}\in\bbR^{n\times n}$ or as an operator $\Delta_{n,\eps}:\Lp{2}(\mu_n)\to\Lp{2}(\mu_n)$ where $\mu_n=\frac{1}{n}\sum_{i=1}^n \delta_{x_i}$ is the empirical measure.

Given functions $u_n,v_n: \Omega_n \to \bbR$, we also define the $\Lp{2}(\mu_n)$ inner product: 
\begin{equation} \notag
\langle u_n, v_n \rangle_{\Lp{2}(\mu_n)} = \frac{1}{n} \sum_{i=1}^n u_n(x_i)v_n(x_i).
\end{equation} 
Such functions can be considered vectors in $\bbR^n$ and we will understand $u_n$ as both a function $u_n:\Omega_n\to \bbR$ and a vector $\bbR^n$.


We can generalize the random geometric graph weight model to create random geometric hypergraphs. In particular, we define the weight of a hyperedge of size $k+1$ by aggregating pairwise interactions as \begin{equation} \label{eq:main:hypergraphWeights}
w_{\eps,i_{0}\cdots i_{k}} = \prod_{j=1}^{k} \prod_{r=0}^{j-1} w_{\eps,i_j i_r}.
\end{equation}
This construction biases the model toward hyperedges whose constituent nodes lie within a shared neighborhood, effectively encoding a finer notion of locality. For instance, choosing \( \eta = \mathbf{1}_{[0,1]} \) yields \( w_{\varepsilon, i_0 \cdots i_k} > 0 \) if and only if the entire tuple $(x_{i_0},\dots,x_{i_k})$ lies within a common ball of radius \( \varepsilon \). 
In this sense, $\eps$ should be thought of as the length-scale of interaction between vertices. We denote by $t(k)$ the number of terms in the product $\prod_{j=1}^{k} \prod_{r=0}^{j-1} w_{\eps,i_j i_r}$. We refer to \cite{hgLearningPractice} for a review of other hyperedge models used in practice.

\begin{remark}[Comparison with the neighborhood hypergraph model] \label{rem:weights} 
In \cite{shi2025hypergraphplaplacianequationsdata}, the authors study the large-data limit of \eqref{eq:TVHG} on the neighborhood hypergraph $(\Omega_n,E)$ with $E=\{e_k\}_{k=1}^n$, $e_k=\Omega_n\cap B(x_k,\varepsilon)$, endowed with homogeneous weights $w(e_k)=1$. We contrast this construction with ours below.

Geometrically, in the neighborhood-hyperedge model with homogeneous weights, a hyperedge contains all points within distance $\varepsilon$ of $x_k$; thus two vertices $x_i,x_j$ can lie in the same hyperedge even if $|x_i-x_j|>\varepsilon$ (indeed, $|x_i-x_j|$ can be as large as $2\varepsilon$). By contrast, our product-type hyperedge weights enforce geometric coherence: a hyperedge receives large weight only when all pairwise kernel affinities $\eta(|x_i-x_j|/\varepsilon)$ within it are simultaneously large, so that a single distant pair strongly downweights the entire hyperedge. This implements a soft $\varepsilon$-clique rule. In the compactly supported case $\eta=\mathbf{1}_{[0,1]}$, it reduces to the hard condition $|x_i-x_j|\leq \varepsilon$ for all pairs, i.e. an $\varepsilon$-clique.

This geometric distinction has modeling consequences. Local hypergraph constructions typically rely on a homophily assumption: proximity (or strong affinity) in feature space correlates with label agreement. Under this premise, a multiway interaction is most reliable when the participating vertices are mutually similar, since joint label agreement is then plausible across all pairs in the group. Our product-type weighting encodes this directly: it assigns high weight only to hyperedges that are internally coherent (no “outlier” pair), whereas homogeneous neighborhood hyperedges can pool vertices that are each close to a common center but not necessarily close to one another---thereby diluting the intended homophily effect.

Lastly, product-type weights induce effective couplings controlled by local clique statistics, hence by $\rho$ and $\varepsilon$, producing data-adaptive smoothing. In the continuum, the density prefactor scales as $\sum_{k=1}^q \lambda_k \sigma_\eta^{(k,p)} \rho^{k+1}$ (Theorem \ref{thm:main}), whereas neighbourhood-based models typically yield $\rho$ \cite{shi2025hypergraphplaplacianequationsdata} (or $\rho^2$ for random geometric graphs \cite{Slepcev}). Thus the density dependence is both higher-degree and tunable via $q$ and $\{\lambda_k\}_{k=1}^q$.

\end{remark}

The weight construction introduced in~\eqref{eq:main:hypergraphWeights} serves as the foundation for the theoretical analysis in Theorems \ref{thm:pointwiseConsistency} and \ref{thm:main}. In particular, we reformulate the hypergraph learning energy \eqref{eq:intro:clustering} using this weight model in \eqref{eq:main:hypergraphEnergy}. With $$\eta_{\mathrm{p}}(x_{i_0},\dots,x_{i_k}) = \prod_{j=1}^k \prod_{r=0}^{j-1} \eta \left(\frac{\vert x_{i_j}- x_{i_r} \vert}{\eps}\right) = w_{\eps,i_{0}\cdots i_{k}},$$
we define the discrete $(k,p)$-Laplacian operators which we relate to the hypergraph learning energy \eqref{eq:main:hypergraphEnergy} in Proposition \ref{prop:eulerLagrange} as  
\begin{align}
\Delta_{n,\eps}^{(k,p)}(u)(x_{i_0}) &= \frac{1}{n^k\eps^{p + kd}} \hspace{-3mm}\sum_{i_1,\dots,i_k = 1}^{n}   \etaP(x_{i_0},\dots,x_{i_k}) \vert u(x_{i_1}) - u(x_{i_0}) \vert^{p-2} (u(x_{i_1}) - u(x_{i_0})). \notag
\end{align}
We note that the $(1,2)$-Laplacian is just $\Delta_{n,\eps}$ (up to normalization).

In order to introduce the continuum counterpart of $\Delta^{(k,p)}_{n,\eps}$, we first define 
\[\etaPTilde(z_1,\dots,z_k) = \left[ \prod_{s=1}^k \eta(\vert z_s\vert) \right]  \left[ \prod_{j=2}^k \prod_{r=1}^{j-1} \eta(\vert z_j - z_r \vert) \right],\]
and the constant
$\sigma_{\eta}^{(k,p)}  = \int_{(\bbR^d)^k}  \etaPTilde(\tilde{z}_1,\dots,\tilde{z}_k)  \vert (\tilde{z}_1)_d \vert^p \, \dd \tilde{z}_k \cdots \dd \tilde{z}_1.$
In Theorem \ref{thm:pointwiseConsistency}, we will establish the precise asymptotic relationship between $\Delta^{(k,p)}_{n,\eps}$ and 
    $
    \Delta_\infty^{(k,p)}(u)(x) = \frac{\sigma_{\eta}^{(k,p)}}{2\rho(x)}\Div(\rho^{k+1}\Vert \nabla u \Vert_2^{p-2} \nabla u)(x).
    $

\subsection{Variational problems for hypergraph learning}

For some $p > 1$ and a fixed hyperedge size $k \geq 1$, the classical hypergraph energy can be written as
\begin{equation} \label{eq:main:hypergraphEnergy}
    \cE_{n,\eps}^{(k,p)}(v) = \frac{1}{n^{k+1}\eps^{p + kd}} \sum_{i_0,\cdots, i_k = 1}^{n} \ls \prod_{j=1}^{k} \prod_{r=0}^{j-1} \eta\l \frac{\vert x_{i_j} - x_{i_{r}}\vert}{\eps} \r \rs \left\vert v(x_{i_1}) - v(x_{i_0}) \right\vert^p  
\end{equation}
for $v:\Omega \to \bbR$ while the associated discrete semi-supervised learning objective is
\[
\cF_{n,\eps}^{(k,p)}( (\nu,v) ) = \begin{cases} \cE_{n,\eps}^{(k,p)}(v) &\text{if } \nu = \mu_n \text{ and for } i\leq N, v(x_i) = y_i \\ +\infty & \text{else} \end{cases}
\]
for $(\nu,v) \in \TLp{p}(\Omega)$ and where $\{y_i\}_{i=1}^N \subset \{0,1\}$ are binary labels. 

In the continuum, we define
\begin{align}
\cE_{\infty}^{(k,p)}(v) &= \int_{\Omega} \int_{(\bbR^d)^k}\etaPTilde(z_1,\dots,z_k)  \left\vert \nabla v(x_0) \cdot z_1 \right\vert^p \rho(x_0)^{k+1} \, \dd z_k \cdots \dd z_1 \dd x_0 \notag \\
&= \int_{(\bbR^d)^k}\etaPTilde(z_1,\dots,z_k) \left\vert e \cdot z_1 \right\vert^p \dd z_k \cdots \dd z_1 \int_{\Omega} \Vert \nabla v(x_0) \Vert_2^p \,  \rho(x_0)^{k+1} \, \dd x_0 \label{eq:intro:isotropy} \\
&=: \sigma_\eta^{(k,p)} \int_{\Omega} \Vert \nabla v(x_0) \Vert_2^p \,  \rho(x_0)^{k+1} \, \dd x_0 \notag
\end{align}
where $e \in \bbR^d$ is any vector with $\Vert e \Vert_2=1$ and \eqref{eq:intro:isotropy} follows by isotropy of the kernels. 
The corresponding semi-supervised learning objectives are:
\[
\cF_{\infty}^{(k,p)}( (\nu,v) ) = \begin{cases} \cE_{\infty}^{(k,p)}(v) &\text{if } \nu = \mu, \, v \in \Wkp{1}{p}(\Omega) \text{ and for } i\leq N, v(x_i) = y_i \\ +\infty & \text{else,} \end{cases}
\]
\[
\cG_{\infty}^{(k,p)}( (\nu,v) ) = \begin{cases} \cE_{\infty}^{(k,p)}(v) &\text{if } \nu = \mu \text{ and } v \in \Wkp{1}{p}(\Omega) \\
+\infty &\text{else.} \end{cases}
\]

Our final objective, for $q \geq 1$ and a positive sequence $\{\lambda_k\}_{k=1}^q \subseteq \bbR$, is to consider the sums 
\begin{equation} \label{eq:SSLObjectiveSum}
    (\cS\cF)_{n,\eps}^{(q,p)}((\nu,v)) = \sum_{k=1}^q \lambda_k \cF_{n,\eps}^{(k,p)}((\nu,v)), \quad (\cS\cF)_{\infty}^{(q,p)}((\nu,v)) = \sum_{k=1}^q \lambda_k \cF_{\infty}^{(k,p)}((\nu,v))
\end{equation}
and 
$
(\cS\cG)_{\infty}^{(q,p)}((\nu,v)) = \sum_{k=1}^q \lambda_k \cG_{\infty}^{(k,p)}((\nu,v)).
$

\subsubsection{Comparison between \(V\)-statistic and distinct-vertex formulations}
\label{subsubsec:VUcomparison}

We now make three remarks clarifying the formulation of the discrete
energy~\eqref{eq:main:hypergraphEnergy}. They address, respectively, the role
of degenerate tuples, the energy-level comparison with the distinct-index
formulation, and the relation to pairwise-aggregation hypergraph energies.

\begin{remark}[Degenerate tuples in the V-statistic formulation]\label{rem:degenerateTuples}
For a fixed \(k\ge 1\), corresponding to hyperedges of size \(k+1\), and point
cloud size \(n\), our energy
\eqref{eq:main:hypergraphEnergy} is written as a V-statistic, i.e. a sum over
all ordered tuples $(i_0,\dots,i_k)\in\{1,\dots,n\}^{k+1}$. In a literal hypergraph
interpretation, a hyperedge of size $k+1$ should contain distinct vertices, whereas
the V-statistic indexing also includes \emph{degenerate} tuples with repeated indices
(corresponding to repeated vertices inside the same multiway interaction). We adopt
the V-statistic form because it is algebraically convenient for discrete-to-continuum
arguments (it factorizes into products of empirical sums and passes directly to
iterated integrals).

To quantify the effect of degeneracies, define the total, non-degenerate, and
degenerate index sets
\[
\mathcal{T}_{n,k}:=\{1,\dots,n\}^{k+1},\qquad
\mathcal{ND}_{n,k}:=\{(i_0,\dots,i_k)\in\mathcal{T}_{n,k}: i_r\neq i_s\ \forall r\neq s\},
\]
\[
\mathcal{D}_{n,k}:=\mathcal{T}_{n,k}\setminus \mathcal{ND}_{n,k}
=\{(i_0,\dots,i_k)\in\mathcal{T}_{n,k}: \exists\, r\neq s \text{ with } i_r=i_s\}.
\]
Clearly $\#\mathcal{T}_{n,k}=n^{k+1}$. Moreover,
\(
\#\mathcal{ND}_{n,k}
= n(n-1)\cdots(n-k),
\)
since $i_0$ can be chosen in $n$ ways, then $i_1$ in $(n-1)$ ways, and so on until
$i_k$ in $(n-k)$ ways. Hence the exact degenerate fraction is
\begin{equation}\label{eq:degFracExact} \notag
\frac{\#\mathcal{D}_{n,k}}{n^{k+1}}
=1-\frac{n(n-1)\cdots(n-k)}{n^{k+1}}
=1-\prod_{j=0}^{k}\Bigl(1-\frac{j}{n}\Bigr).
\end{equation}

A simple uniform bound follows from a union bound. For each pair of positions
$0\le r<s\le k$, let $A_{r,s}:=\{(i_0,\dots,i_k)\in\mathcal{T}_{n,k}: i_r=i_s\}$.
Then $\mathcal{D}_{n,k}=\bigcup_{0\le r<s\le k}A_{r,s}$, and since fixing $i_r=i_s$
imposes one equality constraint, $\#A_{r,s}=n^k$ (choose the common value in $n$
ways and the remaining $k-1$ indices freely). Therefore
\[
\frac{\#\mathcal{D}_{n,k}}{n^{k+1}}
\le \sum_{0\le r<s\le k}\frac{\#A_{r,s}}{n^{k+1}}
= \binom{k+1}{2}\frac{n^k}{n^{k+1}}
= \frac{k(k+1)}{2n}.
\]
In particular, for fixed $k$ and $n\to\infty$, degenerate configurations form a vanishing fraction of all tuples. Thus, the V-statistic is a reasonable approximation of the corresponding \emph{distinct-vertex} (U-statistic) formulation at the level of index counting.
\end{remark}

\begin{remark}[Energy comparison between U- and V-statistics]\label{rem:VUenergy}
We now compare the V-statistic energy \eqref{eq:main:hypergraphEnergy}, in which one sums over all ordered tuples
$(i_0,\dots,i_k)\in\{1,\dots,n\}^{k+1}$ (allowing repeated indices), to its U-statistic analogue obtained by restricting
to tuples with distinct indices. Under standard assumptions on the kernel $\eta$ (e.g.\ Assumption~\ref{ass:Main:Ass:W2}
with $\supp(\eta)\subset[0,R]$ for some $R>0$), on the sampling density $\mu$ (e.g.\ Assumption~\ref{ass:Main:Ass:M2}),
and on the scale $\eps_n$ (e.g.\ Assumption~\ref{ass:Main:Ass:L1}), and for $v$ regular enough (e.g.\ $L$-Lipschitz),
the V- and U-statistic energies are asymptotically equivalent provided $n\eps_n^d\to\infty$.

Throughout, $C>0$ denotes a constant independent of $n$ and $\eps=\eps_n$, which may change from line to line.
Recall the index sets from Remark~\ref{rem:degenerateTuples}:
$\mathcal{T}_{n,k}=\{1,\dots,n\}^{k+1}$, $\mathcal{ND}_{n,k}\subset\mathcal{T}_{n,k}$ the set of tuples with all indices
distinct, and $\mathcal{D}_{n,k}=\mathcal{T}_{n,k}\setminus\mathcal{ND}_{n,k}$ the degenerate tuples.
Define
\[
\cE^{V}_{n,\eps}(v)
:=\frac{1}{n^{k+1}\eps^{p+kd}}
\sum_{(i_0,\dots,i_k)\in\mathcal{T}_{n,k}}
\eta_{\mathrm{p}}(x_{i_0},\dots,x_{i_k})\,|v(x_{i_1})-v(x_{i_0})|^p,
\]
\[
\cE^{U}_{n,\eps}(v)
:=\frac{1}{n^{k+1}\eps^{p+kd}}
\sum_{(i_0,\dots,i_k)\in\mathcal{ND}_{n,k}}
\eta_{\mathrm{p}}(x_{i_0},\dots,x_{i_k})\,|v(x_{i_1})-v(x_{i_0})|^p.
\]

Since the U-statistic sum is a restriction of the V-statistic sum, we have $\cE^{V}_{n,\eps}(v)\ge \cE^{U}_{n,\eps}(v)$ and
\[
0\le \cE^{V}_{n,\eps}(v)-\cE^{U}_{n,\eps}(v)
=\frac{1}{n^{k+1}\eps^{p+kd}}
\sum_{(i_0,\dots,i_k)\in\mathcal{D}_{n,k}}
\eta_{\mathrm{p}}(x_{i_0},\dots,x_{i_k})\,|v(x_{i_1})-v(x_{i_0})|^p.
\]
If $\eta_{\mathrm{p}}(x_{i_0},\dots,x_{i_k})\neq 0$ and $\supp(\eta)\subset[0,R]$, then in particular
$\eta(|x_{i_1}-x_{i_0}|/\eps)\neq 0$, hence $|x_{i_1}-x_{i_0}|\le R\eps$. If $v$ is $L$-Lipschitz, this implies
$|v(x_{i_1})-v(x_{i_0})|^p\le (LR\eps)^p$. Moreover, $0\le \eta_{\mathrm{p}}\le \|\eta\|_{L^\infty}^{t(k)}$.
Therefore
\begin{equation}\label{eq:VU_step1}
0\le \cE^{V}_{n,\eps}(v)-\cE^{U}_{n,\eps}(v)
\le \frac{\|\eta\|_{L^\infty}^{t(k)}(LR)^p}{n^{k+1}\eps^{kd}}
\sum_{(i_0,\dots,i_k)\in\mathcal{D}_{n,k}}
\mathbf{1}_{\{\eta_{\mathrm{p}}(x_{i_0},\dots,x_{i_k})\neq 0\}}.
\end{equation}

To bound the remaining count, for each $i\in\{1,\dots,n\}$ define the local occupancy number
\(
N_i := \#\{m:\ |x_m-x_i|\le R\eps\} = n\,\mu_n(B(x_i,R\eps))
\)
and
for fixed $i_0$, set
\(
S_{i_0}:=\{m\in\{1,\dots,n\}:\ |x_m-x_{i_0}|\le R\eps\}
\)
(note that $\#S_{i_0}=N_{i_0}$).
If $\eta_{\mathrm{p}}(x_{i_0},\dots,x_{i_k})\neq 0$, then necessarily $x_{i_1},\dots,x_{i_k}\in B(x_{i_0},R\eps)$, so for
fixed $i_0$ there are at most $N_{i_0}^k$ admissible choices of $(i_1,\dots,i_k)$. 
Among these admissible tuples, degeneracy occurs if $i_r=i_s$ for some $1\le r<s\le k$ (a repeated vertex among
$i_1,\dots,i_k$) or if $i_j=i_0$ for some $1\le j\le k$ (a repetition with the anchor index $i_0$).
A crude but convenient union bound yields
\[
\#\Bigl\{(i_1,\dots,i_k)\in S_{i_0}^k:\ \exists\, r<s \text{ with } i_r=i_s \ \text{or}\ \exists\, j \text{ with } i_j=i_0\Bigr\}
\;\le\; \binom{k+1}{2}\,N_{i_0}^{k-1}.
\]
Indeed, fix a pair of positions $0\le r<s\le k$ and consider the constraint $i_r=i_s$ (with $i_0$ fixed).
If $1\le r<s\le k$, then the set
$A_{r,s}(i_0):=\{(i_1,\dots,i_k)\in S_{i_0}^k:\ i_r=i_s\}$
has cardinality $\#A_{r,s}(i_0)=N_{i_0}^{k-1}$: one chooses the common value of $(i_r,i_s)$ in $N_{i_0}$ ways and
chooses the remaining $k-2$ indices freely in $S_{i_0}$.
If $r=0$ and $1\le s\le k$, then the constraint $i_s=i_0$ fixes that index, leaving $k-1$ free choices in $S_{i_0}$,
so again $\#A_{0,s}(i_0)=N_{i_0}^{k-1}$.
Since any degenerate admissible tuple lies in $\bigcup_{0\le r<s\le k}A_{r,s}(i_0)$, we obtain
\[
\#(\text{degenerate admissible tuples}) \le \sum_{0\le r<s\le k}\#A_{r,s}(i_0)
= \binom{k+1}{2}\,N_{i_0}^{k-1}.
\]
Consequently,
\(
\sum_{(i_0,\dots,i_k)\in\mathcal{D}_{n,k}}
\mathbf{1}_{\{\eta_{\mathrm{p}}(x_{i_0},\dots,x_{i_k})\neq 0\}}
\le
\sum_{i_0=1}^{n} \binom{k+1}{2}\,N_{i_0}^{k-1}.
\)
Plugging this into \eqref{eq:VU_step1} yields the bound
\begin{equation}\label{eq:VU_masterbound}
0\le \cE^{V}_{n,\eps}(v)-\cE^{U}_{n,\eps}(v)
\le  \frac{C}{n^{k+1}\eps^{kd}}\sum_{i_0=1}^{n} N_{i_0}^{k-1},
\end{equation}
for a constant $C=C(k,p,\eta,R,L)$ independent of $n$ and $\eps$.

Finally, as in the proof of
Proposition~\ref{prop:proofs:gammaConvergence:limsupSSL}, one has for $n$ large enough
\(
N_i = n\,\mu_n(B(x_i,R\eps)) \;\le\; C\,n\eps^d
\)
for all $1 \leq i \leq n$.
Then
\(
\sum_{i_0=1}^{n} N_{i_0}^{k-1}
\le
n\,(C\,n\eps^d)^{k-1}
=
C^{k-1}\,n^{k}\,\eps^{d(k-1)}.
\)
Substituting into \eqref{eq:VU_masterbound} gives
\(
0\le \cE^{V}_{n,\eps}(v)-\cE^{U}_{n,\eps}(v)
\le
\frac{C}{n\eps^{d}}.
\)
In particular, for fixed $k$ and any regime with $n\eps_n^{d}\to\infty$ (e.g.\ Assumption~\ref{ass:Main:Ass:L1}),
we have $\cE^{V}_{n,\eps_n}(v)-\cE^{U}_{n,\eps_n}(v)\to 0$. This further justifies the interchangeability of the V- and U-statistic formulations in the discrete-to-continuum analysis.
\end{remark}

\begin{remark}[Symmetrization and pairwise aggregation]\label{rem:symmetrize_pairwise}
The purpose of this remark is to show that \eqref{eq:main:hypergraphEnergy} can be viewed as an asymptotic pairwise-aggregation energy (of the form \eqref{eq:intro:clustering}): the nondegenerate U-statistic symmetrizes to an all-pairs sum on each $(k+1)$-subset, and the V-statistic differs only by degenerate tuples whose contribution vanishes under standard scaling (see Remark \ref{rem:VUenergy}).

First, note that the definition of $\eta_{\mathrm{p}}$ involves all pairwise interactions among its $(k+1)$ arguments; therefore it is
symmetric under permutations of these arguments:
for every permutation $\pi$ of $\{0,1,\dots,k\}$,
\[
\eta_{\mathrm{p}}(x_{i_{\pi(0)}},\dots,x_{i_{\pi(k)}})=\eta_{\mathrm{p}}(x_{i_0},\dots,x_{i_k}).
\]
In particular, when the indices are distinct, $\eta_{\mathrm{p}}(x_{i_0},\dots,x_{i_k})$ depends only on the
underlying vertex set $S=\{i_0,\dots,i_k\}$, and we may write $\eta_{\mathrm{p}}(S)$.

Consider the (nondegenerate) U-statistic version of \eqref{eq:main:hypergraphEnergy},
\[
\cE^{U}_{n,\eps}(v)
:=\frac{1}{n^{k+1}\eps^{p+kd}}
\sum_{(i_0,\dots,i_k)\in\mathcal{ND}_{n,k}}
\eta_{\mathrm{p}}(x_{i_0},\dots,x_{i_k})\,|v(x_{i_1})-v(x_{i_0})|^p,
\]
which singles out the pair $(i_0,i_1)$ in each ordered tuple (recall the notation in Remark \ref{rem:degenerateTuples}).  Although this expression
does not explicitly sum over all pairs in a hyperedge, it becomes an all-pairs
(pairwise-aggregation) energy after symmetrization over permutations of the same vertex set.

Indeed, fix a set $S=\{a_0,\dots,a_k\}$ of $k{+}1$ distinct indices.  Summing over all
$(k{+}1)!$ orderings of $S$ and using the permutation invariance of $\eta_{\mathrm{p}}$, we obtain
\begin{align*}
\sum_{\pi\in\mathfrak{S}_{k+1}}
\eta_{\mathrm{p}}(S)\,\bigl|v(x_{a_{\pi(1)}})-v(x_{a_{\pi(0)}})\bigr|^p
&=
\eta_{\mathrm{p}}(S)\sum_{\pi\in\mathfrak{S}_{k+1}}
\bigl|v(x_{a_{\pi(1)}})-v(x_{a_{\pi(0)}})\bigr|^p \\
&=
2\,(k-1)!\;\eta_{\mathrm{p}}(S)\sum_{\{a,b\}\subset S}\bigl|v(x_a)-v(x_b)\bigr|^p,
\end{align*}
since for each unordered pair $\{a,b\}\subset S$, there are exactly $2\,(k-1)!$ permutations
$\pi$ such that $(a_{\pi(0)},a_{\pi(1)})$ equals $(a,b)$ or $(b,a)$. Here $\mathfrak{S}_{k+1}$ denotes the symmetric group on $\{0,1,\dots,k\}$, i.e.,
the set of all permutations of $(k+1)$ elements.

Consequently, grouping the U-statistic sum by the underlying $(k{+}1)$-subsets $S$ yields
\[
\cE^{U}_{n,\eps}(v)
=
\frac{2\,(k-1)!}{n^{k+1}\eps^{p+kd}}
\sum_{\substack{S\subset\{1,\dots,n\}\\ |S|=k+1}}
\eta_{\mathrm{p}}(S)\sum_{\{a,b\}\subset S}\bigl|v(x_a)-v(x_b)\bigr|^p,
\]
which is a pairwise-aggregation (as in \cite{scholkopfHyper2006}) energy with hyperedge weight $\eta_{\mathrm{p}}(S)$
(up to the explicit combinatorial prefactor $2\,(k-1)!$).
\end{remark}

Taken together, these remarks show that the \(V\)-statistic formulation is an
analytically convenient representation of an asymptotic pairwise-aggregation
hypergraph energy: degenerate tuples are asymptotically negligible, and once
one restricts to distinct indices, summing over all permutations of a fixed
hyperedge recovers the usual all-pairs interaction within that hyperedge.

\subsection{Higher-order hypergraph learning} \label{sec:main:HOHL}

In this section, we introduce the Higher-Order Hypergraph Learning (HOHL) model. 
Let $(V,E)$ be a hypergraph (independently of its weight model), and define \( q = \max_{e \in E} |e| - 1 \) as the maximum hyperedge size minus one. For each \( k \in \{1, \dots, q\} \), we define a corresponding skeleton graph \( G^{(k)} = (V, E^{(k)}) \) by
\[
E^{(k)} = \left\{ \{v_i, v_j\} \,\middle|\, \exists e \in E \text{ with } |e| = k+1 \text{ and } \{v_i, v_j\} \subset e \right\},
\]
that is, \( G^{(k)} \) includes all pairwise edges induced by hyperedges of size \( k+1 \). Let \( L^{(k)} \) denote the graph Laplacian associated with \( G^{(k)} \). 

We define the HOHL energy as
\begin{equation} \label{eq:discussion:higherOrder}
v^\top \left[ \sum_{k=1}^q \lambda_k (L^{(k)})^{p_k} \right] v 
\end{equation}
for \( v \in \bbR^n \), 
where \( 0 < p_1, \ldots, p_q \) are powers and \( 0 < \lambda_1, \ldots, \lambda_q \) are tuning parameters. In this paper, we restrict our attention to the HOHL energy~\eqref{eq:discussion:higherOrder} applied to the 
random hypergraph model where the vertex set is \( \Omega_n \subset \bbR^d \) and hyperedges are constructed via~\eqref{eq:main:hypergraphWeights}. For a generalization of~\eqref{eq:discussion:higherOrder} to non-geometric datasets and arbitrary hypergraphs, as well as an analysis of the computational properties of HOHL, we refer the reader to~\cite{weihs2025HOHL}.

\begin{figure}[htbp]
  \centering
  \begin{minipage}[b]{0.48\linewidth}
    \centering
    \includegraphics[width=\linewidth]{PenalizeSkeletonGraphs.png}
  \end{minipage}
  \hfill
  \begin{minipage}[b]{0.48\linewidth}
    \centering
    \includegraphics[width=\linewidth]{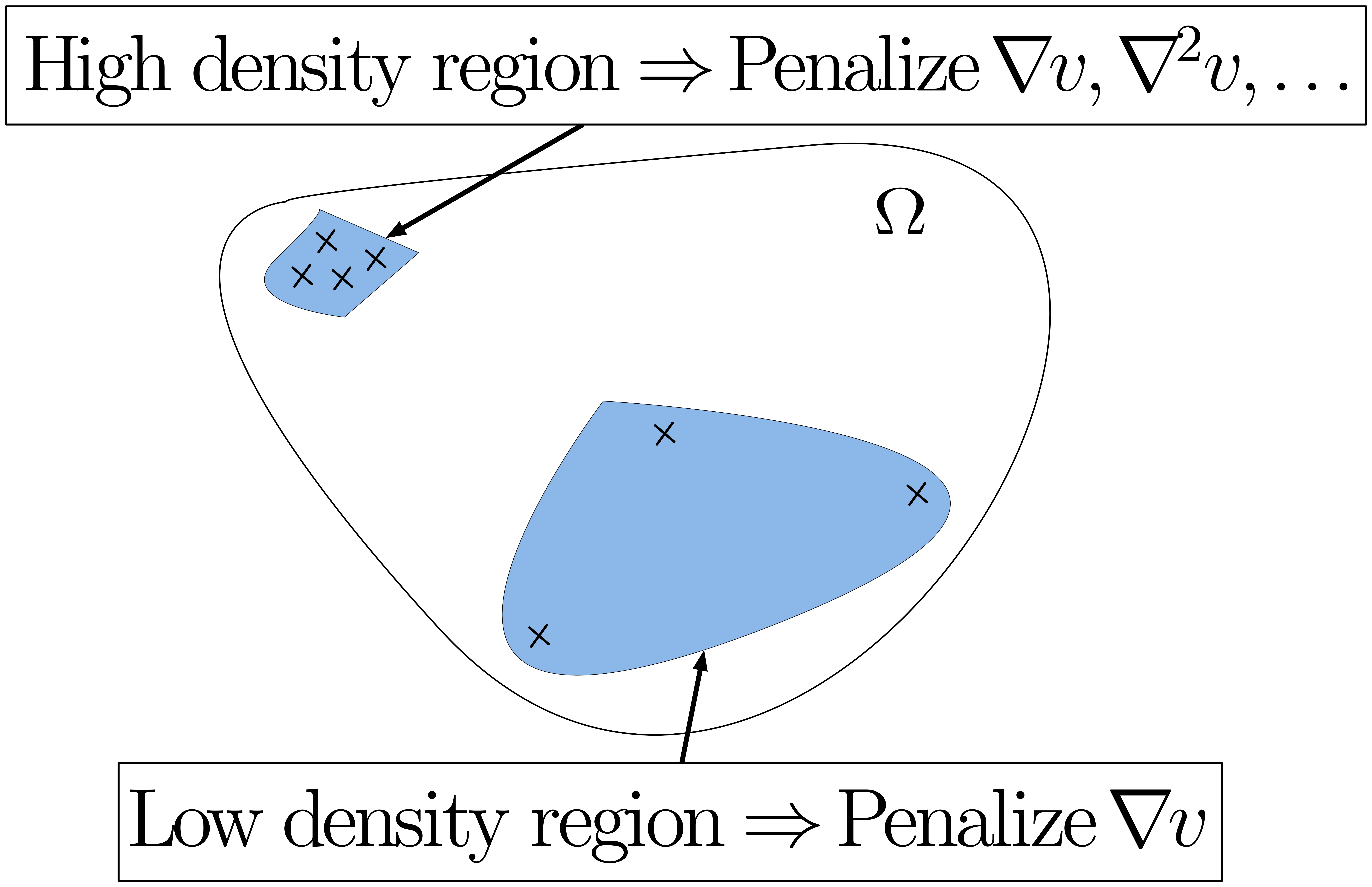}
  \end{minipage}
  \caption{
Illustration of the HOHL energy with $p_k = k$. Left: For \(q=2\), the energy imposes hierarchical regularization by penalizing \( v^\top L^{(1)} v \) on skeleton edges \(E^{(1)}\) and \( v^\top (L^{(2)})^2 v \) on \(E^{(2)}\). Right: With the random hypergraph model of \eqref{eq:main:hypergraphWeights}, in high-density regions, hyperedges of large size capture finer structural details, and HOHL imposes stronger smoothness to exploit this local structure.
  }
  \label{fig:HOHl}
\end{figure}

For hypergraph models where larger hyperedges connect increasingly closer points, given an increasing sequence $\{p_k\}_{k=1}^q$ (in practice we often set $p_k = k$ for simplicity), the HOHL energy~\eqref{eq:discussion:higherOrder}
induces a hierarchy of interaction scales and hence scale-aware regularization: small \(k\) enforces global smoothness,
while large \(k\) targets increasingly local structure and imposes finer regularity. Figure~\ref{fig:HOHl} illustrates this mechanism.
This multiscale viewpoint is closely related to the multiscale Laplacian regularizer of~\cite{Merkurjev},
\begin{equation}\label{eq:multiscale}
v^\top \Bigl(\sum_{k=1}^q \lambda_k \Delta_{n,\eps^{(k)}}^{p_k}\Bigr)v,
\end{equation}
where \( \varepsilon^{(1)}>\cdots>\varepsilon^{(q)} \) and \(p_k>0\) prescribes the order of regularization at scale \(\varepsilon^{(k)}\).
In our random hypergraph model, the skeleton graphs become increasingly local as \(k\) increases, and the Laplacians in~\eqref{eq:discussion:higherOrder}
are generated implicitly from this hypergraph-induced structure; thus~\eqref{eq:discussion:higherOrder} can be viewed as~\eqref{eq:multiscale}
without explicitly introducing multiple kernel bandwidths.

A further modeling choice in~\cite{Merkurjev} is how the exponents \(p_k\) should vary across scales.
The following continuum correspondence provides a structural guideline: under suitable conditions,
\(v^\top \Delta_{n,\varepsilon}^s v\) converges to a $\Wkp{s}{2}$ seminorm \cite{Stuart,weihs2023consistency}, so taking \(p_1<\cdots<p_q\)
amounts to enforcing higher-order regularity at progressively finer scales.

While the hypergraph construction in~\eqref{eq:main:hypergraphWeights} is
theoretically appealing, explicitly constructing the corresponding hyperedge
weights is computationally prohibitive for moderate or large point clouds. In
the V-statistic formulation used in the analysis, the sum runs over all
\((k+1)\)-tuples \((i_0,\dots,i_k)\in\{1,\dots,n\}^{k+1}\), allowing repeated
indices, so the number of candidate interactions scales as \(n^{k+1}\) (even if one instead restricts to distinct vertices, the number of ordered
tuples is \(n(n-1)\cdots(n-k)\)). 
Thus, direct enumeration of product-weighted hyperedges quickly becomes
infeasible beyond small datasets or very small hyperedge sizes. 

For this reason, in the geometric point-cloud setting we use the multiscale
Laplacian model~\eqref{eq:multiscale} as a computationally tractable surrogate
for HOHL. This surrogate preserves the central modeling principle of HOHL:
regularization is imposed across a hierarchy of interaction scales, with higher
powers of Laplacians enforcing stronger regularity at finer scales. At the same
time, it avoids explicit hyperedge enumeration and leads to an analytically
tractable continuum-limit theory. 

Although~\eqref{eq:multiscale} does not exactly approximate~\eqref{eq:discussion:higherOrder} with weights from~\eqref{eq:main:hypergraphWeights} --- since the corresponding limiting Laplacians may differ in density scaling --- it provides a practical surrogate for point clouds embedded in a metric space, where hypergraphs are constructed based on proximity and we do not have access to the (skeleton) Laplacians. In contrast, for general hypergraphs where the Laplacians \(L^{(k)}\) are
available, one can apply~\eqref{eq:discussion:higherOrder} directly. Numerical experiments in~\cite{weihs2025HOHL} indicate that this direct hypergraph implementation performs well and can effectively exploit hypergraph structure in non-geometric settings.

\subsection{Variational problems for higher-order hypergraph learning}

For HOHL and $p > 0$, we define the discrete energies $
\cI_{n,\Delta_{n,\eps}}^{(p)}(v) =  \langle v, \Delta_{n,\eps}^p v\rangle_{\Lp{2}(\mu_n)}
$
for $v:\Omega \to \bbR$ and their associated semi-supervised learning objectives
\[
\cJ_{n,\Delta_{n,\eps}}^{(p)}( (\nu,v) ) = \begin{cases} \cI_{n,\Delta_{n,\eps}}^{(p)}(v) &\text{if } \nu = \mu_n \text{ and for } i\leq N, v(x_i) = y_i \\ +\infty & \text{else} \end{cases}
\]
for $(\nu,v) \in \TLp{2}(\Omega)$.

The latter have continuum analogues. Indeed, let $\Delta_\rho$ be the continuum weighted Laplacian operator defined by
\begin{equation} \notag
\Delta_\rho u(x) = -\frac{1}{\rho(x)}\Div(\rho^2\nabla u)(x), \, x \in \Omega \quad \quad \quad \frac{\partial u}{\partial n} = 0, \, x \in \partial \Omega
\end{equation}
and let $\{(\beta_i,\psi_i)\}_{i=1}^\infty$ be its associated eigenpairs where $\beta_1 = 0 < \beta_2 \leq \beta_3 \leq \hdots$. We note that $\{\psi_i\}_{i=1}^\infty$ form a basis of $\Lp{2}(\mu)$.
The continuum energy is then defined as
$
\cI_{\infty}^{(p)}(v) = \langle v, \Delta^p_\rho v \rangle_{\Lp{2}(\mu)}
$
for $v:\Omega \to \bbR$ and we consider the following well-posed and ill-posed semi-supervised learning objectives:

\[
\cJ_{\infty}^{(p)}( (\nu,v) ) = \begin{cases} \cI_{\infty}^{(p)}(v) &\text{if } \nu = \mu, \, v \in \cH^{p}(\Omega) \text{ and for } i\leq N, v(x_i) = y_i \\ +\infty & \text{else,} \end{cases}
\]

\[
\cK_{\infty}^{(p)}( (\nu,v) ) = \begin{cases} \cI_{\infty}^{(p)}(v) &\text{if } \nu = \mu \text{ and } v \in \cH^{p}(\Omega) \\
+\infty &\text{else} \end{cases}
\]
for $(\nu,v) \in \TLp{2}(\Omega)$ and where \( \notag 
\cH^p(\Omega) = \{ h \in \Lp{2}(\mu) \spaceBar \cI_\infty^{(p)}(h) < +\infty \}. 
\)
The latter set can be shown to be very closely related to the Sobolev space $\Wkp{p}{2}(\Omega)$ \cite[Lemma 17]{Stuart}. Finally, for $q \geq 1$ and positive sequences $\{\lambda_k\}_{k=1}^q \subseteq \bbR$, $P := \{p_k\}_{k=1}^q \subseteq \bbR$ and $ E := \{\eps^{(k)}\}_{k=1}^q$ with $\eps^{(1)} > \cdots > \eps^{(q)}$, we consider the sums 
\[
(\cS\cJ)_{n,E}^{(q,P)}((\nu,v)) = \sum_{k=1}^q \lambda_k \cJ_{n,\Delta_{n,\eps^{(k)}}}^{(p_k)}((\nu,v)), \quad (\cS\cJ)_{\infty}^{(q,P)}((\nu,v)) = \sum_{k=1}^q \lambda_k \cJ_{\infty}^{(p_k)}((\nu,v))
\]
and 
$
(\cS\cK)_{\infty}^{(q,P)}((\nu,v)) = \sum_{k=1}^q \lambda_k \cK_{\infty}^{(p_k)}((\nu,v)).
$
We will also index our length-scales by the number of vertices, i.e. $\eps^{(k)} = \eps_n^{(k)}$, and in this case, we write $E_n := \{\eps_n^{(k)}\}_{k=1}^q$. The above sums correspond to the multiscale model for 
HOHL as detailed in Section~\ref{sec:main:HOHL}.

\subsection{Assumptions}

In this section, we list the assumptions used throughout the paper. 

\begin{assumptions}
Assumption on the space.
We assume either~\ref{ass:Main:Ass:S1} or~\ref{ass:Main:Ass:S2}.
\begin{enumerate}[label=\textbf{S.\arabic*}]
\item The feature vector space $\Omega$ is an open, connected and bounded subset of $\bbR^d$ with Lipschitz boundary. \label{ass:Main:Ass:S1}
\item The feature vector space $\Omega$ is the unit torus $\sfrac{\bbR^d}{\bbZ^d}$. \label{ass:Main:Ass:S2} 
\end{enumerate}
\end{assumptions}

We will use Assumption \ref{ass:Main:Ass:S2} in Theorem \ref{thm:main:HOHL}: assuming $\Omega$ is a torus simplifies the analysis by removing both boundary effects (which alter pointwise graph-to-continuum Laplacian convergence near $\partial\Omega$) and geometric curvature issues that would arise on a manifold.

\begin{assumptions}
Assumptions on the measure.
In most cases we need both~\ref{ass:Main:Ass:M1} and~\ref{ass:Main:Ass:M2}.
\begin{enumerate}[label=\textbf{M.\arabic*}]
\item The measure $\mu$ is a probability measure on $\Omega$. \label{ass:Main:Ass:M1} 
\item There is a continuous Lebesgue density $\rho$ of $\mu$ which is bounded from above and below by strictly positive constants, i.e. $0< \min_{x\in\Omega} \rho(x) \leq \max_{x\in\Omega} \rho(x) < +\infty.$ \label{ass:Main:Ass:M2}
\end{enumerate}
\end{assumptions}

The data consists of feature vectors $\{x_i\}_{i=1}^n$ and labels $\{y_i\}_{i=1}^N$ and we make the following assumptions.

\begin{assumptions}
Assumptions on the data.
Assumption~\ref{ass:Main:Ass:D1} is needed for consistency results and~\ref{ass:Main:Ass:D2} is needed in the semi-supervised setting.
\begin{enumerate}[label=\textbf{D.\arabic*}]
\item Feature vectors $\Omega_n = \{x_i\}_{i=1}^n$ are iid samples from a measure $\mu$ satisfying \ref{ass:Main:Ass:M1}.
\label{ass:Main:Ass:D1}
\item There are $N$ labels $\{y_i\}_{i=1}^N\subset \bbR$ corresponding to the first $N$ feature vectors $\{x_i\}_{i=1}^N$. \label{ass:Main:Ass:D2}
\end{enumerate}
\end{assumptions}

The weight function $\eta$ is assumed to satisfy the following assumptions.

\begin{assumptions}
Assumptions on the weight function or kernel.
\begin{enumerate}[label=\textbf{W.\arabic*}]
\item The function $\eta:[0,\infty) \to [0,\infty)$ is non-increasing, has compact support, is continuous and positive at $x=0$.
\label{ass:Main:Ass:W2}
\end{enumerate}
\end{assumptions}

The compactness of the support of $\eta$ corresponds to the setting in most applications where, for computational purposes, one wants to restrict the range of interactions between vertices in our hypergraph. Theoretically however, the compact support assumption is not strictly necessary and we can extend our results to the non-compactly supported case as in done in \cite{Trillos3,Slepcev}. 

Finally, we make the following assumption on the length scale $\eps=\eps_n$ which we scale with $n$.

\begin{assumptions}
Assumptions on the length-scale.
For our consistency results we will need one of~\ref{ass:Main:Ass:L0}, \ref{ass:Main:Ass:L1} or~\ref{ass:Main:Ass:L2}. 
\begin{enumerate}[label=\textbf{L.\arabic*}]
\item The length scale $\eps=\eps_n$ is positive, converges to 0, i.e. $0<\eps_n \to 0$. 
\label{ass:Main:Ass:L0}
\item The length scale $\eps=\eps_n$ is positive, converges to 0, i.e. $0<\eps_n \to 0$, and satisfies the following lower bound: 
\begin{equation*}
\begin{split} 
\lim_{n \to \infty} \frac{\log(n)}{n \eps_n^{d}} & = 0 \qquad \text{if } d\geq 3; \\
\lim_{n \to \infty} \frac{(\log(n))^{3/2}}{n \eps_n^2} & = 0 \qquad \text{if } d=2; \\
\lim_{n \to \infty} \frac{\log(\log(n))}{n \eps_n^2} & = 0 \qquad \text{if } d=1.
\end{split}
\end{equation*}
\label{ass:Main:Ass:L1}
\item The length scale $\eps=\eps_n$ is positive, converges to 0, i.e. $0<\eps_n \to 0$ and satisfies the following lower bound: 
\begin{equation*} 
\lim_{n \to \infty} \frac{\log(n)}{n \eps_n^{d+4}} = 0.
\end{equation*}
\label{ass:Main:Ass:L2}
\end{enumerate}

\end{assumptions}

Assumption~\ref{ass:Main:Ass:L1} guarantees that (with probability one -- measured with $\bbP$) that there exists $N_1$ such that for all $n\geq N_1$ the graph $G_{n,\eps_n}=(\Omega_n,W_{n,\eps_n})$ is connected (see \cite{goel} or \cite{DBLP:books/ox/P2003}). We also note that the condition in the $d=2$ case in Assumption \ref{ass:Main:Ass:L1} can be tightened by removing the $\log$-term (using the techniques from \cite{Caroccia_2020,CALDER2022123}), i.e. $\lim_{n\to\infty} \frac{\log(n)}{n\eps_n^d} = 0$ for $d\geq 2$, so that $\eps_n$ can be chosen to be any sequence asymptotically greater than the connectivity radius for all $d\geq 2$.

\subsection{Theorem statements} 

We give our results for (classical) hypergraph learning~\eqref{eq:intro:clustering}/\eqref{eq:SSLObjectiveSum} in Subsection~\ref{subsec:main:HL}, and our results for higher order hypergraph learning~\eqref{eq:multiscale} in Subsection~\ref{subsec:main:HOHL}.

\subsubsection{Hypergraph learning} \label{subsec:main:HL}

We start by determining the Euler–Lagrange equations corresponding to \eqref{eq:main:hypergraphEnergy}. 
The result implies that the gradient of the energy decomposes into a sum of discrete operators, each tied to hyperedges of a given size.
The proof is given in Section~\ref{subsubsec:Proofs:HL:EL}.

\begin{proposition}[Discrete Euler-Lagrange equations of hypergraph learning] \label{prop:eulerLagrange}
    The energy $v\mapsto \sum_{k=1}^q \lambda_k \cE_{n,\eps}^{(k,p)}(v)$ 
    is minimized by $u$ if and only if $u$ satisfies
    $
    \sum_{k=1}^q \lambda_k \Delta_{n,\eps_n}^{(k,p)}(u) = 0.
    $
\end{proposition}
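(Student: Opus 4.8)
The plan is to compute the first variation of the energy $v \mapsto \sum_{k=1}^q \lambda_k \cE_{n,\eps}^{(k,p)}(v)$ directly and identify the resulting stationarity condition with the claimed operator equation. Since each $\cE_{n,\eps}^{(k,p)}$ is a finite sum of terms of the form $w \cdot |v(x_{i_1}) - v(x_{i_0})|^p$ with nonnegative weights $w$, the energy is convex (as $t \mapsto |t|^p$ is convex for $p > 1$) and coercive on the relevant subspace, so a minimizer exists and is characterized by vanishing of the gradient. The key point is then a careful bookkeeping of how the variable $v(x_{i_0})$ (for a fixed index $i_0$) appears across the multi-index sum $\sum_{i_0, \dots, i_k = 1}^n$.

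The main steps are as follows. First, I would fix $m \in \{1, \dots, n\}$ and differentiate $\cE_{n,\eps}^{(k,p)}(v)$ with respect to the scalar $v(x_m)$. The variable $v(x_m)$ enters the summand $\bigl[\prod_{j=1}^k \prod_{r=0}^{j-1} \eta(|x_{i_j} - x_{i_r}|/\eps)\bigr]|v(x_{i_1}) - v(x_{i_0})|^p$ only through the finite-difference factor $|v(x_{i_1}) - v(x_{i_0})|^p$ — crucially \emph{not} through the weight $\eta_{\mathrm p}$, which depends only on the feature vectors $x_{i_j}$, not on $v$. So $v(x_m)$ contributes when either $i_0 = m$ or $i_1 = m$. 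Using $\frac{\dd}{\dd t}|t|^p = p|t|^{p-2}t$, the terms with $i_1 = m$ produce $p \sum_{i_0, i_2, \dots, i_k} \eta_{\mathrm p}(x_{i_0}, x_m, x_{i_2}, \dots, x_{i_k}) |v(x_m) - v(x_{i_0})|^{p-2}(v(x_m) - v(x_{i_0}))$, and the terms with $i_0 = m$ produce the analogous expression with an opposite sign and the roles of the first two slots swapped. The second step is to observe that the weight $\eta_{\mathrm p}(x_{i_0}, \dots, x_{i_k}) = \prod_{j=1}^k \prod_{r=0}^{j-1} \eta(|x_{i_j} - x_{i_r}|/\eps)$ is symmetric under permuting the indices $i_0, \dots, i_k$ (each appears as a product over all unordered pairs among $\{i_0, \dots, i_k\}$), so after relabeling the dummy summation indices the two contributions combine into $2p \sum_{i_1, \dots, i_k = 1}^n \eta_{\mathrm p}(x_m, x_{i_1}, \dots, x_{i_k})|v(x_{i_1}) - v(x_m)|^{p-2}(v(x_{i_1}) - v(x_m))$. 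Up to the normalization constant $\frac{1}{n^{k+1}\eps^{p+kd}}$ this is exactly $\frac{2p}{n}\,\Delta_{n,\eps}^{(k,p)}(v)(x_m)$ by the definition of $\Delta_{n,\eps}^{(k,p)}$.

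The third step is to assemble: $\partial_{v(x_m)} \sum_{k=1}^q \lambda_k \cE_{n,\eps}^{(k,p)}(v) = \frac{2p}{n} \sum_{k=1}^q \lambda_k \Delta_{n,\eps}^{(k,p)}(v)(x_m)$, and since $\frac{2p}{n} \neq 0$, the gradient vanishes at $u$ if and only if $\sum_{k=1}^q \lambda_k \Delta_{n,\eps}^{(k,p)}(u)(x_m) = 0$ for every $m$, which is the claimed identity. By convexity this stationarity condition is equivalent to $u$ being a (global) minimizer. I anticipate that the only genuinely delicate point — the "main obstacle," though it is more a matter of care than of difficulty — is the combinatorial verification that the weight $\eta_{\mathrm p}$ is fully symmetric in its arguments and that the two families of terms ($i_0 = m$ versus $i_1 = m$) recombine cleanly after reindexing; one must make sure no factor of $t(k)$ or extra symmetry factor is dropped or double-counted, and that the asymmetric appearance of only the pair $(i_0, i_1)$ in the finite-difference term (versus the symmetric appearance of all indices in $\eta_{\mathrm p}$) is handled correctly. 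A remark noting that differentiability of the energy is not an issue when $p > 1$ (the map $t \mapsto |t|^p$ is $C^1$) completes the argument.
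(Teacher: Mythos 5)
Your proposal is correct and follows essentially the same route as the paper: both compute the first variation of the energy (the paper via the Gateaux derivative in a direction $v$, you via coordinatewise partial derivatives), use the symmetry of $\etaP$ in its first two arguments to merge the $i_0=m$ and $i_1=m$ contributions into a single sum identified with $\Delta_{n,\eps}^{(k,p)}$, and invoke convexity of $t\mapsto\vert t\vert^p$ for the equivalence between stationarity and global minimality. The only quibble is a sign: the combined gradient is $-\tfrac{2p}{n}\Delta_{n,\eps}^{(k,p)}(v)(x_m)$ rather than $+\tfrac{2p}{n}$ (matching the paper's $\langle -2p\,\Delta_{n,\eps}^{(k,p)}(u),v\rangle_{\Lp{2}(\mu_n)}$), which is immaterial for the stated conclusion.
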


We now study the asymptotic behavior as \( n \to \infty \). The next result shows quantitative pointwise convergence of the discrete hypergraph operator to its continuum analogue.

\begin{theorem}[Pointwise consistency]\label{thm:pointwiseConsistency}
Assume that Assumptions \ref{ass:Main:Ass:S1}, \ref{ass:Main:Ass:M1}, \ref{ass:Main:Ass:M2}, \ref{ass:Main:Ass:D1} and \ref{ass:Main:Ass:L0} hold. Furthermore, assume that $\rho \in \Ck{2}(\Omega)$. Let $\Omega'$ be compactly contained in $\Omega$, $q \geq 1$, $\{\lambda_k\}_{k=1}^q \subset (0,\infty)$, $p \in \{2\}\cup [3,\infty)$, $\eps_n \leq \delta$ and $u\in \mathrm{C}^3$. Then, for $n$ large enough, we have that 
\begin{align}
    &\biggl\vert \left(\sum_{k=1}^q \lambda_k \Delta_{n,\eps_n}^{(k,p)}\right) (u)(x_{i_0}) -  \rho(x_{i_0}) \left( \sum_{k=1}^q  \lambda_k \Delta_\infty^{(k,p)}\right)(u)(x_{i_0}) \biggr\vert = \mathcal{O} \left( \delta \Vert u \Vert_{\mathrm{C}^3(\mathbb{R}^d)}^{p-1} \right) \notag
\end{align}
for $x_{i_0} \in \Omega_n \cap \Omega'$, with probability 
$1-Cn \exp \left( -C n\eps_n^{2(1+qd)}\delta^2 \right)$ where $C > 0$ is a constant independent of $n$ and $\delta$. 
\end{theorem}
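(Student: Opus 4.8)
The plan is to prove the pointwise consistency for a single $k$ and each fixed hyperedge size, then sum over $k$ with the weights $\lambda_k$ using the triangle inequality; so I focus on estimating $|\Delta_{n,\eps_n}^{(k,p)}(u)(x_{i_0}) - \Delta_\infty^{(k,p)}(u)(x_{i_0})|$ up to a constant depending on the $\lambda_k$'s. The operator $\Delta_{n,\eps_n}^{(k,p)}(u)(x_{i_0})$ is a multi-index sum over $(i_1,\dots,i_k)$; conditioning on $x_{i_0}$, this is (up to normalization) an empirical average of the iid random variables $F(x_{i_1},\dots,x_{i_k})$ where $F$ packs together the kernel product $\etaP$ and the finite-difference term $|u(x_{i_1})-u(x_{i_0})|^{p-2}(u(x_{i_1})-u(x_{i_0}))$. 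The first step is therefore a \emph{bias computation}: compute the expectation $\E[\Delta_{n,\eps_n}^{(k,p)}(u)(x_{i_0}) \mid x_{i_0}]$, which becomes a $k$-fold integral against $\rho$; after the change of variables $x_{i_j} = x_{i_0} + \eps_n z_j$ (and the relevant relabelling of the $z_j$'s so that $\etaPTilde$ appears), Taylor-expand $u$ to third order around $x_{i_0}$ and $\rho$ to first/second order, collect the terms by powers of $\eps_n$, and identify the surviving $\mathcal{O}(1)$ contribution with $\Delta_\infty^{(k,p)}(u)(x_{i_0})$. This is where the constants $\sigma_\eta^{(k,p)}$, $\sigma_\eta^{(k,p,1)}$, $\sigma_\eta^{(k,p,2)}$ and the precise form of $\Delta_\infty^{(k,p)}$ come from: the odd-in-$z$ terms integrate to zero by isotropy, the leading nonzero term is $\mathcal{O}(\eps_n^{p+kd})$ matching the normalization $n^k\eps_n^{p+kd}$, and the remainder from the Taylor expansion is $\mathcal{O}(\eps_n \|u\|_{\mathrm{C}^3}^{p-1})$, which gives the claimed $\mathcal{O}(\delta \|u\|_{\mathrm{C}^3}^{p-1})$ once $\eps_n \le \delta$. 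The restriction $p \in \{2\} \cup [3,\infty)$ is needed here so that $t \mapsto |t|^{p-2}t$ is $\mathrm{C}^2$ (or at least has the regularity required to Taylor-expand the difference term cleanly away from $p \in (2,3)$ where the second derivative blows up at $0$); one must be slightly careful where $\nabla u(x_{i_0}) = 0$, but there the continuum operator also vanishes and a direct estimate handles it.

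The second step is a \emph{concentration estimate}: control the fluctuation $|\Delta_{n,\eps_n}^{(k,p)}(u)(x_{i_0}) - \E[\Delta_{n,\eps_n}^{(k,p)}(u)(x_{i_0}) \mid x_{i_0}]|$. Because the summand is a product of kernel terms, each $F(x_{i_1},\dots,x_{i_k})$ is bounded by $C\|u\|_{\mathrm{C}^1}^{p-1}\eps_n^{?}$ and is supported on the event that all the $x_{i_j}$ lie in a $\mathcal{O}(\eps_n)$-neighbourhood, so each term contributes only with probability $\mathcal{O}(\eps_n^{kd})$; the sum is not quite a sum of independent terms because indices are shared across tuples, but it is a $U$-statistic-type object (or can be bounded by one), and a Bernstein/Hoeffding inequality for bounded $U$-statistics — or, more simply, a McDiarmid bounded-differences argument over the $n$ points $x_1,\dots,x_n$, noting that moving one $x_i$ changes the sum by at most $\mathcal{O}(n^{k-1}\eps_n^{?}\cdot(\text{kernel support volume}))$ — yields a sub-Gaussian tail of the form $\exp(-Cn\eps_n^{2(1+qd)}\delta^2)$ after optimizing the exponents. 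A union bound over the (at most $n$) points $x_{i_0} \in \Omega_n \cap \Omega'$ then converts the pointwise-in-$x_{i_0}$ statement into the uniform statement with the stated probability $1 - Cn\exp(-Cn\eps_n^{2(1+qd)}\delta^2)$, and the dependence $qd$ in the exponent comes from the worst case $k=q$ in the sum.

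I expect the \textbf{main obstacle} to be the bias computation in Step 1: carefully tracking the $k$-fold Taylor expansion of $u$ through the nonlinear function $|t|^{p-2}t$, keeping the cross-terms between different $z_j$'s (which is what produces the $(k-1)\sigma_\eta^{(k,p,2)}$ factor and the $\sigma_\eta^{(k,p)}/\sigma_\eta^{(k,p,1)}$ corrections in $\Delta_\infty^{(k,p)}$), and verifying that all lower-order-in-$\eps_n$ terms either vanish by symmetry/isotropy or are absorbed into the $\mathcal{O}(\delta\|u\|_{\mathrm{C}^3}^{p-1})$ remainder. A secondary technical point worth isolating as a lemma is the behaviour near the critical set $\{\nabla u = 0\}$, where the $p-2$ exponent interacts badly with vanishing gradients; there one argues separately that both sides are small. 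The concentration step is comparatively routine once the boundedness and support-volume bounds on the summand are in hand, though bookkeeping the exponent $2(1+qd)$ correctly requires care with how $\eps_n$ enters both the normalization and the per-term magnitude.
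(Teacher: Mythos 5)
Your proposal follows essentially the same route as the paper's proof: Taylor-expand $\psi(t)=|t|^{p-2}t$ around $\nabla u(x_{i_0})\cdot z$ (which is exactly why $p\in\{2\}\cup[3,\infty)$ is imposed), compute the expectation of each resulting piece via the change of variables $z_j=(x_j-x_{i_0})/\eps_n$, kill the odd terms by symmetry and rotate $\nabla u$ onto $e_d$ to produce the constants $\sigma_\eta^{(k,p)},\sigma_\eta^{(k,p,1)},\sigma_\eta^{(k,p,2)}$, control fluctuations by McDiarmid's bounded-differences inequality over the $n$ sample points (counting tuples sharing an index, exactly the paper's Lemma \ref{lem:combinatorics}), and finish with a union bound over $x_{i_0}$ and the triangle inequality over $k$. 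The only substantive point where you go beyond the written proof is your flag about the degenerate set $\{\nabla u=0\}$, which the paper's rotation argument silently assumes away; your proposed separate direct estimate there is the right fix.
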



The proof differs from the corresponding graph calculation because the
hypergraph operator is a multi-index statistic: for each anchor vertex, it sums over \(k\) additional vertices and contains a product kernel encoding all
pairwise interactions within the resulting \((k+1)\)-tuple. Consequently, the continuum limit cannot be obtained by applying the standard graph \(p\)-Laplacian expansion directly.

The argument begins by expanding the nonlinear term \(|t|^{p-2}t\). For
\(p=2\), this expansion is exact, while for \(p\geq 3\) one must control the
nonlinear Taylor remainder using the assumed smoothness of the test function.
After this expansion, the main task is to identify which terms vanish by
symmetry and to compute the constants generated by the product-type hyperedge
kernel. The details are given in
Section~\ref{subsubsec:Proofs:HL:Pointwise}.

\begin{remark}[Nondivergence-form representation of the limiting \(p\)-Laplacian]
\label{rem:divfree_pLap}
Corollary~\ref{cor:alternativeLaplacian} provides a nondivergence-form representation of the limiting weighted \(p\)-Laplacian induced by the product-type hypergraph weights. This representation is useful in the pointwise analysis because the discrete operator naturally expands into nondivergence-form terms involving \(\nabla u\), \(\nabla^2 u\), \(\rho\), and
\(\nabla \rho\). The identities in Lemmas~\ref{lem:identity1}
and~\ref{lem:identity2} then show that the constants produced by the multi-index product kernel recombine into the divergence-form operator
\[
    \Delta_\infty^{(k,p)}u
    =
    \frac{\sigma_{\eta}^{(k,p)}}{2\rho}
    \operatorname{div}\!\left(
    \rho^{k+1}\|\nabla u\|_2^{p-2}\nabla u
    \right).
\]
Thus, the pointwise limit extends the graph-based nondivergence-form calculation of~\cite{calderGameTheoretic} to the hypergraph product-kernel setting.
\end{remark}

Next, we precisely characterize the asymptotic consistency of hypergraph learning as a function of the length-scale $\eps_n$. We refer to Figure~\ref{fig:hypergraphWellIllPosed} for a visual summary of the result.

\begin{figure}[htbp]
  \centering
  \begin{subfigure}[b]{0.48\linewidth}
    \centering
    \includegraphics[width=\linewidth]{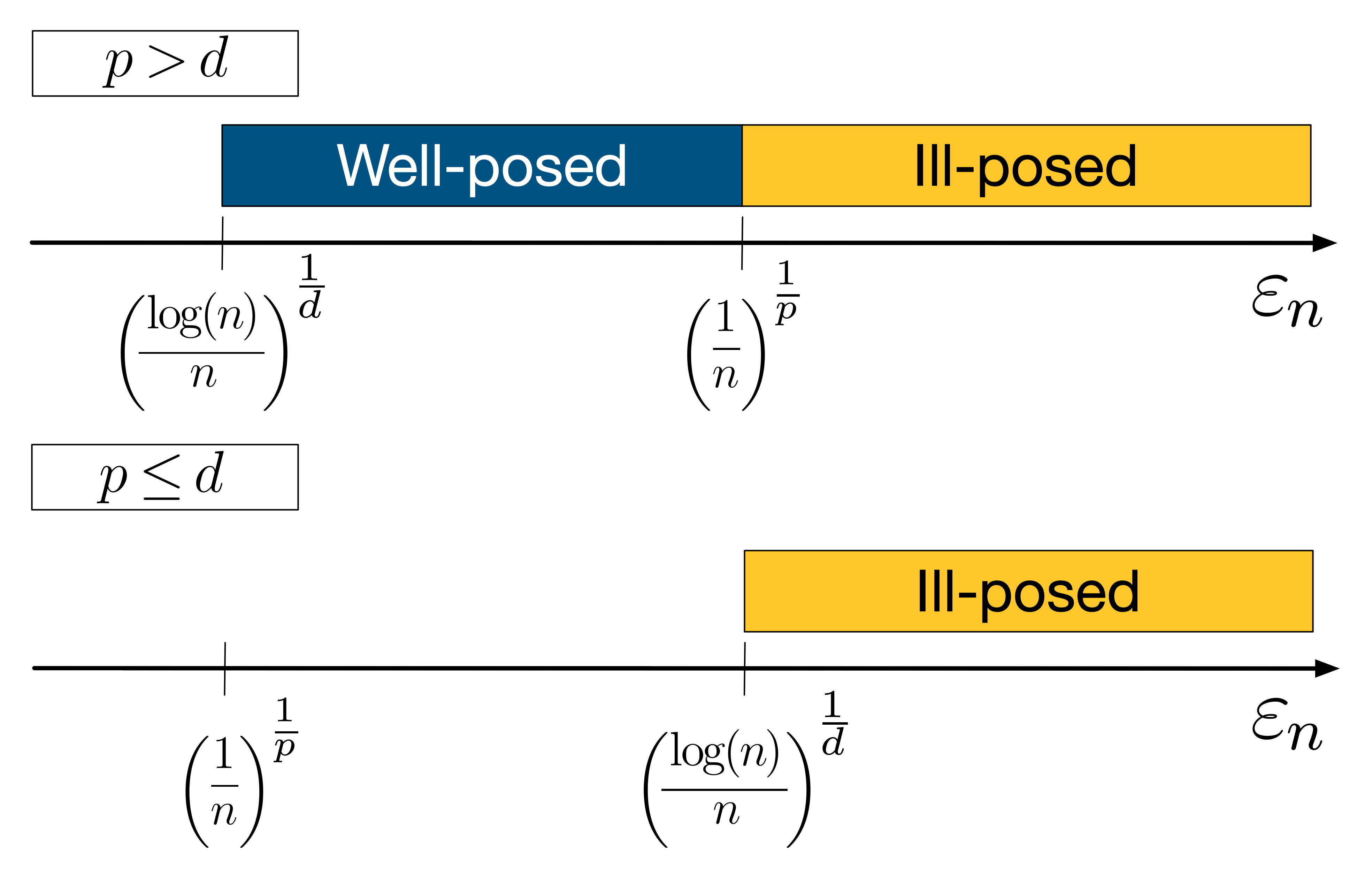}
    \caption{Hypergraph learning}
    \label{fig:hypergraphWellIllPosed}
  \end{subfigure}
  \hfill
  \begin{subfigure}[b]{0.48\linewidth}
    \centering
    \includegraphics[width=\linewidth]{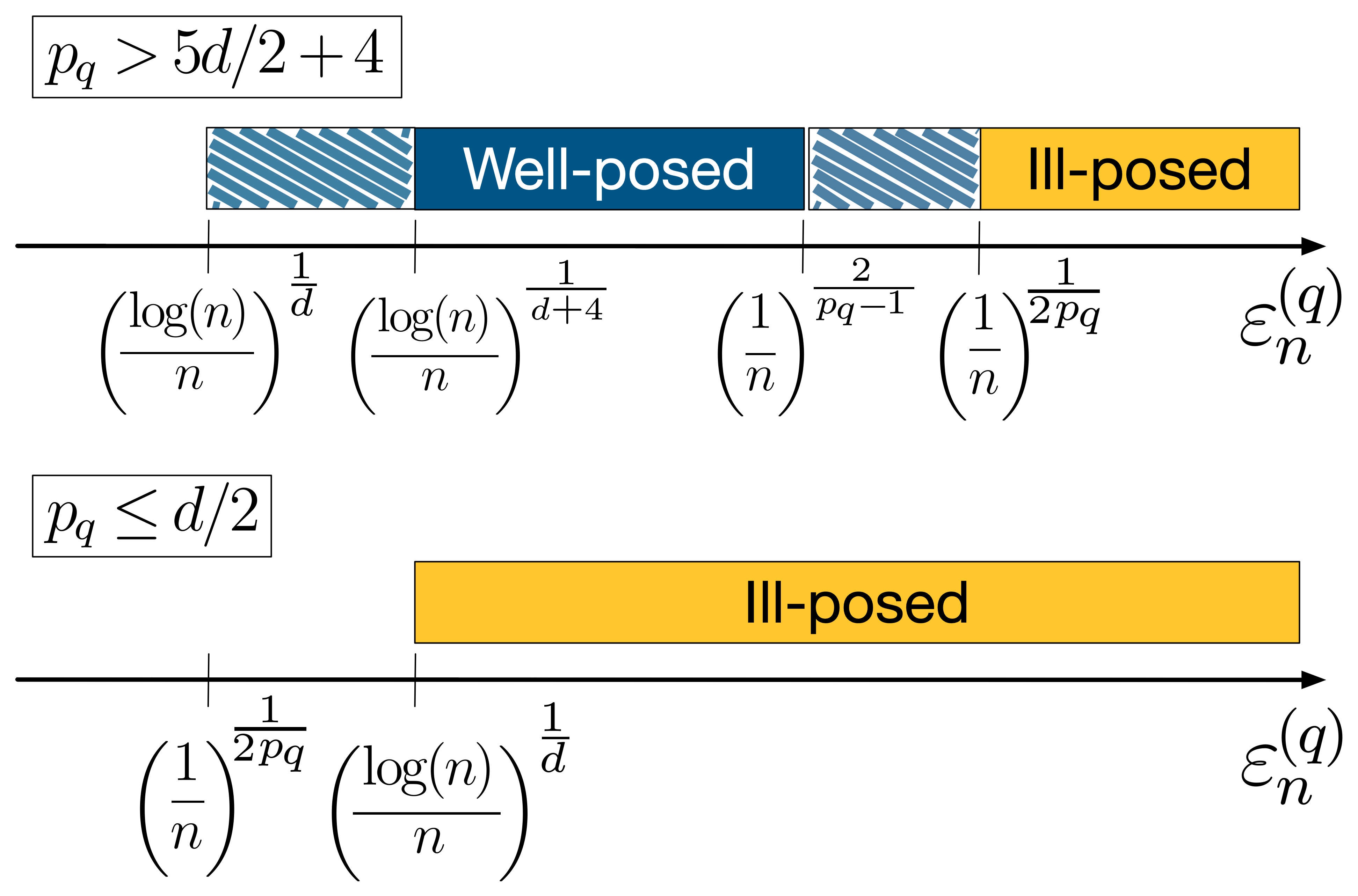}
    \caption{HOHL surrogate}
    \label{fig:wellIllPosedHOHL}
  \end{subfigure}
  \caption{Well- and Ill-posedness characterizations as a function of the length-scale. The striped regions are conjectured results (see \cite{weihs2023consistency}).}
\end{figure}

\begin{theorem}[Asymptotic consistency analysis of hypergraph learning]\label{thm:main}
    Assume that \ref{ass:Main:Ass:S1}, \ref{ass:Main:Ass:M1}, \ref{ass:Main:Ass:M2},  \ref{ass:Main:Ass:D1}, \ref{ass:Main:Ass:D2}, \ref{ass:Main:Ass:W2}, and \ref{ass:Main:Ass:L1} hold.
    Let $(\mu_n,u_n)$ be minimizers of $(\cS\cF)_{n,\eps_n}^{(q,p)}$.
    \begin{enumerate}
        \item (Well-posed case) Assume that $n\eps_n^{p} \to 0$. Then, $\bbP$-a.s., there exists a continuous function $u$ such that $(\mu_n,u_n) \to (\mu,u)$ in $\TLp{p}(\Omega)$ and for any $\Omega' \subset \subset \Omega$, $\max_{\{r \leq n \spaceBar x_r \in \Omega'\}} \vert u(x_r) - u_n(x_r) \vert \to 0$. In particular, $(\mu,u)$ is a minimizer of $(\cS\cF)_{\infty}^{(q,p)}$.
        \item (Ill-posed case) Assume that $n \eps_n^{p} \to \infty$. Then, $\bbP$-a.s., there exists $u \in \Wkp{1}{p}(\Omega)$ and a subsequence $\{n_r\}_{r=1}^\infty$ such that $(\mu_{n_r},u_{n_r}) \to (\mu,u)$ in $\TLp{p}(\Omega)$ and $(\mu,u)$ is a minimizer of $(\cS\cG)_{\infty}^{(q,p)}$. 
    \end{enumerate}
\end{theorem}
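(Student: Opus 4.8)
\textbf{Proof proposal for Theorem \ref{thm:main}.}

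The plan is to apply the $\Gamma$-convergence machinery recalled in Proposition \ref{prop:Back:Gamma:minimizers}, treating the well-posed and ill-posed regimes separately but along parallel lines. In both cases the key is to prove (i) $\Gamma$-convergence of $(\cS\cF)_{n,\eps_n}^{(q,p)}$ to the appropriate continuum functional (either $(\cS\cF)_\infty^{(q,p)}$ or $(\cS\cG)_\infty^{(q,p)}$) with respect to the $\TLp{p}$ metric, and (ii) a compactness statement ensuring that minimizers form a precompact sequence in $\TLp{p}$. Since both limiting functionals are sums over $k$ of the single-scale energies $\cF_\infty^{(k,p)}$ (resp.\ $\cG_\infty^{(k,p)}$) with positive weights $\lambda_k$, and $\Gamma$-convergence is stable under finite sums against a common compactness bound, it suffices to establish the $\Gamma$-limit and compactness for each fixed $k$; the $k=1$ case is essentially the analysis of $p$-Laplacian learning in \cite{Slepcev}, and the general $k$ case is the hypergraph extension. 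The transport maps $T_n$ from Theorem \ref{thm:Back:TLp:LinftyMapsRate} are the bridge between discrete and continuum: identifying $u_n$ with $u_n \circ T_n$ (a continuum function) converts the $\TLp{p}$ convergence into ordinary $\Lp{p}(\mu)$ convergence, and the rate $\Vert \Id - T_n\Vert_{\Lp\infty} \ll \eps_n$ — guaranteed precisely by \ref{ass:Main:Ass:L1} — lets us compare the discrete energy $\cE_{n,\eps_n}^{(k,p)}(u_n)$ with a non-local continuum energy evaluated on $u_n\circ T_n$ at a slightly perturbed scale $\eps_n(1\pm o(1))$, up to vanishing multiplicative errors.

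For the $\liminf$ inequality (and compactness) I would follow the standard route: bound the discrete energy below by a non-local continuum functional of Bourgain--Brezis--Mironescu type, using the transport maps and a careful handling of the $k$-fold product kernel $\etaP$. The first step is a technical lemma showing that $\cE_{n,\eps_n}^{(k,p)}(u_n)$ controls (up to $o(1)$ factors) the continuum non-local energy $\int\int (\etaPTilde\text{-weighted}) |u_n\circ T_n(x) - u_n\circ T_n(x+\eps_n z)|^p \,\rho^{k+1}$, which is possible because the product structure of $\etaP$ localizes all $k+1$ points within a common $O(\eps_n)$-ball so that the empirical averages over $i_1,\dots,i_k$ concentrate (this concentration is quantitative, analogous to the probability bound in Theorem \ref{thm:pointwiseConsistency}). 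Then the non-local-to-local results of \cite{ponce2004,Trillos3} (the BBM theory) give $\Gamma$-$\liminf$ convergence of this non-local energy to $\sigma_\eta^{(k)}\int \Vert \nabla v\Vert_2^p \rho^{k+1}$, i.e.\ to $\cE_\infty^{(k,p)}(v)$; simultaneously, a uniform bound on the non-local energies yields $\Wkp1p$-weak precompactness of $\{u_n\circ T_n\}$, hence $\TLp{p}$-precompactness of $\{(\mu_n,u_n)\}$. For the $\limsup$ (recovery sequence), given a target $v\in \Wkp1p$ one mollifies to get a smooth $\tilde v$, restricts to $\Omega_n$, and checks $\cE_{n,\eps_n}^{(k,p)}(\tilde v\lfloor_{\Omega_n}) \to \cE_\infty^{(k,p)}(\tilde v)$ by a direct Taylor/Riemann-sum argument (again using the product-kernel structure and concentration), then removes the mollification by a diagonal argument; in the well-posed case one must additionally arrange that the recovery sequence satisfies the hard label constraints $v(x_i)=y_i$ for $i\le N$, which is handled exactly as in \cite{Slepcev,weihs2023consistency} by locally perturbing $\tilde v$ near the finitely many labeled points — a perturbation whose energy cost is $O(n\eps_n^p)\to 0$ in the well-posed regime but is $\Theta(1)$-obstructing in the ill-posed regime (this dichotomy is the content of the case split $n\eps_n^p\to 0$ vs.\ $n\eps_n^p\to\infty$).

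The main obstacle is the ill-posed case and, within it, correctly tracking how the label constraints interact with the scaling $n\eps_n^p\to\infty$. One must show that a minimizing sequence $u_n$, which is forced to match labels on a vanishingly small ($\eps_n$-scale) neighborhood of $\{x_1,\dots,x_N\}$, can be modified at $o(1)$ energy cost into a sequence obeying \emph{no} constraint, so that its $\Gamma$-limit is the unconstrained functional $(\cS\cG)_\infty^{(q,p)}$; this requires a quantitative ``spike removal'' estimate showing the energy concentrated in the constraint-enforcing boundary layer is $O((n\eps_n^p)^{-1}\cdot(\text{energy scale}))$ and vanishes, combined with the fact that the minimal energy is itself $\Theta(1)$ (bounded below by a Poincaré-type inequality on $\Omega$) so that the spikes are asymptotically negligible. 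A secondary technical difficulty, absent in the graph ($k=1$) literature, is the probabilistic concentration of the inner $k$-fold empirical sums uniformly in the outer variable $x_{i_0}$ over all of $\Omega_n$ (not just on a compactly contained $\Omega'$ as in Theorem \ref{thm:pointwiseConsistency}); this needs a union bound over the $n$ points together with the exponential concentration at scale $n\eps_n^{2(1+qd)}$, which is why the stronger length-scale assumption \ref{ass:Main:Ass:L1} (rather than merely \ref{ass:Main:Ass:L0}) is invoked here. Finally, the $\max_{x_r\in\Omega'}|u(x_r)-u_n(x_r)|\to 0$ uniform-convergence conclusion in the well-posed case follows from the $\TLp{p}$ convergence together with Lipschitz-type regularity of the continuum minimizer $u$ (continuity of $u$ being part of the claim) and a standard argument upgrading $\Lp p$ to local uniform convergence for equi-Hölder discrete minimizers, as in \cite{Slepcev,weihs2023consistency}.
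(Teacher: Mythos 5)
Your high-level architecture is the same as the paper's: decompose into the per-$k$ energies, pass through nonlocal intermediate functionals via the transport maps of Theorem \ref{thm:Back:TLp:LinftyMapsRate}, invoke the BBM-type nonlocal-to-local $\Gamma$-convergence from \cite{ponce2004,Trillos3}, get compactness from a uniform energy bound on minimizers, and conclude with Proposition \ref{prop:Back:Gamma:minimizers}. That skeleton is correct.

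There is, however, a genuine error in the mechanism you give for the well-posed/ill-posed dichotomy, and it points in the wrong direction. The energy cost of overwriting $v|_{\Omega_n}$ at the $N$ labeled points is not $O(n\eps_n^p)$ but $O((n\eps_n^p)^{-1})$: each labeled vertex lies in $O(n^k\eps_n^{kd})$ hyperedges, each contributing $O(1)$ to the $p$-th power difference, against the normalization $n^{-(k+1)}\eps_n^{-(p+kd)}$, giving $C/(n\eps_n^p)$ (this is exactly the estimate of the term $T_2$ in the paper's ill-posed $\limsup$ proof). Hence the constraints can be inserted for free precisely when $n\eps_n^p\to\infty$, which is why the $\Gamma$-limit in the ill-posed case is the \emph{unconstrained} functional $(\cS\cG)_{\infty}^{(q,p)}$; when $n\eps_n^p\to 0$ the insertion cost diverges and the constraints are rigid. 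Your claim that the perturbation cost ``is $O(n\eps_n^p)\to 0$ in the well-posed regime but is $\Theta(1)$-obstructing in the ill-posed regime'' swaps the two regimes and, followed literally, would prove the theorem with the cases interchanged; it also contradicts the correct $(n\eps_n^p)^{-1}$ scaling you state in your third paragraph. Relatedly, the constraint perturbation belongs to the \emph{ill-posed} recovery sequence (there the target $v$ is unconstrained and must be approximated by a constrained discrete sequence); in the well-posed case the dense class of targets already satisfies $v(x_i)=y_i$ and no perturbation is needed. What the well-posed case does require, and what your proposal omits, is the $\liminf$ side: $\TLp{p}$ convergence does not preserve pointwise values, so one must show that any bounded-energy sequence with $v_n(x_i)=y_i$ converges locally uniformly (the paper uses \cite[Lemma 4.5]{Slepcev}, valid under $n\eps_n^p\to 0$) so that the constraints survive in the limit and the $\Gamma$-limit is the constrained functional.

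Two smaller inaccuracies: assumption \ref{ass:Main:Ass:L1} enters the variational proof to guarantee connectivity and to ensure $\Vert\Id-T_n\Vert_{\Lp{\infty}}/\eps_n\to 0$, so the kernel can be compared at a perturbed scale $\tilde\eps_n$ with $\tilde\eps_n/\eps_n\to 1$; the $\Gamma$-convergence argument is deterministic given the transport maps, and no union bound over concentration of the $k$-fold empirical sums is used. Also, the minimum of $(\cS\cG)_{\infty}^{(q,p)}$ is $0$ (attained at constants), not $\Theta(1)$ via a Poincar\'e inequality.
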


The limiting energy identified in Theorem \ref{thm:main} is 
\begin{equation} \label{eq:main:limitingEnergy}
\begin{aligned}
(\cS\cF)_{\infty}^{(q,p)}((\nu,v)) & = \sum_{k=1}^q \lambda_k \sigma_\eta^{(k,p)} \int_{\Omega} \Vert \nabla v(x_0) \Vert_2^p \,  \rho(x_0)^{k+1} \, \dd x_0 \\
 & =  \int_{\Omega} \Vert \nabla v(x_0) \Vert_2^p \, \l \sum_{k=1}^q \lambda_k \sigma_\eta^{(k,p)} \rho(x_0)^{k+1} \r \, \dd x_0.
\end{aligned}
\end{equation}
In particular, it only differs from the limiting energy of $p$-Laplacian learning 
\begin{equation*} 
\cE_\infty^{(1,p)}(v) = \int \Vert \nabla v(x_0) \Vert_2^p \rho(x_0)^2 \, \dd x_0
\end{equation*}
by a density reweighting (see Remark~\ref{rem:weights} for how the hyperedge weights encode locality and hence determine the density-dependent smoothing profile). 
Moreover, the resulting well-/ill-posedness condition on \(\varepsilon_n\) is exactly the same as in~\cite[Theorem~2.1]{Slepcev}, now in the hypergraph context.

Similarly to \cite[Remark 3.1]{weihs2023consistency}, the thresholds in Theorem \ref{thm:main} also imply that we recover an intuition stemming from Sobolev spaces. Indeed, in the continuum, our functions in $\Wkp{1}{p}(\Omega)$ must be at least continuous in order to satisfy pointwise constraints, i.e. be in the well-posed case: by Sobolev inequalities, this can only the case whenever $p > d$. Our results show that this condition is necessary but not sufficient as $\eps_n$ also has to satisfy an upper bound. We also note that in practice, the condition $p > d$ often leads to $p \geq 3$, which satisfies the requirements for pointwise convergence in Theorem~\ref{thm:pointwiseConsistency}.

For the ill-posed case, we note that minimizers of $(\cS\cG)_{\infty}^{(q,p)}$ are constants and therefore, for large $n$, we expect our discrete minimizers to be almost constant with spikes at the known labels (this is observed for $q=1$ in \cite{10.5555/2984093.2984243,elalaoui16}). The labelling problem relying on the thresholding of our minimizers is therefore rendered nonsensical, hence our denomination of ill-posed. The case  $p \leq d$ is also covered by our characterization of our ill-posed case (see \cite[Remark 3.3]{weihs2023consistency}), linking our results back to the Sobolev embedding intuition.

\begin{remark}[Graph-energy representation via induced pairwise weights]\label{rem:graph_representation}
The energy $\cE^{(k,p)}_{n,\eps}$ admits an exact representation as a (pairwise) graph energy.
Indeed, regrouping the sum by the pair $(i_0,i_1)$ yields
\begin{equation}\label{eq:graph_form_V}
\cE^{(k,p)}_{n,\eps}(v)
=\frac{1}{n^{2}\eps^{p}}
\sum_{i,j=1}^n \widetilde W^{(k)}_{ij}\,|v(x_j)-v(x_i)|^p,
\end{equation}
with induced pairwise weights
\begin{equation}\label{eq:Wtilde_V_def}
\widetilde W^{(k)}_{ij}
:=\frac{1}{n^{k-1}\eps^{kd}}
\sum_{i_2,\dots,i_k=1}^n
\eta_{\mathrm{p}}(x_i,x_j,x_{i_2},\dots,x_{i_k}).
\end{equation}
In this sense, the multiway interaction is encoded through an effective coupling matrix
$\widetilde W^{(k)}$ on the vertex set $\{x_1,\dots,x_n\}$.

However, despite the formal resemblance of \eqref{eq:graph_form_V} to standard graph
$p$-Dirichlet energies, the weights \eqref{eq:Wtilde_V_def} are not of the usual
kernel form $\eps^{-d}\eta(|x_i-x_j|/\eps)$ \cite{Slepcev} (nor a reweighting thereof as in \cite{calderSlepcev}): they are local statistics obtained by summing over $(k-1)$ additional indices, and thus
encode local clique/configuration information (and, in geometric sampling models, depend
nontrivially on the sampling density and on $\eps$). Consequently, existing graph
discrete-to-continuum results for kernel-type weights do not directly determine the
continuum behavior of $\cE^{(k,p)}_{n,\eps}$, and additional analysis as in Theorems \ref{thm:pointwiseConsistency} and \ref{thm:main} is required for
this density- and configuration-dependent weight structure.
\end{remark}

\begin{remark}[Clique expansion versus \(V\)-statistic graph representations]
\label{rem:clique_vs_V}
For a hypergraph in the strict combinatorial sense, hyperedges are unordered
subsets \(e\subset V\) with distinct vertices. In that setting, a
pairwise-aggregation energy as in~\cite{scholkopfHyper2006} has the form
\[
    \sum_{e\in E} w(e)
    \sum_{\{i,j\}\subset e}\phi\bigl(v(x_i)-v(x_j)\bigr),
\]
and it can be written exactly as a graph energy
\[
    \frac12\sum_{i,j}\widetilde W_{ij}
    \phi\bigl(v(x_i)-v(x_j)\bigr),
    \qquad
    \widetilde W_{ij}
    :=
    \sum_{e\in E:\ \{i,j\}\subset e} w(e),
\]
up to convention-dependent normalizations of the clique weights.

Our hypergraph energies are written instead in \(V\)-statistic form, which
sums over ordered tuples and includes degenerate tuples with repeated indices;
see Remark~\ref{rem:degenerateTuples}. Thus, strictly speaking, the energy is
not initially written as a sum over unordered \((k+1)\)-vertex hyperedges, and
the classical clique expansion does not apply verbatim to the indexing set.

Nevertheless, the two viewpoints are asymptotically consistent. If one
restricts the \(V\)-statistic to distinct indices and then symmetrizes over all
permutations of each fixed \((k+1)\)-vertex set \(S\), the energy becomes an
exact pairwise-aggregation hypergraph energy; see Remark~\ref{rem:symmetrize_pairwise}. Moreover,
Remark~\ref{rem:VUenergy} shows that the contribution of the degenerate tuples
vanishes under the standard scaling \(n\eps_n^d\to\infty\). Therefore, the
graph representation obtained by regrouping the \(V\)-statistic in
Remark~\ref{rem:graph_representation} may be interpreted as an asymptotic
clique-expansion representation of the corresponding distinct-vertex
hypergraph energy.
\end{remark}

\subsubsection{Higher Order Hypergraph Learning} \label{subsec:main:HOHL}


In contrast to the $p$-Dirichlet form of \eqref{eq:main:hypergraphEnergy}, the HOHL framework fundamentally alters the nature of interactions between vertices by introducing higher-order terms. For instance, the term \( v^\top (L^{(2)})^2 v \) incorporates nested finite differences of \( L^{(2)}(x_i) \) (see \cite{TutSpec}), which already represent aggregated information from multiple neighbors. Such terms approximate second-order derivatives (see Section \ref{sec:main:HOHL}) 
and in this way, HOHL leverages the hypergraph structure by simultaneously modifying the support of interactions (via multiscale decompositions) and the mechanism of interaction (through higher-order regularization).       

We obtain the following asymptotic consistency result for the HOHL surrogate \eqref{eq:multiscale}. Figure~\ref{fig:wellIllPosedHOHL} provides a visual summary of the result.  

\begin{theorem}[Asymptotic consistency analysis of HOHL surrogate]\label{thm:main:HOHL}
    Assume that \ref{ass:Main:Ass:S2}, \ref{ass:Main:Ass:M1}, \ref{ass:Main:Ass:M2}, \ref{ass:Main:Ass:D1}, \ref{ass:Main:Ass:D2} and \ref{ass:Main:Ass:W2} hold. Let $q \geq 1$, $P = \{p_k\}_{k=1}^q \subseteq (0,\infty)$ with $p_1 \leq \cdots \leq p_q$ and $ E_n = \{\eps_n^{(k)}\}_{k=1}^q$ with $\eps_n^{(1)} > \cdots > \eps_n^{(q)}$. Let $(\mu_n,u_n)$ be minimizers of $(\cS\cJ)_{n,E_n}^{(q,P)}$. Assume that $\rho \in \Ck{\infty}$ and that $\eps_n^{(1)} \to 0$.
    \begin{enumerate}
        \item (Well-posed case) Assume that $\eps_n^{(q)}$ satisfies \ref{ass:Main:Ass:L2}, that $n \cdot (\eps_n^{(q)})^{p_q/2 - 1/2}$ is bounded and that $p_q > \frac{5}{2}d + 4$. Then, $\bbP$-a.s., there exists a continuous function $u$ such that $(\mu_n,u_n) \to (\mu,u)$ in $\TLp{2}(\Omega)$ and $\max_{\{r \leq n\}} \vert u(x_r) - u_n(x_r) \vert \to 0$. In particular, $(\mu,u)$ is a minimizer of $(\cS\cJ)_{\infty}^{(q,P)}$.
        \item (Ill-posed case) Assume that $\eps_n^{(q)}$ satisfies \ref{ass:Main:Ass:L1} as well as $n(\eps_n^{(q)})^{2p_q} \to \infty$. Furthermore, assume that $\sup_{n \geq 1} \Vert u_n \Vert_{\Lp{2}(\mu_n)}$ is bounded. Then, $\bbP$-a.s., there exists $u$ and a subsequence $\{n_r\}_{r=1}^\infty$ such that $(\mu_{n_r},u_{n_r}) \to (\mu,u)$ in $\TLp{2}(\Omega)$ and $(\mu,u)$ is a minimizer of $(\cS\cK)_{\infty}^{(q,P)}$.
    \end{enumerate}    
\end{theorem}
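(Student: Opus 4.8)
My plan is to follow the standard variational route for discrete-to-continuum consistency: identify the $\Gamma$-limit (with respect to the $\TLp{2}$ metric) of the discrete functionals $(\cS\cJ)_{n,E_n}^{(q,P)}$, verify the compactness property for sequences of near-minimizers, and conclude via Proposition~\ref{prop:Back:Gamma:minimizers}. The dichotomy between the two cases is governed entirely by the cost of a narrow ``spike'' enforcing the label constraint: in the well-posed regime such spikes are prohibitively expensive and the $\Gamma$-limit is the constrained functional $(\cS\cJ)_\infty^{(q,P)}$, which is strictly convex on its admissible set (its quadratic form $\sum_k\lambda_k\langle\cdot,\Delta_\rho^{p_k}\cdot\rangle$ has kernel equal to the constants, which is eliminated by any one of the $N$ label constraints), hence has a unique minimizer and the whole sequence converges; in the ill-posed regime spikes cost $o(1)$, the constraint dissolves in the limit, the $\Gamma$-limit is the \emph{unconstrained} functional $(\cS\cK)_\infty^{(q,P)}$ whose minimizers are exactly the constants, and one only obtains subsequential convergence to some constant. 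Throughout, the essential simplification is that $\Delta_\rho$ does not depend on $\eps$, so each of the $q$ terms converges to a power of the \emph{same} continuum operator and the sum is treated term by term, with the smallest scale $\eps_n^{(q)}$ and largest power $p_q$ dictating every threshold; note that $\eps_n^{(q)}$ satisfying \ref{ass:Main:Ass:L1} together with $\eps_n^{(1)}>\cdots>\eps_n^{(q)}$ places all scales above the connectivity radius, so the graphs $(\Omega_n,W_{n,\eps_n^{(k)}})$ are eventually connected $\bbP$-a.s.

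The engine is the spectral convergence of $\Delta_{n,\eps_n^{(k)}}$ to $\Delta_\rho$ on the torus --- $\lambda_{n,\eps_n^{(k)},j}\to\beta_j$ and $\psi_{n,\eps_n^{(k)},j}\to\psi_j$ in $\TLp{2}$, together with $\Lp{\infty}$ and smoothness estimates for the discrete eigenvectors valid up to a polynomially-in-$n$ growing frequency cutoff --- as developed in \cite{Trillos,trillos2022rates,calder2020rates,CALDER2022123,Stuart}. For the $\Gamma$-$\liminf$, given $(\mu_n,u_n)\to(\mu,u)$ in $\TLp{2}$ with energy bounded along a subsequence, I would use the spectral representation
\[
\langle u_n,\Delta_{n,\eps_n^{(k)}}^{p_k}u_n\rangle_{\Lp{2}(\mu_n)}=\sum_{j\geq 1}\lambda_{n,\eps_n^{(k)},j}^{p_k}\,\bigl|\langle u_n,\psi_{n,\eps_n^{(k)},j}\rangle_{\Lp{2}(\mu_n)}\bigr|^2,
\]
truncate at a fixed index $J$, pass to the limit using eigenvalue/eigenvector convergence and $\TLp{2}$-convergence of $u_n$, then let $J\to\infty$ to get $\liminf\geq\langle u,\Delta_\rho^{p_k}u\rangle=\cI_\infty^{(p_k)}(u)$ (and in particular $u\in\cH^{p_k}$); summing over $k$ gives the $\liminf$ inequality against $(\cS\cK)_\infty^{(q,P)}$, and for the constrained target $(\cS\cJ)_\infty^{(q,P)}$ one additionally notes that a bounded discrete $\cH^{p_q}$ energy forces equicontinuity of $u_n$ (next paragraph), so the label constraint $u_n(x_i)=y_i$ passes to the limit. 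For the $\Gamma$-$\limsup$ I would take $u_n=u|_{\Omega_n}$ (plus, in the ill-posed case, a fixed narrow bump correcting the values at $x_1,\dots,x_N$): when $u$ is a finite linear combination of eigenfunctions of $\Delta_\rho$ --- hence $\Ck{\infty}$, since $\rho\in\Ck{\infty}$ --- consistency of $\Delta_{n,\eps_n^{(k)}}$ on smooth functions gives $\langle u|_{\Omega_n},\Delta_{n,\eps_n^{(k)}}^{p_k}u|_{\Omega_n}\rangle\to\langle u,\Delta_\rho^{p_k}u\rangle$, and density of such $u$ in $\cH^{p_q}$ (respectively in $\{v\in\cH^{p_q}:v(x_i)=y_i\}$, by mollification plus a fixed smooth correction of arbitrarily small $\cH^{p_q}$-norm) together with a diagonal extraction upgrades this to all admissible limits.

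In the well-posed case, comparing $u_n$ with the recovery sequence of a fixed smooth admissible function gives $\sup_n(\cS\cJ)_{n,E_n}^{(q,P)}((\mu_n,u_n))<\infty$, hence a uniform bound on $\sum_{j\geq 2}(1+\lambda_{n,\eps_n^{(q)},j})^{p_q}|\langle u_n,\psi_{n,\eps_n^{(q)},j}\rangle|^2$ (the spectral gap $\lambda_{n,\eps_n^{(q)},2}\to\beta_2>0$ turning the seminorm bound into $\Lp{\infty}$ control of the mean-zero part of $u_n$, whose constant part is then pinned by the label values). Splitting $u_n$ into a spectral truncation $u_n^J$ and a tail, a Cauchy--Schwarz estimate bounds $\|u_n-u_n^J\|_{\Lp{\infty}}$ by the energy times $\bigl(\sum_{j>J}(1+\lambda_{n,\eps_n^{(q)},j})^{-p_q}\|\psi_{n,\eps_n^{(q)},j}\|_{\Lp{\infty}}^2\bigr)^{1/2}$; the discrete eigenvector bounds (Weyl-type growth of the eigenvalues and $\Lp{\infty}$-growth of the eigenvectors) make this tail summable and uniformly small precisely when $p_q$ is large relative to $d$ and $\eps_n^{(q)}$ is neither too small nor too large --- this is where \ref{ass:Main:Ass:L2}, the boundedness of $n(\eps_n^{(q)})^{p_q/2-1/2}$ (which forces $n(\eps_n^{(q)})^{2p_q}\to0$, so spikes are expensive), and $p_q>\frac{5}{2}d+4$ enter, the two $\eps_n^{(q)}$-bounds being jointly feasible only when $p_q$ is of this order. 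Equicontinuity plus $\TLp{2}$-precompactness then yield a continuous limit $u$, the uniform convergence $\max_{r\le n}|u(x_r)-u_n(x_r)|\to0$, and minimality of $(\mu,u)$ for $(\cS\cJ)_\infty^{(q,P)}$. In the ill-posed case, I would build an explicit bump $\tilde u_n$ equal to $y_i$ at $x_i$ and to a fixed constant away from the labels, supported in balls of radius a small multiple of $\eps_n^{(1)}$; its energy equals $\sum_k\lambda_k\,\Theta\bigl((n(\eps_n^{(k)})^{2p_k})^{-1}\bigr)$, each term vanishing since $n(\eps_n^{(k)})^{2p_k}\ge n(\eps_n^{(q)})^{2p_q}\to\infty$, so $\inf(\cS\cJ)_{n,E_n}^{(q,P)}\to0$; consequently the minimizers have vanishing energy, and together with the assumed bound on $\|u_n\|_{\Lp{2}(\mu_n)}$ --- which cannot be dropped, since the seminorm controls neither the $L^2$-norm nor, for powers above one, any maximum principle --- and the spectral gap, the non-constant part of $u_n$ tends to $0$ in $\Lp{2}(\mu_n)$; hence along a subsequence $u_n\to c$ in $\TLp{2}$ for a constant $c$, which minimizes $(\cS\cK)_\infty^{(q,P)}$.

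The $\Gamma$-convergence itself is a fairly routine adaptation of the single-scale fractional-Laplacian analysis of \cite{weihs2023consistency,Stuart} to a finite sum of scales. The genuinely delicate step is the quantitative equicontinuity of discrete minimizers underpinning the uniform-convergence conclusion in the well-posed case: it forces one to combine a discrete Morrey/Sobolev-type embedding with $\eps_n$-dependent $\Lp{\infty}$ and regularity bounds for the discrete eigenvectors and to track how these degrade as $\eps_n^{(q)}\to0$. Balancing the lower bound on $\eps_n^{(q)}$ needed for the eigenvector estimates against the upper bound $n(\eps_n^{(q)})^{p_q/2-1/2}=O(1)$ needed for the energy to control a strong enough norm is exactly what pins down the (non-sharp) threshold $p_q>\frac{5}{2}d+4$, and I expect this estimate, rather than any point of principle, to be the crux of the argument.
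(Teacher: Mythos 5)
Your proposal is correct and follows essentially the same route as the paper: the paper's proof also rests on spectral convergence of $\Delta_{n,\eps_n^{(k)}}$ to $\Delta_\rho$, handles the sum term by term via a common recovery sequence with subadditivity of $\limsup$ and superadditivity of $\liminf$, draws compactness and uniform convergence from the largest-power/smallest-scale term, and concludes with Proposition~\ref{prop:Back:Gamma:minimizers}. The only difference is presentational: the paper outsources the single-scale spectral estimates, the $\Lp{\infty}$ tail bounds behind the threshold $p_q>\tfrac{5}{2}d+4$, and the recovery-sequence constructions to the cited results of \cite{weihs2023consistency}, whereas you sketch that machinery from scratch.
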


Similarly to Theorem \ref{thm:main}, this result shows how the choice of scale and regularity governs the transition between expressive interpolation and trivial smoothing. 
Notably, we remark that the characterization mostly depends on the parameters of the finest scale, i.e. $\eps_n^{(q)}$ and $p_q$.

In contrast to standard hypergraph learning, which converges to a $\Wkp{1}{p}$ seminorm, as explained in Section \ref{sec:main:HOHL}, the continuum limiting energy identified through Theorem \ref{thm:main:HOHL} indicates that the HOHL surrogate converges to a $\Wkp{p_q}{2}$ seminorm. 
This underscores the distinct regularity structure induced by our higher-order formulation.

Furthermore, the same Sobolev intuition developed for hypergraph learning prevails for the HOHL surrogate. In particular, our result implies that $p_q > d/2$ --- or equivalently that $\Wkp{p_q}{2}$ is embedded in $\Ck{0}$ --- is necessary for well-posedness. Similarly $p_q \leq d/2$ also partly characterizes the ill-posed case.

Finally, these scalings are not believed to be sharp, since they inherit the non-sharpness of the underlying fractional Laplacian regularization. We refer to \cite{weihs2023consistency} for a broader discussion of potential improvements and related numerical experiments.

\section{Proofs} \label{sec:proofs}

\subsection{Euler-Lagrange equations of hypergraph learning} \label{subsubsec:Proofs:HL:EL}

In this section, we present the proof for the derivation of the Euler-Lagrange equations of hypergraph learning.

\begin{proof}[Proof of Proposition \ref{prop:eulerLagrange}]
We proceed as follows:
\begin{align}
    \frac{d}{dt} \cE_{n,\eps}^{(k,p)}(u + tv) \lfloor_{t=0} & = \frac{p}{n^{k+1}\eps^{p + kd}} \hspace{-3mm} \sum_{i_0,\cdots, i_k = 1}^{n} \hspace{-3mm} \etaP(x_{i_0},\dots,x_{i_k})\left\vert u(x_{i_1}) - u(x_{i_0}) \right\vert^{p-2} \notag \\
    & \qquad \qquad \times (u(x_{i_1}) - u(x_{i_0})) (v(x_{i_1}) - v(x_{i_0})) \notag \\
    &=: \sum_{i_0,i_1=1}^n g(x_{i_0},x_{i_1}) (v(x_{i_1}) - v(x_{i_0})) \notag \\
    &= \sum_{i_0,i_1=1}^n g(x_{i_0},x_{i_1}) v(x_{i_1}) - \sum_{i_0,i_1=1}^n g(x_{i_0},x_{i_1}) v(x_{i_0}) \notag \\
    &= \sum_{i_0,i_1=1}^n g(x_{i_1},x_{i_0}) v(x_{i_0}) - \sum_{i_0,i_1=1}^n g(x_{i_0},x_{i_1}) v(x_{i_0}) \notag \\
    &= -2 \sum_{i_0,i_1=1}^n g(x_{i_0},x_{i_1}) v(x_{i_0}) \label{eq:eulerLagrange:sign} \\
    &=  \langle -2p \Delta_{n,\eps}^{(k,p)}(u), v \rangle_{\Lp{2}(\mu_n)} \notag 
\end{align}
where we used the fact that the function $f(x,y) = \etaP(x,y,x_{i_2},\dots,x_{i_k})$ satisfies $f(x,y) = f(y,x)$ for all fixed $x_{i_2},\dots,x_{i_k}$ implying that $g(x,y) = - g(y,x)$ for \eqref{eq:eulerLagrange:sign}. We deduce that $u$ minimizing $\sum_{k=1}^q \lambda_k \cE_{n,\eps}^{(k,p)}$ must satisfy 
\(
\sum_{k=1}^q \lambda_k \Delta_{n,\eps_n}^{(k,p)}(u) = 0. 
\) 
Conversely, by convexity any $u$ satisfying $\sum_{k=1}^q \lambda_k \Delta_{n,\eps_n}^{(k,p)}(u) = 0$ must be a minimizer.
\end{proof}

\subsection{Pointwise convergence of hypergraph learning} \label{subsubsec:Proofs:HL:Pointwise}

In this section, we present the proofs related to Theorem \ref{thm:pointwiseConsistency}.

\subsubsection{Equivalent representation of the continuum Laplacian}

First, we prove an equivalent representation of the continuum Laplacian $\Delta_{\infty}^{(k,p)}$. The latter will appear as the continuum limit in Theorem \ref{thm:pointwiseConsistency}. We start by introducing the following constants: 
\[
\sigma_{\eta}^{(k,p,1)} = \int_{(\bbR^d)^k} \hspace{-3mm}  \etaPTilde(\tilde{z}_1,\dots,\tilde{z}_k)  \vert (\tilde{z}_1)_d \vert^{p-2} (\tilde{z}_1)_1^2\, \dd \tilde{z}_k \cdots \dd \tilde{z}_1,\\
\]
and 
\[
\sigma_{\eta}^{(k,p,2)} = \int_{(\bbR^d)^k} \etaPTilde(\tilde{z}_1,\dots,\tilde{z}_k) \vert(\tilde{z}_1)_d\vert^{p-2}(\tilde{z}_1)_d (\tilde{z}_2)_d \, \dd \tilde{z}_k \cdots \dd \tilde{z}_1.
\]
The key idea for the following computations is to consider integrals of the form \[
\int_{(\bbR^d)^k} \widetilde\eta_p(z_1,\dots,z_k) g(z_1,\dots,z_k) \, \dd z_1 \cdots \dd z_k 
\]
as an expectation with respect to the measure $\bbQ$ defined through the density
\[
f(z_1,\dots,z_k)
= \frac{1}{\mathcal Z}\,
 \widetilde\eta_p(z_1,\dots,z_k),
\qquad
\mathcal Z := \int_{(\mathbb R^d)^k} \widetilde\eta_p(z)\, \dd z.
\]
By considering a random vector $(Z_1,\dots,Z_k) \sim \bbQ$, we obtain that
\[
\mathcal{Z} \int_{(\mathbb R^d)^k} \frac{\widetilde\eta_p(z_1,\dots,z_k)}{\mathcal{Z}} g(z_1,\dots,z_k)\,dz_1\cdots dz_k
= \mathcal Z\,\mathbb E_{\mathbb Q}[g(Z_1,\dots,Z_k)].
\]
The structure of $\widetilde\eta_p$ implies strong symmetry properties of the
law of $(Z_1,\dots,Z_k)$, specifically invariance under simultaneous
rotations of all coordinates and under affine reflections fixing $Z_1$. Exploiting these
invariances via conditional expectations and multivariate symmetry arguments, we obtain the identities between $\sigma_{\eta}^{(k,p)}$, $\sigma_{\eta}^{(k,p,1)}$ and $\sigma_{\eta}^{(k,p,2)}$ stated below.

\begin{lemma}[Constant identity I] \label{lem:identity1}
Assume that Assumption \eqref{ass:Main:Ass:W2} holds. Let $d\ge 2$. 
Then,
\[
\sigma_{\eta}^{(k,p)} = (p-1)\,\sigma_{\eta}^{(k,p,1)}.
\]
\end{lemma}

\begin{proof}
For $(Z_1,\dots,Z_k) \sim \bbQ$, we write $Z_1=(X_1,\dots,X_d)^\top$. Then, 
\[
\sigma_{\eta}^{(k,p)} = \mathcal Z\,\mathbb E[|X_d|^p],
\qquad \text{and} \qquad
\sigma_{\eta}^{(k,p,1)} = \mathcal Z\,\mathbb E[|X_d|^{p-2}X_1^2]
\]
where the expectation is taken with respect to $\bbQ$.

By Lemma \ref{lem:radialMarginal}, the marginal law of $Z_1$ is rotation-invariant.
By \cite[Theorem 4.1.2]{Bryc1995}, since the distribution of $Z_1$ is
rotation invariant, we may write
\(
Z_1 \overset{d}{=} R U,
\)
where $R \overset{d}{=} \|Z_1\| \ge 0$, $U \in \mathbb S^{d-1}$ is uniformly distributed
on the unit sphere and $R$ and $U$ are independent. 
Writing $U=(U_1,\dots,U_d)$, we have
\[
|X_d|^p \overset{d}{=} R^p|U_d|^p
\qquad \text{and} \qquad
|X_d|^{p-2}X_1^2 \overset{d}{=} R^p |U_d|^{p-2}U_1^2.
\]
Therefore, using the independence of $R$ and $U$, we obtain that
\[
\frac{\mathbb{E}[|X_d|^p]}{\mathbb{E}[|X_d|^{p-2}X_1^2]}
= \frac{\bbE(R^p) \mathbb{E}[|U_d|^p]}{\bbE(R^p) \mathbb{E}[|U_d|^{p-2}U_1^2]} = \frac{\mathbb{E}[|U_d|^p]}{\mathbb{E}[|U_d|^{p-2}U_1^2]}.
\]
Let $Y_i := U_i^2$.  For $U$ uniform on $\mathbb{S}^{d-1}$, by the proof of \cite[Theorem 3.3]{FangKotzNg1990} (which shows that $(U_1,\dots,U_d) \overset{d}{=} x/\Vert x \Vert$ where $x \sim \cN(0,\Id_{d \times d})$) and \cite[Section 1.4]{FangKotzNg1990} (which shows that $(x_1^2/\Vert x \Vert^2,\dots,x_d^2/\Vert x \Vert^2)$ is Dirichlet-distributed with parameters $(\frac{1}{2},\dots,\frac{1}{2})$), the vector
$(Y_1,\dots,Y_d)$ is Dirichlet-distributed with parameters $(\frac{1}{2},\dots,\frac{1}{2})$. We can then apply the moment formula for Dirichlet distributions \cite[Section 27.6]{KotzBalakrishnanJohnson2000}: for $Y \sim \mathrm{Dirichlet}(\alpha_1,\dots,\alpha_n)$ and $\beta_i >0$, \[
\bbE\ls \prod_{i=1}^n Y_i^{\beta_i} \rs = \frac{\Gamma\l \sum_{i=1}^n \alpha_i \r}{\Gamma\l \sum_{i=1}^n \alpha_i + \beta_i \r} \prod_{i=1}^n \frac{\Gamma(\alpha_i + \beta_i)}{\Gamma(\alpha_i)}.
\]
This yields:
\[
\mathbb E[|U_d|^p] = \bbE[Y_d^{p/2}]
= \frac{\Gamma\l \frac{d}{2} \r}{\Gamma\l \frac{d+p}{2}\r} \frac{\Gamma\l \frac{1}{2}\r^{d-1}\Gamma\l \frac{p+1}{2} \r}{\Gamma\l \frac{1}{2}\r^{d}}= 
\frac{\Gamma\l\frac{p+1}{2}\r \Gamma(\frac{d}{2})}
       {\Gamma\l\frac{1}{2}\r\Gamma\l\frac{d+p}{2}\r}
\]
and
\[
\mathbb E[|U_d|^{p-2}U_1^2] = \bbE[Y_d^{(p-2)/2} Y_1] 
= \frac{\Gamma\l \frac{d}{2} \r}{\Gamma\l \frac{d+p}{2}\r} \frac{\Gamma\l \frac{p-1}{2} \r\Gamma\l \frac{3}{2} \r}{\Gamma\l \frac{1}{2}\r^{2}}.
\]
Taking the ratio, we obtain 
\begin{align*}
   \frac{\mathbb E[|U_d|^p]}{\mathbb E[|U_d|^{p-2}U_1^2]} =  \frac{\Gamma\l \frac{p+1}{2} \r \Gamma\l \frac{1}{2} \r}{\Gamma\l \frac{p-1}{2} \r \Gamma\l \frac{3}{2} \r} = 
   \frac{\frac{p-1}{2} \Gamma\l\frac{p-1}{2}\r \Gamma\l\frac12\r}{\Gamma\l\frac{p-1}{2}\r \frac12 \Gamma\l\frac12\r} = p-1
\end{align*} 
where we used the identity $\Gamma(t+1)=t\Gamma(t)$ for the middle equality. We conclude that
\(
\sigma_{\eta}^{(k,p)} = (p-1)\,\sigma_{\eta}^{(k,p,1)}.
\)
\end{proof}

The next lemma is proven in Section \ref{sec:equivalentSupplement} and is essential for Lemma \ref{lem:identity2}.

\begin{lemma}[Reflections] \label{lem:reflections}
Let $z_1 \in \bbR^d$ be non-zero. Define $v:=\frac{z_1}{\|z_1\|}$, $m:=\frac{z_1}{2}$, the function $R_{z_1}:\mathbb R^d\to\mathbb R^d$ by
\(
R_{z_1}(y) := y - 2(y\cdot v)\,v,
\)
and the function $S_{z_1}:\mathbb R^d\to\mathbb R^d$ by
\(
S_{z_1}(y) := m + R_{z_1}(y-m).
\)
Then:
    \begin{enumerate}
        \item $R_{z_1}$ is the reflection across the hyperplane $\{y : y\cdot v = 0\}$ and an isometry;
        \item $S_{z_1}$ is the reflection across the affine hyperplane $H_{z_1} := \{y\in\mathbb R^d : (y-m)\cdot v = 0\}$; 
        \item $S_{z_1}$ is an isometry;
        \item $S_{z_1}(0) = z_1$ and $S_{z_1}(z_1) = 0$;
        \item $\Vert S_{z_1}(y) \Vert = \Vert z_1 - y \Vert$ and $\Vert S_{z_1}(y) - z_1 \Vert = \Vert y \Vert$.
    \end{enumerate}
\end{lemma}

\begin{lemma}[Constant identity II] \label{lem:identity2}
Assume that Assumption \ref{ass:Main:Ass:W2} holds. Then,
\[
\sigma_\eta^{(k,p,2)} 
= \frac12\,\sigma_\eta^{(k,p)}.
\]
\end{lemma}

\begin{proof}

For $(Z_1,\dots,Z_k) \sim \bbQ$, we write $Z_1=(X_1,\dots,X_d)^\top$ and $Y:=(Z_2)_d$. Then, 
\(
\sigma_{\eta}^{(k,p)} = \mathcal Z\,\mathbb E[|X_d|^p]\)
and \(\sigma_\eta^{(k,p,2)}=\mathcal Z\,\mathbb E[|X_d|^{p-2}X_dY]
\)
where the expectation is taken with respect to $\bbQ$.

As shown in the proof of Lemma \ref{lem:radialMarginal}, since $\widetilde\eta_p$ depends only on norms and pairwise distances,
\(
\widetilde\eta_p(Qz_1,\dots,Qz_k)=\widetilde\eta_p(z_1,\dots,z_k)
\)
for all $Q \in O(d)$, where $O(d)$ denotes the group of orthogonal matrices in $\bbR^d$.
We now fix $z_1\neq 0$ and let
\(
G_{z_1}:=\{Q\in O(d): Qz_1=z_1\}
\)
be the subgroup of orthogonal transformations fixing $z_1$.
For $Q\in G_{z_1}$, we therefore have
\begin{equation} \label{eq:rotationInvarianceSubgroup}
    \widetilde\eta_p(z_1,Qz_2,\dots,Qz_k)
=\widetilde\eta_p(z_1,z_2,\dots,z_k).
\end{equation}

The marginal density of $Z_1$ at $z_1$ is
\(
f_{Z_1}(z_1)
= \int_{(\mathbb R^d)^{k-1}} f(z_1,z_2,\dots,z_k)\,\dd z_2\cdots \dd z_k.
\)
For $f_{Z_1}(z_1)>0$, the conditional density of
$(Z_2,\dots,Z_k)$ given $Z_1=z_1$ is
\[
f_{Z_2,\dots,Z_k\mid Z_1}(z_2,\dots,z_k\mid z_1)
= \frac{f(z_1,z_2,\dots,z_k)}{f_{Z_1}(z_1)}.
\]
For a fixed $Q\in G_{z_1}$, we consider the conditional density at the point
$(Qz_2,\dots,Qz_k)$:
\begin{align}
f_{Z_2,\dots,Z_k\mid Z_1}(Qz_2,\dots,Qz_k\mid z_1)
&= \frac{f(z_1,Qz_2,\dots,Qz_k)}{f_{Z_1}(z_1)} \notag \\
&= \frac{\widetilde\eta_p(z_1,Qz_2,\dots,Qz_k)}{\mathcal Z f_{Z_1}(z_1)} \notag \\
&= f_{Z_2,\dots,Z_k\mid Z_1}(z_2,\dots,z_k\mid z_1) \label{eq:rotationInvarianceSubgroup:application}
\end{align}
where we used \eqref{eq:rotationInvarianceSubgroup} for \eqref{eq:rotationInvarianceSubgroup:application}. Thus, for any measurable set $A\subseteq(\mathbb R^d)^{k-1}$,
\begin{align}
\mathbb P\bigl((Z_2,\dots,Z_k)\in A \mid Z_1=z_1\bigr)
&= \int_A f_{Z_2,\dots,Z_k\mid Z_1}(z_2,\dots,z_k\mid z_1)\,\dd z_2\cdots \dd z_k \notag \\
&= \int_A f_{Z_2,\dots,Z_k\mid Z_1}(Qz_2,\dots,Qz_k\mid z_1)\,\dd z_2\cdots  \dd z_k \notag \\
&= \int_{QA} f_{Z_2,\dots,Z_k\mid Z_1}(w_2,\dots,w_k\mid z_1)\,\dd w_2\cdots \dd w_k \label{eq:changeOfVariables} \\
&= \mathbb P\bigl((Z_2,\dots,Z_k)\in QA \mid Z_1=z_1\bigr) \label{eq:rotationInvariance}
\end{align}
where we used the change of variables $w_j = Qz_j$ for $2 \leq j \leq k$ and the fact that $\vert \det Q \vert = 1$ for \eqref{eq:changeOfVariables}.
Equivalently, we obtain that
\[
(Z_2,\dots,Z_k)\mid(Z_1=z_1)
\;\overset{d}{=}\;
(QZ_2,\dots,QZ_k)\mid(Z_1=z_1).
\]
By picking $A = B \times \bbR^d \times \dots \times \bbR^d$ for $B \subseteq \bbR^d$ a measurable set, we obtain
\begin{align}
     \mathbb P\bigl(Z_2 \in B \mid Z_1=z_1\bigr) &= \mathbb P\bigl((Z_2,\dots,Z_k)\in A \mid Z_1=z_1\bigr) \notag \\
     &= \mathbb P\bigl((Z_2,\dots,Z_k)\in QA \mid Z_1=z_1\bigr) \label{eq:rotationInvariance:application} \\
     &= \mathbb P\bigl(Z_2 \in QB \mid Z_1=z_1\bigr) \notag 
\end{align}
where we used \eqref{eq:rotationInvariance} for \eqref{eq:rotationInvariance:application}, which implies that
\begin{equation} \label{eq:marginalInvariance}
Z_2\mid(Z_1=z_1)\;\overset{d}{=}\;QZ_2\mid(Z_1=z_1)
\end{equation}
for all $Q\in G_{z_1}.$
Taking expectations gives
\begin{equation} \label{eq:rotationExpectation}
    \mathbb E[Z_2\mid Z_1=z_1]
= Q\,\mathbb E[Z_2\mid Z_1=z_1].
\end{equation}
Specifically, this means that $\mathbb E[Z_2\mid Z_1=z_1]$ is a vector which is fixed by all $Q\in G_{z_1}.$ We decompose $w := \mathbb E[Z_2\mid Z_1=z_1] = \l w \cdot \frac{z_1}{\Vert z_1 \Vert}\r \frac{z_1}{\Vert z_1 \Vert} + \l w -\l w \cdot \frac{z_1}{\Vert z_1 \Vert}\r\frac{z_1}{\Vert z_1 \Vert} \r :=  \l w \cdot \frac{z_1}{\Vert z_1 \Vert}\r \frac{z_1}{\Vert z_1 \Vert} + w_\perp$. Let $Q \in G_{z_1}$, then we have
\begin{align}
w & = Qw 
= \l w \cdot \frac{z_1}{\Vert z_1 \Vert}\r \frac{Qz_1}{\Vert z_1 \Vert} + Q w_\perp 
= \l w \cdot \frac{z_1}{\Vert z_1 \Vert}\r \frac{z_1}{\Vert z_1 \Vert} + Q w_\perp  
 = w - w_\perp + Qw_\perp \notag
\end{align}
where we used~\eqref{eq:rotationExpectation} and the fact that $Q z_1 = z_1$ by assumption. This implies that $w_\perp = Q w_\perp$ and, since the action of $G_{z_1}$ on the orthogonal complement $z_1^\perp$ is the
full orthogonal group $O(d-1)$ \cite[Example 21.19]{Lee2012}, this implies that $w_\perp = 0$. We therefore conclude that 
\begin{equation} \label{eq:scalar}
    \mathbb E[Z_2\mid Z_1=z_1] = \alpha(z_1)\,z_1
\end{equation}
for some scalar $\alpha(z_1)$.

We recall that
\(
\etaPTilde(z_1,\dots,z_k) = \left[ \prod_{s=1}^k \eta(\vert z_s\vert) \right]  \left[ \prod_{j=2}^k \prod_{r=1}^{j-1} \eta(\vert z_j - z_r \vert) \right]
\)
and let $S_{z_1}$ be the map defined in Lemma \ref{lem:reflections}.
Now, we estimate as follows: 
\begin{align}
    &\etaPTilde(z_1,S_{z_1}(z_2),\dots,S_{z_1}(z_k)) \notag \\
    &= \eta(\vert z_1 \vert) \left[ \prod_{s=2}^k \eta(\vert S_{z_1}(z_s)\vert) \right] \ls \prod_{j=2}^k \eta(\vert S_{z_1}(z_j) - z_1 \vert) \rs \notag \\
    & \qquad \qquad \qquad \times \left[ \prod_{j=2}^k \prod_{r=2}^{j-1} \eta(\vert S_{z_1}(z_j) - S_{z_1}(z_r) \vert) \right] \notag \\
    &=\eta(\vert z_1 \vert) \left[ \prod_{s=2}^k \eta(\vert S_{z_1}(z_s)\vert) \right] \ls \prod_{j=2}^k \eta(\vert S_{z_1}(z_j) - z_1 \vert) \rs \left[ \prod_{j=2}^k \prod_{r=2}^{j-1} \eta(\vert z_j - z_r \vert) \right] \label{eq:isometryS:application} \\
    &=\eta(\vert z_1 \vert) \left[ \prod_{s=2}^k \eta(\vert z_s - z_1 \vert) \right] \ls \prod_{j=2}^k \eta(\vert z_j \vert) \rs \left[ \prod_{j=2}^k \prod_{r=2}^{j-1} \eta(\vert z_j - z_r \vert) \right] \label{eq:metricIdentities} \\
    &= \etaPTilde(z_1,z_2,\dots,z_k) \label{eq:metricIdentities2}
\end{align}
where we used part 3 of Lemma \ref{lem:reflections} for \eqref{eq:isometryS:application} and part 4 of Lemma \ref{lem:reflections} for \eqref{eq:metricIdentities}. This directly implies:
\begin{align}
f_{Z_2,\dots,Z_k\mid Z_1}(S_{z_1}(z_2),\dots,S_{z_1}(z_k)\mid z_1)
&= \frac{f(z_1,S_{z_1}(z_2),\dots,S_{z_1}(z_k))}{f_{Z_1}(z_1)} \notag \\
&= \frac{\etaPTilde(z_1,S_{z_1}(z_2),\dots,S_{z_1}(z_k))}{\mathcal Z f_{Z_1}(z_1)} \notag \\
&= f_{Z_2,\dots,Z_k\mid Z_1}(z_2,\dots,z_k\mid z_1) \label{eq:metricIdentites:application}
\end{align}
where we used \eqref{eq:metricIdentities2} for \eqref{eq:metricIdentites:application}.
Therefore, for a measurable set $A\subseteq(\mathbb R^d)^{k-1}$,
\begin{align}
\mathbb P\!\left( (Z_2,\dots,Z_k)\in A \,\middle|\, Z_1=z_1\right)
&= \int_A f_{Z_2,\dots,Z_k\mid Z_1}(z_2,\dots,z_k\mid z_1)\,
    \dd z_2\cdots \dd z_k \notag \\
&= \int_A f_{Z_2,\dots,Z_k\mid Z_1}(S_{z_1}(z_2),\dots,S_{z_1}(z_k)\mid z_1)\,
    \dd z_2\cdots \dd z_k \label{eq:invarianceS1} \\
&= \int_{S_{z_1}(A)}
    f_{Z_2,\dots,Z_k\mid Z_1}(w_2,\dots,w_k\mid z_1)\,
    \dd w_2\cdots \dd w_k \label{eq:invarianceS1:2} \\
    &= \mathbb P\!\left( (Z_2,\dots,Z_k)\in S_{z_1}(A)
\,\middle|\, Z_1=z_1\right) \notag
\end{align}
where we used \eqref{eq:metricIdentites:application} for \eqref{eq:invarianceS1}, the change of variables $w_j = S_{z_1}(z_j)$ and the fact
that $|\det DS_{z_1}| = 1$ (since $S_{z_1}$ is an isometry by part 3 of Lemma \ref{lem:reflections}) for \eqref{eq:invarianceS1:2}.
We conclude that 
\(
(Z_2,\dots,Z_k)\mid(Z_1=z_1)
\overset{d}{=}
(S_{z_1}(Z_2),\dots,S_{z_1}(Z_k))\mid(Z_1=z_1),
\)
and taking marginals, analogously to how we derived \eqref{eq:marginalInvariance},
\begin{equation} \label{eq:marginalInvariance:2}
    Z_2\mid(Z_1=z_1)
\overset{d}{=}
S_{z_1}(Z_2)\mid(Z_1=z_1).
\end{equation}

We now estimate as follows (and using the notation of Lemma \ref{lem:reflections}): \begin{align}
    \alpha(z_1) z_1 &=\mathbb E[Z_2\mid Z_1=z_1] \label{eq:0}\\
    &= \bbE[S_{z_1}(Z_2)\mid Z_1=z_1] \label{eq:1} \\
    &=\bbE[m + R_{z_1}(Z_2 - m)\mid Z_1=z_1] \notag \\
    &= m + R_{z_1}(\bbE[Z_2 \mid Z_1 = z_1] - m) \label{eq:2} \\
    &= m + R_{z_1}\l \alpha(z_1)z_1 - \frac{z_1}{2} \r \label{eq:3} \\
    &= m + \l \alpha(z_1) - \frac{1}{2} \r R_{z_1}(z_1) \label{eq:4} \\
    &= \frac{z_1}{2} + \l \frac{1}{2} - \alpha(z_1) \r z_1 \label{eq:5} \\
    &= (1 - \alpha(z_1)) z_1 \notag. 
\end{align}
where we used \eqref{eq:scalar} for \eqref{eq:0}, \eqref{eq:marginalInvariance:2} for \eqref{eq:1}, part 1 of Lemma \ref{lem:reflections} for \eqref{eq:2}, \eqref{eq:scalar} for \eqref{eq:3}, part 1 of Lemma \ref{lem:reflections} for \eqref{eq:4} and the fact that $R_{z_1}(z_1) = -z_1$ for \eqref{eq:5}. Since $z_1 \neq 0$, we deduce that $\alpha(z_1) = \frac{1}{2}$ and \(
\mathbb E[Z_2\mid Z_1=z_1]=\frac{z_1}{2}
\)
or equivalently \begin{equation} \label{eq:conditional_Y}
    \bbE[(Z_2)_d \mid Z_1]= \bbE[ Y\mid Z_1] = \frac{(Z_1)_d}{2} = \frac{X_d}{2}.
\end{equation}

We conclude with the following computation: 
\begin{align}
    \mathbb E[|X_d|^{p-2}X_d Y] &= \bbE[\bbE[|X_d|^{p-2}X_dY \mid Z_1 ]] \label{eq:6} \\
    &= \bbE[|X_d|^{p-2}X_d \bbE[Y \mid Z_1 ]] \label{eq:7}\\
    &= \frac{1}{2} \bbE[|X_d|^{p}] \notag
\end{align}
where we use the tower property of conditional expectation for \eqref{eq:6}, the fact that $X_d$ is measurable with respect to the $\sigma$-algebra induced by $Z_1$ and \eqref{eq:conditional_Y} for \eqref{eq:7}.
From this, we directly obtain
\(
\sigma_\eta^{(k,p,2)}
=\mathcal Z\,\mathbb E[|X_d|^{p-2}X_dY]
=\frac12\,\mathcal Z\,\mathbb E[|X_d|^p]
=\frac12\,\sigma_\eta^{(k,p)}. 
\)
\end{proof}

\begin{corollary}[$p$-Laplacian representation] \label{cor:alternativeLaplacian}
    Assume that assumption \ref{ass:Main:Ass:W2} holds let \(d\ge 2\). Then, 
    \begin{align}
    &\Delta_\infty^{(k,p)}(u)(x) = \biggl( \Vert \nabla u(x) \Vert_2^{p-2} \rho(x)^{k} \nabla \rho(x)\cdot \nabla u(x) \times \frac{2(\sigma_{\eta}^{(k,p)} + (k-1) \sigma_\eta^{(k,p,2)})}{(p-1)\sigma_\eta^{(k,p,1)}} \notag \\
    &+ \rho(x)^{k+1} \Vert \nabla u(x) \Vert_2^{p-2} \biggl[ \Delta u(x) + \l\frac{\sigma_{\eta}^{(k,p)}}{\sigma_{\eta}^{(k,p,1)}} - 1 \r \frac{\nabla u(x)^\top \nabla^2 u(x) \nabla u(x)}{\Vert \nabla u(x)\Vert_2^2}  \biggr] \biggr) \notag\\
    &\times \frac{\sigma_\eta^{(k,p,1)}(p-1)}{2\rho(x)} \notag 
\end{align}
where $\Delta$ denotes the regular continuum Laplacian operator. 
\end{corollary}

\begin{proof}
By Lemma \ref{lem:identity1}, we have that \begin{equation} \label{eq:computation1}
\frac{\sigma_{\eta}^{(k,p)}}{\sigma_{\eta}^{(k,p,1)}} - 1 = p-2.
\end{equation}
Similarly, we have \begin{align}
    \frac{2(\sigma_{\eta}^{(k,p)} + (k-1) \sigma_\eta^{(k,p,2)})}{(p-1)\sigma_\eta^{(k,p,1)}} &= \frac{2}{p-1} \ls \frac{\sigma_{\eta}^{(k,p)}}{\sigma_\eta^{(k,p,1)}} + (k-1)\frac{\sigma_\eta^{(k,p,2)}}{\sigma_\eta^{(k,p,1)}} \rs \notag \\
    &= \frac{2}{p-1} \ls (p-1) + (k-1)(p-1) \frac{\sigma_\eta^{(k,p,2)}}{\sigma_{\eta}^{(k,p)}} \rs \label{eq:8} \\
    &= \frac{2}{p-1} \ls (p-1) + \frac{(k-1)(p-1)}{2} \rs \label{eq:9} \\
    &= k+1 \label{eq:computation2}
\end{align}
where we used Lemma \ref{lem:identity1} for \eqref{eq:8} and Lemma \ref{lem:identity2} for \eqref{eq:9}. We then have:
\begin{align}
    &\biggl( \Vert \nabla u(x) \Vert_2^{p-2} \rho(x)^{k} \nabla \rho(x)\cdot \nabla u(x) \times \frac{2(\sigma_{\eta}^{(k,p)} + (k-1) \sigma_\eta^{(k,p,2)})}{(p-1)\sigma_\eta^{(k,p,1)}} \notag \\
    &+ \rho(x)^{k+1} \Vert \nabla u(x) \Vert_2^{p-2} \biggl[ \Delta u(x) + \l\frac{\sigma_{\eta}^{(k,p)}}{\sigma_{\eta}^{(k,p,1)}} - 1 \r \frac{\nabla u(x)^\top \nabla^2 u(x) \nabla u(x)}{\Vert \nabla u(x)\Vert_2^2}  \biggr] \biggr)\notag \\
     &\times \frac{\sigma_\eta^{(k,p,1)}(p-1)}{2\rho(x)} \notag \\
    &= \biggl( \Vert \nabla u(x) \Vert_2^{p-2} \rho(x)^{k} \nabla \rho(x)\cdot \nabla u(x) (k+1) \notag \\
    &+ \rho(x)^{k+1} \Vert \nabla u(x) \Vert_2^{p-2} \biggl[ \Delta u(x) + (p-2) \frac{\nabla u(x)^\top \nabla^2 u(x) \nabla u(x)}{\Vert \nabla u(x)\Vert_2^2}  \biggr] \biggr) \frac{\sigma_\eta^{(k,p,1)}(p-1)}{2\rho(x)} \label{eq:10} \\
    &= \frac{\sigma_{\eta}^{(k,p)}}{2\rho(x)}\Div(\rho(x)^{k+1}\Vert \nabla u \Vert_2^{p-2} \nabla u(x)) \label{eq:11}
\end{align}
where we used \eqref{eq:computation1} and \eqref{eq:computation2} for \eqref{eq:10} and Lemma \ref{lem:identity1} for \eqref{eq:11}.
\end{proof}

\subsubsection{Proof of Theorem \ref{thm:pointwiseConsistency}}

We recall that $f:(\bbR^d)^k \mapsto \bbR$ is odd symmetric if $f(-x_1,\dots,-x_k)= -f(x_1,\dots,x_k)$ and that for such a function as well as symmetric $A$, we have $\int_A f(x_1,\dots,x_k)\, \dd x_k \cdots \dd x_1 = 0$.

\begin{proof}[Proof of Theorem \ref{thm:pointwiseConsistency}]
In the proof $C>0$ will denote a constant that can be arbitrarily large, independent of $n$ and that may change from line to line. We will roughly follow the strategy in \cite{calderGameTheoretic}.

Let us start by assuming that $p \geq 3$. By Taylor's theorem, if $\psi(t) = \vert t\vert^{p-2}t$, then, we have that $\psi(t) = \psi(a) + \psi'(a)(t-a) + \mathcal{O}(C_b^{p-3}\vert t-a\vert^2)$ for $a,t\in [-C_b,C_b]$. For $u \in \mathrm{C}^3(\mathbb{R}^d)$, let $t = u(x+z) - u(x)$ and $a = \nabla u(x)\cdot z$. Then, using the previous identity, we obtain that
\begin{align}
    &\psi(u(x+z) - u(x)) \notag \\
    &= \vert \nabla u(x)\cdot z\vert^{p-2} \nabla u(x)\cdot z + (p-1) \vert \nabla u(x) \cdot z\vert^{p-2} (u(x+z)-u(x) - \nabla u(x) \cdot z) \notag \\
    &+ \mathcal{O}(C_b^{p-3} \vert u(x+z)-u(x) - \nabla u(x) \cdot z \vert^2). \notag
\end{align}
Noting that $u(x+z)-u(x) - \nabla u(x) \cdot z = z^\top\nabla^2 u(x)z/2 + \mathcal{O}( \Vert u \Vert_{\mathrm{C}^3(\mathbb{R}^d)} \vert z \vert^{3})$ and also $u(x+z)-u(x) - \nabla u(x) \cdot z = \mathcal{O}(\Vert u \Vert_{\mathrm{C}^3(\mathbb{R}^d)} \vert z \vert^{2})$, we continue the above computation:
\begin{align}
    \psi(u(x+z) - u(x)) &= \vert \nabla u(x)\cdot z\vert^{p-2} \nabla u(x)\cdot z + \frac{(p-1)}{2} \vert \nabla u(x) \cdot z\vert^{p-2} z^\top\nabla^2 u(x)z  \notag \\
    &+ \mathcal{O}(\Vert u \Vert_{\mathrm{C}^3(\mathbb{R}^d)}^{p-1} \vert z \vert^{p+1}) + \mathcal{O}(C_b^{p-3} \Vert u \Vert_{\mathrm{C}^3(\mathbb{R}^d)}^{2} \vert z \vert^4). \notag
\end{align}
Finally, we note that $\max\{\vert u(x+z) - u(x)\vert,\vert \nabla u(x) \cdot z \vert\} \leq C_b$ means that we can pick $C_b = \Vert u \Vert_{\mathrm{C}^3(\mathbb{R}^d)} \vert z \vert$ which allows us to conclude that 
\begin{align}
    \psi(u(x+z) - u(x)) &= \vert \nabla u(x)\cdot z\vert^{p-2} \nabla u(x)\cdot z + \frac{(p-1)}{2} \vert \nabla u(x) \cdot z\vert^{p-2} z^\top\nabla^2 u(x)z \notag \\
    &+ \mathcal{O}(\Vert u \Vert_{\mathrm{C}^3(\mathbb{R}^d)}^{p-1} \vert z \vert^{p+1}). \label{eq:pointwise:finalTaylor}
\end{align}

We first start by assuming that $x_{i_0} \in \Omega_n\cap\Omega^\prime$ is fixed (and hence non-random) and let $1 \leq k \leq q$ be fixed. Let us estimate as follows: 
\begin{align}
    &n^k \eps_n^{p+kd} \Delta_{n,\eps_n}^{(k,p)}(u)(x_{i_0}) \notag \\
    &= \hspace{-0.4cm}\sum_{i_1,\dots,i_k = 1}^n \etaP(x_{i_0},\dots,x_{i_k}) \vert (x_{i_1}-x_{i_0})\cdot \nabla u(x_{i_0}) \vert^{p-2}  (x_{i_1}-x_{i_0})\cdot \nabla u(x_{i_0}) \notag \\
    &+ \frac{(p-1)}{2}\sum_{i_1,\dots,i_k = 1}^n \etaP(x_{i_0},\dots,x_{i_k}) \vert (x_{i_1}-x_{i_0})\cdot \nabla u(x_{i_0}) \vert^{p-2} \notag \\
    &\times (x_{i_1}-x_{i_0})^\top \nabla^2 u(x_{i_0}) (x_{i_1}-x_{i_0}) \label{eq:pointwise:taylor} + \hspace{-4mm} \sum_{i_1,\dots,i_k = 1}^n  \hspace{-3mm} \etaP(x_{i_0},\dots,x_{i_k}) \mathcal{O}\left( \Vert u \Vert_{\mathrm{C}^3(\mathbb{R}^d)}^{p-1} \vert x_{i_1}-x_{i_0} \vert^{p+1} \right)  \\
    &=: T_1(x_1,\dots,x_{n}) + T_2(x_1,\dots,x_{n}) + T_3(x_{1},\dots,x_{n}) \notag \\
    &= \E(T_1) + \E(T_2) + \E(T_3) + \sum_{i=1}^3 \left( T_i - \E(T_i) \right) \notag
\end{align}
where we used \eqref{eq:pointwise:finalTaylor} with $x = x_{i_0}$ and $z = x_{i_1} - x_{i_0}$ for \eqref{eq:pointwise:taylor}.

We now want to estimate $\vert T_i - \E(T_i) \vert$ for $1 \leq i \leq 3$ using Theorem \ref{thm:mcdiarmid}. For the purpose of the next few equations, for a general function $f(x_{i_0},\dots,x_{i_k})$, we will write $f(x_{i_0},\dots,x_{i_k})\vert_{\{x_1,\dots,x_n\}}$ where the extra subscript $\{x_1,\dots,x_n\}$ indicates that $x_{i_\ell} \in \{x_1,\dots,x_n\}$ for $1\leq \ell \leq k$.
Let us start by considering
\begin{align}
    &\vert T_3(x_1,\dots,x_i,\dots,x_n) - T_3(x_1,\dots,\tilde{x_i},\dots,x_n) \vert \notag \\ 
    &\leq \sum_{i_1,\dots,i_k=1}^n \Biggl\vert \left[\etaP(x_{i_0},\dots,x_{i_k}) \mathcal{O}\left( \Vert u \Vert_{\mathrm{C}^3(\mathbb{R}^d)}^{p-1} \vert x_{i_1}-x_{i_0} \vert^{p+1} \right) \right]\vert_{\{x_1,\dots,x_i,\dots,x_n\}} \notag \\
    &- \left[\etaP(x_{i_0},\dots,x_{i_k}) \mathcal{O}\left( \Vert u \Vert_{\mathrm{C}^3(\mathbb{R}^d)}^{p-1} \vert x_{i_1}-x_{i_0} \vert^{p+1} \right) \right]\vert_{\{x_1,\dots,\tilde{x}_i,\dots, x_n\}} \Biggr\vert \notag.
\end{align}
We note that each term in the latter sum is different from $0$ only if there exists $1\leq \ell \leq k$ with $i_\ell = i$. By Lemma \ref{lem:combinatorics}, there exists $S^{(n,k)}(i) \leq kn^{k-1}$ such cases and for each of those, the term in the sum can be bounded by 
$C\eps^{p+1} \|\eta\|^{t(k)}_{\Lp{\infty}} \|u\|^{p-1}_{\Ck{2}(\bbR^d)}$ where $t(k)$ is the number of terms in the double product in $\etaP$. By Assumptions \ref{ass:Main:Ass:W2} and \ref{ass:Main:Ass:S1}, this leads to:
\begin{align}
    \vert T_3(x_1,\dots,x_i,\dots,x_n) - T_3(x_1,\dots,\tilde{x}_i,\dots,x_n) \vert
    &\leq Cn^{k-1}\eps^{p+1}\|u\|_{\Ck{3}}^{p-1} \notag 
\end{align}
Similarly, 
\begin{align}
    \vert T_1(x_1,\dots,x_i,\dots,x_n) - T_1(x_1,\dots,\tilde{x}_i,\dots,x_n) \vert 
    &\leq Cn^{k-1} \|\eta\|_{\Lp{\infty}}^{t(k)} \eps^{p-1} \|u\|_{\Ck{1}}^{p-1} \notag    
\end{align}
and 
\begin{align}
    \vert T_2(x_1,\dots,x_i,\dots,x_n) - T_2(x_1,\dots,\tilde{x}_i,\dots,x_n) \vert 
    &\leq Cn^{k-1}\eps^p \|\eta\|_{\Lp{\infty}}^{t(k)} \|u\|_{\Ck{2}}^{p-1}. \notag
\end{align}
Using Theorem \ref{thm:mcdiarmid}, we therefore obtain that $$\bbP(\vert T_i - \E(T_i)\vert \geq t) \leq 2 \exp\left( -\frac{t^2}{Cn^{2k-1} \eps_n^{2p -2} \Vert u \Vert_{\Ck{3}} } \right)$$ and with $t = n^k \eps_n^{p + kd}\delta \Vert u \Vert^{p-1}_{\Ck{3}}$, 
\(
\bbP(\vert T_i - \E(T_i)\vert \geq n^k \eps_n^{p + kd}\delta ) 
\leq 2 \exp\left( -C n \eps_n^{2(1+kd)} \delta^2\right)
\)
for $1\leq i \leq 3$.

We now estimate $\E(T_i)$ for $1 \leq i \leq 3$. In particular, 
\begin{align}
    \frac{1}{n^k\eps_n^{p+kd}} \E(T_3) &= \frac{1}{n^k\eps_n^{p+kd}} \mathcal{O}\left( \eps_n^{p+1} \Vert u \Vert^{p-1}_{\mathrm{C}^3(\mathbb{R}^d)} \E\left( \sum_{i_1,\dots,i_k = 1}^n \etaP(x_{i_0},\dots,x_{i_k}) \right)   \right)\label{eq:pointwise:lengthscale1} \\
    &= \mathcal{O}\left( \eps_n \Vert u \Vert^{p-1}_{\mathrm{C}^3(\mathbb{R}^d)} \frac{1}{\eps_n^{kd}} \int_{\Omega^{k}} \etaP(x_{i_0},x_1,\dots,x_k) \left[\prod_{\ell = 1}^k \rho(x_\ell) \right] \, \dd x_k \cdots \dd x_1 \right) \notag \\
    &= \mathcal{O} \left(  \eps_n \Vert u \Vert^{p-1}_{\mathrm{C}^3(\mathbb{R}^d)} \int_{(\bbR^d)^{k}} \etaPTilde(z_1,\dots,z_k) \, \dd z_k \cdots \dd z_1 \right) \label{eq:pointwise:changeOfVariables} \\
    &= \mathcal{O} \left( \eps_n \Vert u \Vert^{p-1}_{\mathrm{C}^3(\mathbb{R}^d)} \right) \label{eq:pointwise:compactSupport}
\end{align}
where we used Assumption \ref{ass:Main:Ass:W2} to deduce that $\vert x_{i_0} - x_{i_1} \vert = \mathcal{O}(\eps_n)$ for \eqref{eq:pointwise:lengthscale1}, Assumption \ref{ass:Main:Ass:M2} and the change of variables $z_j = (x_j - x_{i_0})/\eps_n$ for $1 \leq j \leq k$ for \eqref{eq:pointwise:changeOfVariables} as well as Assumption \ref{ass:Main:Ass:W2} for \eqref{eq:pointwise:compactSupport}.

For $T_1$, for $n$ large enough, we proceed as follows:
\begin{align}
    &\frac{1}{n^k \eps_n^{p + kd}} \E(T_1) = \frac{1}{ \eps_n^{p + kd}} \int_{\Omega^k}  \etaP(x_{i_0},x_1,\dots,x_{k}) \notag \\
    &\quad\times \vert (x_{1}-x_{i_0})\cdot \nabla u(x_{i_0}) \vert^{p-2}  (x_{1}-x_{i_0})\cdot \nabla u(x_{i_0}) \left[\prod_{\ell = 1}^k \rho(x_\ell) \right] \, \dd x_k \cdots \dd x_1 \notag \\
    &= \frac{1}{\eps_n} \int_{\otimes_j (\{z_j \, | \, x_{i_0} + \eps_n z_j \in \Omega \} \cap \mathrm{supp}(\eta))}  \etaPTilde(z_1,\dots,z_k) \vert z_1 \cdot \nabla u(x_{i_0}) \vert^{p-2}  z_1 \cdot \nabla u(x_{i_0}) \notag \\
    &\quad\times \left[\prod_{\ell = 1}^k \rho(x_{i_0} + \eps_n z_\ell) \right] \, \dd z_k \cdots \dd z_1 \label{eq:pointwise:changeOfVariables2} \\
    &= \frac{1}{\eps_n} \int_{\mathrm{supp}(\eta)^k}  \etaPTilde(z_1,\dots,z_k) \vert z_1 \cdot \nabla u(x_{i_0}) \vert^{p-2}  z_1 \cdot \nabla u(x_{i_0}) \rho(x_{i_0})^k \, \dd z_k \cdots \dd z_1 \notag \\
    &\quad+ \int_{\mathrm{supp}(\eta)^k}  \hspace{-0.7cm}\etaPTilde(z_1,\dots,z_k) \vert z_1 \cdot \nabla u(x_{i_0}) \vert^{p-2}  z_1 \cdot \nabla u(x_{i_0}) \rho(x_{i_0})^{k-1} \nabla \rho(x_{i_0})\notag \\
    &\quad\times (z_1 + \dots + z_k) \, \dd z_k \cdots \dd z_1 \label{eq:pointwise:lemmas} \\
    &\quad+ \mathcal{O}\left(\eps_n \int_{\mathrm{supp}(\eta)^k}  \etaPTilde(z_1,\dots,z_k) \vert z_1 \cdot \nabla u(x_{i_0}) \vert^{p-2}  z_1 \cdot \nabla u(x_{i_0}) \, \dd z_k \cdots \dd z_1 \right) \notag \\
    &= \int_{(\bbR^d)^k} \hspace{-0.3cm} \etaPTilde(z_1,\dots,z_k) \vert z_1 \cdot \nabla u(x_{i_0}) \vert^{p-2}  z_1 \cdot \nabla u(x_{i_0}) \rho(x_{i_0})^{k-1} \nabla \rho(x_{i_0}) \notag \\ 
    &\quad\times (z_1 + \dots + z_k) \, \dd z_k \cdots \dd z_1 + \mathcal{O} \left( \eps_n \Vert u \Vert_{\mathrm{C}^3(\mathbb{R}^d)}^{p-1} \right) \label{eq:pointwise:odd} \\
    &= \rho(x_{i_0})^{k-1} \sum_{i=1}^d \frac{\partial \rho}{\partial x_i}(x_{i_0}) \int_{(\bbR^d)^k}  \etaPTilde(z_1,\dots,z_k) \psi \left(z_1 \cdot \nabla u(x_{i_0}) \right) \notag \\
    &\quad\times(z_1 + \dots + z_k)_i \, \dd z_k \cdots \dd z_1 + \mathcal{O} \left( \eps_n \Vert u \Vert_{\mathrm{C}^3(\mathbb{R}^d)}^{p-1} \right) \label{eq:pointwise:continue}
\end{align}
where we used the change of variables $z_j = (x_j - x_{i_0})/\eps_n$ for $1 \leq j \leq k$ for \eqref{eq:pointwise:changeOfVariables2}, Lemmas \ref{lem:product} and \ref{lem:domain} for \eqref{eq:pointwise:lemmas}, Assumption \ref{ass:Main:Ass:W2} as well as the fact that $f(z_1,\dots,z_k) := \etaPTilde(z_1,\dots,z_k) \vert z_1 \cdot \nabla u(x_{i_0}) \vert^{p-2}  z_1 \cdot \nabla u(x_{i_0}) \rho(x_{i_0})^k$ is odd symmetric for \eqref{eq:pointwise:odd}, and recalling $\psi(t) = |t|^{p-2}t$. Let $O$ be the orthogonal matrix so that $Oe_d = \nabla u(x_{i_0}) / \Vert \nabla u(x_{i_0}) \Vert_2$ where $e_d = (0,\dots,0,1) \in \bbR^d$. By the change of variables $\tilde{z}_j = O^\top z_j$ for $1\leq j \leq k$ and noting that $z_1 \cdot \nabla u(x_{i_0}) = \tilde{z}_1 \cdot O^\top \nabla u(x_{i_0}) = (\tilde{z}_1)_d \Vert \nabla u(x_{i_0}) \Vert_2$, we can continue our computation from~\eqref{eq:pointwise:continue}:
\begin{align}
    &\frac{1}{n^k \eps_n^{p + kd}} \E(T_1) = \Vert \nabla u(x_{i_0}) \Vert_2^{p-1} \rho(x_{i_0})^{k-1} \sum_{i=1}^d \frac{\partial \rho}{\partial x_i}(x_{i_0}) \biggl[ \int_{(\bbR^d)^k}  \etaPTilde(\tilde{z}_1,\dots,\tilde{z}_k) \psi \left( (\tilde{z}_1)_d \right) \notag \\ 
    &\qquad\times \sum_{j=1}^d (O)_{ij} (\tilde{z}_1 + \dots + \tilde{z}_k)_j \, \dd \tilde{z}_k \cdots \dd \tilde{z}_1 \biggr] + \mathcal{O} \left( \eps_n \Vert u \Vert_{\mathrm{C}^3(\mathbb{R}^d)}^{p-1} \right) \notag \\
    &= \Vert \nabla u(x_{i_0}) \Vert_2^{p-1} \rho(x_{i_0})^{k-1} \sum_{i,j=1}^d \frac{\partial \rho}{\partial x_i}(x_{i_0}) (O)_{ij} \sum_{r=1}^k \int_{(\bbR^d)^k}  \hspace{-0.4cm}\etaPTilde(\tilde{z}_1,\dots,\tilde{z}_k) \notag \\ 
    &\qquad\times \psi \left( (\tilde{z}_1)_d \right) (\tilde{z}_r)_j \, \dd \tilde{z}_k \cdots \dd \tilde{z}_1 + \mathcal{O} \left( \eps_n \Vert u \Vert_{\mathrm{C}^3(\mathbb{R}^d)}^{p-1} \right). \notag
\end{align}
For $j \neq d$, we note that 
\begin{align}
    &T_4 := \int_{(\bbR^d)^k}  \etaPTilde(\tilde{z}_1,\dots,\tilde{z}_k) \psi \left( (\tilde{z}_1)_d \right) (\tilde{z}_r)_j \, \dd \tilde{z}_k \cdots \dd \tilde{z}_1 \notag \\
    &= \int_{(\bbR)^{k(d-1)}} \hspace{-5mm}(\tilde{z}_r)_j \left[ \int_{\bbR^k} \etaPTilde(\tilde{z}_1,\dots,\tilde{z}_k) \psi \left( (\tilde{z}_1)_d \right)  \, \dd (\tilde{z}_k)_d \cdots \dd (\tilde{z}_1)_d \right] \, \dd (\tilde{z}_k)_{1:d-1} \cdots \dd (\tilde{z}_{1})_{1:d-1} \notag
\end{align}
(denoting by $(a)_{j:k}$ the elements $a_j,a_{j+1}\dots,a_{k-1},a_{k}$)
and the function $f:\bbR^k \mapsto \bbR$ defined as 
\[
f(y_1,\dots,y_k) = \etaPTilde((\tilde{z}_1)_{1:d-1},y_1,(\tilde{z}_2)_{1:d-1},y_2,\dots,(\tilde{z}_k)_{1:d-1},y_k) \psi \left( y_1 \right)
\]
is odd symmetric for any fixed $(\tilde{z}_1)_{1:d-1},\dots, 
(\tilde{z}_k)_{1:d-1}$ and therefore $T_4 = 0$ and 
\begin{align}
        \frac{1}{n^k \eps_n^{p + kd}} \E(T_1) & = \Vert \nabla u(x_{i_0}) \Vert_2^{p-1} \rho(x_{i_0})^{k-1} \sum_{i=1}^d \frac{\partial \rho}{\partial x_i}(x_{i_0}) (O)_{id} \notag \\ 
        &\quad\times \sum_{r=1}^k \int_{(\bbR^d)^k}  \hspace{-0.3cm}\etaPTilde(\tilde{z}_1,\dots,\tilde{z}_k)  \psi( (\tilde{z}_1)_d ) (\tilde{z}_r)_d  \, \dd \tilde{z}_k \cdots \dd \tilde{z}_1 + \mathcal{O} \left( \eps_n \Vert u \Vert_{\mathrm{C}^3(\mathbb{R}^d)}^{p-1} \right) \notag \\
    &= \Vert \nabla u(x_{i_0}) \Vert_2^{p-1} \rho(x_{i_0})^{k-1} \sum_{i=1}^d \frac{\partial \rho}{\partial x_i}(x_{i_0}) (O)_{id} (\sigma_{\eta}^{(k,p)} + (k-1) \sigma_\eta^{(k,p,2)}) \notag \\
    &\quad+ \mathcal{O} \left( \eps_n \Vert u \Vert_{\mathrm{C}^3(\mathbb{R}^d)}^{p-1} \right). \notag
\end{align}
By recalling that $(O)_{id} = (\nabla u (x_{i_0}))_i/\Vert \nabla u (x_{i_0}) \Vert_2$, we can conclude: 
\begin{align} 
    \frac{1}{n^k \eps_n^{p + kd}} \E(T_1) &= \Vert \nabla u(x_{i_0}) \Vert_2^{p-2} \rho(x_{i_0})^{k-1} \nabla \rho(x_{i_0})\cdot \nabla u(x_{i_0}) (\sigma_{\eta}^{(k,p)} + (k-1) \sigma_\eta^{(k,p,2)}) \notag \\
    & \qquad \qquad + \mathcal{O} \left( \eps_n \Vert u \Vert_{\mathrm{C}^3(\mathbb{R}^d)}^{p-1} \right). \label{eq:pointwise:T1}
\end{align}

Let us now tackle $T_2$. We estimate as follows, for $n$ large enough:
\begin{align}
    &\frac{1}{n^k\eps_n^{p + kd}} \E(T_2) \notag \\
    &\quad= \frac{(p-1)}{2}\int_{(\bbR^d)^k} \hspace{-0.3cm}\etaPTilde(z_1,\dots,z_k) \vert z_1\cdot \nabla u(x_{i_0}) \vert^{p-2}  z_1^\top  \nabla^2 u(x_{i_0})   z_1 \notag \\
    &\qquad\times\left[\prod_{\ell=1}^k \rho(x_{i_0} + \eps_n z_\ell)\right] \, \dd z_k \cdots \dd z_1 \label{eq:pointwise:T2:lemma} \\
    &\quad= \frac{(p-1)}{2} \rho(x_{i_0})^k \int_{(\bbR^d)^k}  \etaPTilde(z_1,\dots,z_k) \vert z_1 \cdot \nabla u(x_{i_0}) \vert^{p-2} \notag \\
    &\qquad\times  z_1^\top \nabla^2 u(x_{i_0})   z_1 \, \dd z_k \cdots \dd z_1 + \mathcal{O} \left( \eps_n \Vert u \Vert_{\mathrm{C}^3(\mathbb{R}^d)}^{p-1} \right) \label{eq:pointwise:T2:lemma2} \\
    &\quad= \frac{(p-1)}{2} \rho(x_{i_0})^k \Vert \nabla u(x_{i_0}) \Vert_2^{p-2} \int_{(\bbR^d)^k} \hspace{-0.4cm}\etaPTilde(\tilde{z}_1,\dots,\tilde{z}_k) \vert (\tilde{z}_1)_d \vert^{p-2}  \notag \\
    &\qquad\times (O \tilde{z}_1)^\top \nabla^2 u(x_{i_0})   (O \tilde{z}_1) \, \dd \tilde{z}_k \cdots \dd \tilde{z}_1 + \mathcal{O} \left( \eps_n \Vert u \Vert_{\mathrm{C}^3(\mathbb{R}^d)}^{p-1} \right) \label{eq:pointwise:T2:changeOfVariables} \\
    &\quad= \frac{(p-1)}{2} \rho(x_{i_0})^k \Vert \nabla u(x_{i_0}) \Vert_2^{p-2} \int_{(\bbR^d)^k} \etaPTilde(\tilde{z}_1,\dots,\tilde{z}_k) \vert (\tilde{z}_1)_d \vert^{p-2} \notag \\
    &\qquad\times \sum_{i,j = 1}^d (\nabla^2 u(x_{i_0}))_{ij} (O \tilde{z}_1)_i (O \tilde{z}_1)_j \, \dd \tilde{z}_k \cdots \dd \tilde{z}_1 + \mathcal{O} \left( \eps_n \Vert u \Vert_{\mathrm{C}^3(\mathbb{R}^d)}^{p-1} \right) \notag \\
    &\quad= \frac{(p-1)}{2} \rho(x_{i_0})^k \Vert \nabla u(x_{i_0}) \Vert_2^{p-2} \sum_{i,j = 1}^d (\nabla^2 u(x_{i_0}))_{ij} \sum_{r,\ell = 1}^d (O)_{ir} (O)_{j\ell} \notag \\
    &\qquad\times \int_{(\bbR^d)^k} \etaPTilde(\tilde{z}_1,\dots,\tilde{z}_k) \vert (\tilde{z}_1)_d \vert^{p-2}  (\tilde{z}_1)_\ell (\tilde{z}_1)_r \, \dd \tilde{z}_k \cdots \dd \tilde{z}_1 + \mathcal{O} \left( \eps_n \Vert u \Vert_{\mathrm{C}^3(\mathbb{R}^d)}^{p-1} \right) \notag \\  
    &\quad= \frac{(p-1)}{2} \rho(x_{i_0})^k \Vert \nabla u(x_{i_0}) \Vert_2^{p-2} \sum_{i,j = 1}^d (\nabla^2 u(x_{i_0}))_{ij} \sum_{r,\ell = 1}^d (O)_{ir} (O)_{j\ell} \notag \\
    &\qquad\times \int_{(\bbR^d)^k} \hspace{-0.5cm}\etaPTilde(\tilde{z}_1,\dots,\tilde{z}_k) \vert (\tilde{z}_1)_d \vert^{p-2} (\tilde{z}_1)_\ell (\tilde{z}_1)_r \, \dd \tilde{z}_k \cdots \dd \tilde{z}_1 + \mathcal{O} \left( \eps_n \Vert u \Vert_{\mathrm{C}^3(\mathbb{R}^d)}^{p-1} \right) \notag
\end{align}
where we used the change of variables $z_j = (x_j - x_{i_0})/\eps_n$ for $1\leq j \leq k$ and Lemma \ref{lem:domain} for \eqref{eq:pointwise:T2:lemma}, Lemma \ref{lem:product} and Assumption \ref{ass:Main:Ass:W2} for \eqref{eq:pointwise:T2:lemma2} and the change of variables $\tilde{z}_j = O^\top z_j$ for $1\leq j \leq k$ for \eqref{eq:pointwise:T2:changeOfVariables}. Similarly to the above, for $\ell \neq r \neq d$, 
\begin{align}
    T_5 &:= \int_{(\bbR^d)^k} \etaPTilde(\tilde{z}_1,\dots,\tilde{z}_k) \vert (\tilde{z}_1)_d \vert^{p-2} (\tilde{z}_1)_\ell (\tilde{z}_1)_r \, \dd \tilde{z}_k \cdots \dd \tilde{z}_1 \notag \\
    &= \int_{(\bbR)^{k(d-1)}}  \hspace{-8mm}  \vert (\tilde{z}_1)_d \vert^{p-2} (\tilde{z}_1)_\ell \left[ \int_{\bbR^k} \etaPTilde(\tilde{z}_1,\dots,\tilde{z}_k) (\tilde{z}_1)_r \, \dd (\tilde{z}_k)_r \cdots \dd (\tilde{z}_1)_r \right] \, \dd (\tilde{z}_k)_{-r} \cdots \dd (\tilde{z}_1)_{-r} \notag 
 \end{align}
(denoting by $(a)_{-r}$ the vector $(a_1,\dots,a_{r-1},a_{r+1}, \dots, a_d)$)
and the function $f:\bbR^k \mapsto \bbR$ defined as 
\[
f(y_1,\dots,y_k) = \etaPTilde((\tilde{z}_1)_{1:r-1},y_1,(\tilde{z}_1)_{r+1:d},(\tilde{z}_2)_{1:r-1},y_2,(\tilde{z}_2)_{r+1:d},\dots,(\tilde{z}_k)_{r+1:d}) y_1
\]
is odd symmetric for any fixed $(\tilde{z}_1)_{-r},\dots,(\tilde{z}_k)_{-r}$, so $T_5=0$ in this case. 
By symmetry the case $r\neq \ell\neq d$ follows.
We therefore have:
\begin{align}
    &\frac{1}{n^k\eps_n^{p + kd}} \E(T_2) \notag \\
    &\qquad= \frac{(p-1)}{2} \rho(x_{i_0})^k \Vert \nabla u(x_{i_0}) \Vert_2^{p-2} \sum_{i,j = 1}^d (\nabla^2 u(x_{i_0}))_{ij} \sum_{r=1}^d (O)_{ir} (O)_{jr}  \notag \\
    &\qquad\qquad\times  \int_{(\bbR^d)^k} \hspace{-0.4cm}\etaPTilde(\tilde{z}_1,\dots,\tilde{z}_k) \vert (\tilde{z}_1)_d \vert^{p-2} (\tilde{z}_1)_r^2 \, \dd \tilde{z}_k \cdots \dd \tilde{z}_1 + \mathcal{O} \left( \eps_n \Vert u \Vert_{\mathrm{C}^3(\mathbb{R}^d)}^{p-1} \right) \notag \\
    &\qquad= \frac{(p-1)}{2} \rho(x_{i_0})^k \Vert \nabla u(x_{i_0}) \Vert_2^{p-2} \sum_{r=1}^d (O^\top\nabla^2 u(x_{i_0}) O)_{rr}  \notag \\
    &\qquad\qquad\times  \int_{(\bbR^d)^k} \etaPTilde(\tilde{z}_1,\dots,\tilde{z}_k) \vert (\tilde{z}_1)_d \vert^{p-2} (\tilde{z}_1)_r^2 \, \dd \tilde{z}_k \cdots \dd \tilde{z}_1 + \mathcal{O} \left( \eps_n \Vert u \Vert_{\mathrm{C}^3(\mathbb{R}^d)}^{p-1} \right) \notag \\
    &\qquad= \frac{(p-1)}{2} \rho(x_{i_0})^k \Vert \nabla u(x_{i_0}) \Vert_2^{p-2} \notag \\
    &\qquad\qquad\times\biggl[ \mathrm{Tr}(\nabla^2 u(x_{i_0})) \sigma_{\eta}^{(k,p,1)} + (\sigma_{\eta}^{(k,p)} - \sigma_{\eta}^{(k,p,1)} ) (O^\top\nabla^2 u(x_{i_0}) O)_{dd} \biggr]\notag \\
    &\qquad\qquad+ \mathcal{O} \left( \eps_n \Vert u \Vert_{\mathrm{C}^3(\mathbb{R}^d)}^{p-1} \right) \notag \\ 
    &\qquad= \frac{(p-1)}{2} \rho(x_{i_0})^k \Vert \nabla u(x_{i_0}) \Vert_2^{p-2} \biggl[ \mathrm{Tr}(\nabla^2 u(x_{i_0})) \sigma_{\eta}^{(k,p,1)} \notag \\
    &\qquad\qquad+(\sigma_{\eta}^{(k,p)} - \sigma_{\eta}^{(k,p,1)} ) \frac{1}{\Vert \nabla u(x_{i_0})\Vert_2^2} \nabla u(x_{i_0})^\top  \nabla^2 u(x_{i_0}) \nabla u(x_{i_0})  \biggr] \label{eq:pointwise:trace} \\
    & \qquad\qquad + \mathcal{O} \left( \eps_n \Vert u \Vert_{\mathrm{C}^3(\mathbb{R}^d)}^{p-1} \right) \notag
\end{align}
where we used the fact that 
\begin{align*}
(O^\top\nabla^2 u(x_{i_0}) O)_{dd} &= \sum_{i,j=1}^n (\nabla^2 u(x_{i_0}))_{ij} (O)_{id} (O)_{jd} \\
&= \sum_{i,j=1}^n (\nabla^2 u(x_{i_0}))_{ij} \frac{(\nabla u (x_{i_0}))_i}{\Vert \nabla u (x_{i_0}) \Vert_2} \frac{(\nabla u (x_{i_0}))_j}{\Vert \nabla u (x_{i_0}) \Vert_2}
\end{align*} for \eqref{eq:pointwise:trace}. 

Combining \eqref{eq:pointwise:T1}, \eqref{eq:pointwise:trace} and \eqref{eq:pointwise:compactSupport}, with probability 
$1 - 6\exp \left( -C n\eps_n^{2(1+kd)}\delta^2 \right)$, since $\eps_n\leq \delta$:
\begin{align}
    \Delta_{n,\eps_n}^{(k,p)}(u)(x_{i_0}) &= \Vert \nabla u(x_{i_0}) \Vert_2^{p-2} \rho(x_{i_0})^{k-1} \nabla \rho(x_{i_0})\cdot \nabla u(x_{i_0}) (\sigma_{\eta}^{(k,p)} + (k-1) \sigma_\eta^{(k,p,2)}) \notag \\
    &\qquad\qquad+ \frac{(p-1)}{2} \rho(x_{i_0})^k \Vert \nabla u(x_{i_0}) \Vert_2^{p-2} \biggl[ \mathrm{Tr}(\nabla^2 u(x_{i_0})) \sigma_{\eta}^{(k,p,1)} \notag \\
    &\qquad\qquad+ (\sigma_{\eta}^{(k,p)} - \sigma_{\eta}^{(k,p,1)} ) \frac{1}{\Vert \nabla u(x_{i_0})\Vert_2^2} \nabla u(x_{i_0})^\top \nabla^2 u(x_{i_0}) \nabla u(x_{i_0})  \biggr] \notag \\
    &\qquad\qquad+ \mathcal{O} \left( \delta \Vert u \Vert_{\mathrm{C}^3(\mathbb{R}^d)}^{p-1} \right). \notag
\end{align}
As in \cite{calderGameTheoretic}, by taking a union bound on all $x_{i_0} \in \Omega_n\cap\Omega^\prime$ and using Corollary \ref{cor:alternativeLaplacian}, we obtain that
    $ \biggl\vert \Delta_{n,\eps_n}^{(k,p)} (u)(x_{i_0}) -  \rho(x_{i_0})\Delta_\infty^{(k,p)}(u)(x_{i_0}) \biggr\vert = \mathcal{O} \left( \delta \Vert u \Vert_{\mathrm{C}^3(\mathbb{R}^d)}^{p-1} \right)$
 with probability $1 - C n \exp \left( -Cn\eps_n^{2(1+kd)}\delta^2 \right)$. To conclude the proof, we sum over $1 \leq k \leq q$.

When $p=2$, we have $\psi(t) = t$ and directly obtain the estimate \eqref{eq:pointwise:finalTaylor}:
\[
u(x+z) - u(x) = \nabla u(x)\cdot z + \frac{1}{2}z^\top\nabla^2 u(x)z + \mathcal{O}(\Vert u \Vert_{\mathrm{C}^3(\mathbb{R}^d)} \vert z \vert^{3}).
\]
For the remainder of the proof, we proceed exactly as above with $p$ replaced by $2$.
\end{proof}

\subsection{\texorpdfstring{$\Gamma$}{Gamma}-convergence}

By re-adapting the results in \cite{Trillos3,Slepcev}, we are able to show the following $\Gamma$-convergence results. In particular, we perform a decomposition of our problem: in Section \ref{sec:gammaConvergenceHypergraph}, we first show $\Gamma$-convergence of a nonlocal version of our continuum energies 
\[
\cE_{\eps,\mathrm{NL}}^{(k,p)}(v,\eta) = \frac{1}{\eps^{p + kd}} \hspace{-0.8mm}  \int_{\Omega^{k+1}} \hspace{-1.8mm}  \ls \prod_{j=1}^{k} \prod_{r=0}^{j-1} \eta\l \frac{\vert x_{j} - x_{r}\vert}{\eps} \r \rs \hspace{-1.6mm} \left\vert v(x_1) - v(x_0) \right\vert^p  \prod_{\ell = 0}^k \rho(x_\ell) \, \dd x_k \cdots \dd x_0
\]
to their local counterparts; next, we establish $\Gamma$-convergence of the discrete energies to the nonlocal continuum energies. 

We will use the below inequality in our computations. For $a,b \in \bbR$, $\delta > 0$ and $p > 1$, there exists a constant $C_\delta$ such that 
\begin{equation}\label{eq:intro:equationTriangle}
    \vert \vert c \vert^{p} - \vert a \vert^{p} \vert \leq C_\delta \vert c-a \vert^p + \delta \vert a \vert^p.
\end{equation}
We also note that $C_\delta \to \infty$ as $\delta \to 0$.

\subsubsection{\texorpdfstring{$\Gamma$}{Gamma}-convergence of the discrete energies} 

\begin{proposition}[$\liminf$-inequality in the ill-posed case] \label{prop:proofs:gammaConvergence:liminf}
    Assume that \ref{ass:Main:Ass:S1}, \ref{ass:Main:Ass:M1}, \ref{ass:Main:Ass:M2}, \ref{ass:Main:Ass:W2}, \ref{ass:Main:Ass:D1}, \ref{ass:Main:Ass:D2} and \ref{ass:Main:Ass:L1} hold. Then, $\bbP$-a.s., for every $(\nu,v) \in \TLp{p}(\Omega)$ and $\{(\nu_n,v_{_n})\}_{n=1}^\infty$ with $(\nu_n,v_{n}) \to (\nu,v)$ in $\TLp{p}(\Omega)$, we have
    \begin{align}
        &\liminf_{n \to \infty} (\cS\cF)_{n,\eps_n}^{(q,p)}((\nu_n,v_n)) \geq (\cS\cG)_{\infty}^{(q,p)}((\nu,v)).  \label{eq:proofs:liminfDiscrete:liminf}
    \end{align}
\end{proposition}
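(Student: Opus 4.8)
The plan is to pass from the discrete energies to the nonlocal continuum energies $\cE_{\eps_n,\mathrm{NL}}^{(k,p)}$ using the transport maps of Theorem~\ref{thm:Back:TLp:LinftyMapsRate}, and then to invoke the nonlocal $\liminf$-inequality already proved in Proposition~\ref{prop:proofs:gammaConvergence:liminfNonlocal}. We work on the $\bbP$-full event on which Theorem~\ref{thm:Back:TLp:LinftyMapsRate} holds. As usual we may assume $\liminf_n (\cS\cF)_{n,\eps_n}^{(q,p)}((\nu_n,v_n))<\infty$ (otherwise~\eqref{eq:proofs:liminfDiscrete:liminf} is trivial) and pass to a subsequence (not relabelled) along which the $\liminf$ is attained as a finite limit. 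Then $\nu_n=\mu_n$ along this subsequence; since $(\nu_n,v_n)\to(\nu,v)$ in $\TLp{p}(\Omega)$ forces $\mu_n\rightharpoonup\nu$, we get $\nu=\mu$ (and if the original $\nu\neq\mu$, no subsequence can have $\nu_n=\mu_n$, so~\eqref{eq:proofs:liminfDiscrete:liminf} is trivial with right-hand side $+\infty$). Because every summand $\cE_{n,\eps_n}^{(k,p)}(v_n)$ is nonnegative and their weighted sum converges, each summand is bounded, so, after a further subsequence, $\cE_{n,\eps_n}^{(k,p)}(v_n)\to a_k\in[0,\infty)$ for each $1\le k\le q$. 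It therefore suffices to prove, for each fixed $k$, the single-index inequality $a_k\ge \cE_\infty^{(k,p)}(v)$, since then $\lim_n (\cS\cF)_{n,\eps_n}^{(q,p)}((\nu_n,v_n))=\sum_{k=1}^q\lambda_k a_k\ge \sum_{k=1}^q\lambda_k\cE_\infty^{(k,p)}(v)=(\cS\cG)_\infty^{(q,p)}((\nu,v))$, the membership $v\in\Wkp{1}{p}(\Omega)$ needed for $\cG_\infty^{(k,p)}((\mu,v))=\cE_\infty^{(k,p)}(v)$ being a by-product.

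Let $T_n:(\Omega,\mu)\to(\Omega_n,\mu_n)$ be the transport maps from Theorem~\ref{thm:Back:TLp:LinftyMapsRate}, so $(T_n)_\#\mu=\mu_n$ and $\delta_n:=\|\Id-T_n\|_{\Lp{\infty}}$ satisfies $\delta_n/\eps_n\to0$ under~\ref{ass:Main:Ass:L1} (this is precisely the role of~\ref{ass:Main:Ass:L1}: the rates in Theorem~\ref{thm:Back:TLp:LinftyMapsRate} are, up to logarithms, the connectivity radius, which~\ref{ass:Main:Ass:L1} forces $\eps_n$ to dominate). Set $w_n:=v_n\circ T_n$; then $\|w_n\|_{\Lp{p}(\mu)}=\|v_n\|_{\Lp{p}(\mu_n)}$, so $w_n\in\Lp{p}(\mu)$, and since $\mu$ is absolutely continuous with density bounded below, convergence in $\TLp{p}$ gives $w_n\to v$ in $\Lp{p}(\mu)$ (see \cite{garcia_trillos_slepcev_2015,Trillos3}). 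Changing variables via $\mu_n=(T_n)_\#\mu$,
\[
\cE_{n,\eps_n}^{(k,p)}(v_n)=\frac{1}{\eps_n^{p+kd}}\int_{\Omega^{k+1}}\Big[\prod_{j=1}^{k}\prod_{r=0}^{j-1}\eta\Big(\tfrac{|T_n(y_j)-T_n(y_r)|}{\eps_n}\Big)\Big]\,|w_n(y_1)-w_n(y_0)|^p\prod_{\ell=0}^{k}\rho(y_\ell)\,\dd y_k\cdots\dd y_0 .
\]
Fix $\alpha>0$ small enough that $\eta_\alpha(s):=\eta(s+\alpha)$ is positive at $s=0$; then $\eta_\alpha$ still satisfies~\ref{ass:Main:Ass:W2}. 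Using that $\eta$ is non-increasing and $|T_n(y_j)-T_n(y_r)|\le|y_j-y_r|+2\delta_n$, for $n$ large (so $2\delta_n/\eps_n\le\alpha$) each kernel factor is $\ge\eta_\alpha(|y_j-y_r|/\eps_n)$, and since all factors are nonnegative the whole product is bounded below accordingly, giving $\cE_{n,\eps_n}^{(k,p)}(v_n)\ge\cE_{\eps_n,\mathrm{NL}}^{(k,p)}(w_n,\eta_\alpha)$.

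Now apply Proposition~\ref{prop:proofs:gammaConvergence:liminfNonlocal} with the kernel $\eta_\alpha$ and the sequence $w_n\to v$ in $\Lp{p}(\mu)$: this yields $\liminf_n\cE_{n,\eps_n}^{(k,p)}(v_n)\ge\liminf_n\cE_{\eps_n,\mathrm{NL}}^{(k,p)}(w_n,\eta_\alpha)\ge\sigma_{\eta_\alpha}^{(k)}\int_\Omega\|\nabla v(x)\|_2^p\rho(x)^{k+1}\,\dd x$, where $\sigma_{\eta_\alpha}^{(k)}$ is the constant $\sigma_\eta^{(k)}$ built from $\eta_\alpha$; finiteness of the left-hand side (guaranteed along our subsequence) forces $v\in\Wkp{1}{p}(\Omega)$. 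Finally let $\alpha\downarrow0$: by continuity of $\eta$ we have $\eta_\alpha\to\eta$ pointwise with $0\le\eta_\alpha\le\|\eta\|_{\Lp{\infty}}\mathbf 1_{\mathrm{supp}\,\eta}$, so dominated convergence gives $\sigma_{\eta_\alpha}^{(k)}\to\sigma_\eta^{(k)}$ and hence $a_k\ge\sigma_\eta^{(k)}\int_\Omega\|\nabla v\|_2^p\rho^{k+1}\,\dd x=\cE_\infty^{(k,p)}(v)$, completing the single-index inequality; summing over $k$ finishes the proof. For general continuous $\rho$ (not Lipschitz) one appeals, as in Proposition~\ref{prop:proofs:gammaConvergence:liminfNonlocal}, to the monotone approximation of $\rho$ by Lipschitz densities.

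The step I expect to require the most care is the discrete-to-nonlocal passage: verifying $\delta_n=o(\eps_n)$ from~\ref{ass:Main:Ass:L1} in every dimension, and justifying $v_n\circ T_n\to v$ in $\Lp{p}(\mu)$ for the specific $\Lp{\infty}$-optimal maps $T_n$ rather than merely for some stagnating sequence, which uses absolute continuity of $\mu$ together with the density lower bound~\ref{ass:Main:Ass:M2}. The remaining subtlety is purely bookkeeping: one must observe that the pointwise label constraints $v(x_i)=y_i$ simply drop out in the limit, which is exactly why the limiting functional here is the \emph{unconstrained} $(\cS\cG)_\infty^{(q,p)}$ and why this $\liminf$-inequality belongs to the ill-posed regime.
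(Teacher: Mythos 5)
Your proposal is correct and shares the paper's overall architecture: reduce to the case $\nu_n=\mu_n$, $\nu=\mu$ with uniformly bounded energies, pass from the discrete energy to the nonlocal continuum energy $\cE_{\eps_n,\mathrm{NL}}^{(k,p)}(v_n\circ T_n,\cdot)$ via the transport maps of Theorem~\ref{thm:Back:TLp:LinftyMapsRate}, invoke Proposition~\ref{prop:proofs:gammaConvergence:liminfNonlocal}, and finish by superadditivity of the $\liminf$ over $k$. Where you genuinely diverge is in how the perturbation $|T_n(y_j)-T_n(y_r)|\le|y_j-y_r|+2\delta_n$ is absorbed into the kernel. The paper follows the three-step scheme of \cite{Trillos3}: first indicator-type kernels $\eta=a\one_{[0,b]}$, for which the rescaling $\tilde\eps_n=\eps_n-2\delta_n/b$ yields the pointwise kernel inequality, then finite sums of such kernels by linearity, then general $\eta$ by monotone approximation. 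You instead use the shifted kernel $\eta_\alpha(s)=\eta(s+\alpha)$, which gives the inequality directly for an arbitrary non-increasing $\eta$ once $2\delta_n/\eps_n\le\alpha$, at the cost of an extra limit $\alpha\downarrow0$ in the constants $\sigma_{\eta_\alpha}^{(k)}$. This is a legitimate and arguably more direct route (the rescaling trick really only works for indicators, which is why the paper needs the decomposition); the paper's route has the advantage of reusing the \cite{Trillos3} machinery verbatim and of not requiring any continuity of $\eta$ away from the origin. On that last point, note that Assumption~\ref{ass:Main:Ass:W2} as used in the paper only demands continuity of $\eta$ at $0$ (the paper's own proof treats discontinuous indicator kernels), so your final step should appeal to the fact that the monotone function $\eta$ is continuous off a countable set, whence $\eta_\alpha\to\eta$ almost everywhere and $\sigma_{\eta_\alpha}^{(k)}\uparrow\sigma_\eta^{(k)}$ by monotone convergence; the conclusion is unaffected. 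Your remaining care points (that \ref{ass:Main:Ass:L1} forces $\delta_n=o(\eps_n)$, and that $v_n\circ T_n\to v$ in $\Lp{p}(\mu)$ for the specific stagnating maps) are exactly the ones the paper also relies on, via \cite[Lemma 4.2]{Slepcev} and \cite[Proposition 3.12]{Trillos3} respectively.
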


\begin{proof}
With probability one, we can assume that the conclusions of Theorem \ref{thm:Back:TLp:LinftyMapsRate} hold.

Since \eqref{eq:proofs:liminfDiscrete:liminf} is trivial if 
$\liminf_{n\to \infty} (\cS\cF)_{n,\eps_n}^{(q,p)}((\nu_n,v_n)) = \infty$, we might assume without loss of generality (see \cite{weihs2023consistency}) that $\sup_{n > 0}(\cS\cF)_{n,\eps_n}^{(q,p)}((\nu_n,v_n)) \leq  C$. This implies that $\nu_n = \mu_n$ and, since $(\nu_n,v_{n}) \to (\nu,v)$ in $\TLp{p}(\Omega)$, we have $\nu = \mu$. 
We start by showing
\begin{equation} \label{eq:proofs:gammaConvergence:Eliminf}
\liminf_{n \to \infty} \cE_{n,\eps_n}^{(k,p)}(v_n) \geq \cE_{\infty}^{(k,p)}(v).
\end{equation}

We follow the three-step decomposition of \cite[Theorem 1.1]{Trillos3}. First, suppose that $\eta(t) = a$ if $0 \leq t \leq b$ and $\eta(t) = 0$ else where $a$ and $b$ are positive constants. Define $\tilde{\eps}_n = \eps_n - \frac{2\Vert T_n - \Id \Vert_{\Lp{\infty}}}{b}$. From \cite[Lemma 4.2]{Slepcev}, we know that $\frac{\eps_n}{\tilde{\eps}_n} \to 1$ and 
\[
\eta\l\frac{\vert x - y \vert}{\tilde{\eps}_n}\r \leq \eta\l\frac{\vert T_n(x) - T_n(y) \vert}{\eps_n}\r.
\]
Using a change of variables and the above, we obtain that 
\begin{align}
    \cE_{n,\eps_n}^{(k,p)}(v_n) &\geq \frac{1}{\eps_n^{p + kd}} \int_{\Omega^{k+1}} \ls \prod_{j=1}^{k} \prod_{r=0}^{j-1} \eta\l \frac{\vert x_{j} - x_{r}\vert}{\tilde{\eps}_n} \r \rs \left\vert v_n \circ T_n(x_1) - v_n \circ T_n (x_0) \right\vert^p \notag \\
    &\qquad\qquad\times \ls \prod_{\ell=0}^k \rho(x_\ell) \rs \, \dd x_k \cdots \dd x_0 \notag \\
    &= \l \frac{\tilde{\eps}_n}{\eps_n} \r^{p + kd} \cE_{\tilde{\eps}_n,\mathrm{NL}}^{(k,p)}(v_n \circ T_n,\eta). \notag
\end{align}
Since $u_n \to u$ in $\TLp{p}(\Omega)$, we have that $u_n \circ T_n \to u$ in $\Lp{p}(\Omega)$ and we can therefore use Proposition \ref{prop:proofs:gammaConvergence:liminfNonlocal} to deduce that:
\begin{align}
&\liminf_{n \to \infty } \cE_{n,\eps_n}^{(k,p)}(u_n) \geq \liminf_{n \to \infty} \l \frac{\tilde{\eps}_n}{\eps_n} \r^{p + kd} \cE_{\tilde{\eps}_n,\mathrm{NL}}^{(k,p)}(v_n \circ T_n,\eta) \geq \cE_{\infty}^{(k,p)}(v). \notag
\end{align}

Our next step is to assume that $\eta = \sum_{k=1}^\ell \eta_l$ satisfies Assumption \ref{ass:Main:Ass:W2} where $\eta_l$ are functions of the type considered in the above. Then, as in \cite[Theorem 1.1]{Trillos3}, we use the linearity of the integral to obtain \eqref{eq:proofs:liminfDiscrete:liminf}.

Our final step is to let $\eta$ be a general function satisfying Assumption \ref{ass:Main:Ass:W2}. Then, as in \cite[Theorem 1.1]{Trillos3}, we use the monotone convergence theorem and approximation of $\eta$ by functions as in the previous step to obtain~\eqref{eq:proofs:gammaConvergence:Eliminf}.

By subadditivity of the $\liminf$, we can conclude \eqref{eq:proofs:liminfDiscrete:liminf}.
\end{proof}

\begin{proposition}[$\limsup$-inequality in the well-posed case]\label{prop:proofs:gammaConvergence:limsupDiscrete}
    Assume that \ref{ass:Main:Ass:S1}, \ref{ass:Main:Ass:M1}, \ref{ass:Main:Ass:M2}, \ref{ass:Main:Ass:W2}, \ref{ass:Main:Ass:D1}, \ref{ass:Main:Ass:D2} and \ref{ass:Main:Ass:L1} hold. 
    Then, $\bbP$-a.s., for every $(\nu,v) \in \TLp{p}(\Omega)$, there exists a sequence $\{(\nu_n,v_{_n})\}_{n=1}^\infty$ with $(\nu_n,v_{n}) \to (\nu,v)$ in $\TLp{p}(\Omega)$ and
    \begin{align}
        &\limsup_{n \to \infty} \cF_{n,\eps_n}^{(k,p)}((\nu_n,v_n)) \leq \cF_{\infty}^{(k,p)}((\nu,v)) \label{eq:proofs:limsupDiscrete:limsup}.
    \end{align}
    In particular, for $1 \leq k \leq q$ and $v \in \Ck{\infty}(\bar{\Omega})$ with $v(x_i) = y_i$ for $i \leq N$, we can pick $\{(\nu_n,v_n)\}_{n=1}^\infty = \{(\mu_n,v|_{\Omega_n})\}_{n=1}^\infty $.
\end{proposition}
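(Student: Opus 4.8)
The plan is to follow the standard three-step recovery-sequence construction of \cite{Trillos3,Slepcev}, adapted to the higher-arity kernel $\etaP$, and to reduce everything to the nonlocal $\limsup$-inequality already established in Proposition~\ref{prop:gammaConvergence:NonlocalLimsup}. First I would dispose of the trivial case: if $\cF_\infty^{(k,p)}((\nu,v)) = +\infty$ there is nothing to prove, so we may assume $\nu = \mu$, $v \in \Wkp{1}{p}(\Omega)$ and $v(x_i) = y_i$ for $i \leq N$. By the density principle of \cite[Remark 2.7]{Trillos3}, it suffices to produce a recovery sequence when $v$ lies in the class of functions in $\Ck{\infty}(\bar{\Omega})$ satisfying the pointwise constraints; recovery-density of this class in $\{v \in \Wkp{1}{p}(\Omega) : v(x_i) = y_i\}$ follows by mollifying $v$ to $w_m \to v$ in $\Wkp{1}{p}$ and correcting with a fixed finite family of smooth bumps $\phi_i$ supported in pairwise disjoint balls around the $x_i$ with $\phi_i(x_j) = \delta_{ij}$, i.e.\ $v_m = w_m + \sum_{i=1}^N (y_i - w_m(x_i))\phi_i$. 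In the relevant regime $p > d$ (which the upper bound $n\eps_n^p \to 0$ of the well-posed case forces together with~\ref{ass:Main:Ass:L1}), the embedding $\Wkp{1}{p}(\Omega) \hookrightarrow \Ck{0}(\bar{\Omega})$ gives $w_m(x_i) \to v(x_i) = y_i$, so the correction term has $\Wkp{1}{p}$-norm tending to $0$, whence $v_m \to v$ in $\Wkp{1}{p}$ and $\cE_\infty^{(k,p)}(v_m) \to \cE_\infty^{(k,p)}(v)$.

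For such a smooth $v$ with $v(x_i) = y_i$, I would take $(\nu_n,v_n) = (\mu_n, v|_{\Omega_n})$. With probability one we may assume the transport maps $T_n$ of Theorem~\ref{thm:Back:TLp:LinftyMapsRate} exist; crucially, \ref{ass:Main:Ass:L1} is exactly the condition making $\Vert \Id - T_n \Vert_{\Lp{\infty}} / \eps_n \to 0$. Since $\mu_n \to \mu$ weakly $\bbP$-a.s.\ and $\int_\Omega |v(x) - v(T_n(x))|^p \,\dd\mu(x) \leq \mathrm{Lip}(v)^p \Vert \Id - T_n \Vert_{\Lp{\infty}}^p \to 0$, Proposition~3.12 of \cite{Trillos3} gives $(\mu_n, v|_{\Omega_n}) \to (\mu,v)$ in $\TLp{p}(\Omega)$, so only the energy bound $\limsup_n \cE_{n,\eps_n}^{(k,p)}(v|_{\Omega_n}) \leq \cE_\infty^{(k,p)}(v)$ remains.

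To obtain it, push the sum onto $\Omega$ using $(T_n)_\#\mu = \mu_n$, writing $\cE_{n,\eps_n}^{(k,p)}(v|_{\Omega_n})$ as an integral over $(\xi_0,\dots,\xi_k) \in \Omega^{k+1}$ with kernel $\prod_{j=1}^k\prod_{r=0}^{j-1}\eta(|T_n\xi_j - T_n\xi_r|/\eps_n)$ and increment $|v(T_n\xi_1) - v(T_n\xi_0)|^p$. Since $|T_n\xi_j - T_n\xi_r| \geq |\xi_j - \xi_r| - 2\Vert\Id - T_n\Vert_{\Lp{\infty}}$ and $\eta$ is non-increasing, for a step kernel $\eta = a\one_{[0,b]}$ --- and then for general $\eta$ by the decomposition-plus-monotone-convergence argument of \cite[Theorem 1.1]{Trillos3} --- the kernel is bounded by $\prod\eta(|\xi_j-\xi_r|/\tilde{\eps}_n)$ with $\tilde{\eps}_n = \eps_n + 2\Vert\Id - T_n\Vert_{\Lp{\infty}}/b$ and $\tilde{\eps}_n/\eps_n \to 1$, while \eqref{eq:intro:equationTriangle} bounds the increment by $(1+\delta)|v(\xi_1)-v(\xi_0)|^p + C_\delta(2\mathrm{Lip}(v))^p\Vert\Id-T_n\Vert_{\Lp{\infty}}^p$. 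This yields
\[
\cE_{n,\eps_n}^{(k,p)}(v|_{\Omega_n}) \leq (1+\delta)\left(\tfrac{\tilde{\eps}_n}{\eps_n}\right)^{p+kd}\cE_{\tilde{\eps}_n,\mathrm{NL}}^{(k,p)}(v,\eta) + \frac{C_\delta (2\mathrm{Lip}(v))^p \Vert\Id-T_n\Vert_{\Lp{\infty}}^p}{\eps_n^{p+kd}}\int_{\Omega^{k+1}}\ls\prod_{j=1}^k\prod_{r=0}^{j-1}\eta\l\tfrac{|\xi_j-\xi_r|}{\tilde{\eps}_n}\r\rs\prod_{\ell=0}^k\rho(\xi_\ell)\,\dd\xi_k\cdots\dd\xi_0,
\]
and the change of variables $z_j = (\xi_j-\xi_0)/\tilde{\eps}_n$ together with~\ref{ass:Main:Ass:W2} shows the last integral is $O(\tilde{\eps}_n^{kd})$, so the error term is $O((\Vert\Id-T_n\Vert_{\Lp{\infty}}/\eps_n)^p) \to 0$ by~\ref{ass:Main:Ass:L1}. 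By Proposition~\ref{prop:gammaConvergence:NonlocalLimsup} (applied with the constant recovery sequence $u = v$, which is smooth), $\limsup_n \cE_{\tilde{\eps}_n,\mathrm{NL}}^{(k,p)}(v,\eta) \leq \cE_\infty^{(k,p)}(v)$; letting $n \to \infty$ and then $\delta \to 0$ gives $\limsup_n \cE_{n,\eps_n}^{(k,p)}(v|_{\Omega_n}) \leq \cE_\infty^{(k,p)}(v)$. As $v(x_i) = y_i$ we have $\cF_{n,\eps_n}^{(k,p)}((\mu_n, v|_{\Omega_n})) = \cE_{n,\eps_n}^{(k,p)}(v|_{\Omega_n})$ and $\cF_\infty^{(k,p)}((\mu,v)) = \cE_\infty^{(k,p)}(v)$, which proves the displayed inequality and the ``in particular'' statement simultaneously.

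The main obstacle is the density/constraint-correction step rather than the energy estimate: one must approximate a general $\Wkp{1}{p}$ function respecting the finitely many nodal label constraints by smooth functions respecting them exactly while controlling $\cE_\infty^{(k,p)}$, which hinges on $\Wkp{1}{p}(\Omega) \hookrightarrow \Ck{0}(\bar{\Omega})$, i.e.\ on $p > d$. The rest is bookkeeping: the only quantitative input is that~\ref{ass:Main:Ass:L1} guarantees $\Vert\Id-T_n\Vert_{\Lp{\infty}}$ is negligible compared to $\eps_n$, which simultaneously controls the $\tilde{\eps}_n$-perturbation of the kernel and the Lipschitz remainder of $v\circ T_n$.
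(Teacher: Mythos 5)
Your proposal is correct and follows essentially the same route as the paper's proof: reduction to smooth constrained functions via \cite[Remark 2.7]{Trillos3}, the constant recovery sequence $(\mu_n, v|_{\Omega_n})$, the perturbed length-scale $\tilde{\eps}_n = \eps_n + 2\Vert \Id - T_n\Vert_{\Lp{\infty}}/b$ for step kernels combined with \eqref{eq:intro:equationTriangle}, the nonlocal $\limsup$-inequality of Proposition~\ref{prop:gammaConvergence:NonlocalLimsup}, and the decomposition/monotone-convergence extension to general $\eta$. The only difference is that you spell out the density-with-constraints step (mollification plus bump-function correction, using $p>d$) that the paper leaves implicit in its appeal to \cite[Remark 2.7]{Trillos3}.
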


\begin{proof}
With probability one, we can assume that the conclusions of Theorem \ref{thm:Back:TLp:LinftyMapsRate} hold.

In the proof $C>0$ will denote a constant that can be arbitrarily large, is independent of $n$, and that may change from line to line.

    We start by noting that \eqref{eq:proofs:limsupDiscrete:limsup} is trivial if $\cF_{\infty}^{(k,p)}(\nu,v) = \infty$ so that we assume $\cF_{\infty}^{(k,p)}(\nu,v) < \infty$ which implies that $\nu= \mu$, $v \in \Wkp{1}{p}(\Omega)$ and $v(x_i) = y_i$ for all $i \leq N$.
    Furthermore, we are going to apply \cite[Remark 2.7]{Trillos3}, so it is sufficient to verify \eqref{eq:proofs:limsupDiscrete:limsup} on a dense subset of $\{\mu\} \times \Wkp{1}{p}(\Omega)$, namely we consider $(\mu,v) \in \{ \mu \} \times \Ck{\infty}(\bar{\Omega})$ with $v(x_i) = y_i$ for all $i \leq N$. We let $\nu_n = \mu_n$ and $v_n = v\vert_{\Omega_n}$ so $v_n(x_i) = y_i$ for all $i \leq N$ and \eqref{eq:proofs:limsupDiscrete:limsup} is equivalent to \(
    \limsup_{n \to \infty} \cE_{n,\eps_n}^{(k,p)}(v_n) \leq \cE_{\infty}^{(k,p)}(v).
    \)
    The fact that $(\mu_n,v_n) \to (\mu,v)$ in $\TLp{p}(\Omega)$ follows analogously from what is shown in \cite[Proposition 4.17]{weihs2023consistency}: it relies on the fact that $v$ is uniformly continuous as well as $\Vert T_n - \Id \Vert_{\Lp{\infty}} \to 0$.

    We follow the three-step decomposition of \cite[Theorem 1.1]{Trillos3}. First, suppose that $\eta(t) = a$ if $0 \leq t \leq b$ and $\eta(t) = 0$ else where $a$ and $b$ are positive constants. Define $\tilde{\eps}_n = \eps_n + \frac{2\Vert T_n - \Id \Vert_{\Lp{\infty}}}{b}$. From \cite[Theorem 1.1]{Trillos3}, we know that $\frac{\eps_n}{\tilde{\eps}_n} \to 1$ and similarly to the previous proposition:
    \begin{equation} \label{eq:proofs:limsupDiscrete:NLUpperBound}
        \cE_{n,\eps_n}^{(k,p)}(v_n) \leq \l\frac{\tilde{\eps}_n}{\eps_n}\r^{p+kd} \cE_{\tilde{\eps}_n,\mathrm{NL}}(v_n\circ T_n,\eta).
    \end{equation}

Let $\delta > 0$ and estimate as follows: \begin{align}
    T_1 &:= \vert \cE_{\tilde{\eps}_n,\mathrm{NL}}^{(k,p)}(v,\eta) - \cE_{\tilde{\eps}_n,\mathrm{NL}}^{(k,p)}(v \circ T_n,\eta) \vert \notag \\
    &\leq \frac{1}{\tilde{\eps}_n^{p + kd}} \int_{\Omega^{k+1}} \ls \prod_{j=1}^{k} \prod_{r=0}^{j-1} \eta\l \frac{\vert x_{j} - x_{r}\vert}{\tilde{\eps}_n} \r \rs \ls \prod_{\ell=0}^k \rho(x_\ell) \rs \notag \\
    &\qquad\qquad\times \left\vert \left\vert v(x_1) - v(x_0) \right\vert^p - \left\vert v_n \circ T_n(x_1) - v_n \circ T_n (x_0) \right\vert^p  \right\vert  \, \dd x_k \cdots \dd x_0 \notag \\
    &\leq \delta \cE_{\tilde{\eps}_n,\mathrm{NL}}^{(k,p)}(v,\eta) + \frac{C_\delta}{\tilde{\eps}_n^{p + kd}} \int_{\Omega^{k+1}} \ls \prod_{j=1}^{k} \prod_{r=0}^{j-1} \eta\l \frac{\vert x_{j} - x_{r}\vert}{\tilde{\eps}_n} \r \rs \ls \prod_{\ell=0}^k \rho(x_\ell) \rs \notag \\
    &\qquad\qquad\times \left\vert v(x_1) - v\circ T_n(x_1) - (v(x_0) - v \circ T_n(x_0)) \right\vert^p  \, \dd x_k \cdots \dd x_0 \label{eq:proofs:limsupDiscrete:equation1} \\
    &\leq \delta \cE_{\tilde{\eps}_n,\mathrm{NL}}^{(k,p)}(v,\eta) + C C_\delta \l \frac{\Vert \Id - T_n \Vert_{\Lp{\infty}}}{\tilde{\eps}_n} \r^p \notag 
\end{align}
where we used \eqref{eq:intro:equationTriangle} for \eqref{eq:proofs:limsupDiscrete:equation1} and the fact that $v \in \Ck{\infty}(\bar{\Omega})$ as well as Assumptions \ref{ass:Main:Ass:W2} and \ref{ass:Main:Ass:M2}. 
We obtain:
\begin{align}
    &\limsup_{n \to \infty} \cE_{n,\eps_n}^{(k,p)}(v_n) \leq  \limsup_{n \to \infty} \l \frac{\tilde{\eps}_n}{\eps_n} \r^{p + kd} \cE_{\tilde{\eps}_n,\mathrm{NL}}^{(k,p)}(v \circ T_n,\eta) \label{eq:proofs:limsupDiscrete:equation5} \\
    & \leq \limsup_{n\to\infty} \l\frac{\tilde{\eps}_n}{\eps_n}\r^{p+kd} \frac{1}{1+\delta} \l \cE_{\tilde{\eps}_n,\mathrm{NL}}^{(k,p)}(v,\eta) + C C_\delta\l\frac{\|\Id-T_n\|_{\Lp{\infty}}}{\tilde{\eps}_n}\r^p\r \label{eq:proofs:limsupDiscrete:equation6} \\
    &\leq \frac{1}{1+\delta} \cE_{\infty}^{(k,p)}(v) \label{eq:proofs:limsupDiscrete:equation7}
\end{align}
where we used \eqref{eq:proofs:limsupDiscrete:NLUpperBound} for \eqref{eq:proofs:limsupDiscrete:equation5}, \eqref{eq:proofs:limsupDiscrete:equation1} for \eqref{eq:proofs:limsupDiscrete:equation6} and the fact that the recovery sequence in Proposition \ref{prop:gammaConvergence:NonlocalLimsup} for $v$ was $v$ for \eqref{eq:proofs:limsupDiscrete:equation7}.
Letting $\delta\to0$ proves~\eqref{eq:proofs:limsupDiscrete:limsup} for $\eta$ in this form.

Next we proceed as in Proposition \ref{prop:proofs:gammaConvergence:liminf} or \cite[Theorem 1.1]{Trillos3}: by assuming that $\eta = \sum_{k=1}^\ell \eta_l$ satisfies Assumption \ref{ass:Main:Ass:W2} where $\eta_l$ are functions of the type considered in the above, we use the linearity of the integral to deduce \eqref{eq:proofs:limsupDiscrete:limsup}; assuming that $\eta$ is a general function satisfying Assumption \ref{ass:Main:Ass:W2}, we approximate $\eta$ by functions of the type considered in the previous step and use the monotone convergence theorem to conclude.
\end{proof}

Using the result of Proposition \ref{prop:proofs:gammaConvergence:limsupDiscrete}, we can prove the next straightforward corollary. The key point to note is that, since we have the same recovery sequence for all $1 \leq k \leq q$, we just apply the subadditivity of $\limsup$ to conclude.  

\begin{corollary}[$\limsup$-inequality for the sum of semi-supervised energies in the well-posed case] \label{cor:proofs:gammaConvergence:limsupDiscrete}
Assume that \ref{ass:Main:Ass:S1}, \ref{ass:Main:Ass:M1}, \ref{ass:Main:Ass:M2}, \ref{ass:Main:Ass:W2}, \ref{ass:Main:Ass:D1}, \ref{ass:Main:Ass:D2} and \ref{ass:Main:Ass:L1} hold. 
Then, $\bbP$-a.s., for every $(\nu,v) \in \TLp{p}(\Omega)$, 
there exists $\{(\nu_n,v_{_n})\}_{n=1}^\infty$ with $(\nu_n,v_{n}) \to (\nu,v)$ in $\TLp{p}(\Omega)$
such that \(
        \limsup_{n \to \infty} \l\cS\cF\r_{n,\eps_n}^{(q,p)}((\nu_n,v_n)) \leq \l\cS\cF\r_{\infty}^{(q,p)}((\nu,v)).
\)
\end{corollary}

The following proofs use arguments from \cite{Slepcev}. For Proposition \ref{prop:proofs:gammaConvergence:liminfSSL}, the sum of all energies $\{\cF_{n,\eps_n}^{(k,p)}\}_{k=1}^q$ has to be considered directly as we plan to use the uniform convergence results for the $k = 1$ case from \cite[Lemma 4.5]{Slepcev}. In Proposition \ref{prop:proofs:gammaConvergence:limsupSSL}, we show that $n\eps_n^p \to \infty$ is the common lower bound for all energies $\{\cF_{n,\eps_n}^{(k,p)}\}_{k=1}^q$ in order for them to converge to the ill-posed continuum objective functions.  

\begin{proposition}[$\liminf$-inequality for the sum of semi-supervised energies in the well-posed case] \label{prop:proofs:gammaConvergence:liminfSSL}
    Assume that \ref{ass:Main:Ass:S1}, \ref{ass:Main:Ass:M1}, \ref{ass:Main:Ass:M2}, \ref{ass:Main:Ass:W2}, \ref{ass:Main:Ass:D1}, \ref{ass:Main:Ass:D2} and \ref{ass:Main:Ass:L1} hold. Assume that $n\eps_n^{p} \to 0$. Then, $\bbP$-a.s., for every $(\nu,v) \in \TLp{p}(\Omega)$ and $\{(\nu_n,v_{_n})\}_{n=1}^\infty$ with $(\nu_n,v_{n}) \to (\nu,v)$ in $\TLp{p}(\Omega)$, we have: 
\begin{equation}\label{eq:proofs:gammaConvergence:liminfSSL:wellPosed}
        \liminf_{n \to \infty} \l\cS\cF\r_{n,\eps_n}^{(q,p)}((\nu_n,v_n)) \geq \l\cS\cF\r_{\infty}^{(q,p)}((\nu,v)).
\end{equation}

\end{proposition}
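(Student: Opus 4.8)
The plan is to derive the unconstrained energy lower bound from Proposition~\ref{prop:proofs:gammaConvergence:liminf} and then upgrade it by showing that the label constraints persist in the limit; this is the only place where the upper bound $n\eps_n^p\to0$ enters. First I would dispose of the trivial case where the left-hand side of \eqref{eq:proofs:gammaConvergence:liminfSSL:wellPosed} is $+\infty$, and otherwise pass to a subsequence (not relabelled) realizing the $\liminf$ so that $\sup_n (\cS\cF)_{n,\eps_n}^{(q,p)}((\nu_n,v_n))\le C$. Finiteness then forces $\nu_n=\mu_n$ and $v_n(x_i)=y_i$ for all $i\le N$ and all large $n$, and since $(\nu_n,v_n)\to(\nu,v)$ in $\TLp{p}(\Omega)$ we get $\nu=\mu$ (the limit $(\mu,v)$ being unchanged along the subsequence).

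Next I would isolate the $k=1$ term: since each $\lambda_k>0$, we have $\cE_{n,\eps_n}^{(1,p)}(v_n)\le C/\lambda_1$ uniformly, which is precisely the discrete $p$-Dirichlet energy of $p$-Laplacian learning. In the well-posed regime $n\eps_n^p\to0$ --- which, since $\eps_n\to0$ and $n\eps_n^d\to\infty$ by \ref{ass:Main:Ass:L1}, forces $p>d$ --- I would invoke the uniform-convergence argument of \cite[Lemma 4.5]{Slepcev}: composing with the transport maps $T_n$ of Theorem~\ref{thm:Back:TLp:LinftyMapsRate}, $v$ admits a continuous representative and $\max_{\{r\le n\,:\,x_r\in\Omega'\}}|v_n(x_r)-v(x_r)|\to0$ for every $\Omega'\subset\subset\Omega$. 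Choosing $\Omega'$ to contain the finitely many labeled points $x_1,\dots,x_N$ (a.s.\ interior to $\Omega$) then gives $v(x_i)=\lim_n v_n(x_i)=y_i$ for every $i\le N$.

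Finally I would combine these facts. Proposition~\ref{prop:proofs:gammaConvergence:liminf} yields $\liminf_n (\cS\cF)_{n,\eps_n}^{(q,p)}((\nu_n,v_n))\ge (\cS\cG)_\infty^{(q,p)}((\mu,v))$; were $v\notin\Wkp{1}{p}(\Omega)$ this would be $+\infty$, contradicting finiteness, so $v\in\Wkp{1}{p}(\Omega)$. Because $\nu=\mu$, $v\in\Wkp{1}{p}(\Omega)$ and $v(x_i)=y_i$ for $i\le N$, the defining cases of $\cF_\infty^{(k,p)}$ and $\cG_\infty^{(k,p)}$ agree at $(\mu,v)$, so $(\cS\cG)_\infty^{(q,p)}((\mu,v))=(\cS\cF)_\infty^{(q,p)}((\mu,v))$, and \eqref{eq:proofs:gammaConvergence:liminfSSL:wellPosed} follows on the subsequence, hence for the original sequence.

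The hard part will be the uniform-convergence step: one must check that the discrete Morrey-type estimate of \cite[Lemma 4.5]{Slepcev} only needs boundedness of $\cE_{n,\eps_n}^{(1,p)}(v_n)$ rather than $v_n$ being a minimizer, and that $p>d$ together with the Lipschitz boundary \ref{ass:Main:Ass:S1} suffices to carry the estimate up to (a neighborhood of) the labeled points; the remaining manipulations are routine bookkeeping with the definitions of the energies.
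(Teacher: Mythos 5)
Your proposal is correct and follows essentially the same route as the paper's proof: reduce to a uniformly bounded sequence (forcing $\nu_n=\mu_n$ and the discrete constraints), use the boundedness of the $k=1$ energy together with \cite[Lemma 4.5]{Slepcev} to obtain a continuous limit satisfying $v(x_i)=y_i$, and then conclude from Proposition \ref{prop:proofs:gammaConvergence:liminf} and superadditivity of the $\liminf$, noting that $\cF_\infty^{(k,p)}$ and $\cG_\infty^{(k,p)}$ coincide at the constrained limit. The concern you flag about \cite[Lemma 4.5]{Slepcev} is unfounded in the same way the paper resolves it: that lemma only requires the energy bound, not minimality.
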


\begin{proof}

With probability one, we can assume that the conclusions of Proposition \ref{prop:proofs:gammaConvergence:liminf} and \cite[Lemma 4.5]{Slepcev} hold.

In the proof $C>0$ will denote a constant that can be arbitrarily large, is independent of $n$, and that may change from line to line.
    
First, by the same argument as in Proposition \ref{prop:proofs:gammaConvergence:liminf}, we can assume that $$\sup_{n\geq 1} \l\cS\cF\r_{n,\eps_n}^{(q,p)}((\nu_n,v_n)) \leq C$$ and therefore $\nu_n = \mu_n$ and $\nu = \mu$. In particular, we also have that $\cE_{n,\eps_n}^{(1,p)}(v_n) \leq C$ and, by \cite[Lemma 4.5]{Slepcev}, we deduce the existence of a continuous function $\hat{v}$ such that for any $\Omega' \subset \subset \Omega$, $\max_{ \{ i \leq n_{k} \spaceBar x_i \in \Omega' \} } \vert v_{n_{k}}(x_i) - \hat{v}(x_i) \vert \to 0$: this implies that $\hat{v}(x_i) = y_i$ for all $i \leq N$ with probability one. 
We also note that $v = \hat{v}$ (in particular, $v(x_i) = y_i$ for all $i \leq N$) and \eqref{eq:proofs:gammaConvergence:liminfSSL:wellPosed} reduces to proving
\[
\liminf_{n \to \infty} \sum_{k=1}^q \lambda_k \cE_{n,\eps_n}^{(k,p)}(v_n) \geq \sum_{k=1}^q \lambda_k \cE_{\infty}^{(k,p)}(v) = \sum_{k=1}^q \lambda_k \cG_{\infty}^{(k,p)}((\mu,v)).
\]

By Proposition \ref{prop:proofs:gammaConvergence:liminf}, we know that $ \liminf_{n \to \infty}\cE_{n,\eps_n}^{(k,p)}(v_n) \geq \cG_\infty((\mu,v))$ so that:
\begin{align}
    \liminf_{n \to \infty} \sum_{k=1}^q \lambda_k \cE_{n,\eps_n}^{(k,p)}(v_n) &\geq \sum_{k=1}^q \lambda_k \liminf_{n \to \infty} \cE_{n,\eps_n}^{(k,p)}(v_n) \geq \sum_{k=1}^q \lambda_k \cG_{\infty}^{(k,p)}((\mu,v)). \notag 
\end{align}
\end{proof}

\begin{proposition}[$\limsup$-inequality in the ill-posed case] \label{prop:proofs:gammaConvergence:limsupSSL}
Assume that \ref{ass:Main:Ass:S1}, \ref{ass:Main:Ass:M1}, \ref{ass:Main:Ass:M2}, \ref{ass:Main:Ass:W2}, \ref{ass:Main:Ass:D1}, \ref{ass:Main:Ass:D2} and \ref{ass:Main:Ass:L1} hold. Assume that $n \eps_n^{p} \to \infty$. Then, $\bbP$-a.s., for every $(\nu,v) \in \TLp{p}(\Omega)$, 
there exists $\{(\nu_n,v_{_n})\}_{n=1}^\infty$ with $(\nu_n,v_{n}) \to (\nu,v)$ in $\TLp{p}(\Omega)$
such that: \begin{equation}\label{eq:proofs:gammaConvergence:limsupSSL:illPosed}
        \limsup_{n \to \infty} \cF_{n,\eps_n}^{(k,p)}((\nu_n,v_n)) \leq \cG_{\infty}^{(k,p)}((\nu,v)).
\end{equation}
In particular, for $1 \leq k \leq q$ and $v \in \Ck{\infty}(\bar{\Omega})$, we can pick $(\nu_n,v_n) = (\mu_n,\hat{v}_n)$ where $\hat{v}_n(x_i) = y_i$ for $i \leq N$ and $\hat{v}_n = v|_{\Omega_n}$ else.
\end{proposition}

\begin{proof}
    With probability one, we can assume that the conclusions of Proposition \ref{prop:proofs:gammaConvergence:limsupDiscrete} hold.
    
    In the proof $C>0$ will denote a constant that can be arbitrarily large, is independent of $n$, and that may change from line to line.

    We start by noting that \eqref{eq:proofs:gammaConvergence:limsupSSL:illPosed} is trivial if $\cG_{\infty}^{(k,p)}((\nu,v)) = \infty$ so that we assume $\cG_{\infty}^{(k,p)}((\nu,v)) < \infty$ which implies that $\nu= \mu$ and $v \in \Wkp{1}{p}(\Omega)$. Furthermore, we are going to apply \cite[Remark 2.7]{Trillos3}, so it is sufficient to verify \eqref{eq:proofs:gammaConvergence:limsupSSL:illPosed} on a dense subset of $\{\mu\} \times \Wkp{1}{p}(\Omega)$, namely we consider $(\mu,v) \in \{ \mu \} \times \Ck{\infty}(\bar{\Omega})$. We let $\nu_n = \mu_n$ and $\hat{v}_n = v\vert_{\Omega_n}$. 
    
    By repeating the proof of Proposition \ref{prop:proofs:gammaConvergence:limsupDiscrete}, we can show that $(\nu_n,\hat{v}_{n}) \to (\nu,v)$ in $\TLp{p}(\Omega)$ and
    \begin{equation} \label{eq:proofs:gammaConvergence:limsupSSL:equation1}
    \limsup_{n \to \infty} \cE_{n,\eps_n}^{(k,p)}(\hat{v}_n) \leq \cG_{\infty}^{(k,p)}((\mu,v)).
    \end{equation} 
    The subtlety of \eqref{eq:proofs:gammaConvergence:limsupSSL:equation1} compared to \eqref{eq:proofs:limsupDiscrete:limsup} is that $\hat{v}_n$ does not necessarily satisfy $\hat{v}_n(x_i) = y_i$ for all $i \leq N$ since this condition is not imposed on $v$.
    We note that $\Vert \hat{v}_n \Vert_{\Lp{\infty}} \leq C$ since $\hat{v}_n = v\vert_{\Omega_n}$ and $v \in \Ck{\infty}(\bar{\Omega})$.

Define $(\mu_n,v_n)$ with
\[
v_n(x_i) = \begin{cases}
    y_i & \text{if $i \leq N$,}\\
    \hat{v}_n(x_i)& \text{else.}
\end{cases}
\]
Again, we have $\Vert v_n \Vert_{\Lp{\infty}} \leq C$ and, using the arguments of \cite[Proposition 4.24]{weihs2023consistency}, we can show that $(\mu_n, v_n) \to (\mu,v)$ in $\TLp{p}(\Omega)$. 
Since $v_n(x_i) = y_i$ for all $i \leq N$, in order to show \eqref{eq:proofs:gammaConvergence:limsupSSL:illPosed}, it therefore is sufficient to show that
\[
\lim_{n \to \infty} \l \underbrace{\cF_{n,\eps_n}^{(k,p)}(\mu_n,v_n) - \cE_{n,\eps_n}^{(k,p)}(\hat{v}_n)}_{=:T_2} \r = 0.
\]

We estimate as follows:
\begin{align}
    \vert T_2 \vert &\leq \frac{1}{n^{k+1}\eps_n^{p + kd}} \hspace{-3mm} \sum_{i_0,\cdots, i_k = 1}^{n} \eta_{\mathrm{p}}(x_{i_0},\dots,x_{i_k})  \left\vert \vert v_n(x_{i_1}) - v_n(x_{i_0}) \vert^p - \vert \hat{v}_n(x_{i_1}) - \hat{v}_n(x_{i_0}) \vert^p  \right\vert \notag
\end{align}
By definition of $v_n$, for $(i_0,\cdots,i_k) \in S$ where
\[ S:= \{ (i_0,\cdots,i_k) \spaceBar N \leq i_j \leq n \text{ for all $0 \leq j \leq 1$}\}, \] 
the corresponding term in the above sum vanishes. This means that we need to consider all indices in \(
S^c = \{ (i_0,\cdots,i_k) \spaceBar \text{there exists $0 \leq j \leq 1$ such that $1 \leq i_j \leq N$}\}.
\) 
Summing over all sets in $S^c$ therefore yields:
\begin{align}
    &T_2 \leq \frac{1}{n^{k+1}\eps_n^{p + kd}} \sum_{t=0}^1 \sum_{i_t =1}^N \sum_{\substack{i_s = 1 \\s \neq t}}^n \ls \prod_{j=1}^{k} \prod_{r=0}^{j-1} \eta\l \frac{\vert x_{i_j} - x_{i_{r}}\vert}{\eps_n} \r \rs \notag \\
    &\qquad\qquad\times\left\vert \vert v_n(x_{i_1}) - v_n(x_{i_0}) \vert^p - \vert \hat{v}_n(x_{i_1}) - \hat{v}_n(x_{i_0}) \vert^p  \right\vert \notag \\
    &\leq \frac{C}{n\eps_n^{p}}  \sum_{t=0}^1 \sum_{i_t =1}^N \frac{1}{n^k\eps_n^{dk}} \sum_{\substack{i_s = 1 \\s \neq t}}^n  \ls \prod_{j=1}^{k} \prod_{r=0}^{j-1} \eta\l \frac{\vert x_{i_j} - x_{i_{r}}\vert}{\eps_n} \r \rs \label{eq:proofs:gammaConvergence:limsupSSL:equation3}\\
    &\leq \frac{C}{n\eps_n^{p}}  \sum_{t=0}^1 \sum_{i_t =1}^N \frac{1}{n^k\eps_n^{dk}} \sum_{\substack{i_s = 1 \\s \neq t}}^n  \ls \prod_{j=1}^{k} \eta\l \frac{\vert x_{i_j} - x_{i_{j-1}}\vert}{\eps_n}  \r \rs. \notag
\end{align}

For $t \in\{0,1\}$, using $\eta(s) = 0$ for all $|s|>1$, 
\begin{align}
    &\frac{1}{n^k\eps_n^{dk}} \sum_{\substack{i_s = 1 \\s \neq t}}^n \ls \prod_{j=1}^{k} \eta\l \frac{\vert x_{i_j} - x_{i_{j-1}}\vert}{\eps_n}  \r \rs \notag \\
    &\qquad\qquad\leq \frac{\eta(0)^k}{n^k\eps_n^{dk}} \# \{ (i_{0},\cdots,i_{t-1},i_{t+1},\cdots,i_{k}) \spaceBar \vert x_{i_j} - x_{i_{j-1}} \vert < \eps_n \text{ for $0\leq j \leq k$}\} \notag.
\end{align}
Now, for an element in $(i_0,\cdots,i_k) \in \{ (i_{0},\cdots,i_{t-1},i_{t+1},\cdots,i_{k}) \spaceBar \vert x_{i_j} - x_{i_{j-1}} \vert < \eps_n \text{ for $0\leq j \leq k$}\} =: \hat{S}$, we have $x_{i_{t-1}} \in B(x_{i_t},\eps_n)$, $x_{i_{t-2}} \in B(x_{i_{t-1}},\eps_n)$ until $x_{i_0} \in B(x_{i_1},\eps_n)$ as well as $x_{i_{t+1}} \in B(x_{i_t},\eps_n)$, $x_{i_{t+2}} \in B(x_{i_{t+1}}, \eps_n)$ until $x_{i_k} \in B(x_{i_{k-1}},\eps_n)$. 
Hence $x_{i_j}\in B(x_{i_t},k\eps_n)$ for all $j$.
This shows that
\begin{align*}
    \# \hat{S}  \leq \sum_{\substack{z_1,\cdots,z_{k} \in \Omega_n}} \prod_{j=1}^k \one_{B(x_{i_t},bk\eps_n)}(z_j) = \l n\mu_n(B(x_{i_t},k\eps_n)) \r^k.
\end{align*}

Using the latter, we continue estimating:
\begin{align}
    \frac{1}{n^k\eps_n^{dk}} \sum_{\substack{i_s = 1 \\s \neq t}}^n  \prod_{j=1}^{k} \eta\l \frac{\vert x_{i_j} - x_{i_{j-1}}\vert}{\eps_n} \r &\leq \frac{\eta(0)^k}{n^k\eps_n^{dk}} \# \hat{S} \notag \\
    &\leq C \l \eps_n^{-d} \mu_n(x_{i_t},k\eps_n) \r^k \notag \\
    &= C \l \eps_n^{-d} \int_{\Omega} \one_{\{ \vert T_n(x)- x_{i_t} \vert < k \eps_n \}} \rho(x) \, \dd x\r^k \notag \\
    &\leq C \l  \eps_n^{-d} \int_{\Omega} \one_{\{ \vert x - x_{i_t} \vert < k \eps_n - \Vert T_n - \Id \Vert_{\Lp{\infty}} \}} \rho(x) \, \dd x \r^k \notag \\
    &\leq C \l \textrm{Vol}(B(0,1)) \l \frac{k \eps_n - \Vert T_n - \Id \Vert_{\Lp{\infty}}}{\eps_n} \r^d \r^k \label{eq:proofs:gammaConvergence:limsupSSL:equation4} \\
    &\leq C \label{eq:proofs:gammaConvergence:limsupSSL:equation5}
\end{align}
where we used Assumption \ref{ass:Main:Ass:M2} for \eqref{eq:proofs:gammaConvergence:limsupSSL:equation4} and Assumption \ref{ass:Main:Ass:L1} for \eqref{eq:proofs:gammaConvergence:limsupSSL:equation5}.

Inserting \eqref{eq:proofs:gammaConvergence:limsupSSL:equation5} in \eqref{eq:proofs:gammaConvergence:limsupSSL:equation3}, we obtain that \(
    T_2 \leq \frac{C}{n\eps_n^{p}} \notag
\)
from which we deduce that $T_2 \to 0$ and \eqref{eq:proofs:gammaConvergence:limsupSSL:illPosed}.
\end{proof}

The next corollary is the analogue of Corollary~\ref{cor:proofs:gammaConvergence:limsupDiscrete}, and its proof follows the same lines.

\begin{corollary}[$\limsup$-inequality for the sum of semi-supervised energies in the ill-posed case] \label{cor:proofs:gammaConvergence:limsupSSLSum}
Assume that \ref{ass:Main:Ass:S1}, \ref{ass:Main:Ass:M1}, \ref{ass:Main:Ass:M2}, \ref{ass:Main:Ass:W2}, \ref{ass:Main:Ass:D1}, \ref{ass:Main:Ass:D2} and \ref{ass:Main:Ass:L1} hold. Assume that $n \eps_n^{p} \to \infty$. Then, $\bbP$-a.s., for every $(\nu,v) \in \TLp{p}(\Omega)$,
there exists $\{(\nu_n,v_{_n})\}_{n=1}^\infty$ with $(\nu_n,v_{n}) \to (\nu,v)$ in $\TLp{p}(\Omega)$
such that: \begin{equation*}
        \limsup_{n \to \infty} \l\cS\cF\r_{n,\eps_n}^{(q,p)}((\nu_n,v_n)) \leq \l\cS\cG\r_{\infty}^{(q,p)}((\nu,v)).
\end{equation*}
\end{corollary}

We conclude with a lemma summarizing our $\Gamma$-convergence results for our semi-supervised objectives. 

\begin{lemma}[$\Gamma$-convergence of energies] \label{lem:proofs:gammaConvergence:sum}
    Assume that \ref{ass:Main:Ass:S1}, \ref{ass:Main:Ass:M1}, \ref{ass:Main:Ass:M2}, \ref{ass:Main:Ass:W2}, \ref{ass:Main:Ass:D1}, \ref{ass:Main:Ass:D2} and \ref{ass:Main:Ass:L1} hold. If $n \eps_n^{p} \to 0$, then, $\bbP$-a.s, $(\cS\cF)_{n,\eps_n}^{(q,p)}$ $\Gamma$-converges to $(\cS\cF)_{\infty}^{q,p}$ in $\TLp{p}(\Omega)$. If $n \eps_n^{p} \to \infty$, then, $\bbP$-a.s., $(\cS\cF)_{n,\eps_n}^{(q,p)}$ $\Gamma$-converges to $(\cS\cG)_{\infty}^{(q,p)}$ in $\TLp{p}(\Omega)$. 
\end{lemma}

\subsubsection{Proof of Theorem \ref{thm:main}}

\begin{proof}[Proof of Theorem \ref{thm:main}]
With probability one, we can assume that the conclusions of Lemmas \ref{lem:proofs:gammaConvergence:sum} and \ref{lem:proofs:boundedEnergies} hold.

In the proof $C>0$ will denote a constant that can be arbitrarily large, is independent of $n$, and that may change from line to line. 

By Lemma \ref{lem:proofs:boundedEnergies}, there exists $C$ such that $\sup_{n > 0} (\cS\cF)_{n,\eps_n}^{(q,p)}((\mu_n,u_n)) < C$ and in particular, $ \cE_{n,\eps_n}^{(1,p)}(u_n)$ is uniformly bounded. Furthermore, analogously to what is described in the proof of \cite[Theorem 2.1]{Slepcev}, we know that $\sup_{n > 0} \Vert u_n \Vert_{\Lp{\infty}} < C$ with probability 1. We can therefore apply \cite[Proposition 4.4]{Slepcev} to obtain a subsequence $\{n_r\}_{r=1}^\infty$ and $(\mu,u) \in \TLp{p}(\Omega)$ such that $(\mu_{n_r},u_{n_r}) \to (\mu,u)$ in $\TLp{p}(\Omega)$.
\begin{enumerate}
    \item Since $n \eps^{p}_n \to 0$, by \cite[Lemma 4.5]{Slepcev}, we know that $u$ is continuous and, for every $\Omega' \subset \subset \Omega$, we have that $\max_{\{s \leq n_r \spaceBar x_s \in \Omega'\}} \vert u(x_s) - u_{n_r}(x_s) \vert \to 0$ and, with probability 1, $u(x_i) = y_i$ for all $i \leq N$. By Lemma \ref{lem:proofs:gammaConvergence:sum} and Proposition \ref{prop:Back:Gamma:minimizers}, we also have that $(\mu,u)$ is a minimizer of $(\cS\cF)_{\infty}^{(q,p)}$. Finally, by the uniqueness of the minimizer of $(\cS\cF)_{\infty}^{(q,p)}$, we conclude that the whole sequence $(\mu_n,u_n)$ converges to $(\mu,u)$ in $\TLp{p}(\Omega)$ and for every $\Omega' \subset \subset \Omega$, we have that $\max_{\{s \leq n \spaceBar x_s \in \Omega'\}} \vert u(x_s) - u_{n}(x_s) \vert \to 0$.
    \item By Lemma \ref{lem:proofs:gammaConvergence:sum}, Proposition \ref{prop:Back:Gamma:minimizers} and the assumption $n\eps_n^{p}\to \infty$, $(\mu,u)$ is a minimizer of $(\cS\cG)_{\infty}^{(q,p)}$.
\end{enumerate}
\end{proof}

\subsection{Higher-order hypergraph learning}

The proofs in this section are simple corollaries from the results in \cite{weihs2023consistency}. In contrast to the discrete-continuum nonlocal-continuum local decomposition used for the proofs in Section \ref{sec:proofs}, everything in this section relies on spectral convergence results between the discrete Laplace operators $\Delta_{n,\eps_n}$ and its continuum counterpart $\Delta_\rho$. 

For our first result, the proof follows from an application of \cite[Proposition 4.21]{weihs2023consistency}. The key observation is that for any 
$v \in \Ck{\infty}(\bar{\Omega})$ with $v(x_i) = y_i$ for $i \leq N$, we can pick $(\nu_n,v_n) = (\mu_n,v|_{\Omega_n})$ to be a common recovery sequence for $\cJ^{(p_k)}_{n,\Delta_{n,\eps_n^{(k)}}}$ with $1 \leq k \leq q$. This allows us to use the subadditivity of $\limsup$ to deduce the result.

\begin{proposition}[$\limsup$-inequality for the sum of semi-supervised energies in the well-posed case]
    Assume that \ref{ass:Main:Ass:S2}, \ref{ass:Main:Ass:M1}, \ref{ass:Main:Ass:M2}, \ref{ass:Main:Ass:W2}, \ref{ass:Main:Ass:D1} and \ref{ass:Main:Ass:D2} hold. 
    Let $q \geq 1$, $P = \{p_k\}_{k=1}^q \subseteq (0,\infty)$ with $p_1 \leq \cdots \leq p_q$ and $ E_n = \{\eps_n^{(k)}\}_{k=1}^q$ with $\eps_n^{(1)} > \cdots > \eps_n^{(q)}$. Assume that $\eps_n^{(q)}$ satisfies \ref{ass:Main:Ass:L2}, that $\eps_n^{(1)} \to 0$ and that $\rho \in \Ck{\infty}$.
    Then, $\bbP$-a.s., for every $(\nu,v) \in \TLp{2}(\Omega)$, there exists a sequence $\{(\nu_n,v_{_n})\}_{n=1}^\infty$ with $(\nu_n,v_{n}) \to (\nu,v)$ in $\TLp{2}(\Omega)$ and
    \(
        \limsup_{n \to \infty} (\cS\cJ)_{n,E_n}^{(q,P)}((\nu_n,v_n)) \leq (\cS\cJ)_{\infty}^{(q,P)}((\nu,v)).  
    \)
\end{proposition}

The next result is shown analogously to Proposition \ref{prop:proofs:gammaConvergence:liminfSSL}. In particular, one relies on the compactness result \cite[Proposition 4.13]{weihs2023consistency}: we only require that our smallest length-scale $\eps_n^{(q)}$ satisfies the appropriate upper bound and that its associated power $p_q$ scales correctly with the dimension of $\Omega$. Then, the problem reduces to using the superadditivity of $\liminf$ and \cite[Theorem 4.14]{weihs2023consistency}.

\begin{proposition}[$\liminf$-inequality for the sum of semi-supervised energies in the well-posed case]
    Assume that \ref{ass:Main:Ass:S2}, \ref{ass:Main:Ass:M1}, \ref{ass:Main:Ass:M2}, \ref{ass:Main:Ass:W2}, \ref{ass:Main:Ass:D1} and \ref{ass:Main:Ass:D2} hold. 
    Let $q \geq 1$, $P = \{p_k\}_{k=1}^q \subseteq (0,\infty)$ with $p_1 \leq \cdots \leq p_q$ and $ E_n = \{\eps_n^{(k)}\}_{k=1}^q$ with $\eps_n^{(1)} > \cdots > \eps_n^{(q)}$. Assume that $\eps_n^{(q)}$ satisfies \ref{ass:Main:Ass:L2}, that $n \cdot (\eps_n^{(q)})^{p_q/2 - 1/2}$ is bounded, that $p_q > \frac{5}{2}d + 4$ and that $\eps_n^{(1)} \to 0$.
    Then, $\bbP$-a.s., for every sequence $\{(\nu_n,v_{_n})\}_{n=1}^\infty \subseteq \TLp{2}(\Omega)$ with $(\nu_n,v_{n}) \to (\nu,v)$ in $\TLp{2}(\Omega)$, we have
    \(
        \liminf_{n \to \infty} (\cS\cJ)_{n,E_n}^{(q,P)}((\nu_n,v_n)) \geq (\cS\cJ)_{\infty}^{(q,P)}((\nu,v)).
    \)
\end{proposition}

For the next result, we again rely on the fact that \cite[Proposition 4.24]{weihs2023consistency} implies that the same recovery sequence can be chosen for all $\cJ_{n,\Delta_{n,\eps_n^{(k)}}}^{(p_k)}$ with $1 \leq k \leq q$. In particular, we need to assume that all $\eps_n^{(k)}$ satisfy an appropriate lower bound and that their associated powers $p_k$ scale correctly with the dimension of $\Omega$. We then conclude using the subadditivity of $\limsup$. 

\begin{proposition}[$\limsup$-inequality for the sum of semi-supervised energies in the ill-posed case]
    Assume that \ref{ass:Main:Ass:S2}, \ref{ass:Main:Ass:M1}, \ref{ass:Main:Ass:M2}, \ref{ass:Main:Ass:W2}, \ref{ass:Main:Ass:D1} and \ref{ass:Main:Ass:D2} hold. 
    Let $q \geq 1$, $P = \{p_k\}_{k=1}^q \subseteq (0,\infty)$ with $p_1 \leq \cdots \leq p_q$ and $ E_n = \{\eps_n^{(k)}\}_{k=1}^q$ with $\eps_n^{(1)} > \cdots > \eps_n^{(q)}$. Assume that $\rho \in \Ck{\infty}$, that 
    $\eps_n^{(q)}$
    satisfies \ref{ass:Main:Ass:L1}, that $n(\eps_n^{(q)})^{2p_q} \to \infty$ and that $\eps_n^{(1)} \to 0$.
    Then, $\bbP$-a.s., for every $(\nu,v) \in \TLp{2}(\Omega)$, there exists a sequence $\{(\nu_n,v_{_n})\}_{n=1}^\infty$ with $(\nu_n,v_{n}) \to (\nu,v)$ in $\TLp{2}(\Omega)$ and $\limsup_{n \to \infty} (\cS\cJ)_{n,E_n}^{(q,P)}((\nu_n,v_n)) \leq (\cS\cK)_{\infty}^{(q,P)}((\nu,v))$.
\end{proposition}

Summarizing all our previous results and using the superadditivity of $\liminf$ in conjunction with \cite[Proposition 4.22]{weihs2023consistency}, we obtain the following result.

\begin{lemma}[$\Gamma$-convergence of energies] \label{lem:proofs:gammaConvergence:sumHOHL}
    Assume that \ref{ass:Main:Ass:S2}, \ref{ass:Main:Ass:M1}, \ref{ass:Main:Ass:M2}, \ref{ass:Main:Ass:W2}, \ref{ass:Main:Ass:D1}, \ref{ass:Main:Ass:D2} hold. Let $q \geq 1$, $P = \{p_k\}_{k=1}^q \subseteq \bbR$ with $p_1 \leq \cdots \leq p_q$ and $ E_n = \{\eps_n^{(k)}\}_{k=1}^q$ with $\eps_n^{(1)} > \cdots > \eps_n^{(q)}$. Assume that $\rho \in \Ck{\infty}$ and that $\eps_n^{(1)} \to 0$
    \begin{enumerate}
        \item Assume that $\eps_n^{(q)}$ satisfies \ref{ass:Main:Ass:L2}, that $n \cdot (\eps_n^{(q)})^{p_q/2 - 1/2}$ is bounded and that $p_q > \frac{5}{2}d + 4$. Then, $\bbP$-a.s., $(\cS\cJ)_{n,E_n}^{(q,P)}$ $\Gamma$-converges to $(\cS\cJ)_{\infty}^{(q,P)}$ in $\TLp{2}(\Omega)$. 
        \item Assume that $\eps_n^{(q)}$ satisfies \ref{ass:Main:Ass:L1} as well as $n(\eps_n^{(q)})^{2p_q} \to \infty$. Then, $\bbP$-a.s., $(\cS\cJ)_{n,E_n}^{(q,P)}$ $\Gamma$-converges to $(\cS\cK)_{\infty}^{(q,P)}$ in $\TLp{2}(\Omega)$. 
    \end{enumerate}    
\end{lemma}

\begin{proof}[Proof of Theorem \ref{thm:main:HOHL}]
    The proof is analogous to the proof of Theorem \ref{thm:main}.
    
    In particular, for the well-posed case, we use \cite[Proposition 4.17 and Lemma 4.25]{weihs2023consistency} to obtain a uniform bound on the $\Lp{2}$-norms of $u_n$. Then, uniform and $\TLp{2}$-convergence of a subsequence of $u_n$ to some continuous $u$ follows from \cite[Proposition 4.13 and Theorem 4.14]{weihs2023consistency}. By the uniqueness of the minimizer, Lemma \ref{lem:proofs:gammaConvergence:sumHOHL} and Proposition \ref{prop:Back:Gamma:minimizers}, the result follows.

    For the ill-posed case, convergence of a subsequence in $\TLp{2}$ to some $u$ follows from \cite[Theorem 4.14]{weihs2023consistency}. Again, Lemma \ref{lem:proofs:gammaConvergence:sumHOHL} and Proposition \ref{prop:Back:Gamma:minimizers} allow us to conclude. 
\end{proof}

\section{Numerical Experiments} \label{sec:discussion}

Multiscale Laplace learning has demonstrated strong empirical performance on point clouds, outperforming many existing graph-based semi-supervised learning methods~\cite{Merkurjev}. Since we approximate HOHL through this framework, our evaluation emphasizes sensitivity analyses. In particular, we test whether choosing increasing powers \(p_\ell=\ell\) improves performance over constant-exponent settings, which would support the intuition that higher-order regularization is most beneficial at finer scales.

We report experiments on four datasets of varying size and difficulty: iris~\cite{misc_iris_53}, digits~\cite{misc_optical_recognition_of_handwritten_digits_80}, Salinas A~\cite{SalinasDatasetCCWINTCO}, and MNIST~\cite{LeCun1998}. Notation is summarized in Table~\ref{tab:qj:terminology}.

\paragraph{\(q\)-Experiment}
For \(1 \leq q \leq 5\), we test \eqref{eq:multiscale} using \(q\) scales \(\eps^{(1)} \geq \eps^{(2)} \geq \cdots \geq \eps^{(q)}\), forming Laplacians \(\Delta_{n,\eps^{(\ell)}}\) for \(1 \leq \ell \leq q\). We vary both the weight coefficients \(\lambda_\ell\) and the powers \(p_\ell\) through the following configurations:
\begin{itemize}
    \item Coefficients: constant (CC) \(\lambda_\ell=1\); slowly increasing (SC) \(\lambda_\ell=\ell\); quickly increasing (QC) \(\lambda_\ell=\ell^2\).
    \item Powers: constant (CP) \(p_\ell=1\); increasing (IP) \(p_\ell=\ell\).
\end{itemize}
The goal is to quantify how performance changes with the number of scales \(q\) and with these coefficient/power choices.

\paragraph{\(j\)-Experiment}
For \(1 \leq q \leq 3\), we form Laplacians \(\Delta_{n,\eps^{(\ell)}}\) at scales \(\eps^{(1)} \geq \eps^{(2)} \geq \eps^{(3)}\). We then test a family of weight schedules indexed by \(1 \leq j \leq 4\):
\[
\lambda_1 = 1, \qquad \lambda_2 = j^2, \qquad \lambda_3 = (j+1)^2,
\]
with fixed powers \(p_1=1\), \(p_2=2\), \(p_3=3\) (denoted VQC(\(q\)), with \(q\in\{2,3\}\)). The goal is to evaluate sensitivity to the relative magnitudes of \(\lambda_\ell\) for a fixed number of scales.

\paragraph{Graph construction and evaluation protocol}
We always use the full dataset as nodes in the graph construction. For MNIST, we use the same data embedding as in \cite{98b487bb64994720ba648f45328e2135}. 
For the smaller datasets, iris and digits, we use $\eps$-graphs with weights $w_{\eps^{(\ell)},ij} = \exp\left(\frac{-4|x_i - x_j|^2}{\l\eps^{(\ell)}\r^2} \right)$. To illustrate that our model works with different (hyper)graph types and in order to speed-up computations, we rely on $k$-nearest neighbors ($k$NN) graphs for the larger/high-dimensional datasets (naturally substituting the sequence $\eps^{(1)} \geq \eps^{(2)} \geq \eps^{(3)} \geq \eps^{(4)} \geq \eps^{(5)}$ with $k^{(1)} \geq k^{(2)} \geq k^{(3)} \geq k^{(4)} \geq k^{(5)}$) with weights $w_{k^{(\ell)},ij} = \exp\left(\frac{-4|x_i - x_j|^2}{d_{k^{(\ell)}}(x_i)^2} \right)$ where $d_{k^{(\ell)}}(x_i)$ denotes the distance from $x_i$ to its $k^{(\ell)}$-th nearest neighbor. 

\begin{remark}[Use of nearest-neighbor graphs in the experiments]
\label{rem:knn-graphs}
Our theoretical analysis is formulated for \(\varepsilon\)-graphs, where the
neighborhood radius is prescribed explicitly. In the numerical section, however,
we sometimes replace this construction by \(k\)-nearest neighbor graphs. The
reason is practical: prescribing a fixed number of neighbors gives more uniform
finite-sample connectivity and is often preferable for larger or
higher-dimensional datasets.

This substitution should be interpreted as a change in how the local scale is
chosen, not as a change in the underlying locality principle. An
\(\varepsilon\)-graph around a point \(x\) has typical degree proportional to
\(
    n\rho(x)\varepsilon^d,
\)
up to constants depending on the kernel and the ambient dimension. Thus a
nearest-neighbor graph implicitly selects a data-dependent radius
\[
    \varepsilon_k(x) \asymp
    \left(\frac{k}{n\rho(x)}\right)^{1/d}.
\]
The radius is smaller where the sample is dense and larger where the sample is
sparse. In this sense, the \(k\)-nearest neighbor construction may be viewed as
an adaptive-bandwidth analogue of the fixed-radius construction used in the
analysis.

Both graph models are standard in discrete-to-continuum studies and in
graph-based semi-supervised learning; see, for example,
\cite{Trillos3,GARCIATRILLOS2018239,98b487bb64994720ba648f45328e2135,CALDER2022123}. We therefore
use the analytically cleaner \(\varepsilon\)-graph model for the proofs and the
more numerically robust nearest-neighbor construction in the experiments where
appropriate.
\end{remark}

Each experiment is run over 100 trials. In each trial, we re-sample labeled points that are used as fixed constraints (as in \eqref{eq:SSLObjectiveSum}). We report mean accuracy and standard deviation (in brackets), in percentages. Labeling rates range from \(0.02\) to \(0.8\); for Salinas A, the rate parameter ranges from 1 to 100 and denotes the number of labeled points per class.

\paragraph{Baselines}
Theorems~\ref{thm:pointwiseConsistency}, \ref{thm:main}, and
\ref{thm:main:HOHL} show that, in the continuum limit, our hypergraph
objectives induce effective graph-type regularization operators
(see Figure~\ref{fig:classificationAlgorithms}). However, sharing a continuum limit does not imply equivalence at fixed sample size. Accordingly, our experiments assess whether the hypergraph/multiscale structure yields empirical
benefits beyond standard graph regularization in practice.
We compare against standard and state-of-the-art graph SSL methods, including Laplace learning~\cite{LapRef} (which corresponds to \eqref{eq:multiscale} with \(q=1\) and therefore serves as a single-scale ablation), Poisson learning~\cite{98b487bb64994720ba648f45328e2135}, Fractional Laplace (FL) learning~\cite{weihs2023consistency} (which can be viewed as a complementary ablation of \eqref{eq:multiscale}: it keeps a single scale \(q=1\) but replaces the standard Laplacian regularizer by a higher-order), with \(s=2\) and \(s=3\) (iris only), Weighted Nonlocal Laplacian (WNLL)~\cite{shi2017weighted}, \(p\)-Laplace learning~\cite{flores2019algorithms}, Random Walk (RW)~\cite{zhou2004lazy}, Centered Kernel (CK)~\cite{Mai}, Sparse LP (SLP)~\cite{jung2016semi}, and the Properly Weighted Graph Laplacian~\cite{calderSlepcev}. 

Lastly, we note that no methodical hyperparameter optimization has been performed for the choice of \(\eps^{(\ell)}\) and \(k^{(\ell)}\). We specify the scale sequences \(\{\eps^{(\ell)}\}\) and \(\{k^{(\ell)}\}\) in the captions of the corresponding tables.

\paragraph{Results summary}

Highlights are shown in Tables~\ref{tab:digitsQ:small}, \ref{tab:digitsJ:small}, \ref{tab:salinasaQ:small}, \ref{tab:salinasaJ:small}, \ref{tab:mnistQ:small}, and \ref{tab:mnistJ:small}, with complete results in Section~\ref{sec:appendix:numerical}. Overall, we observe:
\begin{itemize}
    \item \textbf{Competitive performance.} Across datasets, our multiscale/hypergraph models frequently outperform the baselines, with the largest gains appearing when the labeling rate is sufficiently high and when \(q\) and/or \(j\) are at least moderate (e.g., for Salinas A when using at least \(2\) labels per class; see Table~\ref{tab:salinasaJ:small}; and for MNIST for \(q=3\) at labeling rates above \(0.02\); see Table~\ref{tab:mnistJ:small}.)
    \item \textbf{Benefit of increasing powers.} Configurations with increasing powers (IP) consistently outperform constant-power variants of \eqref{eq:discussion:higherOrder}. This supports the choice \(p_\ell=\ell\) over \(p_\ell=1\), indicating that our HOHL surrogate \eqref{eq:multiscale} benefits from stronger higher-order regularization at finer scales.
    \item \textbf{Moderate \(q\) often suffices.} In many cases, \(q=2\) or \(q=3\) already yields most of the performance gains (
    Tables~\ref{tab:irisQ:full}, \ref{tab:digitsQ:full}, \ref{tab:salinasaQ:full}). Larger \(q\) increases computation time because each additional scale requires constructing a new Laplacian (or hyperedge set) and incurs additional cost from matrix products in the objective. A similar saturation effect is observed in the \(j\)-experiment: increasing the weights \(\lambda_\ell\) is beneficial up to a point beyond which performance improvements taper off (see also Tables~\ref{tab:irisJ:full} and \ref{tab:digitsJ:full}). For this reason, we restrict \(q\) and \(j\) to modest ranges for larger datasets.
\end{itemize}

\section{Conclusion} \label{sec:conclusion}

We establish continuum limits for variational hypergraph-based semi-supervised
learning and show that, despite their multiway construction, classical
pairwise-aggregation hypergraph energies converge to first-order
density-weighted Sobolev \((\Wkp{1}{p})\) regularization. A central technical
component of the analysis is the pointwise consistency theorem for the
hypergraph Euler--Lagrange operators, where the product-type hyperedge weights
lead to multi-index kernel statistics and nontrivial density-dependent
constants. This analysis yields a weighted \(p\)-Laplacian continuum operator
and, together with the variational convergence results, gives sharp well-/ill-posedness
regimes clarifying when the method produces nontrivial label propagation versus
collapse to trivial smoothing.

We also introduce HOHL, a multiscale model that penalizes powers of Laplacians
on hypergraph-induced subgraphs, as a higher-order regularization framework
designed to go beyond the first-order limiting behavior of classical hypergraph
energies. For its surrogate on metric-space point clouds,
we prove \(\Gamma\)-convergence to higher-order Sobolev-type energies and
characterize the corresponding well-/ill-posedness thresholds. Experiments on
standard SSL benchmarks support the practical benefits of the proposed
higher-order, multiscale regularization. Broader extensions to non-geometric
hypergraphs and further computational aspects are discussed in
\cite{weihs2025HOHL}.

A natural direction for future work is to study normalized versions \cite{hoffmann2020spectral} of the HOHL
operators. Replacing the skeleton Laplacians by
symmetric or random-walk normalized variants would change how degree
heterogeneity and sampling-density effects enter the regularizer, and may be
particularly relevant for datasets that induce hypergraphs with highly
non-uniform degrees. Finally, our continuum framework provides a basis for
organizing regularization-based SSL methods
(Figure~\ref{fig:classificationAlgorithms}) and suggests extensions to other
hypergraph constructions
\cite{fazeny,pmlr-v80-li18e,hgLearningPractice,TVHg,hgPLaplacianGeometric}.

\bibliographystyle{siamplain}
\bibliography{references}

\FloatBarrier

\begin{table}[H]
\centering
\small
\renewcommand{\arraystretch}{1.5}
\setlength{\tabcolsep}{8pt}
\begin{tabularx}{\linewidth}{>{\bfseries}p{5.5cm} Y Y}
\toprule
\textbf{Term / Abbreviation} & \textbf{$q$-Experiment} & \textbf{$j$-Experiment} \\
\midrule
Aim of experiment & Analysis of HOHL as a function of maximum powers $q$ & Analysis of HOHL as a function of coefficients $\lambda_\ell$ \\
\midrule
$\ell$ & Index over scales $1 \leq \ell \leq q$ & Same meaning \\
$q$ & Number of Laplacians $1 \leq q \leq 5$ & Number of Laplacians $2 \leq q \leq 3$ \\
$j$ & — & Coefficients $\lambda_\ell$ are a function of parameter $1\leq j \leq 4$ \\
\midrule
$\eps^{(\ell)}$ & Scale for $\ell$-th $\eps$-graph Laplacian & Same meaning \\
$k^{(\ell)}$ & Scale for $\ell$-th $k$NN-graph Laplacian & Same meaning \\
$\Delta_{n,\eps^{(\ell)}}$ & $\ell$-th $\eps$-graph Laplacian & Same meaning \\
\midrule
$\lambda_\ell$ & Fixed or increasing ($1$ or $\ell$ or $\ell^2$) & Varies with $j$: $\lambda_1 = 1, \lambda_2 = j^2, \lambda_3 = (j+1)^2$ \\
$p_\ell$ & Constant or increasing ($1$ or $\ell$) & Increasing: $p_\ell = \ell$ \\ \\
\midrule
CC & $\lambda_\ell = 1$ & — \\
SC & $\lambda_\ell = \ell$ & — \\
QC & $\lambda_\ell = \ell^2$ & — \\
CP & $p_\ell = 1$ & — \\
IP & $p_\ell = \ell$ & $p_l = \ell$ \\
VQC($q$) & — & For $q=2$: $\lambda_1 = 1$, $\lambda_2 = j^2$. For $q=3$: $\lambda_1 = 1$, $\lambda_2 = j^2$, $\lambda_3 = (j+1)^2$. 
\\
\bottomrule
\end{tabularx}
\caption{Terminology used in the $q$- and $j$-experiments.}
\label{tab:qj:terminology}
\end{table}

\begin{table}[H]
\sc
\begin{center}
\resizebox{\textwidth}{!}{
\begin{tabular}{cccccccccc}
\toprule
Rate & Laplace & Poisson & \textbf{IP-QC} & \textbf{CP-QC} & \textbf{IP-SC} & \textbf{CP-SC} & \textbf{IP-CC} & \textbf{CP-CC} \\
\midrule
0.02 & 11.96 (4.03) & \textbf{78.81} (2.98) & 22.57 (9.14) & 15.02 (5.8) & 20.91 (8.57) & 15.46 (5.4) & 18.96 (7.82) & 13.79 (5.43) \\
0.05 & 19.35 (6.62) & \textbf{84.87} (1.63) & 61.81 (7.17) & 37.24 (7.55) & 58.56 (7.5) & 31.54 (9.11) & 52.93 (7.74) & 24.84 (7.85) \\
0.10 & 42.87 (7.4) & \textbf{87.13} (1.12) & 81.57 (3.51) & 60.04 (7.23) & 80.78 (3.71) & 54.66 (7.07) & 78.93 (4.26) & 50.4 (6.7) \\
0.20 & 68.58 (4.38) & 87.61 (0.94) & \textbf{89.12} (1.5) & 85.79 (2.17) & 89.06 (1.5) & 82.83 (2.57) & 88.82 (1.47) & 79.01 (3.19) \\
0.30 & 82.1 (2.02) & 87.58 (0.74) & 91.74 (0.87) & 90.98 (1.02) & \textbf{91.75} (0.87) & 89.44 (1.15) & 91.73 (0.88) & 87.57 (1.28) \\
0.50 & 88.3 (1.11) & 87.85 (0.78) & 93.87 (0.71) & 93.39 (0.81) & \textbf{93.89} (0.7) & 92.45 (0.86) & \textbf{93.89} (0.71) & 91.37 (0.92) \\
0.80 & 89.73 (1.43) & 87.88 (1.42) & \textbf{94.98} (0.99) & 94.33 (1.13) & 94.96 (0.98) & 93.3 (1.2) & 94.91 (0.96) & 92.18 (1.21) \\
\bottomrule
\end{tabular}
}
\caption{Accuracy of various SSL methods on the digits dataset for the $q$-experiment with $q=3$. We pick $\eps^{(\ell)} = 100^{2-\ell}$ for $1 \leq \ell \leq 5$. Proposed methods are in bold.}
\label{tab:digitsQ:small}
\end{center}
\end{table}

\begin{table}[H]
\sc
\begin{center}
\resizebox{\textwidth}{!}{
\begin{tabular}{cccccccccccc}
\toprule
Rate & Laplace & Poisson & WNLL & Properly & $p$-Lap & RW & CK & \textbf{IP-VQC (2)} & \textbf{IP-VQC (3)} \\
\midrule
0.02 & 12.20 (4.75) & \textbf{79.00} (2.75) & 67.07 (6.07) & 78.29 (3.14) & 77.83 (3.23) & 30.17 (11.33) & 60.00 (4.17) & 25.16 (9.35) & 24.25 (9.65) \\
0.05 & 20.42 (7.03) & \textbf{84.61} (1.72) & 69.20 (4.38) & 83.11 (2.08) & 82.50 (2.19) & 32.00 (5.96) & 66.19 (3.73) & 62.69 (6.84) & 61.96 (6.85) \\
0.10 & 41.62 (6.59) & 86.73 (1.36) & 80.73 (3.07) & \textbf{87.67} (1.45) & 87.45 (1.51) & 31.95 (5.56) & 71.98 (2.73) & 81.51 (3.66) & 81.25 (3.61) \\
0.20 & 68.47 (4.79) & 87.61 (0.99) & 86.21 (1.53) & 89.04 (0.97) & 88.93 (1.00) & 40.94 (4.75) & 78.25 (1.53) & \textbf{89.49} (1.09) & 89.41 (1.10) \\
0.30 & 82.17 (2.32) & 87.62 (0.80) & 88.00 (1.20) & 89.81 (0.87) & 89.74 (0.89) & 44.89 (5.34) & 82.11 (0.81) & \textbf{91.83} (0.86) & 91.79 (0.83) \\
0.50 & 88.18 (1.00) & 87.84 (0.96) & 89.04 (1.00) & 89.98 (1.00) & 89.94 (0.99) & 37.33 (2.51) & 85.67 (0.98) & \textbf{93.79} (0.91) & 93.77 (0.90) \\
0.80 & 89.65 (1.49) & 87.88 (1.40) & 89.68 (1.45) & 89.97 (1.42) & 89.97 (1.41) & 33.93 (1.16) & 88.34 (1.39) & 94.91 (1.01) & \textbf{94.93} (1.00) \\
\bottomrule
\end{tabular}
}
\caption{Accuracy of various SSL methods on the digits dataset for the $j$-experiment with $j=2$. We pick $\eps^{(\ell)} = 100^{2-\ell}$ for $1 \leq \ell \leq 5$. Proposed methods are in bold.}
\label{tab:digitsJ:small}
\end{center}
\end{table}

\begin{table}[H]
\sc
\begin{center}
\resizebox{\textwidth}{!}{
\begin{tabular}{cccccccccc}
\toprule
Rate & Laplace & Poisson & \textbf{IP-QC} & \textbf{CP-QC} & \textbf{IP-SC} & \textbf{CP-SC} & \textbf{IP-CC} & \textbf{CP-CC} \\
\midrule
1   & 58.08 (8.37) & 57.12 (7.32) & \textbf{60.98} (7.28) & 59.25 (7.54) & 59.73 (7.89) & 59.00 (7.85) & 58.81 (8.09) & 58.67 (8.07) \\
2   & 66.85 (5.49) & 57.32 (6.44) & \textbf{67.75} (5.42) & 67.45 (5.44) & 67.26 (5.60) & 67.32 (5.46) & 66.85 (5.77) & 67.22 (5.46) \\
5   & 73.46 (2.31) & 56.83 (5.31) & 73.59 (2.36) & \textbf{73.65} (2.35) & 73.61 (2.42) & 73.63 (2.34) & 73.58 (2.48) & 73.59 (2.27) \\
10  & 75.86 (1.82) & 56.08 (5.31) & 76.09 (1.88) & \textbf{76.21} (1.81) & 76.15 (1.84) & 76.14 (1.83) & 76.15 (1.84) & 76.06 (1.83) \\
20  & 77.61 (1.15) & 56.20 (4.25) & \textbf{78.52} (1.51) & 78.14 (1.18) & 78.42 (1.43) & 78.02 (1.17) & 78.26 (1.31) & 77.87 (1.15) \\
50  & 79.60 (0.88) & 56.44 (3.93) & \textbf{80.95} (0.91) & 80.37 (0.89) & 80.83 (0.94) & 80.18 (0.90) & 80.64 (0.93) & 80.00 (0.90) \\
100 & 80.86 (0.57) & 56.06 (2.98) & \textbf{82.47} (0.70) & 81.82 (0.56) & 82.33 (0.62) & 81.61 (0.56) & 82.10 (0.61) & 81.35 (0.55) \\
\bottomrule
\end{tabular}
}
\caption{Accuracy of various SSL methods on the Salinas A dataset for the $q$-experiment with $q=3$. We pick $k^{(1)} = 50$, $k^{(2)} = 30$, $k^{(3)} = 20$ and $k^{(4)} = 10$. Proposed methods are in bold.}
\label{tab:salinasaQ:small}
\end{center}
\end{table}

\begin{table}[H]
\sc
\begin{center}
\resizebox{\textwidth}{!}{
\begin{tabular}{cccccccccccc}
\toprule
Rate & Laplace & Poisson & WNLL & Properly & $p$-Lap & RW & CK & \textbf{IP-VQC (2)} & \textbf{IP-VQC (3)} \\
\midrule
1    & 59.28 (8.54) & 58.31 (6.46) & \textbf{64.13} (6.05) & 64.10 (6.04) & 60.26 (5.44) & 63.10 (5.14) & 28.50 (5.98) & 61.88 (7.11) & 62.23 (6.78) \\
2    & 66.82 (5.35) & 56.76 (7.03) & 67.54 (5.04) & 67.42 (5.10) & 64.65 (5.13) & 66.94 (4.76) & 33.05 (6.65) & 67.53 (5.07) & \textbf{67.68} (5.12) \\
5    & 73.74 (2.71) & 55.56 (5.89) & 73.42 (3.07) & 73.14 (3.15) & 72.26 (3.07) & 73.70 (2.60) & 46.37 (5.32) & \textbf{73.94} (2.84) & 73.86 (2.85) \\
10   & 75.88 (1.67) & 56.49 (5.18) & 75.81 (1.73) & 75.32 (1.81) & 74.80 (1.85) & 75.98 (1.73) & 55.54 (4.27) & \textbf{76.23} (1.76) & 76.14 (1.81) \\
20   & 77.44 (1.37) & 55.99 (4.62) & 78.23 (1.40) & 77.56 (1.58) & 77.51 (1.61) & 77.99 (1.22) & 66.04 (3.10) & 78.34 (1.31) & \textbf{78.40} (1.37) \\
50   & 79.58 (0.94) & 56.69 (4.19) & 80.87 (0.90) & 80.21 (0.93) & 80.36 (0.88) & 79.10 (0.85) & 75.21 (1.76) & 80.87 (0.98) & \textbf{80.98} (1.01) \\
100  & 80.96 (0.73) & 55.83 (2.75) & 82.10 (0.63) & 81.88 (0.66) & 82.12 (0.61) & 79.27 (0.70) & 79.82 (1.02) & 82.41 (0.72) & \textbf{82.53} (0.75) \\
\bottomrule
\end{tabular}
}
\caption{Accuracy of various SSL methods on the Salinas A dataset for the $j$-experiment with $j=2$. We pick $k^{(1)} = 50$, $k^{(2)} = 30$, $k^{(3)} = 20$ and $k^{(4)} = 10$. Proposed methods are in bold.}
\label{tab:salinasaJ:small}
\end{center}
\end{table}

\begin{table}[H]
\sc
\begin{center}
\resizebox{\textwidth}{!}{
\begin{tabular}{cccccccccc}
\toprule
Rate & Laplace & Poisson & \textbf{IP-QC} & \textbf{CP-QC} & \textbf{IP-SC} & \textbf{CP-SC} & \textbf{IP-CC} & \textbf{CP-CC} \\
\midrule
0.02 & 97.07 (0.07) & 96.80 (0.06) & \textbf{97.36} (0.07) & 97.29 (0.08) & 97.26 (0.07) & 97.24 (0.07) & 97.19 (0.07) & 97.19 (0.07) \\
0.05 & 97.37 (0.05) & 96.85 (0.04) & \textbf{97.64} (0.06) & 97.59 (0.06) & 97.56 (0.05) & 97.54 (0.06) & 97.50 (0.05) & 97.48 (0.06) \\
0.10 & 97.58 (0.04) & 96.85 (0.04) & \textbf{97.82} (0.04) & 97.77 (0.04) & 97.76 (0.04) & 97.74 (0.04) & 97.70 (0.04) & 97.69 (0.04) \\
0.20 & 97.81 (0.04) & 96.87 (0.04) & \textbf{98.01} (0.04) & 97.98 (0.04) & 97.97 (0.04) & 97.95 (0.04) & 97.92 (0.04) & 97.91 (0.04) \\
0.30 & 97.92 (0.04) & 96.87 (0.05) & \textbf{98.10} (0.04) & 98.07 (0.04) & 98.07 (0.04) & 98.05 (0.04) & 98.02 (0.04) & 98.02 (0.05) \\
0.50 & 98.08 (0.06) & 96.87 (0.08) & \textbf{98.24} (0.06) & 98.21 (0.06) & 98.21 (0.06) & 98.19 (0.06) & 98.18 (0.06) & 98.17 (0.06) \\
0.80 & 98.25 (0.09) & 96.90 (0.12) & \textbf{98.38} (0.09) & 98.36 (0.10) & 98.37 (0.09) & 98.34 (0.09) & 98.34 (0.09) & 98.32 (0.09) \\
\bottomrule
\end{tabular}
}
\caption{Accuracy of various SSL methods on the MNIST dataset for the $q$-experiment with $q=3$. We pick $k^{(\ell)} = 30 - (\ell-1)\cdot 10$ for $1 \leq \ell \leq 3$. Proposed methods are in bold.}
\label{tab:mnistQ:small}
\end{center}
\end{table}

\begin{table}[H]
\sc
\begin{center}
\resizebox{\textwidth}{!}{
\begin{tabular}{cccccccccccc}
\toprule
Rate & Laplace & Poisson & WNLL & Properly & $p$-Lap & RW & CK & \textbf{IP-VQC (2)} & \textbf{IP-VQC (3)} \\
\midrule
0.02 & 97.06 (0.09) & 96.79 (0.07) & 96.55 (0.09) & 94.76 (0.17) & 94.48 (0.17) & 97.15 (0.10) & 95.34 (0.16) & 97.31 (0.09) & \textbf{97.34} (0.09) \\
0.05 & 97.37 (0.06) & 96.85 (0.05) & 97.20 (0.05) & 94.49 (0.12) & 95.49 (0.10) & 97.37 (0.07) & 96.46 (0.08) & 97.62 (0.05) & \textbf{97.64} (0.05) \\
0.10 & 97.59 (0.04) & 96.86 (0.04) & 97.58 (0.05) & 95.59 (0.08) & 96.88 (0.06) & 97.45 (0.05) & 97.18 (0.06) & 97.80 (0.04) & \textbf{97.82} (0.04) \\
0.20 & 97.80 (0.04) & 96.87 (0.04) & 97.86 (0.04) & 97.08 (0.05) & 97.71 (0.04) & 97.50 (0.05) & 97.68 (0.04) & 97.99 (0.04) & \textbf{98.00} (0.04) \\
0.30 & 97.92 (0.05) & 96.87 (0.05) & 97.98 (0.05) & 97.61 (0.06) & 97.88 (0.05) & 97.51 (0.05) & 97.88 (0.05) & \textbf{98.10} (0.05) & \textbf{98.10} (0.05) \\
0.50 & 98.08 (0.06) & 96.86 (0.06) & 98.11 (0.06) & 98.01 (0.06) & 98.07 (0.06) & 97.51 (0.06) & 98.09 (0.06) & \textbf{98.24} (0.06) & \textbf{98.24} (0.05) \\
0.80 & 98.22 (0.10) & 96.87 (0.14) & 98.23 (0.10) & 98.22 (0.11) & 98.23 (0.10) & 97.52 (0.13) & 98.24 (0.11) & \textbf{98.37} (0.11) & \textbf{98.37} (0.11) \\
\bottomrule
\end{tabular}
}
\caption{Accuracy of various SSL methods on the MNIST dataset for the $j$-experiment with $j=2$. We pick $k^{(\ell)} = 30 - (\ell-1)\cdot 10$ for $1 \leq \ell \leq 3$. Proposed methods are in bold.}
\label{tab:mnistJ:small}
\end{center}
\end{table}

\appendix

\section{Additional proofs}

In this section, we collect additional proofs and technical lemmas that complement the results of Section~\ref{sec:proofs}.

\subsection{Pointwise convergence of hypergraph learning}

\subsubsection{Auxiliary results}

The following several auxiliary results are useful in the proof of Theorem \ref{thm:pointwiseConsistency}. First, we recall the McDiarmid/Azuma inequality \cite{McDiarmid_1989}. 
\begin{theorem}[McDiarmid/Azuma Inequality]\label{thm:mcdiarmid}
    Let $X_1,\dots,X_n$ be iid random variables satisfying $\vert X_i \vert \leq M$ almost surely. Let $Y_n=f(X_1,\dots,X_n)$ for some function $f$. If there exists $b >0$ such that $f$ satisfies 
    \[
    \vert f(x_1,\dots,x_i,\dots,x_n) - f(x_1,\dots,\tilde{x_i},\dots,x_n) \vert \leq b
    \]
    for all $x_i$ and $\tilde{x}_i$, $1\leq i\leq n$, then for all $t > 0$,
    \[
    \bbP(\vert Y_n - \E(Y_n) \vert \geq t ) \leq 2 \exp\left( -\frac{t^2}{2nb^2}\right).
    \]
\end{theorem}

The next result is a straight-forward counting lemma. 

\begin{lemma} \label{lem:combinatorics}
    Let $$S^{(n,k)}(i) = \# \{(\alpha_1,\dots,\alpha_k) \in \{1,\dots,n\}^k \, | \, \exists 1 \leq \ell \leq k \text{ such that } \alpha_\ell = i\}.$$ Then, for $1\leq i \leq n$, $S^{(n,k)}(i) = n^{k-1} + (n-1) S^{(n,k-1)}(i)$ and $S^{(n,k)}(i) \leq \mathcal{O}(n^{k-1})$.
\end{lemma}

\begin{proof}
    Let $(\alpha_1,\dots,\alpha_k) \in \{1,\dots,n\}^k$. If we fix $\alpha_1 = i$, then there exists $n^{k-1}$ tuples of the form $(i,\dots,\alpha_k)$. Now, if $\alpha_1 = j \neq i$, there exist $S^{(n,k-1)}(i)$ tuples of the form $(j,\alpha_2,\dots,\alpha_k)$ that contain at least one $i$. Since, $j$ can take $n-1$ values, we conclude that $S^{(n,k)}(i) = n^{k-1} + (n-1) S^{(n,k-1)}(i)$. The second claim can be proven simply by induction. 
\end{proof}

By induction and Taylor's expansion, we also establish the following lemma.

\begin{lemma}[Product identity] \label{lem:product}
    Let $\rho \in \mathrm{C}^2(\bbR^d)$. Then, for $k \geq 1$, we have
    \[
    \prod_{\ell = 1}^k \rho(x_{i_0} + \eps_n z_\ell) = \rho(x_{i_0})^k + \eps_n \rho(x_{i_0})^{k-1} \nabla\rho(x_{i_0})(z_1 + \cdots + z_k) + \mathcal{O}(\eps_n^2)
    \]
    for $x_{i_0},z_1,\dots,z_k\in \bbR^d$ and $\eps_n \in \bbR$.
\end{lemma}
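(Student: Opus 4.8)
The plan is to prove the identity by induction on $k$, with a single first-order Taylor expansion serving as the building block. For the base case $k=1$, since $\rho \in \mathrm{C}^2(\bbR^d)$, Taylor's theorem with the integral (or Lagrange) form of the remainder gives
\[
\rho(x_{i_0} + \eps_n z_1) = \rho(x_{i_0}) + \eps_n \nabla \rho(x_{i_0}) \cdot z_1 + \mathcal{O}(\eps_n^2),
\]
where the implicit constant is controlled by $\sup |\nabla^2 \rho|$ over the region swept out by $x_{i_0} + \eps_n z_1$ and by $|z_1|^2$. This is precisely the claimed formula for $k=1$.

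For the inductive step, assume the identity holds at level $k-1$. I would write the $k$-fold product as the $(k-1)$-fold product times the final factor, insert the inductive hypothesis for the former and the base-case expansion for $\rho(x_{i_0}+\eps_n z_k)$, and obtain
\[
\prod_{\ell=1}^k \rho(x_{i_0}+\eps_n z_\ell) = \Bigl(\rho(x_{i_0})^{k-1} + \eps_n \rho(x_{i_0})^{k-2}\nabla\rho(x_{i_0})\cdot(z_1+\cdots+z_{k-1}) + \mathcal{O}(\eps_n^2)\Bigr)\Bigl(\rho(x_{i_0}) + \eps_n \nabla\rho(x_{i_0})\cdot z_k + \mathcal{O}(\eps_n^2)\Bigr).
\]
Expanding, the $\eps_n^0$ term is $\rho(x_{i_0})^k$; the $\eps_n^1$ term is $\eps_n\rho(x_{i_0})^{k-1}\nabla\rho(x_{i_0})\cdot\bigl((z_1+\cdots+z_{k-1})+z_k\bigr) = \eps_n\rho(x_{i_0})^{k-1}\nabla\rho(x_{i_0})\cdot(z_1+\cdots+z_k)$; and every remaining contribution — the cross term between the two first-order pieces and all products involving an $\mathcal{O}(\eps_n^2)$ remainder — is $\mathcal{O}(\eps_n^2)$. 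This is exactly the asserted formula at level $k$, closing the induction.

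The only point needing mild care is the uniformity of the $\mathcal{O}(\eps_n^2)$ constants as the induction proceeds: each step multiplies the remainder by factors of $\rho(x_{i_0})$, $\nabla\rho(x_{i_0})$ and quantities bounded by $|z_\ell|$. In the intended application $x_{i_0}$ lies in a set compactly contained in $\Omega$ — so $\rho$, $\nabla\rho$, $\nabla^2\rho$ are bounded there — and the $z_\ell$ range over the compact support of $\eta$, so all these factors are uniformly bounded and the final constant depends only on $k$, on $\|\rho\|_{\mathrm{C}^2}$ over that region, and on $\max_\ell |z_\ell|$. I do not expect any genuine obstacle here — the lemma is essentially careful bookkeeping of Taylor remainders — so the substantive task is simply to track these dependencies consistently through the induction and to state the remainder's uniformity in a form usable in the proof of Theorem~\ref{thm:pointwiseConsistency}.
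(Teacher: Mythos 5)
Your proof is correct and follows exactly the route the paper indicates: the paper gives no detailed argument for this lemma, stating only that it "can be proven by induction and Taylor's expansion," which is precisely your base case plus inductive step. Your additional remark about tracking the uniformity of the $\mathcal{O}(\eps_n^2)$ constant (via compactness of the relevant sets and boundedness of $\rho$, $\nabla\rho$, $\nabla^2\rho$) is the right bookkeeping and matches how the lemma is used in the proof of Theorem~\ref{thm:pointwiseConsistency}.
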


Finally, we recall a lemma from \cite{weihs2023discreteToContinuum}.

\begin{lemma}[Asymptotics of domain of integration] \label{lem:domain}
    Assume that $\Omega \subset \bbR^d$ is a bounded open domain. Let $\eps_n > 0$ be a sequence that tends to $0$, $\Omega'$ be compactly contained in $\Omega$ and $C \subset \bbR^d$ be a compact subset. Then, for $n$ large enough, for all $x_{i_0} \in \Omega'$, the set $S_{\eps_n}(x_{i_0}) = \{z \in \bbR^d \, | \, x_{i_0} + \eps_n z \in \Omega \} \cap C$ is equal to $C$.  
\end{lemma}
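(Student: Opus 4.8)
The plan is to reduce the claim to a uniform smallness condition on $\eps_n$ coming from two quantities: the distance from $\overline{\Omega'}$ to the complement of $\Omega$, and the diameter of $C$. First I would set $\delta := \mathrm{dist}\big(\overline{\Omega'}, \bbR^d \setminus \Omega\big)$. Since $\Omega'$ is compactly contained in the open set $\Omega$, its closure $\overline{\Omega'}$ is a compact subset of $\Omega$, so $\overline{\Omega'}$ and the closed set $\bbR^d \setminus \Omega$ are disjoint, whence $\delta > 0$; moreover every point $x$ with $\mathrm{dist}(x,\overline{\Omega'}) < \delta$ belongs to $\Omega$. Since $C$ is compact it is bounded, so I would fix $R > 0$ with $C \subseteq \overline{B}(0,R)$, and since $\eps_n \to 0$ there is an index $N$, depending only on $\delta$ and $R$, such that $\eps_n R < \delta$ for all $n \geq N$.

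For such $n$, I would verify the two inclusions. The inclusion $S_{\eps_n}(x_{i_0}) \subseteq C$ is immediate from the definition $S_{\eps_n}(x_{i_0}) = \{z : x_{i_0} + \eps_n z \in \Omega\} \cap C$, for every $n$ and every $x_{i_0}$. For the reverse inclusion, take any $x_{i_0} \in \Omega'$ and any $z \in C$: then $|(x_{i_0} + \eps_n z) - x_{i_0}| = \eps_n |z| \leq \eps_n R < \delta$, so $x_{i_0} + \eps_n z$ lies within distance $\delta$ of $\overline{\Omega'}$ and hence in $\Omega$; consequently $z \in S_{\eps_n}(x_{i_0})$. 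This gives $C \subseteq S_{\eps_n}(x_{i_0})$, and combined with the first inclusion, $S_{\eps_n}(x_{i_0}) = C$ for all $n \geq N$ and all $x_{i_0} \in \Omega'$.

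The argument is elementary and I do not anticipate any genuine obstacle; the only point deserving care is the \emph{uniformity} over $x_{i_0} \in \Omega'$, which is exactly why the hypothesis ``compactly contained'' (rather than merely ``contained in $\Omega$'') is used: it forces $\delta > 0$, and the resulting threshold $\eps_n R < \delta$ is independent of $x_{i_0}$. Conceptually this lemma is a bookkeeping tool: it guarantees that for all sufficiently large $n$ the rescaled integration domains $S_{\eps_n}(x_{i_0})$ stabilise to $C$, so that in the pointwise-consistency estimates (Theorem \ref{thm:pointwiseConsistency}) one may replace $\int_{S_{\eps_n}(x_{i_0})}$ by $\int_{C}$ without generating boundary error terms, uniformly in $x_{i_0} \in \Omega'$.
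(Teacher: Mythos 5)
Your proof is correct and is the standard elementary argument; the paper itself does not reproduce a proof but simply recalls the lemma from \cite{weihs2023discreteToContinuum}. The two inclusions and the uniformity via $\delta := \mathrm{dist}(\overline{\Omega'},\bbR^d\setminus\Omega)>0$ and $C\subseteq \overline{B}(0,R)$ are exactly what is needed, and your closing remark about why compact containment (rather than mere containment) is essential is accurate.
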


\subsubsection{Equivalent representation of the continuum Laplacian} \label{sec:equivalentSupplement}

The first result in this section establishes the rotational invariance of the distribution of interest in Lemma \ref{lem:identity1}; the second result makes explicit the key properties of the geometric construction used in Lemma~\ref{lem:identity2}.

\begin{lemma}[Radial marginal] \label{lem:radialMarginal}
Assume that Assumption \ref{ass:Main:Ass:W2} holds. Let $(Z_1,\dots,\allowbreak Z_k)$ be a random vector in $(\mathbb R^d)^k$ with distribution $\bbQ$ defined through the density
\[
f(z_1,\dots,z_k)
= \frac{1}{\mathcal Z}\,
 \widetilde\eta_p(z_1,\dots,z_k),
\qquad
\mathcal Z := \int_{(\mathbb R^d)^k} \widetilde\eta_p(z)\, \dd z.
\]
Then, the marginal law of $Z_1$ is rotation-invariant. 
\end{lemma}

\begin{proof}
Let $Q\in O(d)$ be any orthogonal matrix.  Orthogonality implies
that $\Vert Qx \Vert = \Vert x \Vert$ for all $x\in\mathbb R^d$, and more generally
\[
\Vert Qx - Qy \Vert = \Vert x-y \Vert
\]
for all $x,y\in\mathbb R^d.$
Therefore each factor in $\widetilde\eta_p$ is invariant under the
simultaneous rotation
\[
(z_1,\dots,z_k)\mapsto(Qz_1,\dots,Qz_k).
\]
Indeed,
\[
\eta(\Vert Qz_s \Vert) = \eta(\Vert z_s \Vert),
\qquad
\eta(\Vert Qz_j - Qz_r \Vert) = \eta(\Vert z_j - z_r \Vert),
\]
so
\[
\widetilde\eta_p(Qz_1,\dots,Qz_k)
= \widetilde\eta_p(z_1,\dots,z_k)
\]
for all $z_1,\dots,z_k \in \bbR^d$.
Since the Jacobian determinant of a rotation is $1$, it follows that
the probability density $f$ satisfies
\[
f(Qz_1,\dots,Qz_k) = f(z_1,\dots,z_k).
\]
Thus the law of $(Z_1,\dots,Z_k)$ is rotation-invariant under
simultaneous rotations of all coordinates:
\[
(Z_1,\dots,Z_k) \overset{d}{=}
(QZ_1,\dots,QZ_k).
\]
By \cite[Proposition 4.1.1]{Bryc1995}, this implies that every marginal of $\bbQ$ is also rotation-invariant. 
\end{proof}

\begin{proof}[Proof of Lemma \ref{lem:reflections}]

\begin{enumerate}
    \item $R_{z_1}$ is linear by definition. We check that $R_{z_1}$ is an isometry and has the expected geometric action.
For any $y\in\mathbb R^d$, decompose
\[
y = (y\cdot v)\,v + y_\perp,
\]
where $y_\perp := y - (y\cdot v)v$ satisfies $y_\perp\cdot v = 0$. Then
\begin{align}
R_{z_1}(y)
&= y - 2(y\cdot v)\,v \notag  \\
&= (y\cdot v)v + y_\perp - 2(y\cdot v)v \notag \\
&= - (y\cdot v)v + y_\perp. \label{eq:Raction}
\end{align}
Thus, $R_{z_1}$ flips the component along $v$ and preserves the orthogonal component, which is precisely the reflection across the hyperplane $\{y : y\cdot v = 0\}$. 

Moreover, by the orthogonality of $(y\cdot v)\,v$ and $y_\perp$, we have
\begin{align*}
\|R_{z_1}(y)\|^2 
&= \|-(y\cdot v)v + y_\perp\|^2 \\
&= \|y_\perp\|^2 + (y\cdot v)^2 \\
&= \|(y\cdot v)v + y_\perp\|^2 \\
&= \|y\|^2,
\end{align*}
so that $R_{z_1}$ is an isometry.

\item 
$S_{z_1}$ fixes every point of the hyperplane $H_{z_1}$.
Indeed, if $y\in H_{z_1}$, then $(y-m)\cdot v = 0$, and hence
\[
R_{z_1}(y-m)
= (y-m) - 2\bigl((y-m)\cdot v\bigr)v
= y-m.
\]
It follows that
\[
S_{z_1}(y)
= m + R_{z_1}(y-m)
= m + (y-m)
= y.
\]

For a general point $y\in\mathbb R^d$, the vector $y-m$ has the orthogonal
decomposition
\[
y-m = \bigl((y-m)\cdot v\bigr)v + (y-m)_\perp,
\qquad (y-m)_\perp\cdot v = 0.
\]
Using the reflection identity~\eqref{eq:Raction}, we obtain
\[
R_{z_1}(y-m)
= -(y-m)\cdot v\, v + (y-m)_\perp,
\]
so $R_{z_1}$ reverses the normal component $\bigl((y-m)\cdot v\bigr)v$ and
preserves the tangential component $(y-m)_\perp$.

Geometrically, $R_{z_1}$ is the reflection across the hyperplane
\[
H_0 := \{y \in \mathbb R^d : y\cdot v = 0\}
\]
The hyperplane
\[
H_{z_1} := \{y \in \mathbb R^d : (y-m)\cdot v = 0\}
\]
is simply the translation of $H_0$ by the vector $m$.  Therefore, to
obtain the reflection across $H_{z_1}$, we must conjugate $R$ by this
translation, which yields the affine map
\[
S_{z_1}(y) = m + R_{z_1}(y-m).
\]
Thus $S_{z_1}$ is precisely the affine reflection across $H_{z_1}$ (see
Figure~\ref{fig:translation}).

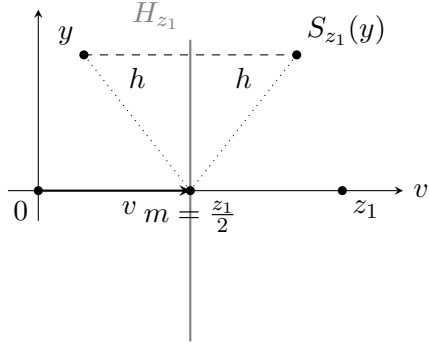
\begin{figure}[h]
\centering
\begin{tikzpicture}[scale=2,>=stealth]

  \coordinate (O)  at (0,0);      
  \coordinate (z1) at (2,0);      
  \coordinate (m)  at (1,0);      

  \coordinate (y)   at (0.3,0.9);
  \coordinate (Sy)  at (1.7,0.9);

  \draw[gray,thick] (1,-1) -- (1,1) node[above left] {$H_{z_1}$};

  \draw[->,thin] (-0.2,0) -- (2.4,0) node[right] {$v$};
  \draw[->,thin] (0,-0.2) -- (0,1.2);

  \fill (O)  circle (0.03) node[below left] {$0$};
  \fill (z1) circle (0.03) node[below right] {$z_1$};
  \fill (m)  circle (0.03) node[below] {$m = \tfrac{z_1}{2}$};

  \fill (y)  circle (0.03) node[above left] {$y$};
  \fill (Sy) circle (0.03) node[above right] {$S_{z_1}(y)$};

  \draw[->,thick] (O) -- (1,0) node[below,pos=0.6] {$v$};

  \draw[dashed] (y) -- (Sy);
  \draw[dotted] (y) -- (m);
  \draw[dotted] (Sy) -- (m);

  \node at (0.65,0.75) {$h$};
  \node at (1.35,0.75) {$h$};

\end{tikzpicture}
\caption{Geometric interpretation of the affine reflection $S_{z_1}$ across the hyperplane $H_{z_1}$ orthogonal to $v$ and passing through $m = z_1/2$. The points $y$ and $S_{z_1}(y)$ are symmetric with respect to $H_{z_1}$.} \label{fig:translation}
\end{figure}

\item For any $y,r\in\mathbb R^d$,
\begin{align}
S_{z_1}(y) - S_{z_1}(r)
&= \bigl(m + R_{z_1}(y-m)\bigr) - \bigl(m + R_{z_1}(r-m)\bigr) \notag \\
&= R_{z_1}(y-m) - R_{z_1}(r-m) \notag \\
&= R_{z_1}\bigl((y-m) - (r-m)\bigr) \label{eq:linearity} \\
&= R_{z_1}(y-r), \notag
\end{align}
where we used the linearity of $R_{z_1}$ for \eqref{eq:linearity}. Since $R_{z_1}$ is an isometry by part 1 of the lemma,
\[
\|S_{z_1}(y) - S_{z_1}(r)\| = \|R_{z_1}(y-r)\| = \|y-r\|.
\]

\item We have 
\begin{align}
R_{z_1}(-m)
&= -m - 2\bigl((-m)\cdot v\bigr)v \notag \\
&= -m - 2\Bigl(-\frac{z_1}{2} \cdot \frac{z_1}{\Vert z_1 \Vert} \Bigr) \frac{z_1}{\Vert z_1 \Vert} \notag \\
&= -m + z_1 \notag \\
&= m. \label{eq:m}
\end{align}
Therefore, 
\[
S_{z_1}(0) = m + R_{z_1}(0-m) = m + R_{z_1}(-m) = 2m = z_1.
\]

Similarly, by linearity of $R$ and \eqref{eq:m},
\[
S_{z_1}(z_1) = m + R_{z_1}(z_1-m) = m + R_{z_1}(m) = m - R_{z_1}(-m) = 0.
\]

\item We compute as follows:
\begin{align}
\|S_{z_1}(y)\|
&= \|S_{z_1}(y) - S_{z_1}(z_1)\| \label{eq:part4}\\
&= \|y - z_1\| \label{eq:isometryS}
\end{align}
where we used part 4 of the lemma for \eqref{eq:part4}, and part 3 of the lemma for \eqref{eq:isometryS}. Similarly,
\begin{align}
\|S_{z_1}(y) - z_1\|
&= \|S_{z_1}(y) - S_{z_1}(0)\| \label{eq:part4:2}\\
&= \|y - 0\| \label{eq:isometryS:2}\\
&= \|y\| \notag
\end{align}
where we used part 4 of the lemma for \eqref{eq:part4:2}, and part 3 of the lemma for \eqref{eq:isometryS:2}. 
\end{enumerate}
\end{proof}

\subsection{$\Gamma$-convergence of hypergraph learning} \label{sec:gammaConvergenceHypergraph}

\subsubsection{\texorpdfstring{$\Gamma$}{Gamma}-convergence of the nonlocal energies} \label{subsec:supplementary:nonlocal}

For $v:\Omega \mapsto \bbR$ and $\eps > 0$ we define the nonlocal energies
\[
\cE_{\eps,\mathrm{NL}}^{(k,p)}(v,\eta) = \frac{1}{\eps^{p + kd}} \hspace{-0.8mm}  \int_{\Omega^{k+1}} \hspace{-1.8mm}  \ls \prod_{j=1}^{k} \prod_{r=0}^{j-1} \eta\l \frac{\vert x_{j} - x_{r}\vert}{\eps} \r \rs \hspace{-1.6mm} \left\vert v(x_1) - v(x_0) \right\vert^p  \prod_{\ell = 0}^k \rho(x_\ell) \, \dd x_k \cdots \dd x_0
\]
which are useful intermediary quantities when going from the discrete setting to the continuum one. In this section, by re-adapting the results in \cite{Trillos3}, our aim is to prove the $\Gamma$-convergence of our nonlocal energies to the local ones in the continuum. We start with a few technical lemmas used in the subsequent results. 

\begin{lemma}[Integral identity] \label{lem:proofs:integralIdentity}
Assume that \ref{ass:Main:Ass:S1} and \ref{ass:Main:Ass:W2} hold. For $k \geq 1$, we have
\begin{equation*}
\frac{1}{\eps_n^{p + dk}} \int_{\Omega^{k+1}} \ls \prod_{j=1}^{k} \prod_{r=0}^{j-1} \eta\l \frac{\vert x_{j} - x_{r}\vert}{\eps_n} \r \rs \vert x_1 - x_0 \vert^{2p}   \, \dd x_k \cdots \dd x_0 = O(\eps_n^p).
\end{equation*}
\end{lemma}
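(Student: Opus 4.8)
The plan is to reduce the integral to the continuum kernel $\etaPTilde$ tested against $|z_1|^{2p}$ via the standard rescaling substitution, and then use the compact support of $\eta$ to bound the resulting constant.

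First I would fix $x_0 \in \Omega$ and substitute $z_j = (x_j - x_0)/\eps_n$ for $1 \leq j \leq k$, so that $\dd x_j = \eps_n^d \dd z_j$ and $\vert x_1 - x_0 \vert^{2p} = \eps_n^{2p} \vert z_1 \vert^{2p}$. For the kernel, note that $\vert x_j - x_0 \vert / \eps_n = \vert z_j \vert$ and, for $1 \leq r < j$, $\vert x_j - x_r \vert / \eps_n = \vert z_j - z_r \vert$, so the double product $\prod_{j=1}^k \prod_{r=0}^{j-1} \eta(\vert x_j - x_r \vert / \eps_n)$ becomes exactly $\etaPTilde(z_1,\dots,z_k)$. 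This rewrites the left-hand side as
\[
\frac{\eps_n^{2p + kd}}{\eps_n^{p + kd}} \int_\Omega \int_{S_{\eps_n}(x_0)^k} \etaPTilde(z_1,\dots,z_k) \, \vert z_1 \vert^{2p} \, \dd z_k \cdots \dd z_1 \, \dd x_0,
\]
where $S_{\eps_n}(x_0) = \{ z \in \bbR^d : x_0 + \eps_n z \in \Omega \}$, and the prefactor simplifies to $\eps_n^p$.

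Since the integrand is nonnegative, I would then simply drop the domain restriction and bound the inner integral by the same integral over $(\bbR^d)^k$ (so no appeal to Lemma \ref{lem:domain} is needed). By Assumption \ref{ass:Main:Ass:W2}, $\eta$ is bounded by $\eta(0)$ and has support contained in some ball $\overline{B(0,R)}$; hence $\etaPTilde$ vanishes unless $\vert z_j \vert \leq R$ for all $j$, so on its support $\vert z_1 \vert^{2p} \leq R^{2p}$ and
\[
\int_{(\bbR^d)^k} \etaPTilde(z_1,\dots,z_k) \, \vert z_1 \vert^{2p} \, \dd z_k \cdots \dd z_1 \leq \eta(0)^{t(k)} R^{2p} \big( \vol(B(0,R)) \big)^k =: C < \infty.
\]
Combining with $\vol(\Omega) < \infty$ from Assumption \ref{ass:Main:Ass:S1} yields the bound $C \, \vol(\Omega) \, \eps_n^p$, which is the claimed $\mathcal{O}(\eps_n^p)$.

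There is essentially no serious obstacle: the only points requiring care are verifying that the powers of $\eps_n$ cancel correctly (the $\eps_n^{kd}$ Jacobian and the $\eps_n^{2p}$ from $\vert x_1 - x_0 \vert^{2p}$ against the $\eps_n^{-(p+kd)}$ prefactor) and observing that the $z$-domain may be freely enlarged because the integrand is nonnegative. Finiteness of the limiting constant is immediate from the compact support and boundedness of $\eta$.
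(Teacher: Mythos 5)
Your proposal is correct and follows essentially the same route as the paper's proof: the rescaling $z_j=(x_j-x_0)/\eps_n$, the cancellation of the Jacobian powers against the prefactor leaving $\eps_n^p$, enlarging the domain of integration to $(\bbR^d)^k$ by nonnegativity, and concluding via the compact support and boundedness of $\eta$ together with $\vol(\Omega)<\infty$. The only cosmetic difference is that you spell out the explicit constant $\eta(0)^{t(k)}R^{2p}\vol(B(0,R))^k$, where the paper just invokes Assumption \ref{ass:Main:Ass:W2} for finiteness.
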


\begin{proof}
In the proof $C>0$ will denote a constant that can be arbitrarily large, is independent of $n$ and that may change from line to line.

By using the change of variables $z_j = (x_j - x_0)/\eps_n$ for $1 \leq j \leq k$, we obtain that $(x_j - x_r)/\eps_n = z_j - z_r$ for $1 \leq r < j \leq k$. By the latter, 
    \begin{align}
        T_1 &:= \frac{1}{\eps_n^{p + dk}} \int_{\Omega^{k+1}} \ls \prod_{j=1}^{k} \prod_{r=0}^{j-1} \eta\l \frac{\vert x_{j} - x_{r}\vert}{\eps_n} \r \rs \vert x_1 - x_0 \vert^{2p}   \, \dd x_k \cdots \dd x_0 \notag \\
        &= \frac{\eps_n^{2p}}{\eps_n^{p}} \int_\Omega \int_{\{ z_j \spaceBar x_0 + \eps_n z_j  \in \Omega \}} \vert z_1 \vert^{2p} \ls  \prod_{s=1}^k \eta(\vert z_s \vert) \rs \ls \prod_{j=1}^{k} \prod_{r=1}^{j-1} \eta\l \vert z_{j} - z_{r} \vert \r \rs \, \dd z_k \cdots \dd z_1 \dd x_0 \notag \\
        &\leq C \eps_n^{p} \int_{(\bbR^d)^k} \vert z_1 \vert^{2p} \ls  \prod_{s=1}^k \eta(\vert z_s \vert) \rs \ls \prod_{j=1}^{k} \prod_{r=1}^{j-1} \eta\l \vert z_{j} - z_{r} \vert \r \rs \, \dd z_k \cdots \dd z_1 \notag \\
        &=O(\eps_n^p) \notag
    \end{align}
    where the last equality follows from Assumption \ref{ass:Main:Ass:W2}.
\end{proof}

\begin{lemma}[Product identities]
Let $\rho:\bbR^d \mapsto \bbR$ be a Lipschitz function that is bounded above. For $x_0, z_1, \cdots,z_k \in \bbR^d$ and $k \geq 1$, we have the following identities:
\begin{equation} \label{eq:proofs:productIdentities1}
    \left\vert \prod_{r=1}^k \rho\l z_r + x_0 \r - \rho(x_0)^{k} \right\vert \leq C(\rho) \sum_{r=1}^k \vert z_r \vert
\end{equation}
and 
\begin{equation} \label{eq:proofs:productIdentities2}
    \left\vert \prod_{r=0}^k \rho\l x_r + z  \r - \prod_{r=0}^k \rho(x_r) \right\vert \leq C(\rho) \left\vert z \right\vert
\end{equation}
for constants $C(\rho)$ only depending on $\rho$.
\end{lemma}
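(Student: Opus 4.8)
The plan is to prove both inequalities by a standard telescoping decomposition combined with the Lipschitz estimate $|\rho(a)-\rho(b)|\le \mathrm{Lip}(\rho)\,|a-b|$. Throughout I write $\|\rho\|_\infty$ for the supremum of $|\rho|$, which is finite in every application we make of this lemma, since there $\rho$ is a probability density bounded above and below by positive constants (Assumption \ref{ass:Main:Ass:M2}); the constants $C(\rho)$ produced below depend only on $k$, $\mathrm{Lip}(\rho)$ and $\|\rho\|_\infty$.

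For \eqref{eq:proofs:productIdentities1} I would introduce the partial products $P_j = \left(\prod_{r=1}^{j}\rho(z_r+x_0)\right)\rho(x_0)^{k-j}$ for $j=0,\dots,k$, so that $P_0=\rho(x_0)^k$ and $P_k=\prod_{r=1}^k\rho(z_r+x_0)$, and then write
\[
\prod_{r=1}^k \rho(z_r+x_0) - \rho(x_0)^k \;=\; \sum_{j=1}^k (P_j-P_{j-1}) \;=\; \sum_{j=1}^k \left(\prod_{r=1}^{j-1}\rho(z_r+x_0)\right)\rho(x_0)^{k-j}\bigl(\rho(z_j+x_0)-\rho(x_0)\bigr).
\]
Taking absolute values, bounding each of the $k-1$ surviving $\rho$-factors by $\|\rho\|_\infty$ and using $|\rho(z_j+x_0)-\rho(x_0)|\le \mathrm{Lip}(\rho)|z_j|$ yields the bound with $C(\rho)=\|\rho\|_\infty^{k-1}\mathrm{Lip}(\rho)$.

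For \eqref{eq:proofs:productIdentities2} I would telescope instead over the index at which the shift by $z$ is inserted: set $Q_j=\left(\prod_{r=0}^{j-1}\rho(x_r+z)\right)\left(\prod_{r=j}^{k}\rho(x_r)\right)$ for $j=0,\dots,k+1$, so that $Q_0=\prod_{r=0}^k\rho(x_r)$ and $Q_{k+1}=\prod_{r=0}^k\rho(x_r+z)$, and then
\[
\prod_{r=0}^k\rho(x_r+z) - \prod_{r=0}^k\rho(x_r) \;=\; \sum_{j=0}^k (Q_{j+1}-Q_j), \qquad Q_{j+1}-Q_j = \left(\prod_{r=0}^{j-1}\rho(x_r+z)\right)\bigl(\rho(x_j+z)-\rho(x_j)\bigr)\left(\prod_{r=j+1}^k\rho(x_r)\right).
\]
Bounding the $k$ surviving $\rho$-factors by $\|\rho\|_\infty$ and the difference by $\mathrm{Lip}(\rho)|z|$ gives $\bigl|\prod_{r=0}^k\rho(x_r+z) - \prod_{r=0}^k\rho(x_r)\bigr| \le (k+1)\|\rho\|_\infty^k\,\mathrm{Lip}(\rho)\,|z|$, which is the claim.

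The argument is entirely elementary, so there is no real obstacle; the only point needing a moment's care is that the telescoping produces products of $\rho$ evaluated at several distinct points, so one genuinely needs $\rho$ bounded in sup-norm (boundedness from above alone would not control a factor such as $\rho(z_j+x_0)$ uniformly in $x_0$). This is harmless in our setting because every $\rho$ to which the lemma is applied is a density satisfying \ref{ass:Main:Ass:M2}, hence $\|\rho\|_\infty<\infty$.
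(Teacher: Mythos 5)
Your proof is correct and follows essentially the same route as the paper: the paper proves \eqref{eq:proofs:productIdentities1} by induction on $k$, which is precisely your telescoping decomposition carried out one term at a time, with the same bound $\|\rho\|_{\Lp{\infty}}^{k-1}\mathrm{Lip}(\rho)$ on each increment; the paper likewise dispatches \eqref{eq:proofs:productIdentities2} as "similar." Your side remark that one really needs $\|\rho\|_{\Lp{\infty}}<\infty$ (harmless here since $\rho$ is a nonnegative density) is a fair observation and consistent with the paper's own use of $\|\rho\|_{\Lp{\infty}}$ in the estimate.
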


\begin{proof}
We only show how to derive \eqref{eq:proofs:productIdentities1} as the proof of \eqref{eq:proofs:productIdentities2} is similar.

We proceed by induction. For $k = 1$, $\vert \rho(x_0 + z_1) - \rho(x_0) \vert \leq \textrm{Lip}(\rho) \vert z_1 \vert$. Now assume that \eqref{eq:proofs:productIdentities1} holds for $k-1$. We compute as follows:
\begin{align}
\left\vert \prod_{r=1}^k \rho\l z_r + x_0 \r - \rho(x_0)^{k} \right\vert &\leq \left\vert \prod_{r=1}^k \rho\l z_r + x_0 \r - \rho(x_0)\prod_{r=1}^{k-1} \rho\l z_r + x_0 \r \right\vert \notag \\
&+ \left\vert \rho(x_0)\prod_{r=1}^{k-1} \rho\l z_r + x_0 \r - \rho(x_0)^k \right\vert \notag \\
&= \left\vert \prod_{r=1}^{k-1} \rho\l z_r + x_0 \r  \right\vert \left\vert \rho\l z_k + x_0 \r - \rho(x_0) \right\vert \notag \\
&+\vert \rho(x_0) \vert \left\vert \prod_{r=1}^{k-1} \rho\l z_r+ x_0 \r - \rho(x_0)^{k-1} \right\vert \notag \\
&\leq \Vert \rho \Vert_{\Lp{\infty}}^{k-1} \textrm{Lip}(\rho) \left\vert z_k \right\vert + \Vert \rho \Vert_{\Lp{\infty}}  C(\rho) \sum_{r=1}^{k-1} \vert z_r \vert. \notag 
\end{align}
\end{proof}

\begin{lemma}[Pointwise convergence of nonlocal energies]\label{lem:proofs:pointwiseConvergenceNonLocal}
Assume that \ref{ass:Main:Ass:S1}, \ref{ass:Main:Ass:M1}, \ref{ass:Main:Ass:M2} and \ref{ass:Main:Ass:W2} hold. Let $\{v_{\eps_n}\}$ be a sequence of functions in $\Ck{2}(\bbR^d)$ such that 
\begin{equation}\label{eq:proofs:pointwiseConvergenceNonlocal:boundedness}
\sup_{n \in \bbN} \{ \Vert \nabla v_{\eps_n} \Vert_{\Lp{\infty}(\bbR^d)} + \Vert \nabla^{2} v_{\eps_n} \Vert_{\Lp{\infty}(\bbR^d)}\} < \infty.
\end{equation}
Suppose that $\rho$ is a positive Lipschitz function and that  $\nabla v_{\eps_n} \to \nabla v^*$ in $\Lp{p}(\Omega)$ for some $v^* \in \Ck{2}(\bbR^d)$. Then, 
\begin{equation} \label{eq:proofs:pointwiseConvergenceNonlocal:convergence}
\lim_{n \to \infty} \cE_{\eps_n,\mathrm{NL}}^{(k,p)}(v_{\eps_n},\eta) = \cE_{\infty}^{(k,p)}(v^*). 
\end{equation}
\end{lemma}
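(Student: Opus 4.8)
The plan is to pass to the limit after the change of variables $z_j=(x_j-x_0)/\eps_n$ for $1\le j\le k$, whose Jacobian contributes $\eps_n^{kd}$. Under this substitution the kernel $\prod_{j=1}^k\prod_{r=0}^{j-1}\eta(|x_j-x_r|/\eps_n)$ becomes exactly $\etaPTilde(z_1,\dots,z_k)$ (the factors with $r=0$ give $\prod_{s}\eta(|z_s|)$, those with $r\ge1$ give $\prod_{j\ge2}\prod_{r=1}^{j-1}\eta(|z_j-z_r|)$), while $\prod_{\ell=0}^k\rho(x_\ell)=\rho(x_0)\prod_{\ell=1}^k\rho(x_0+\eps_n z_\ell)$. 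Setting $g_n(x_0,z):=\eps_n^{-1}\bigl(v_{\eps_n}(x_0+\eps_n z_1)-v_{\eps_n}(x_0)\bigr)$ and using the compact support of $\eta$ (Assumption \ref{ass:Main:Ass:W2}) to restrict the $z$-integrals to the fixed compact set $Q:=\mathrm{supp}(\eta)^k$, we obtain
\[
\cE_{\eps_n,\mathrm{NL}}^{(k,p)}(v_{\eps_n},\eta)=\int_\Omega\rho(x_0)\int_{S_{\eps_n}(x_0)\cap Q}\etaPTilde(z)\,|g_n(x_0,z)|^p\prod_{\ell=1}^k\rho(x_0+\eps_n z_\ell)\,\dd z\,\dd x_0 ,
\]
where $S_{\eps_n}(x_0)=\{z:x_0+\eps_n z_j\in\Omega\ \text{for all }j\}$.

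\textbf{Three replacements.} I would then perform three substitutions inside this integral, each producing an error that vanishes as $n\to\infty$. \emph{(i)} Replace $g_n(x_0,z)$ by $\nabla v_{\eps_n}(x_0)\cdot z_1$: a second-order Taylor expansion together with the uniform Hessian bound \eqref{eq:proofs:pointwiseConvergenceNonlocal:boundedness} gives $|g_n(x_0,z)-\nabla v_{\eps_n}(x_0)\cdot z_1|\le C\eps_n|z_1|^2$ on $Q$; combining this with inequality \eqref{eq:intro:equationTriangle} (taking $c=g_n(x_0,z)$, $a=\nabla v_{\eps_n}(x_0)\cdot z_1$) the induced error is at most $C_\delta\,C\,\eps_n^p+\delta\,C$, where the $\delta$-term is bounded uniformly in $n$ using the gradient bound, Assumption \ref{ass:Main:Ass:M2}, and the compact support; sending $n\to\infty$ and then $\delta\to0$ kills it (alternatively one uses the direct estimate $\bigl|\,|a|^p-|b|^p\,\bigr|\le p(|a|^{p-1}+|b|^{p-1})|a-b|$ together with Lemma \ref{lem:proofs:integralIdentity}). \emph{(ii)} Replace $\prod_{\ell=1}^k\rho(x_0+\eps_n z_\ell)$ by $\rho(x_0)^k$: by the product identity \eqref{eq:proofs:productIdentities1} (applied with $z_\ell\mapsto\eps_n z_\ell$) the difference is $\le C(\rho)\eps_n\sum_\ell|z_\ell|$, and after multiplying by the now-bounded factor $\etaPTilde(z)|\nabla v_{\eps_n}(x_0)\cdot z_1|^p$ and integrating over $Q\times\Omega$ this contributes $O(\eps_n)$. \emph{(iii)} Replace $S_{\eps_n}(x_0)\cap Q$ by $Q$: for $x_0$ farther than $\eps_n\,\mathrm{diam}(Q)$ from $\partial\Omega$ one has $S_{\eps_n}(x_0)\cap Q=Q$ as in Lemma \ref{lem:domain}, while the remaining $x_0$ lie in a boundary layer of Lebesgue measure $O(\eps_n)$ since $\partial\Omega$ is Lipschitz (Assumption \ref{ass:Main:Ass:S1}), on which the integrand is bounded.

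\textbf{Passing to the limit.} After these steps
\[
\cE_{\eps_n,\mathrm{NL}}^{(k,p)}(v_{\eps_n},\eta)=\int_\Omega\rho(x_0)^{k+1}\int_{(\bbR^d)^k}\etaPTilde(z)\,|\nabla v_{\eps_n}(x_0)\cdot z_1|^p\,\dd z\,\dd x_0+o(1),
\]
and by isotropy of the kernels (as in \eqref{eq:intro:isotropy}) the inner integral equals $\sigma_\eta^{(k)}\|\nabla v_{\eps_n}(x_0)\|_2^p$. It then remains to show $\int_\Omega\rho^{k+1}\|\nabla v_{\eps_n}\|_2^p\,\dd x_0\to\int_\Omega\rho^{k+1}\|\nabla v^*\|_2^p\,\dd x_0=\cE_\infty^{(k,p)}(v^*)$; this follows from $\nabla v_{\eps_n}\to\nabla v^*$ in $\Lp{p}(\Omega)$ using $\bigl|\,\|a\|^p-\|b\|^p\,\bigr|\le p(\|a\|^{p-1}+\|b\|^{p-1})\|a-b\|$, Hölder's inequality, the uniform $\Lp{p}$-bound on the gradients, and the boundedness of $\rho^{k+1}$ (Assumption \ref{ass:Main:Ass:M2}).

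\textbf{Main obstacle.} The only genuinely delicate point is replacement \emph{(i)}: since $v_{\eps_n}$ varies with $n$, the Taylor remainder must be controlled by the \emph{uniform} bound \eqref{eq:proofs:pointwiseConvergenceNonlocal:boundedness}, which is precisely why that hypothesis is imposed, whereas the final limit requires only $\Lp{p}$-convergence of the gradients, not $\Ck{2}$-convergence. The boundary-layer accounting in \emph{(iii)}, relying on the Lipschitz regularity of $\partial\Omega$, is the other place where care is needed; overall the argument re-adapts the techniques of \cite{Trillos3}.
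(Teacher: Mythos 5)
Your proposal is correct and follows essentially the same route as the paper's proof: Taylor-linearize the finite difference using the uniform bound \eqref{eq:proofs:pointwiseConvergenceNonlocal:boundedness} together with \eqref{eq:intro:equationTriangle} and Lemma \ref{lem:proofs:integralIdentity}, freeze the density via \eqref{eq:proofs:productIdentities1}, correct for the boundary, and pass to the limit using only $\Lp{p}$-convergence of the gradients. The sole (harmless) deviation is in the boundary step, where you invoke the $O(\eps_n)$ tubular-neighbourhood measure bound for Lipschitz domains while the paper settles for an $o(1)$ estimate via dominated convergence.
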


\begin{proof}
In the proof $C>0$ will denote a constant that can be arbitrarily large, is independent of $n$ and that may change from line to line.

For a function $v \in \Ck{2}(\bbR^d)$ and $x_0,x_1\in \Omega$, we have:
\begin{align}
v(x_1) - v(x_0) &= \nabla v(x_0) \cdot (x_1 - x_0) + (x_1 - x_0)^T \nabla^2 v(c) (x_1 - x_0). \notag
\end{align}
for some constant $c$ depending on $x_0$ and $x_1$. Now, define
\begin{align*}
&H_{\eps_n}(v) \\
&= \frac{1}{\eps_n^{p + dk}} \int_{\Omega^{k+1}} \ls \prod_{j=1}^{k} \prod_{r=0}^{j-1} \eta\l \frac{\vert x_{j} - x_{r}\vert}{\eps_n} \r \rs  \vert \nabla v(x_0) \cdot (x_1 - x_0) \vert^{p} \ls \prod_{\ell = 0}^k \rho(x_\ell) \rs \, \dd x_k \cdots \dd x_0.
\end{align*}
We note that by Assumption \ref{ass:Main:Ass:W2}, we have $H_{\eps_n}(v) \leq C \Vert \nabla v \Vert_{\Lp{\infty}}$. 
Then, we estimate as follows for $\delta > 0$:
\begin{align}
    &T_1 :=\vert \cE_{\eps_n,\mathrm{NL}}^{(k,p)}(v_{\eps_n},\eta) - H_{\eps_n}(v_{\eps_n}) \vert \notag \\
    &\leq \frac{C C_\delta}{\eps_n^{p + dk}} \int_{\Omega^{k+1}} \ls \prod_{j=1}^{k} \prod_{r=0}^{j-1} \eta\l \frac{\vert x_{j} - x_{r}\vert}{\eps_n} \r \rs \notag \\
    &\times \vert v_{\eps_n}(x_1) - v_{\eps_n}(x_0) - \nabla v_{\eps_n}(x_0) \cdot (x_1 - x_0) \vert^p   \, \dd x_k \cdots \dd x_0 + \delta H_{\eps_n}(v_{\eps_n}) \label{eq:proofs:pointwiseConvergenceNonlocal:inequalityDelta} \\
    &\leq \frac{C_\delta \Vert \nabla^2 v_{\eps_n} \Vert_{\Lp{\infty}(\bbR^d)}}{\eps_n^{p + dk}} \int_{\Omega^{k+1}} \ls \prod_{j=1}^{k} \prod_{r=0}^{j-1} \eta\l \frac{\vert x_{j} - x_{r}\vert}{\eps_n} \r \rs \vert x_1 - x_0 \vert^{2p}   \, \dd x_k \cdots \dd x_0 \notag \\
    &+ \delta H_{\eps_n}(v_{\eps_n}) \notag \\
    &= C_\delta O(\eps_n^p) + \delta H_{\eps_n}(v_{\eps_n}) \label{eq:proofs:pointwiseConvergenceNonlocal:integral}
\end{align}
where we used \eqref{eq:intro:equationTriangle} and Assumption \ref{ass:Main:Ass:M2} for \eqref{eq:proofs:pointwiseConvergenceNonlocal:inequalityDelta} as well as Lemma \ref{lem:proofs:integralIdentity} for \eqref{eq:proofs:pointwiseConvergenceNonlocal:integral}.

Next, we define 
\begin{align}
\tilde{H}_{\eps_n}(v) &= \frac{1}{\eps_n^{p + dk}} \int_{\Omega} \int_{\substack{\{z_j \spaceBar x_0 + z_j \in \Omega \} \\ \text{ for all $1 \leq j \leq k$}}} \ls  \prod_{s=1}^k \eta \l \frac{\vert z_s \vert}{\eps_n} \r \rs \ls \prod_{j=1}^{k} \prod_{r=1}^{j-1} \eta\l \frac{\vert z_{j} - z_{r} \vert}{\eps_n} \r \rs  \notag \\
&\times \left\vert \nabla v(x_0) \cdot z_1 \right\vert^p \  \rho(x_0)^{k+1} \, \dd z_k \cdots \dd z_1 \dd x_0. \notag
\end{align}
We note that with the change of variables $z_j = x_j - x_{0}$ for $1\leq j \leq k$, we have
\begin{align}
H_{\eps_n}(v) &= \frac{1}{\eps_n^{p + dk}} \int_{\Omega} \int_{\substack{\{z_j \spaceBar x_0 + z_j \in \Omega \} \\ \text{ for all $1 \leq j \leq k$}}} \ls  \prod_{s=1}^k \eta \l \frac{\vert z_s \vert}{\eps_n} \r \rs \ls \prod_{j=1}^{k} \prod_{r=1}^{j-1} \eta\l \frac{\vert z_{j} - z_{r} \vert}{\eps_n} \r \rs  \notag \\
&\times  \left\vert \nabla v(x_0) \cdot z_1 \right\vert^p  \rho(x_0) \prod_{t = 1}^{k} \rho\l z_t +  x_0\r \, \dd z_k \cdots \dd z_1 \dd x_0. \notag
\end{align}
This leads us to 
\begin{align}
    &\vert H_{\eps_n}(v_{\eps_n}) - \tilde{H}_{\eps_n}(v_{\eps_n}) \vert \leq \frac{C \Vert \nabla v_{\eps_n} \Vert_{\Lp{\infty}} \Vert \rho \Vert_{\Lp{\infty}} }{\eps_n^{p + dk}} \int_{\Omega} \int_{\substack{\{z_j \spaceBar x_0 + z_j \in \Omega \} \\ \text{ for all $1 \leq j \leq k$}}} \vert z_1 \vert^p \notag \\
    &\times \ls  \prod_{s=1}^k \eta \l \frac{\vert z_s \vert}{\eps_n} \r \rs \ls \prod_{j=1}^{k} \prod_{r=1}^{j-1} \eta\l \frac{\vert z_{j} - z_{r} \vert}{\eps_n} \r \rs \cdot \left\vert \prod_{t = 1}^{k} \rho\l  z_t +  x_0\r  - \rho(x_0)^{k} \right\vert \, \dd z_k \cdots \dd z_1 \dd x_0 \notag \\
    &\leq \frac{C }{\eps_n^{p + dk}} \int_{\Omega} \int_{\substack{\{z_j \spaceBar x_0 + z_j \in \Omega \} \\ \text{ for all $1 \leq j \leq k$}}} \vert z_1 \vert^p \cdot \ls  \prod_{s=1}^k \eta \l \frac{\vert z_s \vert}{\eps_n} \r \rs \ls \prod_{j=1}^{k} \prod_{r=1}^{j-1} \eta\l \frac{\vert z_{j} - z_{r} \vert}{\eps_n} \r \rs \notag \\ 
    &\times \sum_{r=1}^k \vert z_r \vert \, \dd z_k \cdots \dd z_1 \dd x_0 \label{eq:proofs:pointwiseConvergenceNonlocal:lemma2}\\
    &\leq C \eps_n \int_{\Omega} \int_{\substack{\{\tilde{z}_j \spaceBar \eps_n \vert \tilde{z}_j \vert \leq \mathrm{diam}(\Omega) \} \\ \text{ for all $1 \leq j \leq k$}}} \vert \tilde{z}_1 \vert^p \cdot \ls  \prod_{s=1}^k \eta \l \vert \tilde{z}_s \vert \r \rs \ls \prod_{j=1}^{k} \prod_{r=1}^{j-1} \eta\l \vert \tilde{z}_{j} - \tilde{z}_{r} \vert \r \rs \notag \\
    &\times \sum_{r=1}^k \vert \tilde{z}_r \vert \, \dd \tilde{z}_k \cdots \dd \tilde{z}_1 \dd x_0  \label{eq:proofs:pointwiseConvergenceNonlocal:changeOfVariables2}\\
    &= O(\eps_n) \label{eq:proofs:pointwiseConvergenceNonlocal:HHfinal}
\end{align}
where we used \eqref{eq:proofs:productIdentities1} and \eqref{eq:proofs:pointwiseConvergenceNonlocal:boundedness} for \eqref{eq:proofs:pointwiseConvergenceNonlocal:lemma2}, the change of variables $\tilde{z}_j = z_j/\eps_n$ for \eqref{eq:proofs:pointwiseConvergenceNonlocal:changeOfVariables2} and Assumption \ref{ass:Main:Ass:W2}.

We define
\begin{align}
\bar{H}_{\eps_n}(v) &= \frac{1}{\eps_n^{p + dk}} \int_{\Omega} \int_{\substack{\{z_j \spaceBar x_0 + z_j \notin \Omega \} \\ \text{ for any $1 \leq j \leq k$}}} \ls  \prod_{s=1}^k \eta \l \frac{\vert z_s \vert}{\eps_n} \r \rs \ls \prod_{j=1}^{k} \prod_{r=1}^{j-1} \eta\l \frac{\vert z_{j} - z_{r} \vert}{\eps_n} \r \rs  \notag \\
&\times \left\vert \nabla v(x_0) \cdot z_1 \right\vert^p \  \rho(x_0)^{k+1} \, \dd z_k \cdots \dd z_1 \dd x_0. \notag
\end{align}
For the latter, we have:
\begin{align}
    &\bar{H}_{\eps_n}(v_{\eps_n}) \notag \\
    &\leq \frac{C}{\eps_n^{p + dk}} \hspace{-1.5mm} \int_{\Omega} \int_{\substack{\{z_j \spaceBar x_0 + z_j \notin \Omega \} \\ \text{ for any $1 \leq j \leq k$}}} \hspace{-2mm} \vert z_1 \vert^p  \ls  \prod_{s=1}^k \eta \l \frac{\vert z_s \vert}{\eps_n} \r \rs \hspace{-2mm} \ls \prod_{j=1}^{k} \prod_{r=1}^{j-1} \eta\l \frac{\vert z_{j} - z_{r} \vert}{\eps_n} \r \rs \, \dd z_k \cdots \dd z_1 \dd x_0 \notag \\
    &= C  \int_{\Omega} \int_{\substack{\{\tilde{z}_j \spaceBar x_0 + \eps_n \tilde{z}_j \notin \Omega \} \\ \text{ for any $1 \leq j \leq k$}}} \vert \tilde{z}_1 \vert^p \cdot \ls  \prod_{s=1}^k \eta \l \vert \tilde{z}_s \vert \r \rs \ls \prod_{j=1}^{k} \prod_{r=1}^{j-1} \eta\l \vert \tilde{z}_{j} - \tilde{z}_{r} \vert \r \rs \, \dd \tilde{z}_k \cdots \dd \tilde{z}_1 \dd x_0 \label{eq:proofs:pointwiseConvergenceNonlocal:changeOfVariables3} \\
    &\leq C  \int_{\Omega} \int_{\substack{\{\tilde{z}_j \spaceBar \vert \tilde{z}_j \vert \geq \frac{\dist(x_0,\partial \Omega)}{\eps_n} \} \\ \text{ for any $1 \leq j \leq k$}}} \vert \tilde{z}_1 \vert^p \cdot \ls  \prod_{s=1}^k \eta \l \vert \tilde{z}_s \vert \r \rs \ls \prod_{j=1}^{k} \prod_{r=1}^{j-1} \eta\l \vert \tilde{z}_{j} - \tilde{z}_{r} \vert \r \rs \, \dd \tilde{z}_k \cdots \dd \tilde{z}_1 \dd x_0 \notag
\end{align}
where we used the change of variables $\tilde{z}_j = z_j/\eps_n$ for \eqref{eq:proofs:pointwiseConvergenceNonlocal:changeOfVariables3}.
Now, by using the dominated convergence and 
Assumption \ref{ass:Main:Ass:W2}, we get that
\begin{equation} \label{eq:proofs:pointwiseConvergenceNonlocal:HbarFinal}
    \bar{H}_{\eps_n}(v_{\eps_n}) = o(1).
\end{equation}

We continue by defining:
\begin{align}
    &\hat{H}_{\eps_n}(v) := \bar{H}_{\eps_n}(v) + \tilde{H}_{\eps_n}(v) \notag\\
    &= \frac{1}{\eps_n^{p + dk}} \int_{\Omega} \int_{(\bbR^d)^k} \ls  \prod_{s=1}^k \eta \l \frac{\vert z_s \vert}{\eps_n} \r \rs \ls \prod_{j=1}^{k} \prod_{r=1}^{j-1} \eta\l \frac{\vert z_{j} - z_{r} \vert}{\eps_n} \r \rs  \notag \\
    &\times  \left\vert \nabla v(x_0) \cdot z_1 \right\vert^p \rho(x_0)^{k+1} \, \dd z_k \cdots \dd z_1 \dd x_0 \notag \\
&=\int_{\Omega} \int_{(\bbR^d)^k} \ls  \prod_{s=1}^k \eta \l \vert \tilde{z}_s \vert \r \rs \ls \prod_{j=1}^{k} \prod_{r=1}^{j-1} \eta\l \vert \tilde{z}_{j} - \tilde{z}_{r} \vert \r \rs \vert \nabla v(x_0) \cdot \tilde{z}_1 \vert^p \rho(x_0)^{k+1} \, \dd \tilde{z}_k \cdots \dd \tilde{z}_1 \dd x_0 \label{eq:proofs:pointwiseConvergenceNonlocal:changeOfVariables4}
\end{align}
where we used the change of variables $\tilde{z}_j = z_j/\eps_n$ for \eqref{eq:proofs:pointwiseConvergenceNonlocal:changeOfVariables4}. 
We also have
\begin{align*}
&\hat{H}_{\eps_n}(v) = \frac{1}{\eps_n^{p + dk}} \int_{\Omega} \int_{\bbR^{dk}} \ls  \prod_{s=1}^k \eta \l \frac{\vert z_s \vert}{\eps_n} \r \rs \ls \prod_{j=1}^{k} \prod_{r=1}^{j-1} \eta\l \frac{\vert z_{j} - z_{r} \vert}{\eps_n} \r \rs  \notag \\
&\times  \left\vert \nabla v(x_0) \cdot z_1 \right\vert^p \rho(x_0)^{k+1} \, \dd z_k \cdots \dd z_1 \dd x_0 \\
 & = \int_{\Omega} \int_{\bbR^{dk}} \ls  \prod_{s=1}^k \eta \l \vert \tilde{z}_s \vert \r \rs \ls \prod_{j=1}^{k} \prod_{r=1}^{j-1} \eta\l \vert \tilde{z}_{j} - \tilde{z}_{r} \vert \r \rs  \left\vert \nabla v(x_0) \cdot \tilde{z}_1 \right\vert^p \rho(x_0)^{k+1} \, \dd \tilde{z}_k \cdots \dd \tilde{z}_1 \dd x_0 \\
 & = \cE_\infty^{(k,p)}(v).
\end{align*}
For $\delta > 0$, we continue by noting that
\begin{align}
    &\left\vert \hat{H}(v_{\eps_n}) - \cE_\infty^{(k,p)}(v^*) \right\vert \notag \\
    &\qquad \leq \delta \cE_\infty^{(k,p)}(v^*)  + C C_{\delta} \int_{\Omega} \int_{(\bbR^d)^k} \vert z_1 \vert^p \cdot \ls  \prod_{s=1}^k \eta \l \vert z_s \vert \r \rs \ls \prod_{j=1}^{k} \prod_{r=1}^{j-1} \eta\l \vert z_{j} - z_{r} \vert \r \rs    \notag \\
    & \qquad \qquad \times \vert \nabla v_{\eps_n}(x_0) - \nabla v(x_0) \vert^p \, \dd z_k \cdots \dd z_1 \dd x_0 \label{eq:proofs:pointwiseConvergenceNonlocal:inequalityDelta2} \\
    &\qquad =\delta \cE_\infty^{(k,p)}(v^*) + C C_{\delta} C(\eta) \int_\Omega \vert \nabla v_{\eps_n}(x_0)- \nabla v^*(x_0)\vert^p \, \dd x_0 \notag \\
    & \qquad = \delta \cE_\infty^{(k,p)}(v^*) \label{eq:proofs:pointwiseConvergenceNonlocal:limit} + C_\delta o(\eps_n)
\end{align}
where $C(\eta) = \int_{(\bbR^d)^k} \vert z_1 \vert^p \cdot \ls  \prod_{s=1}^k \eta \l \vert z_s \vert \r \rs \ls \prod_{j=1}^{k} \prod_{r=1}^{j-1} \eta\l \vert z_{j} - z_{r} \vert \r \rs \, \dd z_k \cdots \dd z_1$ which is finite by Assumption \ref{ass:Main:Ass:W2} and 
where we used \eqref{eq:intro:equationTriangle} for \eqref{eq:proofs:pointwiseConvergenceNonlocal:inequalityDelta2} as well as the fact that $\nabla v_{\eps_n} \to \nabla v^*$ in $\Lp{p}$ for \eqref{eq:proofs:pointwiseConvergenceNonlocal:limit}.

We conclude the proof by the following chain of inequalities:
\begin{align}
    &\vert \cE_{\eps_n,\mathrm{NL}}^{(k,p)}(v_{\eps_n}) - \cE_\infty^{(k,p)}(v^*) \vert \leq  T_1 + \vert  H_{\eps_n}(v_{\eps_n}) - \cE_\infty^{(k,p)}(v^*) \vert \notag \\
    &\leq C C_\delta \eps_n^p + \delta H_{\eps_n}(v_{\eps_n}) + \vert  H_{\eps_n}(v_{\eps_n}) - \tilde{H}_{\eps_n}(v_{\eps_n}) \vert + \vert  \tilde{H}_{\eps_n}(v_{\eps_n}) - \cE_\infty^{(k,p)}(v^*) \vert \label{eq:proofs:pointwiseConvergenceNonlocal:firstBound} \\
    &\leq C C_\delta \eps_n^p + \delta H_{\eps_n}(v_{\eps_n}) + C\eps_n + \vert  \hat{H}_{\eps_n}(v_{\eps_n}) - \cE_\infty^{(k,p)}(v^*) \vert + \vert  \bar{H}_{\eps_n}(v_{\eps_n}) \vert \label{eq:proofs:pointwiseConvergenceNonlocal:secondBound} \\
    &\leq C C_\delta \eps_n^p + \delta H_{\eps_n}(v_{\eps_n}) + C\eps_n + \delta\cE_\infty^{(k,p)}(v^*) + C_\delta o(\eps_n) + o(1) \label{eq:proofs:pointwiseConvergenceNonlocal:thirdBound} 
\end{align}
where we used \eqref{eq:proofs:pointwiseConvergenceNonlocal:integral} for \eqref{eq:proofs:pointwiseConvergenceNonlocal:firstBound}, \eqref{eq:proofs:pointwiseConvergenceNonlocal:HHfinal} for \eqref{eq:proofs:pointwiseConvergenceNonlocal:secondBound} and \eqref{eq:proofs:pointwiseConvergenceNonlocal:HbarFinal} as well as \eqref{eq:proofs:pointwiseConvergenceNonlocal:limit} for \eqref{eq:proofs:pointwiseConvergenceNonlocal:thirdBound}. By assumption \eqref{eq:proofs:pointwiseConvergenceNonlocal:boundedness}, we have that $H_{\eps_n}(v_{\eps_n}) \leq C$ and therefore, by first letting $n \to \infty$ and then $\delta \to 0$, we obtain \eqref{eq:proofs:pointwiseConvergenceNonlocal:convergence}.
\end{proof}

\begin{proposition}[$\liminf$-inequality for the nonlocal energies] \label{prop:proofs:gammaConvergence:liminfNonlocal}
    Assume that \ref{ass:Main:Ass:S1}, \ref{ass:Main:Ass:M1}, \ref{ass:Main:Ass:M2} and \ref{ass:Main:Ass:W2} hold. For every $u \in \Lp{p}(\mu)$ and sequence $u_{\eps_n} \to u$ in $\Lp{p}(\mu)$, we have that:
    \begin{align}
    &\liminf_{n\to \infty} \cE_{\eps_n,\mathrm{NL}}^{(k,p)}(u_{\eps_n},\eta) \geq \cE_{\infty}^{(k,p)}(v). \label{eq:proofs:liminfNonLocal:liminf}
    \end{align}
\end{proposition}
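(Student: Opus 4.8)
The plan is to adapt the mollification argument of \cite{Trillos3}, using the pointwise convergence of the nonlocal energies on smooth functions (Lemma \ref{lem:proofs:pointwiseConvergenceNonlocal}) as a black box. For $\Omega'\subset\subset\Omega$, write $\cE_{\eps_n,\mathrm{NL}}^{(k,p)}(v,\eta;\Omega')$ for the same integral as $\cE_{\eps_n,\mathrm{NL}}^{(k,p)}(v,\eta)$ but with the outermost variable $x_0$ restricted to $\Omega'$; since the integrand is nonnegative, $\cE_{\eps_n,\mathrm{NL}}^{(k,p)}(v,\eta;\Omega')\le\cE_{\eps_n,\mathrm{NL}}^{(k,p)}(v,\eta)$, and the proof of Lemma \ref{lem:proofs:pointwiseConvergenceNonlocal} applies verbatim to these restricted energies. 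I may assume $\liminf_n\cE_{\eps_n,\mathrm{NL}}^{(k,p)}(u_{\eps_n},\eta)<\infty$ (otherwise there is nothing to prove) and, passing to a subsequence, that it is a genuine limit; by monotone convergence it then suffices to fix $\Omega'\subset\subset\Omega$ and show $\liminf_n\cE_{\eps_n,\mathrm{NL}}^{(k,p)}(u_{\eps_n},\eta)\ge\sigma_\eta^{(k)}\int_{\Omega'}\Vert\nabla u\Vert_2^p\rho^{k+1}\,\dd x_0$ (which also shows a posteriori that $u\in\Wkp{1}{p}(\Omega)$). Since $\eta$ is compactly supported (Assumption \ref{ass:Main:Ass:W2}), for $n$ large every nonzero term of $\cE_{\eps_n,\mathrm{NL}}^{(k,p)}(u_{\eps_n},\eta;\Omega')$ has all of $x_1,\dots,x_k$ inside the fixed set $\Omega'':=\{x:\dist(x,\Omega')\le\tfrac12\dist(\Omega',\partial\Omega)\}\subset\subset\Omega$, so this restricted energy only sees $u_{\eps_n}\lfloor_{\Omega''}$.

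Next I would mollify: fix a radial mollifier $\phi_\delta$ of radius $\delta$ small compared to $\dist(\Omega'',\partial\Omega)$ and set $w_n^\delta=u_{\eps_n}*\phi_\delta$, which is smooth near $\Omega''$ with $\Ck{2}$-norm there bounded uniformly in $n$ (by Hölder, since $\sup_n\Vert u_{\eps_n}\Vert_{\Lp{p}}<\infty$); multiplying by a fixed cutoff $\equiv1$ on $\Omega''$ lets us regard $w_n^\delta\in\Ck{2}(\bbR^d)$ without changing the $\Omega'$-restricted energy. Writing $w_n^\delta(x_1)-w_n^\delta(x_0)=\int[u_{\eps_n}(x_1-y)-u_{\eps_n}(x_0-y)]\phi_\delta(y)\,\dd y$, Jensen's inequality together with the translation $x_\ell\mapsto x_\ell-y$ (which leaves the kernel $\prod_j\prod_r\eta(\vert x_j-x_r\vert/\eps_n)$ invariant, replaces $\prod_\ell\rho(x_\ell)$ by $\prod_\ell\rho(x_\ell-y)$ with $\vert\rho(x_\ell-y)-\rho(x_\ell)\vert\le\omega_\rho(\delta)$, and keeps every relevant point inside $\Omega$ by the choice of $\delta$), and the lower bound $\rho\ge\min_\Omega\rho>0$ (Assumption \ref{ass:Main:Ass:M2}) used to absorb the leftover term, would yield, for $n$ large,
\[
\cE_{\eps_n,\mathrm{NL}}^{(k,p)}(w_n^\delta,\eta;\Omega')\le\bigl(1+C\,\omega_\rho(\delta)\bigr)\,\cE_{\eps_n,\mathrm{NL}}^{(k,p)}(u_{\eps_n},\eta),
\]
where $\omega_\rho$ is the modulus of continuity of $\rho$ and $C=C(k,\Vert\rho\Vert_{\Lp{\infty}},\min_\Omega\rho)$.

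Since $w_n^\delta\to u*\phi_\delta=:u^\delta$ and $\nabla w_n^\delta\to\nabla u^\delta$ in $\Lp{p}$ on $\Omega''$, the restricted version of Lemma \ref{lem:proofs:pointwiseConvergenceNonlocal} gives $\cE_{\eps_n,\mathrm{NL}}^{(k,p)}(w_n^\delta,\eta;\Omega')\to\sigma_\eta^{(k)}\int_{\Omega'}\Vert\nabla u^\delta\Vert_2^p\rho^{k+1}\,\dd x_0$. Combining with the previous display, this quantity is $\le(1+C\omega_\rho(\delta))\liminf_n\cE_{\eps_n,\mathrm{NL}}^{(k,p)}(u_{\eps_n},\eta)$; in particular $\nabla u^\delta$ is bounded in $\Lp{p}(\Omega')$ uniformly in $\delta$, whence $u\in\Wkp{1}{p}(\Omega')$ and $\nabla u^\delta\rightharpoonup\nabla u$ weakly in $\Lp{p}(\Omega')$. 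Using weak lower semicontinuity of the convex functional $w\mapsto\int_{\Omega'}\Vert w\Vert_2^p\rho^{k+1}\,\dd x_0$ (applied to $w=\nabla u^\delta$) and $\omega_\rho(\delta)\to0$, letting $\delta\to0$ gives $\sigma_\eta^{(k)}\int_{\Omega'}\Vert\nabla u\Vert_2^p\rho^{k+1}\,\dd x_0\le\liminf_n\cE_{\eps_n,\mathrm{NL}}^{(k,p)}(u_{\eps_n},\eta)$, and finally $\Omega'\uparrow\Omega$ concludes.

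I expect the main obstacle to be the mollification step: the density weights $\prod_\ell\rho(x_\ell)$ destroy the exact translation invariance that would make $v\mapsto\cE_{\eps_n,\mathrm{NL}}^{(k,p)}(v,\eta)$ nonincreasing under convolution, so one must carefully estimate the resulting perturbation and ensure that every translated point stays inside $\Omega$ --- which is exactly why one first localizes to $\Omega'\subset\subset\Omega$ and exploits the compact support of $\eta$ --- so that the only price paid is the factor $1+C\omega_\rho(\delta)$, which is then killed in the limit $\delta\to0$. The remaining ingredients (Lemma \ref{lem:proofs:pointwiseConvergenceNonlocal} and weak lower semicontinuity of the weighted $\Lp{p}$ norm of the gradient) are routine.
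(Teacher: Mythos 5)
Your proposal is correct and follows essentially the same route as the paper's proof: localize to $\Omega'\subset\subset\Omega$, mollify $u_{\eps_n}$, use Jensen's inequality together with the translation invariance of the kernel to bound the mollified restricted energy by $(1+o_\delta(1))$ times the original energy (the paper's term $a_{\eps_n,\delta}$ is exactly your density-perturbation error), apply Lemma \ref{lem:proofs:pointwiseConvergenceNonLocal} to the mollified sequence, and then send $\delta\to0$ and $\Omega'\uparrow\Omega$. The only cosmetic differences are that the paper first treats Lipschitz $\rho$ and recovers general continuous $\rho$ by monotone approximation (where you invoke a modulus of continuity on a compact subset directly), and that it concludes via lower semicontinuity of $\cE_\infty^{(k,p)}$ under strong $\Lp{p}$ convergence of $u_\delta\to u$ rather than via weak lower semicontinuity of the gradient functional.
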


\begin{proof}
In the proof $C>0$ will denote a constant that can be arbitrarily large, is independent of $n$, $\delta$ and that may change from line to line.

Since \eqref{eq:proofs:liminfNonLocal:liminf} is trivial if 
$\liminf_{n\to \infty} \cE_{\eps_n,\mathrm{NL}}^{(k,p)}(u_{\eps_n},\eta) = \infty$, we might assume without loss of generality (see \cite{weihs2023consistency}) that $\sup_{n > 0} \cE_{\eps_n,\mathrm{NL}}^{(k,p)}(u_{\eps_n},\eta) \leq  C$.

We first assume that $\rho$ is Lipschitz. We will be in the same setting as in \cite[Theorem 4.1]{Trillos3} and therefore let $\Omega'$ be compactly contained in $\Omega$. This implies that there exists $\delta' > 0$ such that $\Omega'' := \bigcup_{x \in \Omega'} B(x,\delta') \subset \Omega$. Furthermore, let $J$ be a positive mollifier supported in $\overline{B(0,1)}$ and for $0<\delta < \delta'$ as well as $v \in \Lp{p}(\mu)$ we set 
\[
v_\delta(x) = \int_{\bbR^d} J_\delta(x-z)v(z) \, \dd z.
\]
By \cite[Theorem C.16]{leoni2017first}, we have that $v_\delta \to v$ in $\Lp{p}(\mu)$, $v_\delta$ are smooth and in particular, by Young's convolution inequality,  for $\ell \in\{1, 2\}$, 
\begin{equation} \label{eq:proofs:liminfNonlocal:boundDerivatives}
\Vert \nabla^{\ell} v_\delta \Vert_{\Lp{\infty}(\bbR^d)} \leq \frac{C}{\delta^{\ell+d}} \Vert v \Vert_{\Lp{1}(\Omega)} \leq \frac{C}{\delta^{\ell+d}} \Vert v \Vert_{\Lp{p}(\Omega)}.
\end{equation}
If we therefore set $v = u_{\eps_n}$ and $u_{\eps_n,\delta} := (u_{\eps_n})_{\delta}$ and insert the latter in \eqref{eq:proofs:liminfNonlocal:boundDerivatives}, we obtain
\[
\sup_{\eps_n > 0} \sum_{\ell = 1}^{2} \Vert \nabla^{\ell} u_{\eps_n,\delta} \Vert_{\Lp{\infty}(\bbR^d)} \leq C \sum_{\ell = 1}^{2} \frac{1}{\delta^{\ell + d}}
\]
where the last inequality follows from the fact that $u_{\eps_n} \to u$ in $\Lp{p}(\Omega)$ implies that $\Vert u_{\eps_n} \Vert_{\Lp{p}(\Omega)} \leq C$ uniformly. 
For fixed $\delta > 0$, we deduce that \eqref{eq:proofs:pointwiseConvergenceNonlocal:boundedness} is satisfied.
Furthermore, 
\begin{align}
    \int_{\Omega'} \hspace{-2mm}  \vert \nabla u_{\eps_n,\delta}(x) - \nabla u_{\delta}(x) \vert^p \, \dd x &= \frac{1}{\delta^{d + 1}} \hspace{-1mm}  \int_{\Omega'} \hspace{-1mm} \left| \int_{B(0,\delta)} \hspace{-5mm} (\nabla J)\l\frac{z}{\delta} \r \l u_{\eps_n}(x-z) - u(x-z) \r \, \dd z \right|^p \, \dd x \notag \\
    &\leq \frac{C}{\delta^{d+1}} \int_{\Omega'} \int_{B(0,\delta)} \vert u_{\eps_n}(x-z) - u(x-z) \vert^p \, \dd z \dd x \notag \\
    &= \frac{C}{\delta^{d+1}}\int_{\Omega'} \int_{B(x,\delta)} \vert u_{\eps_n}(r) - u(r) \vert^p \, \dd r \dd x \label{eq:proofs:liminfNonlocal:changeofVariables1} \\
    &\leq \frac{C}{\delta^{d+1}}\int_{\Omega} \vert u_{\eps_n}(x) - u(x) \vert^p \, \dd x \label{eq:proofs:liminfNonlocal:deltaPrime}
\end{align}
where we used a change of variables for \eqref{eq:proofs:liminfNonlocal:changeofVariables1} and the definition of $\delta'$ as well as Assumption \ref{ass:Main:Ass:S1} for \eqref{eq:proofs:liminfNonlocal:deltaPrime}. We conclude from the latter that $\nabla u_{\eps_n,\delta} \to \nabla u_{\delta}$ in $\Lp{p}(\Omega')$ as $\eps_n \to 0$ and therefore, by Lemma \ref{lem:proofs:pointwiseConvergenceNonLocal},
\begin{align}
&\lim_{n \to \infty}\hspace{-0.7mm}  \frac{1}{\eps_n^{p + kd}} \hspace{-1.5mm}  \int_{(\Omega')^{k+1}} \hspace{-1.8mm}  \ls \prod_{j=1}^{k} \prod_{r=0}^{j-1} \eta\l \frac{\vert x_{j} - x_{r}\vert}{\eps_n} \r \rs \hspace{-1mm}  \left\vert u_{\eps_n,\delta}(x_1) - u_{\eps_n,\delta}(x_0) \right\vert^p  \prod_{\ell = 0}^k \rho(x_\ell) \, \dd x_k \cdots \dd x_0 \notag \\
&= \sigma_\eta^{(k)} \int_{\Omega'} \Vert \nabla u_\delta(x_0) \Vert_2^p \,  \rho(x_0)^{k+1} \, \dd x_0.
\label{eq:proofs:liminfNonLocal:convergence}
\end{align}

Let us define 
\begin{align}
a_{\eps_n,\delta} &= \frac{1}{\eps_n^{p + dk}} \int_{\bbR^d} \int_{(\Omega'')^{k+1}} \frac{1}{\delta^d}J\l \frac{z}{\delta}\r \ls \prod_{j=1}^{k} \prod_{r=0}^{j-1} \eta\l \frac{\vert x_{j} - x_{r}\vert}{\eps} \r \rs \left\vert u_{\eps_n}(x_1) - u_{\eps_n}(x_0) \right\vert^p \notag \\
&\times \l \prod_{\ell=0}^k \rho(x_\ell) - \prod_{\ell=0}^k \rho(x_\ell + z) \r \, \dd x_k \cdots \dd x_0 \dd z. \notag
\end{align}
We now estimate as follows:
\begin{align}
    &\cE_{\eps_n,\mathrm{NL}}^{(k,p)}(u_{\eps_n},\eta) \geq \frac{1}{\eps_n^{p + kd}} \int_{(\Omega'')^{k+1}} \ls \prod_{j=1}^{k} \prod_{r=0}^{j-1} \eta\l \frac{\vert x_{j} - x_{r}\vert}{\eps} \r \rs \left\vert u_{\eps_n}(x_1) - u_{\eps_n}(x_0) \right\vert^p \notag \\
    & \qquad \qquad \times \ls \prod_{\ell = 0}^k \rho(x_\ell) \rs \, \dd x_k \cdots \dd x_0 \notag \\
    &=\frac{1}{\eps_n^{p + dk}} \int_{\bbR^d} \int_{(\Omega'')^{k+1}} \frac{1}{\delta^d}J\l \frac{z}{\delta}\r \ls \prod_{j=1}^{k} \prod_{r=0}^{j-1} \eta\l \frac{\vert x_{j} - x_{r}\vert}{\eps} \r \rs \left\vert u_{\eps_n}(x_1) - u_{\eps_n}(x_0) \right\vert^p \notag \\
    &\qquad \qquad \times \ls \prod_{\ell = 0}^k \rho(x_\ell + z) \rs  \dd x_k \cdots \dd x_0 \dd z + a_{\eps_n,\delta} \notag \\
    &\geq \frac{1}{\eps_n^{p + dk}} \int_{\bbR^d} \int_{(\Omega')^{k+1}} \frac{1}{\delta^d}J\l \frac{z}{\delta}\r \ls \prod_{j=1}^{k} \prod_{r=0}^{j-1} \eta\l \frac{\vert \hat{x}_{j} - \hat{x}_{r}\vert}{\eps} \r \rs \left\vert u_{\eps_n}(\hat{x}_1-z) - u_{\eps_n}(\hat{x}_0-z) \right\vert^p \notag \\
    &\qquad \qquad \times \ls \prod_{\ell=0}^k \rho(\hat{x}_\ell) \rs \dd \hat{x}_k \cdots \dd \hat{x}_0 \dd z + a_{\eps_n,\delta} \label{eq:proofs:liminfNonLocal:changeofVariables2} \\
    &\geq \frac{1}{\eps_n^{p + dk}}  \int_{(\Omega')^{k+1}}  \ls \prod_{\ell=0}^k \rho(\hat{x}_\ell) \rs \ls \prod_{j=1}^{k} \prod_{r=0}^{j-1} \eta\l \frac{\vert \hat{x}_{j} - \hat{x}_{r}\vert}{\eps} \r \rs \notag \\
    &\qquad \qquad \times \left\vert \int_{\bbR^d} \frac{1}{\delta^d}J\l \frac{z}{\delta}\r \l u_{\eps_n}(\hat{x}_1-z) - u_{\eps_n}(\hat{x}_0-z) \r \, \dd z \right\vert^p  \dd \hat{x}_k \cdots \dd \hat{x}_0 + a_{\eps_n,\delta} \label{eq:proofs:liminfNonLocal:Jensen} \\
    &= \frac{1}{\eps_n^{p + dk}}  \int_{(\Omega')^{k+1}}  \ls \prod_{\ell=0}^k \rho(\hat{x}_\ell) \rs \ls \prod_{j=1}^{k} \prod_{r=0}^{j-1} \eta\l \frac{\vert \hat{x}_{j} - \hat{x}_{r}\vert}{\eps} \r \rs \notag \\
    &\qquad \qquad \times \left\vert u_{\eps_n,\delta}(\hat{x}_1) - u_{\eps_n,\delta}(\hat{x}_0) \right\vert^p  \dd \hat{x}_k \cdots \dd \hat{x}_0 +a_{\eps_n,\delta} \label{eq:proofs:liminfNonLocal:FinalEstimate}
\end{align}
where we used the change of variables $\hat{x}_j = x_j + z$ for $0 \leq j \leq k$ 
and the fact that, by definition of $\delta$, $\Omega' \subseteq \{w \in \bbR^d \spaceBar w + z \in \Omega'' \text{ for $z \in B(0,\delta)$} \}$ for \eqref{eq:proofs:liminfNonLocal:changeofVariables2} as well as Jensen's inequality with probability measure $\nu(A) = \int_{A} \frac{1}{\delta^d}J\l \frac{z}{\delta}\r \, \dd z$ for \eqref{eq:proofs:liminfNonLocal:Jensen}.
 
Now, using \eqref{eq:proofs:productIdentities2}, we have
\begin{align}
    &\vert a_{\eps_n,\delta} \vert \leq \frac{C}{\eps_n^{p + dk}} \int_{\bbR^d} \int_{(\Omega'')^{k+1}} \frac{1}{\delta^d}J\l \frac{z}{\delta}\r \ls \prod_{j=1}^{k} \prod_{r=0}^{j-1} \eta\l \frac{\vert x_{j} - x_{r}\vert}{\eps} \r \rs \notag \\
    &\times  \left\vert u_{\eps_n}(x_1) - u_{\eps_n}(x_0) \right\vert^p \vert z \vert \, \dd x_k \cdots \dd x_0 \dd z \notag \\
    &\leq \frac{C\delta}{\eps_n^{p + dk}} \int_{(\Omega'')^{k+1}} \ls \prod_{j=1}^{k} \prod_{r=0}^{j-1} \eta\l \frac{\vert x_{j} - x_{r}\vert}{\eps} \r \rs \left\vert u_{\eps_n}(x_1) - u_{\eps_n}(x_0) \right\vert^p \prod_{\ell=0}^k \rho(x_\ell) \, \dd x_k \cdots \dd x_0 \label{eq:proofs:liminfNonLocal:rho}\\
    &\leq C\delta \cE_{\eps_n,\mathrm{NL}}^{(k,p)}(u_{\eps_n},\eta) \label{eq:proofs:liminfNonLocal:error}
\end{align}
where we used Assumption \ref{ass:Main:Ass:M2} for \eqref{eq:proofs:liminfNonLocal:rho}. 
We therefore obtain:
\begin{align}
    &\liminf_{n\to \infty} \cE_{\eps_n,\mathrm{NL}}^{(k,p)}(u_{\eps_n},\eta) \geq \liminf_{\delta \to 0} \liminf_{n \to \infty} a_{\eps_n,\delta} \notag \\
    &+ \liminf_{\delta \to 0} \liminf_{n \to \infty} \frac{1}{\eps_n^{p + dk}}  \int_{(\Omega')^{k+1}}  \ls \prod_{\ell=0}^k \rho(\hat{x}_\ell) \rs \ls \prod_{j=1}^{k} \prod_{r=0}^{j-1} \eta\l \frac{\vert \hat{x}_{j} - \hat{x}_{r}\vert}{\eps_n} \r \rs \notag \\
    &\times\left\vert u_{\eps_n,\delta}(\hat{x}_1) - u_{\eps_n,\delta}(\hat{x}_0) \right\vert^p  \dd \hat{x}_k \cdots \dd \hat{x}_0 \label{eq:proofs:liminfNonLocal:estimateEnergy} \\
    &= \liminf_{\delta \to 0} \sigma_\eta^{(k)} \int_{\Omega'} \Vert \nabla u_\delta(x_0) \Vert_2^p \,  \rho(x_0)^{k+1} \, \dd x_0 \label{eq:proofs:liminfNonLocal:convergence2} \\
    &\geq \sigma_\eta^{(k)} \int_{\Omega'} \Vert \nabla u(x_0) \Vert_2^p \,  \rho(x_0)^{k+1} \, \dd x_0 \label{eq:proofs:liminfNonLocal:lsc}
\end{align}
where we used \eqref{eq:proofs:liminfNonLocal:FinalEstimate} for \eqref{eq:proofs:liminfNonLocal:estimateEnergy}, \eqref{eq:proofs:liminfNonLocal:error} and the fact that the energies $\cE_{\eps_n,\mathrm{NL}}^{(k,p)}(u_{\eps_n},\eta)$ are uniformly bounded as well as \eqref{eq:proofs:liminfNonLocal:convergence} for \eqref{eq:proofs:liminfNonLocal:convergence2} and, since $u_{\delta} \to u$ in $\Lp{p}(\Omega)$, the fact that $\cE_{\infty}^{(k,p)}$ is lower-semicontinuous for \eqref{eq:proofs:liminfNonLocal:lsc}. We conclude by noting that $\Omega^\prime$ was an arbitrary set compactly contained in $\Omega$ so we can take $\Omega' \uparrow \Omega$ in \eqref{eq:proofs:liminfNonLocal:lsc} to get \eqref{eq:proofs:liminfNonLocal:liminf}.

Dealing with the case where $\rho$ is not Lipschitz is done analogously to the proof of \cite[Theorem 4.1]{Trillos3}, i.e. relying on the approximation of continuous functions by a monotone sequence of Lipschitz functions and the monotone convergence theorem to deduce the result.
\end{proof}

\begin{proposition}[$\limsup$-inequality for the nonlocal energies] \label{prop:gammaConvergence:NonlocalLimsup}
    Assume that \ref{ass:Main:Ass:S1}, \ref{ass:Main:Ass:M1}, \ref{ass:Main:Ass:M2} and \ref{ass:Main:Ass:W2} hold. For every $u \in \Lp{p}(\mu)$, there exists a sequence $\{u_{\eps_n}\}_{n=1}^\infty \subseteq \Lp{p}(\mu)$ such that $u_{\eps_n} \to u$ in $\Lp{p}(\mu)$ and
    \begin{align}
        &\limsup_{n \to \infty} \cE_{\eps_n,\mathrm{NL}}^{(k,p)}(u_{\eps_n},\eta) \leq \cE_{\infty}^{(k,p)}(u). \label{eq:proofs:limsupNonLocal:limsup}
    \end{align}
    In particular, if $u\in\Ck{\infty}(\bar{\Omega})$ then we can choose $u_{\eps_n}=u$.
\end{proposition}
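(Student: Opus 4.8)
The plan is to construct the recovery sequence in the classical two-step fashion: first settle the statement when the target $u$ is smooth — which simultaneously gives the ``in particular'' assertion — and then pass to a general $u\in\Lp{p}(\mu)$ by density together with a diagonal extraction.

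\emph{Step 1 (smooth data).} I would first take $u\in\Ck{\infty}(\bar\Omega)$ and extend it to some $\tilde u\in\Ck{2}(\bbR^d)$ with $\tilde u=u$ on $\Omega$ and $\Vert\nabla\tilde u\Vert_{\Lp{\infty}(\bbR^d)}+\Vert\nabla^2\tilde u\Vert_{\Lp{\infty}(\bbR^d)}<\infty$ (e.g.\ a $\Ck{2}$-extension multiplied by a compactly supported cutoff equal to $1$ near $\bar\Omega$). Since $\cE_{\eps_n,\mathrm{NL}}^{(k,p)}(\cdot,\eta)$ depends only on the values of its argument on $\Omega$, we have $\cE_{\eps_n,\mathrm{NL}}^{(k,p)}(u,\eta)=\cE_{\eps_n,\mathrm{NL}}^{(k,p)}(\tilde u,\eta)$. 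The constant sequence $v_{\eps_n}:=\tilde u$ trivially satisfies the boundedness hypothesis \eqref{eq:proofs:pointwiseConvergenceNonlocal:boundedness} and $\nabla v_{\eps_n}\to\nabla\tilde u$ in $\Lp{p}(\Omega)$, so Lemma \ref{lem:proofs:pointwiseConvergenceNonLocal} yields $\lim_{n\to\infty}\cE_{\eps_n,\mathrm{NL}}^{(k,p)}(u,\eta)=\cE_\infty^{(k,p)}(\tilde u)=\cE_\infty^{(k,p)}(u)$. In particular the choice $u_{\eps_n}=u$ gives $\limsup_{n\to\infty}\cE_{\eps_n,\mathrm{NL}}^{(k,p)}(u,\eta)\le\cE_\infty^{(k,p)}(u)$, which is exactly the final claim of the proposition.

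\emph{Step 2 (general data).} If $\cE_\infty^{(k,p)}(u)=+\infty$ there is nothing to prove, so I may assume it is finite. Using the explicit formula $\cE_\infty^{(k,p)}(v)=\sigma_\eta^{(k)}\int_\Omega\Vert\nabla v(x_0)\Vert_2^p\,\rho(x_0)^{k+1}\,\dd x_0$ and Assumption \ref{ass:Main:Ass:M2} (so that $\rho^{k+1}$ is bounded above and below by positive constants), finiteness forces $u\in\Wkp{1}{p}(\Omega)$. Since $\Omega$ is a bounded Lipschitz domain (Assumption \ref{ass:Main:Ass:S1}), $\Ck{\infty}(\bar\Omega)$ is dense in $\Wkp{1}{p}(\Omega)$; I pick $u^{(j)}\in\Ck{\infty}(\bar\Omega)$ with $u^{(j)}\to u$ in $\Wkp{1}{p}(\Omega)$. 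Then $\nabla u^{(j)}\to\nabla u$ in $\Lp{p}(\Omega)$ and, again since $\rho^{k+1}$ is bounded, $\cE_\infty^{(k,p)}(u^{(j)})\to\cE_\infty^{(k,p)}(u)$; moreover $u^{(j)}\to u$ in $\Lp{p}(\mu)$ because the density of $\mu$ is bounded.

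\emph{Step 3 (diagonalisation).} By Step 1, for each $j$ there is an index $N_j$, which I take strictly increasing in $j$, such that for all $n\ge N_j$ both $\cE_{\eps_n,\mathrm{NL}}^{(k,p)}(u^{(j)},\eta)\le\cE_\infty^{(k,p)}(u^{(j)})+\frac{1}{j}$ and $\Vert u^{(j)}-u\Vert_{\Lp{p}(\mu)}\le\frac{1}{j}$ hold. Setting $u_{\eps_n}:=u^{(j)}$ for $N_j\le n<N_{j+1}$, we get $u_{\eps_n}\to u$ in $\Lp{p}(\mu)$ and $\limsup_{n\to\infty}\cE_{\eps_n,\mathrm{NL}}^{(k,p)}(u_{\eps_n},\eta)\le\limsup_{j\to\infty}(\cE_\infty^{(k,p)}(u^{(j)})+\frac{1}{j})=\cE_\infty^{(k,p)}(u)$, which is \eqref{eq:proofs:limsupNonLocal:limsup} (this mirrors the construction in \cite{Trillos3}).

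\textbf{Main obstacle.} The discrete-to-continuum passage itself is not the difficulty — it is handled verbatim by Lemma \ref{lem:proofs:pointwiseConvergenceNonLocal}. The delicate point is securing energy convergence of the smooth approximants, $\cE_\infty^{(k,p)}(u^{(j)})\to\cE_\infty^{(k,p)}(u)$: because $\cE_\infty^{(k,p)}$ is only lower semicontinuous for $\Lp{p}$-convergence, one must approximate $u$ in the stronger $\Wkp{1}{p}$ norm, which is precisely why the Lipschitz-boundary assumption (needed for density of $\Ck{\infty}(\bar\Omega)$ in $\Wkp{1}{p}$) and the global $\Ck{2}$-extension in Step 1 are invoked.
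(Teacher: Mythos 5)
Your overall strategy mirrors the paper's: reduce to smooth data by density in $\Wkp{1}{p}(\Omega)$ (the paper packages your Step~3 diagonalisation as an appeal to \cite[Remark 2.7]{Trillos3}), and for smooth data show the nonlocal energies actually converge. The one substantive variation is in Step~1: the paper obtains the upper bound $\cE_{\eps_n,\mathrm{NL}}^{(k,p)}(u,\eta)\le \cE_\infty^{(k,p)}(u)+C\eps_n$ by a direct change of variables and the product identity \eqref{eq:proofs:productIdentities1}, whereas you reuse Lemma \ref{lem:proofs:pointwiseConvergenceNonLocal} with the constant sequence $v_{\eps_n}=\tilde u$. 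That is a legitimate and slightly more economical route, since the constant sequence trivially satisfies \eqref{eq:proofs:pointwiseConvergenceNonlocal:boundedness} and the $\Lp{p}$-convergence of gradients.

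There is, however, one gap you need to close. Lemma \ref{lem:proofs:pointwiseConvergenceNonLocal} is stated under the hypothesis that $\rho$ is \emph{Lipschitz}, while the proposition only assumes \ref{ass:Main:Ass:M2}, i.e.\ $\rho$ continuous and bounded above and below. As written, your Step~1 (and hence the whole argument) only covers Lipschitz densities. The paper handles the general case separately, by approximating the continuous $\rho$ from below by an increasing sequence of Lipschitz functions and invoking the monotone convergence theorem as in \cite[Theorem 4.1]{Trillos3}; note that for the $\limsup$ inequality the relevant direction is an upper bound, so one approximates $\rho$ suitably and uses that the nonlocal energy is monotone in the density. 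You should either add this approximation step or first prove a version of Lemma \ref{lem:proofs:pointwiseConvergenceNonLocal} valid for merely continuous $\rho$. A second, cosmetic point: the paper verifies the inequality on the dense class $\Ckc{\infty}(\Omega)$, whereas you correctly use $\Ck{\infty}(\bar{\Omega})$, which is the right dense subset of $\Wkp{1}{p}(\Omega)$ on a Lipschitz domain; your choice also requires the $\Ck{2}$-extension to $\bbR^d$ you mention, which is available for Lipschitz domains, so no issue there.
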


\begin{proof}
In the proof $C>0$ will denote a constant that can be arbitrarily large, is independent of $n$, and that may change from line to line.

We start by noting that \eqref{eq:proofs:limsupNonLocal:limsup} is trivial if $\cE_{\infty}^{(k,p)}(u) = \infty$ so that we assume $u \in \Wkp{1}{p}(\Omega)$. Furthermore, we are going to apply \cite[Remark 2.7]{Trillos3}, so it is sufficient to verify \eqref{eq:proofs:limsupNonLocal:limsup} on a dense subset of $\Wkp{1}{p}(\Omega)$, namely $\Ckc{\infty}(\Omega)$.

We first start by assuming that $\rho$ is Lipschitz. Let us define $u_n = u$ and we estimate as follows:
\begin{align}
&\cE_{\eps_n,\mathrm{NL}}^{(k,p)}(u,\eta) 
\leq \frac{1}{\eps_n^{p}} \int_{\Omega} \int_{(\bbR^d)^k}  \hspace{-1.7mm} \ls  \prod_{s=1}^k \eta \l \vert z_s \vert \r \rs \hspace{-1.7mm} \ls \prod_{j=1}^{k} \prod_{r=1}^{j-1} \eta\l \vert z_{j} - z_{r} \vert \r \rs \left\vert u(x_0 + \eps_n z_1) - u(x_0) \right\vert^p \notag \\
&\qquad \times \rho(x_0) \prod_{\ell = 1}^k \rho\l x_0 + \eps_n z_\ell \r \, \dd z_k \cdots \dd z_1 \dd x_0 \label{eq:proofs:limsupNonLocal:changeVariables1} \\
&\,\,\leq \int_{\Omega} \int_{(\bbR^d)^k} \ls  \prod_{s=1}^k \eta \l \vert z_s \vert \r \rs \ls \prod_{j=1}^{k} \prod_{r=1}^{j-1} \eta\l \vert z_{j} - z_{r} \vert \r \rs \left\vert \nabla u (x_0) \cdot z_1 \right\vert^p \notag \\
&\qquad \times \ls \rho(x_0) \prod_{\ell = 1}^k \rho\l x_0 + \eps_n z_\ell \r \rs \, \dd z_k \cdots \dd z_1 \dd x_0 \notag \\
&\qquad + \eps_n \Vert \rho \Vert_{\Lp{\infty}}^{k+1} \Vert u \Vert_{\Ck{2}}^p \int_{\Omega} \int_{(\bbR^d)^k} \ls  \prod_{s=1}^k \eta \l \vert z_s \vert \r \rs \ls \prod_{j=1}^{k} \prod_{r=1}^{j-1} \eta\l \vert z_{j} - z_{r} \vert \r \rs \, \dd z_k \cdots \dd z_1 \dd x_0 \notag \\
&\,\,\leq \int_{\Omega} \int_{(\bbR^d)^k} \ls  \prod_{s=1}^k \eta \l \vert z_s \vert \r \rs \ls \prod_{j=1}^{k} \prod_{r=1}^{j-1} \eta\l \vert z_{j} - z_{r} \vert \r \rs \left\vert \nabla u (x_0) \cdot z_1 \right\vert^p \notag \\
&\qquad \times \left\vert \rho(x_0) \prod_{\ell = 1}^k \rho\l x_0 + \eps_n z_\ell \r - \rho(x_0)^{k+1} \right\vert \, \dd z_k \cdots \dd z_1 \dd x_0 + \cE_{\infty}^{(k,p)}(u) + C \eps_n \notag \\
&\,\,\leq C \eps_n \int_{\Omega} \int_{(\bbR^d)^k} \ls  \prod_{s=1}^k \eta \l \vert z_s \vert \r \rs \ls \prod_{j=1}^{k} \prod_{r=1}^{j-1} \eta\l \vert z_{j} - z_{r} \vert \r \rs \left\vert \nabla u (x_0) \cdot z_1 \right\vert^p \notag \\
&\qquad \times \sum_{\ell = 1}^k \vert z_\ell \vert \, \dd z_k \cdots \dd z_1 \dd x_0 +\cE_{\infty}^{(k,p)}(u) + C \eps_n \label{eq:proofs:limsupNonLocal:product} \\
&\,\,\leq \cE_{\infty}^{(k,p)}(u) + C\eps_n \label{eq:proofs:limsupNonLocal:finiteness2}
\end{align} 
where we used the change of variables $z_j = (x_j - x_{0})/\eps_n$ for $1 \leq j \leq k$ for \eqref{eq:proofs:limsupNonLocal:changeVariables1}, Assumption \ref{ass:Main:Ass:W2}, \eqref{eq:proofs:productIdentities1} for \eqref{eq:proofs:limsupNonLocal:product} and Assumption \ref{ass:Main:Ass:W2} for \eqref{eq:proofs:limsupNonLocal:finiteness2}. Taking the limit as $\eps_n \to 0$, we obtain \eqref{eq:proofs:limsupNonLocal:limsup}.

In order to consider general $\rho$, we proceed as in \cite[Theorem 4.1]{Trillos3} which concludes the proof.
\end{proof}

\subsubsection{Compactness}

The following lemma is inspired by \cite{weihs2023consistency}.

\begin{lemma}[Uniform bound of energies of minimizers] \label{lem:proofs:boundedEnergies}
    Assume that \ref{ass:Main:Ass:S1}, \ref{ass:Main:Ass:M1}, \ref{ass:Main:Ass:M2}, \ref{ass:Main:Ass:W2}, \ref{ass:Main:Ass:D1}, \ref{ass:Main:Ass:D2} and \ref{ass:Main:Ass:L1} hold. Let $(\mu_n,u_n)$ be minimizers of $(\cS\cF)_{n,\eps_n}^{(q,p)}$. Then, $\bbP$-a.s., there exists $C > 0$ such that 
    \[
    \sup_{n > 0} (\cS\cF)_{n,\eps_n}^{(q,p)}((\mu_n,u_n)) \leq C.
    \]
\end{lemma}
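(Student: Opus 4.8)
\emph{Proof plan.} The strategy is the standard one for bounding energies of minimizers: since $(\mu_n,u_n)$ achieves the infimum of $(\cS\cF)_{n,\eps_n}^{(q,p)}$, we have $(\cS\cF)_{n,\eps_n}^{(q,p)}((\mu_n,u_n)) \leq (\cS\cF)_{n,\eps_n}^{(q,p)}((\mu_n,w_n))$ for any admissible competitor $(\mu_n,w_n)$, so it suffices to produce a single competitor whose discrete energy stays bounded as $n\to\infty$. The natural choice is the restriction to $\Omega_n$ of a fixed smooth function interpolating the labels, for which the $\limsup$-inequality already established in Corollary~\ref{cor:proofs:gammaConvergence:limsupDiscrete} does the work.

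Concretely, I would first work $\bbP$-a.s.\ on the event where the conclusions of Corollary~\ref{cor:proofs:gammaConvergence:limsupDiscrete} hold and where the points $\{x_i\}_{i\geq 1}$ are pairwise distinct (the latter holds a.s.\ because $\mu$ has a Lebesgue density by~\ref{ass:Main:Ass:M2}). Since $N$ is fixed and $x_1,\dots,x_N$ are distinct points of $\Omega$, a routine bump-function construction gives $v\in\Ck{\infty}(\bar\Omega)$ with $v(x_i)=y_i$ for all $i\leq N$. As $v$ is smooth on the bounded domain $\Omega$ (using \ref{ass:Main:Ass:S1} and \ref{ass:Main:Ass:M2}), it lies in $\Wkp{1}{p}(\Omega)$ and
\[
(\cS\cF)_{\infty}^{(q,p)}((\mu,v)) = \sum_{k=1}^q \lambda_k \sigma_\eta^{(k)} \int_\Omega \|\nabla v(x_0)\|_2^p\, \rho(x_0)^{k+1}\,\dd x_0 =: C_v < \infty.
\]
By the ``in particular'' part of Proposition~\ref{prop:proofs:gammaConvergence:limsupDiscrete} together with Corollary~\ref{cor:proofs:gammaConvergence:limsupDiscrete}, the sequence $(\mu_n,v|_{\Omega_n})$ is an admissible recovery sequence for $(\mu,v)$, so $\limsup_{n\to\infty}(\cS\cF)_{n,\eps_n}^{(q,p)}((\mu_n,v|_{\Omega_n})) \leq C_v$.

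Finally, by minimality, $(\cS\cF)_{n,\eps_n}^{(q,p)}((\mu_n,u_n)) \leq (\cS\cF)_{n,\eps_n}^{(q,p)}((\mu_n,v|_{\Omega_n}))$ for every $n$. Hence there is $N_0$ with $(\cS\cF)_{n,\eps_n}^{(q,p)}((\mu_n,u_n)) \leq C_v+1$ for all $n\geq N_0$. For the finitely many $n<N_0$, the competitor energy $(\cS\cF)_{n,\eps_n}^{(q,p)}((\mu_n,v|_{\Omega_n}))$ is a finite sum of terms, each bounded by $\|\eta\|_{\Lp{\infty}}^{t(k)}(2\|v\|_{\Lp{\infty}})^p / \eps_n^{p+kd}<\infty$ by~\ref{ass:Main:Ass:W2} and $\eps_n>0$, hence finite, and minimality again gives a finite bound. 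Taking $C$ to be the maximum of $C_v+1$ and these finitely many values yields the claim. I do not anticipate a genuine obstacle here; the only mild care points are the a.s.\ distinctness of $x_1,\dots,x_N$ needed to build the smooth interpolant, and the (trivial) finiteness of the discrete competitor energy at each fixed $n$, which uses that $\eta$ is bounded with compact support.
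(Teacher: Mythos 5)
Your proposal is correct and follows essentially the same route as the paper: both construct a smooth interpolant $v$ of the labels as a competitor, invoke the $\limsup$-inequality (the paper via Lemma~\ref{lem:proofs:gammaConvergence:sum}, you via Corollary~\ref{cor:proofs:gammaConvergence:limsupDiscrete}, which is the same recovery sequence $v|_{\Omega_n}$), and conclude by minimality together with a finite maximum over the initial indices. Your extra remarks on the a.s.\ distinctness of $x_1,\dots,x_N$ and the finiteness of the competitor energy at each fixed $n$ are harmless refinements of the paper's argument, not a different approach.
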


\begin{proof}
    With probability one, we can assume that the conclusions of Lemma \ref{lem:proofs:gammaConvergence:sum} hold.
    
    In the proof $C>0$ will denote a constant that can be arbitrarily large, is independent of $n$, and that may change from line to line. We are going to follow the proof of \cite[Lemma 4.25]{weihs2023consistency}

    Let $v \in \Ckc{\infty}(\Omega)$ be a function that interpolates the points $\{(x_i,\ell_i)\}_{i=1}^\infty$. Then, $v \in \Wkp{1}{p}(\Omega)$ so in particular, by Assumption \ref{ass:Main:Ass:M2}, there exists $C_0$ such that 
    \[
    (\cS\cF)_{\infty}^{(q,p)}((\mu,v)) < C_0. 
    \]
    By Lemma \ref{lem:proofs:gammaConvergence:sum}, we can pick a recovery sequence $\{v_n\}_{n=1}^\infty$ for $v$ such that: 
    \begin{align*}
    \lim_{n \to \infty} h_n &:= \lim_{n \to \infty} \sup_{m \geq n } (\cS\cF)^{(q,p)}_{m,\eps_m}((\mu_m,v_m)) \\
    &= \limsup_{n \to \infty} (\cS\mathcal{F})_{n,\eps_n}^{(q,p)}((\mu_n,v_n)) \\
    & \leq (\cS\cF)_\infty^{(q,p)}((\mu,v)) \\
    &< C_0
    \end{align*}
    (since $(\cS\cF)_\infty^{(q,p)}((\mu,v)) = (\cS\cG)_\infty^{(q,p)}((\mu,v))$).
    Let $h := \limsup_{n \to \infty} (\cS\cF)^{(q,p)}_{n,\eps_n}((\mu_n,v_n))$ and let $\bar{\eps} = C_0 - h > 0$. Then, there exists $n_0$ such that for all $n \geq n_0$, $ h_n - h < \bar{\eps}/2$, 
    which means that $$h_n = \sup_{m \geq n} (\cS\cF)^{(q,p)}_{m,\eps_m}(v_m) < h + \bar{\eps}/2 < C_0.$$ Using the latter, we have 
\begin{align*}
&\sup_{n > 0}  (\cS\cF)^{(q,p)}_{n,\eps_n}((\mu_n,v_n)) \\
 &= \hspace{-0.5mm}\max\hspace{-0.8mm}\left\{ (\cS\cF)^{(q,p)}_{1,\eps_1}((\mu_1,v_1)),\dots, (\cS\cF)^{(q,p)}_{n_0,\eps_{n_0}}((\mu_{n_0},v_{n_0})),\sup_{n \geq n_0} (\cS\cF)^{(q,p)}_{n,\eps_n}((\mu_n,v_n))\right\} \\
 & \leq C.
\end{align*}
Since $\{u_n\}_{n=1}^\infty$ are minimizers, we use $(\cS\cF)^{(q,p)}_{n,\eps_n}((\mu_n,u_n)) \leq (\cS\cF)^{(q,p)}_{n,\eps_n}((\mu_n,v_n))$ to conclude.
\end{proof}

\section{Complete numerical experiments} \label{sec:appendix:numerical}

In this section, we present the complete numerical experiments of Section \ref{sec:discussion}. We refer to Table \ref{tab:qj:terminology} for a review of the terminology used throughout the experiments. 


\begin{table*}[!h]
\vspace{-3mm}
\caption{Accuracy of various SSL methods on the iris dataset. We pick $\eps^{(k)} = 2^{3-k}$ for $1 \leq k \leq 5$. Proposed methods are in bold.}
\vspace{-3mm}
\label{tab:irisQ:full}
\vskip 0.15in
\small
\resizebox{0.975\textwidth}{!}{
\begin{tabular}{llllllllllll}
\toprule
$q$ &   rate   &   Laplace  &   Poisson  &  FL ($s=2)$    & FL ($s=3$)  &  \textbf{IP-QC}   &  \textbf{CP-QC}  &    \textbf{IP-SC} &   \textbf{CP-SC}  &   \textbf{IP-CC} &    \textbf{CP-CC} \\
\midrule
\multirow[t]{7}{*}{2} & 0.02 & 64.56 (15.0) & \textbf{79.39} (8.83) & 70.73 (7.46) & 71.54 (7.76) & 75.56 (10.01) & 70.54 (10.84) & 75.02 (10.0) & 68.75 (12.77) & 73.89 (10.82) & 68.29 (12.5) \\
 & 0.05 & 73.15 (9.26) & 80.54 (6.05) & 74.47 (9.26) & 76.19 (9.13) & \textbf{82.26} (8.98) & 74.65 (9.73) & 81.62 (9.19) & 73.94 (9.66) & 80.46 (9.69) & 73.67 (9.78) \\
 & 0.10 & 81.05 (8.94) & 80.58 (3.25) & 82.55 (8.21) & 84.37 (7.27) & \textbf{90.39} (3.84) & 85.41 (7.95) & 90.29 (4.23) & 84.44 (8.47) & 89.93 (4.75) & 83.36 (8.81) \\
 & 0.20 & 87.43 (5.88) & 80.33 (2.3) & 87.91 (5.25) & 88.74 (4.14) & 92.48 (2.49) & 90.53 (3.9) & 92.52 (2.59) & 89.96 (4.38) & \textbf{92.54} (2.69) & 89.22 (4.92) \\
 & 0.30 & 90.82 (3.2) & 79.66 (2.18) & 90.88 (2.87) & 91.05 (2.53) & 93.57 (2.48) & 92.51 (2.82) & 93.57 (2.46) & 92.36 (3.01) & \textbf{93.61} (2.52) & 92.01 (3.02) \\
 & 0.50 & 91.81 (2.71) & 79.59 (2.34) & 91.67 (2.9) & 91.8 (2.77) & \textbf{94.92} (2.3) & 93.52 (2.77) & 94.89 (2.31) & 93.33 (2.73) & 94.85 (2.32) & 93.04 (2.75) \\
 & 0.80 & 92.2 (4.67) & 79.33 (5.44) & 92.2 (4.35) & 92.57 (4.44) & 95.5 (3.49) & 94.47 (3.52) & \textbf{95.53} (3.49) & 93.93 (3.49) & \textbf{95.53} (3.49) & 93.67 (3.5) \\
\cline{1-12}
\multirow[t]{7}{*}{3} & 0.02 & 64.56 (15.0) & \textbf{79.39} (8.83) & 70.73 (7.46) & 71.54 (7.76) & 76.52 (10.76) & 71.27 (11.86) & 75.99 (10.48) & 69.29 (13.27) & 74.5 (11.18) & 68.47 (12.74) \\
 & 0.05 & 73.15 (9.26) & 80.54 (6.05) & 74.47 (9.26) & 76.19 (9.13) & \textbf{84.78} (9.8) & 75.72 (10.08) & 83.78 (10.0) & 74.33 (9.99) & 82.26 (10.21) & 73.53 (9.82) \\
 & 0.10 & 81.05 (8.94) & 80.58 (3.25) & 82.55 (8.21) & 84.37 (7.27) & \textbf{91.99} (3.32) & 87.44 (7.5) & 91.48 (3.54) & 85.9 (8.5) & 91.04 (4.06) & 84.23 (8.92) \\
 & 0.20 & 87.43 (5.88) & 80.33 (2.3) & 87.91 (5.25) & 88.74 (4.14) & \textbf{94.04} (2.49) & 92.03 (3.19) & 93.79 (2.51) & 91.12 (3.9) & 93.53 (2.61) & 90.09 (4.69) \\
 & 0.30 & 90.82 (3.2) & 79.66 (2.18) & 90.88 (2.87) & 91.05 (2.53) & \textbf{95.52} (1.86) & 94.04 (2.64) & 95.3 (1.95) & 93.32 (2.87) & 94.95 (2.22) & 92.59 (3.04) \\
 & 0.50 & 91.81 (2.71) & 79.59 (2.34) & 91.67 (2.9) & 91.8 (2.77) & \textbf{95.89} (1.86) & 94.97 (2.45) & 95.87 (1.9) & 94.41 (2.62) & \textbf{95.89} (1.9) & 93.77 (2.8) \\
 & 0.80 & 92.2 (4.67) & 79.33 (5.44) & 92.2 (4.35) & 92.57 (4.44) & 96.17 (3.23) & 95.73 (3.67) & \textbf{96.27} (3.29) & 95.43 (3.72) & 96.13 (3.37) & 94.57 (3.72) \\
\cline{1-12}
\multirow[t]{7}{*}{4} & 0.02 & 64.56 (15.0) & \textbf{79.39} (8.83) & 70.73 (7.46) & 71.54 (7.76) & 76.53 (10.81) & 71.46 (12.53) & 75.97 (10.56) & 69.34 (13.33) & 74.46 (11.17) & 68.41 (12.69) \\
 & 0.05 & 73.15 (9.26) & 80.54 (6.05) & 74.47 (9.26) & 76.19 (9.13) & \textbf{84.88} (9.8) & 76.15 (10.27) & 83.81 (9.98) & 74.42 (9.92) & 82.33 (10.26) & 73.52 (9.87) \\
 & 0.10 & 81.05 (8.94) & 80.58 (3.25) & 82.55 (8.21) & 84.37 (7.27) & \textbf{92.01} (3.3) & 87.63 (7.6) & 91.47 (3.54) & 86.04 (8.52) & 91.07 (4.03) & 84.27 (8.96) \\
 & 0.20 & 87.43 (5.88) & 80.33 (2.3) & 87.91 (5.25) & 88.74 (4.14) & \textbf{94.08} (2.47) & 92.4 (3.25) & 93.8 (2.52) & 91.32 (3.98) & 93.54 (2.62) & 90.22 (4.6) \\
 & 0.30 & 90.82 (3.2) & 79.66 (2.18) & 90.88 (2.87) & 91.05 (2.53) & \textbf{95.53} (1.84) & 94.46 (2.68) & 95.33 (1.97) & 93.58 (2.87) & 94.95 (2.22) & 92.73 (3.06) \\
 & 0.50 & 91.81 (2.71) & 79.59 (2.34) & 91.67 (2.9) & 91.8 (2.77) & 95.88 (1.84) & 95.51 (2.33) & 95.87 (1.87) & 94.79 (2.64) & \textbf{95.92} (1.89) & 93.95 (2.71) \\
 & 0.80 & 92.2 (4.67) & 79.33 (5.44) & 92.2 (4.35) & 92.57 (4.44) & 96.17 (3.23) & 95.9 (3.6) & \textbf{96.23} (3.27) & 95.83 (3.65) & 96.17 (3.36) & 94.9 (3.8) \\
\cline{1-12}
\multirow[t]{7}{*}{5} & 0.02 & 64.56 (15.0) & \textbf{79.39} (8.83) & 70.73 (7.46) & 71.54 (7.76) & 76.6 (10.82) & 71.38 (12.49) & 75.98 (10.56) & 69.14 (13.28) & 74.46 (11.16) & 68.37 (12.68) \\
 & 0.05 & 73.15 (9.26) & 80.54 (6.05) & 74.47 (9.26) & 76.19 (9.13) & \textbf{84.89} (9.81) & 76.31 (10.26) & 83.83 (9.98) & 74.44 (9.86) & 82.33 (10.26) & 73.53 (9.83) \\
 & 0.10 & 81.05 (8.94) & 80.58 (3.25) & 82.55 (8.21) & 84.37 (7.27) & \textbf{92.02} (3.3) & 87.59 (7.53) & 91.47 (3.54) & 85.98 (8.48) & 91.07 (4.03) & 84.29 (8.96) \\
 & 0.20 & 87.43 (5.88) & 80.33 (2.3) & 87.91 (5.25) & 88.74 (4.14) & \textbf{94.08} (2.44) & 92.49 (3.27) & 93.81 (2.52) & 91.34 (4.0) & 93.54 (2.62) & 90.27 (4.63) \\
 & 0.30 & 90.82 (3.2) & 79.66 (2.18) & 90.88 (2.87) & 91.05 (2.53) & \textbf{95.53} (1.84) & 94.53 (2.69) & 95.33 (1.97) & 93.65 (2.87) & 94.95 (2.22) & 92.76 (3.07) \\
 & 0.50 & 91.81 (2.71) & 79.59 (2.34) & 91.67 (2.9) & 91.8 (2.77) & 95.88 (1.84) & 95.63 (2.28) & 95.87 (1.87) & 94.85 (2.61) & \textbf{95.92} (1.89) & 93.93 (2.71) \\
 & 0.80 & 92.2 (4.67) & 79.33 (5.44) & 92.2 (4.35) & 92.57 (4.44) & 96.17 (3.23) & 96.13 (3.47) & \textbf{96.23} (3.27) & 95.87 (3.61) & 96.17 (3.36) & 94.9 (3.8) \\
\cline{1-12}
\end{tabular}
}
\end{table*}

\begin{table*}[!h]
\vspace{-3mm}
\caption{Accuracy of various SSL methods on the iris dataset. We pick $\eps^{(\ell)} = 2^{3-\ell}$ for $1 \leq \ell \leq 5$. Proposed methods are in bold.}
\vspace{-3mm}
\label{tab:irisJ:full}
\vskip 0.15in
\small
\resizebox{0.975\textwidth}{!}{
\begin{tabular}{lllllllllllll}
\toprule
 $j$ &  rate    &    Laplace  &   Poisson   &  FL ($s = 2$)  &    FL ($s=3$) &  WNLL   &  $p$-Lap &   RW  &   CK &    SLP &  \textbf{IP-VQC (2)} &    \textbf{IP-VQC (3)} \\
\midrule
\multirow[t]{7}{*}{1} & 0.02 & 61.86 (13.18) & 82.17 (8.39) & 70.41 (6.98) & 71.54 (7.29) & 82.59 (8.64) & \textbf{86.56}(8.18) & 77.13 (9.09) & 76.24 (8.18) & 32.05 (6.24) & 59.76 (13.85) & 53.71 (15.23) \\
 & 0.05 & 71.06 (7.31) & 81.55 (5.92) & 72.73 (8.16) & 74.31 (7.75) & 84.42 (7.85) & \textbf{88.54} (6.52) & 78.38 (9.14) & 79.48 (7.88) & 34.97 (7.18) & 70.28 (9.06) & 69.32 (8.69) \\
 & 0.10 & 81.07 (8.72) & 80.23 (3.19) & 82.76 (7.99) & 84.78 (7.04) & 89.34 (4.42) & \textbf{91.61} (2.74) & 82.34 (7.84) & 85.9 (4.5) & 46.67 (19.38) & 81.03 (10.58) & 80.19 (10.61) \\
 & 0.20 & 87.58 (4.63) & 80.12 (2.62) & 88.2 (4.11) & 89.09 (3.37) & 90.57 (2.57) & \textbf{91.68} (2.13) & 84.25 (6.29) & 89.39 (2.82) & 64.05 (12.29) & 90.46 (3.6) & 90.31 (3.74) \\
 & 0.30 & 90.12 (3.0) & 79.69 (2.11) & 90.21 (2.97) & 90.36 (2.42) & 91.11 (2.25) & 92.06 (2.28) & 86.19 (5.72) & 91.26 (2.54) & 58.99 (15.82) & 92.1 (3.21) & \textbf{92.29} (3.26) \\
 & 0.50 & 91.37 (3.23) & 79.44 (2.37) & 91.36 (3.01) & 91.41 (3.03) & 91.4 (3.23) & 92.07 (3.09) & 88.52 (4.53) & 92.29 (2.53) & 56.39 (16.53) & 93.4 (3.22) & \textbf{93.88} (3.11) \\
 & 0.80 & 92.0 (4.42) & 78.17 (4.4) & 91.67 (4.3) & 91.97 (4.16) & 92.07 (4.36) & 92.5 (4.14) & 88.97 (4.89) & 92.97 (4.37) & 89.7 (4.88) & 94.5 (3.86) & \textbf{95.37} (3.51) \\
\cline{1-13}
\multirow[t]{7}{*}{2} & 0.02 & 61.86 (13.18) & 82.17 (8.39) & 70.41 (6.98) & 71.54 (7.29) & 82.59 (8.64) & \textbf{86.56} (8.18) & 77.13 (9.09) & 76.24 (8.18) & 32.05 (6.24) & 67.74 (12.14) & 67.59 (12.34) \\
 & 0.05 & 71.06 (7.31) & 81.55 (5.92) & 72.73 (8.16) & 74.31 (7.75) & 84.42 (7.85) & \textbf{88.54} (6.52) & 78.38 (9.14) & 79.48 (7.88) & 34.97 (7.18) & 72.77 (9.79) & 72.53 (9.82) \\
 & 0.10 & 81.07 (8.72) & 80.23 (3.19) & 82.76 (7.99) & 84.78 (7.04) & 89.34 (4.42) & \textbf{91.61} (2.74) & 82.34 (7.84) & 85.9 (4.5) & 46.67 (19.38) & 83.77 (10.48) & 83.66 (10.64) \\
 & 0.20 & 87.58 (4.63) & 80.12 (2.62) & 88.2 (4.11) & 89.09 (3.37) & 90.57 (2.57) & \textbf{91.68} (2.13) & 84.25 (6.29) & 89.39 (2.82) & 64.05 (12.29) & 91.52 (3.07) & 91.57 (3.1) \\
 & 0.30 & 90.12 (3.0) & 79.69 (2.11) & 90.21 (2.97) & 90.36 (2.42) & 91.11 (2.25) & 92.06 (2.28) & 86.19 (5.72) & 91.26 (2.54) & 58.99 (15.82) & 92.82 (3.06) & \textbf{92.93} (3.06) \\
 & 0.50 & 91.37 (3.23) & 79.44 (2.37) & 91.36 (3.01) & 91.41 (3.03) & 91.4 (3.23) & 92.07 (3.09) & 88.52 (4.53) & 92.29 (2.53) & 56.39 (16.53) & 93.91 (2.99) & \textbf{94.24} (2.92) \\
 & 0.80 & 92.0 (4.42) & 78.17 (4.4) & 91.67 (4.3) & 91.97 (4.16) & 92.07 (4.36) & 92.5 (4.14) & 88.97 (4.89) & 92.97 (4.37) & 89.7 (4.88) & 95.47 (3.75) & \textbf{95.8} (3.44) \\
\cline{1-13}
\multirow[t]{7}{*}{3} & 0.02 & 61.86 (13.18) & 82.17 (8.39) & 70.41 (6.98) & 71.54 (7.29) & 82.59 (8.64) & \textbf{86.56} (8.18) & 77.13 (9.09) & 76.24 (8.18) & 32.05 (6.24) & 70.62 (9.65) & 70.09 (10.95) \\
 & 0.05 & 71.06 (7.31) & 81.55 (5.92) & 72.73 (8.16) & 74.31 (7.75) & 84.42 (7.85) & \textbf{88.54} (6.52) & 78.38 (9.14) & 79.48 (7.88) & 34.97 (7.18) & 73.8 (10.51) & 73.49 (10.33) \\
 & 0.10 & 81.07 (8.72) & 80.23 (3.19) & 82.76 (7.99) & 84.78 (7.04) & 89.34 (4.42) & \textbf{91.61} (2.74) & 82.34 (7.84) & 85.9 (4.5) & 46.67 (19.38) & 85.27 (9.58) & 85.34 (9.78) \\
 & 0.20 & 87.58 (4.63) & 80.12 (2.62) & 88.2 (4.11) & 89.09 (3.37) & 90.57 (2.57) & 91.68 (2.13) & 84.25 (6.29) & 89.39 (2.82) & 64.05 (12.29) & 91.9 (2.9) & \textbf{92.04} (2.96) \\
 & 0.30 & 90.12 (3.0) & 79.69 (2.11) & 90.21 (2.97) & 90.36 (2.42) & 91.11 (2.25) & 92.06 (2.28) & 86.19 (5.72) & 91.26 (2.54) & 58.99 (15.82) & 92.94 (3.0) & \textbf{93.15} (3.05) \\
 & 0.50 & 91.37 (3.23) & 79.44 (2.37) & 91.36 (3.01) & 91.41 (3.03) & 91.4 (3.23) & 92.07 (3.09) & 88.52 (4.53) & 92.29 (2.53) & 56.39 (16.53) & 94.03 (2.99) & \textbf{94.31} (2.85) \\
 & 0.80 & 92.0 (4.42) & 78.17 (4.4) & 91.67 (4.3) & 91.97 (4.16) & 92.07 (4.36) & 92.5 (4.14) & 88.97 (4.89) & 92.97 (4.37) & 89.7 (4.88) & 95.5 (3.71) & \textbf{95.73} (3.45) \\
\cline{1-13}
\multirow[t]{7}{*}{4} & 0.02 & 61.86 (13.18) & 82.17 (8.39) & 70.41 (6.98) & 71.54 (7.29) & 82.59 (8.64) & \textbf{86.56} (8.18) & 77.13 (9.09) & 76.24 (8.18) & 32.05 (6.24) & 70.98 (9.86) & 70.96 (9.8) \\
 & 0.05 & 71.06 (7.31) & 81.55 (5.92) & 72.73 (8.16) & 74.31 (7.75) & 84.42 (7.85) & \textbf{88.54} (6.52) & 78.38 (9.14) & 79.48 (7.88) & 34.97 (7.18) & 74.24 (10.64) & 74.03 (10.44) \\
 & 0.10 & 81.07 (8.72) & 80.23 (3.19) & 82.76 (7.99) & 84.78 (7.04) & 89.34 (4.42) & \textbf{91.61} (2.74) & 82.34 (7.84) & 85.9 (4.5) & 46.67 (19.38) & 86.04 (9.08) & 86.18 (9.05) \\
 & 0.20 & 87.58 (4.63) & 80.12 (2.62) & 88.2 (4.11) & 89.09 (3.37) & 90.57 (2.57) & 91.68 (2.13) & 84.25 (6.29) & 89.39 (2.82) & 64.05 (12.29) & 92.09 (2.92) & \textbf{92.18} (2.91) \\
 & 0.30 & 90.12 (3.0) & 79.69 (2.11) & 90.21 (2.97) & 90.36 (2.42) & 91.11 (2.25) & 92.06 (2.28) & 86.19 (5.72) & 91.26 (2.54) & 58.99 (15.82) & 93.04 (3.01) & \textbf{93.19} (3.04) \\
 & 0.50 & 91.37 (3.23) & 79.44 (2.37) & 91.36 (3.01) & 91.41 (3.03) & 91.4 (3.23) & 92.07 (3.09) & 88.52 (4.53) & 92.29 (2.53) & 56.39 (16.53) & 94.08 (2.97) & \textbf{94.31} (2.9) \\
 & 0.80 & 92.0 (4.42) & 78.17 (4.4) & 91.67 (4.3) & 91.97 (4.16) & 92.07 (4.36) & 92.5 (4.14) & 88.97 (4.89) & 92.97 (4.37) & 89.7 (4.88) & 95.53 (3.74) & \textbf{95.8} (3.47) \\
\cline{1-13}
\end{tabular}
}
\end{table*}

\begin{table*}[!h]
\vspace{-3mm}
\caption{Accuracy of various SSL methods on the digits dataset. We pick $\eps^{(k)} = 100^{2-k}$ for $1 \leq k \leq 5$. Proposed methods are in bold.}
\vspace{-3mm}
\label{tab:digitsQ:full}
\vskip 0.15in
\small
\resizebox{0.975\textwidth}{!}{
\begin{tabular}{llllllllll}
\toprule
$q$ &   rate   &   Laplace  &   Poisson  &  \textbf{IP-QC}   &  \textbf{CP-QC}  &    \textbf{IP-SC} &   \textbf{CP-SC}  &   \textbf{IP-CC} &    \textbf{CP-CC}\\
\midrule
\multirow[t]{7}{*}{2} & 0.02 & 11.96 (4.03) & \textbf{78.81} (2.98) & 24.19 (8.92) & 15.81 (5.5) & 21.91 (8.44) & 14.88 (5.48) & 19.55 (7.77) & 13.44 (5.08) \\
 & 0.05 & 19.35 (6.62) & \textbf{84.87} (1.63) & 62.35 (7.28) & 34.88 (8.75) & 59.01 (7.52) & 29.09 (9.18) & 53.38 (7.77) & 23.86 (7.5) \\
 & 0.10 & 42.87 (7.4) & \textbf{87.13} (1.12) & 81.84 (3.6) & 58.25 (7.34) & 80.96 (3.81) & 53.24 (6.98) & 79.07 (4.3) & 49.71 (6.62) \\
 & 0.20 & 68.58 (4.38) & 87.61 (0.94) & \textbf{89.21} (1.5) & 84.77 (2.24) & 89.11 (1.48) & 81.91 (2.7) & 88.86 (1.44) & 78.27 (3.3) \\
 & 0.30 & 82.1 (2.02) & 87.58 (0.74) & \textbf{91.78} (0.86) & 90.13 (1.08) & \textbf{91.78} (0.88) & 88.85 (1.2) & 91.74 (0.89) & 87.13 (1.3) \\
 & 0.50 & 88.3 (1.11) & 87.85 (0.78) & 93.87 (0.72) & 92.78 (0.87) & 93.87 (0.72) & 92.01 (0.87) & \textbf{93.91} (0.7) & 91.08 (0.93) \\
 & 0.80 & 89.73 (1.43) & 87.88 (1.42) & \textbf{94.96} (0.98) & 93.64 (1.16) & 94.94 (0.97) & 92.86 (1.22) & 94.9 (0.96) & 91.89 (1.22) \\
\cline{1-10}
\multirow[t]{7}{*}{3} & 0.02 & 11.96 (4.03) & \textbf{78.81} (2.98) & 22.57 (9.14) & 15.02 (5.8) & 20.91 (8.57) & 15.46 (5.4) & 18.96 (7.82) & 13.79 (5.43) \\
 & 0.05 & 19.35 (6.62) & \textbf{84.87} (1.63) & 61.81 (7.17) & 37.24 (7.55) & 58.56 (7.5) & 31.54 (9.11) & 52.93 (7.74) & 24.84 (7.85) \\
 & 0.10 & 42.87 (7.4) & \textbf{87.13} (1.12) & 81.57 (3.51) & 60.04 (7.23) & 80.78 (3.71) & 54.66 (7.07) & 78.93 (4.26) & 50.4 (6.7) \\
 & 0.20 & 68.58 (4.38) & 87.61 (0.94) & \textbf{89.12} (1.5) & 85.79 (2.17) & 89.06 (1.5) & 82.83 (2.57) & 88.82 (1.47) & 79.01 (3.19) \\
 & 0.30 & 82.1 (2.02) & 87.58 (0.74) & 91.74 (0.87) & 90.98 (1.02) & \textbf{91.75} (0.87) & 89.44 (1.15) & 91.73 (0.88) & 87.57 (1.28) \\
 & 0.50 & 88.3 (1.11) & 87.85 (0.78) & 93.87 (0.71) & 93.39 (0.81) & \textbf{93.89} (0.7) & 92.45 (0.86) & \textbf{93.89} (0.71) & 91.37 (0.92) \\
 & 0.80 & 89.73 (1.43) & 87.88 (1.42) & \textbf{94.98} (0.99) & 94.33 (1.13) & 94.96 (0.98) & 93.3 (1.2) & 94.91 (0.96) & 92.18 (1.21) \\
\cline{1-10}
\multirow[t]{7}{*}{4} & 0.02 & 11.96 (4.03) & \textbf{78.81} (2.98) & 22.57 (9.14) & 15.03 (5.82) & 20.91 (8.57) & 15.46 (5.41) & 18.96 (7.82) & 13.79 (5.43) \\
 & 0.05 & 19.35 (6.62) & \textbf{84.87} (1.63) & 61.81 (7.17) & 37.29 (7.55) & 58.56 (7.5) & 31.56 (9.11) & 52.93 (7.74) & 24.85 (7.85) \\
 & 0.10 & 42.87 (7.4) & \textbf{87.13} (1.12) & 81.57 (3.51) & 60.09 (7.24) & 80.78 (3.71) & 54.68 (7.06) & 78.93 (4.26) & 50.41 (6.71) \\
 & 0.20 & 68.58 (4.38) & 87.61 (0.94) & \textbf{89.12} (1.5) & 85.83 (2.18) & 89.06 (1.5) & 82.84 (2.57) & 88.82 (1.47) & 79.01 (3.19) \\
 & 0.30 & 82.1 (2.02) & 87.58 (0.74) & 91.74 (0.87) & 91.0 (1.02) & \textbf{91.75} (0.87) & 89.45 (1.15) & 91.73 (0.88) & 87.57 (1.29) \\
 & 0.50 & 88.3 (1.11) & 87.85 (0.78) & 93.87 (0.71) & 93.4 (0.81) & \textbf{93.89} (0.7) & 92.45 (0.86) & \textbf{93.89} (0.71) & 91.38 (0.92) \\
 & 0.80 & 89.73 (1.43) & 87.88 (1.42) & \textbf{94.98} (0.99) & 94.34 (1.13) & 94.96 (0.98) & 93.3 (1.2) & 94.91 (0.96) & 92.18 (1.21) \\
\cline{1-10}
\multirow[t]{7}{*}{5} & 0.02 & 11.96 (4.03) & \textbf{78.81} (2.98) & 22.57 (9.14) & 15.03 (5.82) & 20.91 (8.57) & 15.46 (5.41) & 18.96 (7.82) & 13.79 (5.43) \\
 & 0.05 & 19.35 (6.62) & \textbf{84.87} (1.63) & 61.81 (7.17) & 37.29 (7.55) & 58.56 (7.5) & 31.56 (9.11) & 52.93 (7.74) & 24.85 (7.85) \\
 & 0.10 & 42.87 (7.4) & \textbf{87.13} (1.12) & 81.57 (3.51) & 60.09 (7.24) & 80.78 (3.71) & 54.68 (7.06) & 78.93 (4.26) & 50.41 (6.71) \\
 & 0.20 & 68.58 (4.38) & 87.61 (0.94) & \textbf{89.12} (1.5) & 85.83 (2.18) & 89.06 (1.5) & 82.84 (2.57) & 88.82 (1.47) & 79.01 (3.19) \\
 & 0.30 & 82.1 (2.02) & 87.58 (0.74) & 91.74 (0.87) & 91.0 (1.02) & \textbf{91.75} (0.87) & 89.45 (1.15) & 91.73 (0.88) & 87.57 (1.28) \\
 & 0.50 & 88.3 (1.11) & 87.85 (0.78) & 93.87 (0.71) & 93.4 (0.81) & \textbf{93.89} (0.7) & 92.45 (0.86) & \textbf{93.89} (0.71) & 91.38 (0.92) \\
 & 0.80 & 89.73 (1.43) & 87.88 (1.42) & \textbf{94.98} (0.99) & 94.34 (1.13) & 94.96 (0.98) & 93.3 (1.2) & 94.91 (0.96) & 92.18 (1.21) \\
\cline{1-10}
\end{tabular}
}
\end{table*}

\begin{table*}[!h]
\vspace{-3mm}
\caption{Accuracy of various SSL methods on the digits dataset. We pick $\eps^{(\ell)} = 100^{2-\ell}$ for $1 \leq \ell \leq 5$. Proposed methods are in bold.}
\vspace{-3mm}
\label{tab:digitsJ:full}
\vskip 0.15in
\small
\resizebox{0.975\textwidth}{!}{
\begin{tabular}{lllllllllllll}
\toprule
$j$ & rate     &    Laplace &  Poisson   &   WNLL   &    Properly &     $p$-Lap &    RW  &     CK &     \textbf{IP-VQC (2)} &     \textbf{IP-VQC (3)}\\
\midrule
\multirow[t]{7}{*}{1} & 0.02 & 12.2 (4.75) & \textbf{79.0} (2.75) & 67.07 (6.07) & 78.29 (3.14) & 77.83 (3.23) & 30.17 (11.33) & 60.0 (4.17) & 20.58 (8.29) & 19.66 (8.71)  \\
 & 0.05 & 20.42 (7.03) & \textbf{84.61} (1.72) & 69.2 (4.38) & 83.11 (2.08) & 82.5 (2.19) & 32.0 (5.96) & 66.19 (3.73) & 53.07 (7.79) & 50.55 (8.44) \\
 & 0.10 & 41.62 (6.59) & 86.73 (1.36) & 80.73 (3.07) & \textbf{87.67} (1.45) & 87.45 (1.51) & 31.95 (5.56) & 71.98 (2.73) & 78.63 (4.42) & 77.94 (4.46) \\
 & 0.20 & 68.47 (4.79) & 87.61 (0.99) & 86.21 (1.53) & 89.04 (0.97) & 88.93 (1.0) & 40.94 (4.75) & 78.25 (1.53) & \textbf{89.19} (1.11) & 88.97 (1.1) \\
 & 0.30 & 82.17 (2.32) & 87.62 (0.8) & 88.0 (1.2) & 89.81 (0.87) & 89.74 (0.89) & 44.89 (5.34) & 82.11 (0.81) & \textbf{91.75} (0.84) & 91.67 (0.84)  \\
 & 0.50 & 88.18 (1.0) & 87.84 (0.96) & 89.04 (1.0) & 89.98 (1.0) & 89.94 (0.99) & 37.33 (2.51) & 85.67 (0.98) & \textbf{93.8} (0.87) & 93.77 (0.86) \\
 & 0.80 & 89.65 (1.49) & 87.88 (1.4) & 89.68 (1.45) & 89.97 (1.42) & 89.97 (1.41) & 33.93 (1.16) & 88.34 (1.39) & 94.86 (1.0) & \textbf{94.89} (1.02) \\
\cline{1-11}
\multirow[t]{7}{*}{2} & 0.02 & 12.2 (4.75) & \textbf{79.0} (2.75) & 67.07 (6.07) & 78.29 (3.14) & 77.83 (3.23) & 30.17 (11.33) & 60.0 (4.17) & 25.16 (9.35) & 24.25 (9.65)\\
 & 0.05 & 20.42 (7.03) & \textbf{84.61} (1.72) & 69.2 (4.38) & 83.11 (2.08) & 82.5 (2.19) & 32.0 (5.96) & 66.19 (3.73) & 62.69 (6.84) & 61.96 (6.85) \\
 & 0.10 & 41.62 (6.59) & 86.73 (1.36) & 80.73 (3.07) & \textbf{87.67} (1.45) & 87.45 (1.51) & 31.95 (5.56) & 71.98 (2.73) & 81.51 (3.66) & 81.25 (3.61)\\
 & 0.20 & 68.47 (4.79) & 87.61 (0.99) & 86.21 (1.53) & 89.04 (0.97) & 88.93 (1.0) & 40.94 (4.75) & 78.25 (1.53) & \textbf{89.49} (1.09) & 89.41 (1.1) \\
 & 0.30 & 82.17 (2.32) & 87.62 (0.8) & 88.0 (1.2) & 89.81 (0.87) & 89.74 (0.89) & 44.89 (5.34) & 82.11 (0.81) & \textbf{91.83} (0.86) & 91.79 (0.83) \\
 & 0.50 & 88.18 (1.0) & 87.84 (0.96) & 89.04 (1.0) & 89.98 (1.0) & 89.94 (0.99) & 37.33 (2.51) & 85.67 (0.98) & \textbf{93.79} (0.91) & 93.77 (0.9) \\
 & 0.80 & 89.65 (1.49) & 87.88 (1.4) & 89.68 (1.45) & 89.97 (1.42) & 89.97 (1.41) & 33.93 (1.16) & 88.34 (1.39) & 94.91 (1.01) & \textbf{94.93} (1.0) \\
\cline{1-11}
\multirow[t]{7}{*}{3} & 0.02 & 12.2 (4.75) & \textbf{79.0} (2.75) & 67.07 (6.07) & 78.29 (3.14) & 77.83 (3.23) & 30.17 (11.33) & 60.0 (4.17) & 26.92 (9.6) & 26.12 (9.88) \\
 & 0.05 & 20.42 (7.03) & \textbf{84.61} (1.72) & 69.2 (4.38) & 83.11 (2.08) & 82.5 (2.19) & 32.0 (5.96) & 66.19 (3.73) & 64.75 (6.55) & 64.28 (6.54) \\
 & 0.10 & 41.62 (6.59) & 86.73 (1.36) & 80.73 (3.07) & \textbf{87.67} (1.45) & 87.45 (1.51) & 31.95 (5.56) & 71.98 (2.73) & 81.97 (3.53) & 81.79 (3.5)  \\
 & 0.20 & 68.47 (4.79) & 87.61 (0.99) & 86.21 (1.53) & 89.04 (0.97) & 88.93 (1.0) & 40.94 (4.75) & 78.25 (1.53) & \textbf{89.52} (1.07) & 89.44 (1.1) \\
 & 0.30 & 82.17 (2.32) & 87.62 (0.8) & 88.0 (1.2) & 89.81 (0.87) & 89.74 (0.89) & 44.89 (5.34) & 82.11 (0.81) & \textbf{91.8} (0.86) & 91.78 (0.84) \\
 & 0.50 & 88.18 (1.0) & 87.84 (0.96) & 89.04 (1.0) & 89.98 (1.0) & 89.94 (0.99) & 37.33 (2.51) & 85.67 (0.98) & \textbf{93.78} (0.91) & 93.77 (0.92)  \\
 & 0.80 & 89.65 (1.49) & 87.88 (1.4) & 89.68 (1.45) & 89.97 (1.42) & 89.97 (1.41) & 33.93 (1.16) & 88.34 (1.39) & 94.91 (1.0) & \textbf{94.94} (0.98) \\
\cline{1-11}
\multirow[t]{7}{*}{4} & 0.02 & 12.2 (4.75) & \textbf{79.0} (2.75) & 67.07 (6.07) & 78.29 (3.14) & 77.83 (3.23) & 30.17 (11.33) & 60.0 (4.17) & 27.69 (9.72) & 26.92 (9.95)  \\
 & 0.05 & 20.42 (7.03) & \textbf{84.61} (1.72) & 69.2 (4.38) & 83.11 (2.08) & 82.5 (2.19) & 32.0 (5.96) & 66.19 (3.73) & 65.51 (6.41) & 65.15 (6.38) \\
 & 0.10 & 41.62 (6.59) & 86.73 (1.36) & 80.73 (3.07) & \textbf{87.67} (1.45) & 87.45 (1.51) & 31.95 (5.56) & 71.98 (2.73) & 82.13 (3.47) & 81.95 (3.45) \\
 & 0.20 & 68.47 (4.79) & 87.61 (0.99) & 86.21 (1.53) & 89.04 (0.97) & 88.93 (1.0) & 40.94 (4.75) & 78.25 (1.53) & \textbf{89.52} (1.08) & 89.46 (1.1)  \\
 & 0.30 & 82.17 (2.32) & 87.62 (0.8) & 88.0 (1.2) & 89.81 (0.87) & 89.74 (0.89) & 44.89 (5.34) & 82.11 (0.81) & \textbf{91.79} (0.86) & 91.77 (0.83) \\
 & 0.50 & 88.18 (1.0) & 87.84 (0.96) & 89.04 (1.0) & 89.98 (1.0) & 89.94 (0.99) & 37.33 (2.51) & 85.67 (0.98) & \textbf{93.78} (0.92) & \textbf{93.78} (0.92) \\
 & 0.80 & 89.65 (1.49) & 87.88 (1.4) & 89.68 (1.45) & 89.97 (1.42) & 89.97 (1.41) & 33.93 (1.16) & 88.34 (1.39) & 94.92 (1.0) & \textbf{94.94} (1.0) \\
\cline{1-11}
\bottomrule
\end{tabular}
}
\end{table*}

\begin{table*}[!h]
\vspace{-3mm}
\caption{Accuracy of various SSL methods on the Salinas A dataset. We pick $k^{(1)} = 50$, $k^{(2)} = 30$, $k^{(3)} = 20$ and $k^{(4)} = 10$. Proposed methods are in bold.}
\vspace{-3mm}
\label{tab:salinasaQ:full}
\vskip 0.15in
\small
\resizebox{0.975\textwidth}{!}{
\begin{tabular}{llllllllll}
\toprule
$q$ &   rate   &   Laplace  &   Poisson  &  \textbf{IP-QC}   &  \textbf{CP-QC}  &    \textbf{IP-SC} &   \textbf{CP-SC}  &   \textbf{IP-CC} &    \textbf{CP-CC}\\
\midrule
\multirow[t]{7}{*}{2} & 1 & 58.08 (8.37) & 57.12 (7.32) & \textbf{60.63} (7.49) & 58.79 (7.96) & 59.53 (7.92) & 58.65 (8.04) & 58.82 (8.11) & 58.5 (8.14) \\
 & 2 & 66.85 (5.49) & 57.32 (6.44) & \textbf{67.72} (5.37) & 67.26 (5.44) & 67.23 (5.57) & 67.21 (5.47) & 66.95 (5.65) & 67.11 (5.47) \\
 & 5 & 73.46 (2.31) & 56.83 (5.31) & \textbf{73.7} (2.34) & 73.62 (2.32) & 73.66 (2.4) & 73.59 (2.28) & 73.62 (2.45) & 73.55 (2.24) \\
 & 10 & 75.86 (1.82) & 56.08 (5.31) & 76.2 (1.82) & 76.1 (1.82) & \textbf{76.24} (1.82) & 76.05 (1.83) & 76.18 (1.81) & 75.98 (1.82) \\
 & 20 & 77.61 (1.15) & 56.2 (4.25) & \textbf{78.55} (1.38) & 77.95 (1.15) & 78.35 (1.33) & 77.85 (1.15) & 78.2 (1.19) & 77.77 (1.14) \\
 & 50 & 79.6 (0.88) & 56.44 (3.93) & \textbf{80.91} (0.89) & 80.08 (0.91) & 80.72 (0.92) & 79.96 (0.89) & 80.48 (0.93) & 79.85 (0.9) \\
 & 100 & 80.86 (0.57) & 56.06 (2.98) & \textbf{82.35} (0.63) & 81.46 (0.54) & 82.14 (0.61) & 81.32 (0.54) & 81.9 (0.61) & 81.17 (0.55) \\
\cline{1-10}
\multirow[t]{7}{*}{3} & 1 & 58.08 (8.37) & 57.12 (7.32) & \textbf{60.98} (7.28) & 59.25 (7.54) & 59.73 (7.89) & 59.0 (7.85) & 58.81 (8.09) & 58.67 (8.07) \\
 & 2 & 66.85 (5.49) & 57.32 (6.44) & \textbf{67.75} (5.42) & 67.45 (5.44) & 67.26 (5.6) & 67.32 (5.46) & 66.85 (5.77) & 67.22 (5.46) \\
 & 5 & 73.46 (2.31) & 56.83 (5.31) & 73.59 (2.36) & \textbf{73.65} (2.35) & 73.61 (2.42) & 73.63 (2.34) & 73.58 (2.48) & 73.59 (2.27) \\
 & 10 & 75.86 (1.82) & 56.08 (5.31) & 76.09 (1.88) & \textbf{76.21} (1.81) & 76.15 (1.84) & 76.14 (1.83) & 76.15 (1.84) & 76.06 (1.83) \\
 & 20 & 77.61 (1.15) & 56.2 (4.25) & \textbf{78.52} (1.51) & 78.14 (1.18) & 78.42 (1.43) & 78.02 (1.17) & 78.26 (1.31) & 77.87 (1.15) \\
 & 50 & 79.6 (0.88) & 56.44 (3.93) & \textbf{80.95} (0.91) & 80.37 (0.89) & 80.83 (0.94) & 80.18 (0.9) & 80.64 (0.93) & 80.0 (0.9) \\
 & 100 & 80.86 (0.57) & 56.06 (2.98) & \textbf{82.47} (0.7) & 81.82 (0.56) & 82.33 (0.62) & 81.61 (0.56) & 82.1 (0.61) & 81.35 (0.55) \\
\cline{1-10}
\multirow[t]{7}{*}{4} & 1 & 58.08 (8.37) & 57.12 (7.32) & \textbf{60.88} (7.37) & 59.7 (7.13) & 59.7 (7.93) & 59.12 (7.71) & 58.78 (8.12) & 58.7 (8.03) \\
 & 2 & 66.85 (5.49) & 57.32 (6.44) & \textbf{67.7} (5.43) & 67.55 (5.47) & 67.21 (5.65) & 67.35 (5.49) & 66.79 (5.8) & 67.24 (5.49) \\
 & 5 & 73.46 (2.31) & 56.83 (5.31) & 73.58 (2.36) & \textbf{73.65} (2.37) & 73.6 (2.42) & 73.62 (2.36) & 73.57 (2.48) & 73.6 (2.26) \\
 & 10 & 75.86 (1.82) & 56.08 (5.31) & 76.07 (1.89) & \textbf{76.31} (1.8) & 76.15 (1.85) & 76.18 (1.81) & 76.14 (1.85) & 76.1 (1.83) \\
 & 20 & 77.61 (1.15) & 56.2 (4.25) & \textbf{78.52} (1.51) & 78.23 (1.18) & 78.42 (1.44) & 78.09 (1.18) & 78.26 (1.33) & 77.91 (1.15) \\
 & 50 & 79.6 (0.88) & 56.44 (3.93) & \textbf{80.95} (0.93) & 80.53 (0.87) & 80.84 (0.93) & 80.31 (0.91) & 80.65 (0.94) & 80.06 (0.91) \\
 & 100 & 80.86 (0.57) & 56.06 (2.98) & \textbf{82.45} (0.7) & 82.03 (0.59) & 82.34 (0.63) & 81.77 (0.58) & 82.11 (0.61) & 81.45 (0.54) \\
\cline{1-10}
\end{tabular}
}
\end{table*}

\begin{table*}[h]
\vspace{-3mm}
\caption{Accuracy of various SSL methods on the Salinas A dataset. We pick $k^{(1)} = 50$, $k^{(2)} = 30$, $k^{(3)} = 20$ and $k^{(4)} = 10$. Proposed methods are in bold.}
\vspace{-3mm}
\label{tab:salinasaJ:full}
\vskip 0.15in
\small
\resizebox{0.975\textwidth}{!}{
\begin{tabular}{lllllllllllll}
\toprule
$j$ & rate     &    Laplace &  Poisson   &   WNLL   &    Properly &     $p$-Lap &    RW  &     CK &     \textbf{IP-VQC (2)} &     \textbf{IP-VQC (3)}\\
\midrule
\multirow[t]{7}{*}{1} & 1 & 59.28 (8.54) & 58.31 (6.46) & \textbf{64.13} (6.05) & 64.1 (6.04) & 60.26 (5.44) & 63.1 (5.14) & 28.5 (5.98) & 59.93 (8.28) & 60.38 (7.94) \\
 & 2 & 66.82 (5.35) & 56.76 (7.03) & \textbf{67.54} (5.04) & 67.42 (5.1) & 64.65 (5.13) & 66.94 (4.76) & 33.05 (6.65) & 67.04 (5.19) & 67.19 (5.18) \\
 & 5 & 73.74 (2.71) & 55.56 (5.89) & 73.42 (3.07) & 73.14 (3.15) & 72.26 (3.07) & 73.7 (2.6) & 46.37 (5.32) & \textbf{73.84} (2.81) & 73.78 (2.85) \\
 & 10 & 75.88 (1.67) & 56.49 (5.18) & 75.81 (1.73) & 75.32 (1.81) & 74.8 (1.85) & 75.98 (1.73) & 55.54 (4.27) & \textbf{76.15} (1.72) & 76.12 (1.87) \\
 & 20 & 77.44 (1.37) & 55.99 (4.62) & \textbf{78.23} (1.4) & 77.56 (1.58) & 77.51 (1.61) & 77.99 (1.22) & 66.04 (3.1) & 78.1 (1.38) & 78.18 (1.35) \\
 & 50 & 79.58 (0.94) & 56.69 (4.19) & \textbf{80.87} (0.9) & 80.21 (0.93) & 80.36 (0.88) & 79.1 (0.85) & 75.21 (1.76) & 80.45 (0.96) & 80.77 (0.98) \\
 & 100 & 80.96 (0.73) & 55.83 (2.75) & 82.1 (0.63) & 81.88 (0.66) & 82.12 (0.61) & 79.27 (0.7) & 79.82 (1.02) & 81.94 (0.71) & \textbf{82.34} (0.73) \\
\cline{1-11}
\multirow[t]{7}{*}{2} & 1 & 59.28 (8.54) & 58.31 (6.46) & \textbf{64.13} (6.05) & 64.1 (6.04) & 60.26 (5.44) & 63.1 (5.14) & 28.5 (5.98) & 61.88 (7.11) & 62.23 (6.78) \\
 & 2 & 66.82 (5.35) & 56.76 (7.03) & 67.54 (5.04) & 67.42 (5.1) & 64.65 (5.13) & 66.94 (4.76) & 33.05 (6.65) & 67.53 (5.07) & \textbf{67.68} (5.12) \\
 & 5 & 73.74 (2.71) & 55.56 (5.89) & 73.42 (3.07) & 73.14 (3.15) & 72.26 (3.07) & 73.7 (2.6) & 46.37 (5.32) & \textbf{73.94} (2.84) & 73.86 (2.85) \\
 & 10 & 75.88 (1.67) & 56.49 (5.18) & 75.81 (1.73) & 75.32 (1.81) & 74.8 (1.85) & 75.98 (1.73) & 55.54 (4.27) & \textbf{76.23} (1.76) & 76.14 (1.81) \\
 & 20 & 77.44 (1.37) & 55.99 (4.62) & 78.23 (1.4) & 77.56 (1.58) & 77.51 (1.61) & 77.99 (1.22) & 66.04 (3.1) & 78.34 (1.31) & \textbf{78.4} (1.37) \\
 & 50 & 79.58 (0.94) & 56.69 (4.19) & 80.87 (0.9) & 80.21 (0.93) & 80.36 (0.88) & 79.1 (0.85) & 75.21 (1.76) & 80.87 (0.98) & \textbf{80.98} (1.01) \\
 & 100 & 80.96 (0.73) & 55.83 (2.75) & 82.1 (0.63) & 81.88 (0.66) & 82.12 (0.61) & 79.27 (0.7) & 79.82 (1.02) & 82.41 (0.72) & \textbf{82.53} (0.75) \\
\cline{1-11}
\end{tabular}
}
\end{table*}

\begin{table*}[h]
\vspace{-3mm}
\caption{Accuracy of various SSL methods on the MNIST dataset. We pick $k^{(\ell)} = 30 - (\ell-1)\cdot 10$ for $1 \leq \ell \leq 3$. Proposed methods are in bold.}
\vspace{-3mm}
\label{tab:mnistQ:full}
\vskip 0.15in
\small
\resizebox{0.975\textwidth}{!}{
\begin{tabular}{llllllllll}
\toprule
$q$ &   rate   &   Laplace  &   Poisson  &  \textbf{IP-QC}   &  \textbf{CP-QC}  &    \textbf{IP-SC} &   \textbf{CP-SC}  &   \textbf{IP-CC} &    \textbf{CP-CC}\\
\midrule
\multirow[t]{7}{*}{2} & 0.02 & 97.07 (0.07) & 96.8 (0.06) & \textbf{97.33} (0.07) & 97.18 (0.07) & 97.24 (0.07) & 97.16 (0.07) & 97.18 (0.07) & 97.14 (0.07) \\
 & 0.05 & 97.37 (0.05) & 96.85 (0.04) & \textbf{97.62} (0.06) & 97.49 (0.05) & 97.55 (0.05) & 97.46 (0.05) & 97.49 (0.05) & 97.44 (0.05) \\
 & 0.10 & 97.58 (0.04) & 96.85 (0.04) & \textbf{97.8} (0.04) & 97.69 (0.04) & 97.74 (0.04) & 97.67 (0.04) & 97.69 (0.04) & 97.64 (0.04) \\
 & 0.20 & 97.81 (0.04) & 96.87 (0.04) & \textbf{98.0} (0.04) & 97.91 (0.04) & 97.95 (0.04) & 97.89 (0.04) & 97.91 (0.04) & 97.87 (0.04) \\
 & 0.30 & 97.92 (0.04) & 96.87 (0.05) & \textbf{98.1} (0.04) & 98.01 (0.05) & 98.06 (0.04) & 98.0 (0.04) & 98.02 (0.04) & 97.98 (0.04) \\
 & 0.50 & 98.08 (0.06) & 96.87 (0.08) & \textbf{98.24} (0.06) & 98.16 (0.06) & 98.21 (0.06) & 98.15 (0.06) & 98.17 (0.06) & 98.13 (0.06) \\
 & 0.80 & 98.25 (0.09) & 96.9 (0.12) & \textbf{98.39} (0.09) & 98.32 (0.09) & 98.36 (0.09) & 98.31 (0.09) & 98.33 (0.09) & 98.29 (0.09) \\
\cline{1-10}
\multirow[t]{7}{*}{3} & 0.02 & 97.07 (0.07) & 96.8 (0.06) & \textbf{97.36} (0.07) & 97.29 (0.08) & 97.26 (0.07) & 97.24 (0.07) & 97.19 (0.07) & 97.19 (0.07) \\
 & 0.05 & 97.37 (0.05) & 96.85 (0.04) & \textbf{97.64} (0.06) & 97.59 (0.06) & 97.56 (0.05) & 97.54 (0.06) & 97.5 (0.05) & 97.48 (0.06) \\
 & 0.10 & 97.58 (0.04) & 96.85 (0.04) & \textbf{97.82} (0.04) & 97.77 (0.04) & 97.76 (0.04) & 97.74 (0.04) & 97.7 (0.04) & 97.69 (0.04) \\
 & 0.20 & 97.81 (0.04) & 96.87 (0.04) & \textbf{98.01} (0.04) & 97.98 (0.04) & 97.97 (0.04) & 97.95 (0.04) & 97.92 (0.04) & 97.91 (0.04) \\
 & 0.30 & 97.92 (0.04) & 96.87 (0.05) & \textbf{98.1} (0.04) & 98.07 (0.04) & 98.07 (0.04) & 98.05 (0.04) & 98.02 (0.04) & 98.02 (0.05) \\
 & 0.50 & 98.08 (0.06) & 96.87 (0.08) & \textbf{98.24} (0.06) & 98.21 (0.06) & 98.21 (0.06) & 98.19 (0.06) & 98.18 (0.06) & 98.17 (0.06) \\
 & 0.80 & 98.25 (0.09) & 96.9 (0.12) & \textbf{98.38} (0.09) & 98.36 (0.1) & 98.37 (0.09) & 98.34 (0.09) & 98.34 (0.09) & 98.32 (0.09) \\
\cline{1-10}
\end{tabular}
}
\end{table*}

\begin{table*}[h]
\vspace{-3mm}
\caption{Accuracy of various SSL methods on the MNIST dataset. We pick $k^{(\ell)} = 30 - (\ell-1)\cdot 10$ for $1 \leq \ell \leq 3$. Proposed methods are in bold.}
\vspace{-3mm}
\label{tab:MNISTJ:full}
\vskip 0.15in
\small
\resizebox{0.975\textwidth}{!}{
\begin{tabular}{lllllllllllll}
\toprule
$j$ & rate     &    Laplace &  Poisson   &   WNLL   &    Properly &     $p$-Lap &    RW  &     CK &     \textbf{IP-VQC (2)} &     \textbf{IP-VQC} (3)\\
\midrule
\multirow[t]{7}{*}{1} & 0.02 & 97.06 (0.09) & 96.79 (0.07) & 96.55 (0.09) & 94.76 (0.17) & 94.48 (0.17) & 97.15 (0.1) & 95.34 (0.16) & 97.17 (0.09) & \textbf{97.2} (0.09) \\
 & 0.05 & 97.37 (0.06) & 96.85 (0.05) & 97.2 (0.05) & 94.49 (0.12) & 95.49 (0.1) & 97.37 (0.07) & 96.46 (0.08) & 97.49 (0.05) & \textbf{97.52} (0.05) \\
 & 0.10 & 97.59 (0.04) & 96.86 (0.04) & 97.58 (0.05) & 95.59 (0.08) & 96.88 (0.06) & 97.45 (0.05) & 97.18 (0.06) & 97.69 (0.04) & \textbf{97.73} (0.04) \\
 & 0.20 & 97.8 (0.04) & 96.87 (0.04) & 97.86 (0.04) & 97.08 (0.05) & 97.71 (0.04) & 97.5 (0.05) & 97.68 (0.04) & 97.9 (0.04) & \textbf{97.93} (0.04) \\
 & 0.30 & 97.92 (0.05) & 96.87 (0.05) & 97.98 (0.05) & 97.61 (0.06) & 97.88 (0.05) & 97.51 (0.05) & 97.88 (0.05) & 98.02 (0.05) & \textbf{98.04} (0.05) \\
 & 0.50 & 98.08 (0.06) & 96.86 (0.06) & 98.11 (0.06) & 98.01 (0.06) & 98.07 (0.06) & 97.51 (0.06) & 98.09 (0.06) & 98.17 (0.06) & \textbf{98.19} (0.06) \\
 & 0.80 & 98.22 (0.1) & 96.87 (0.14) & 98.23 (0.1) & 98.22 (0.11) & 98.23 (0.1) & 97.52 (0.13) & 98.24 (0.11) & 98.31 (0.11) & \textbf{98.33} (0.11) \\
\cline{1-11}
\multirow[t]{7}{*}{2} & 0.02 & 97.06 (0.09) & 96.79 (0.07) & 96.55 (0.09) & 94.76 (0.17) & 94.48 (0.17) & 97.15 (0.1) & 95.34 (0.16) & 97.31 (0.09) & \textbf{97.34} (0.09) \\
 & 0.05 & 97.37 (0.06) & 96.85 (0.05) & 97.2 (0.05) & 94.49 (0.12) & 95.49 (0.1) & 97.37 (0.07) & 96.46 (0.08) & 97.62 (0.05) & \textbf{97.64} (0.05) \\
 & 0.10 & 97.59 (0.04) & 96.86 (0.04) & 97.58 (0.05) & 95.59 (0.08) & 96.88 (0.06) & 97.45 (0.05) & 97.18 (0.06) & 97.8 (0.04) & \textbf{97.82} (0.04) \\
 & 0.20 & 97.8 (0.04) & 96.87 (0.04) & 97.86 (0.04) & 97.08 (0.05) & 97.71 (0.04) & 97.5 (0.05) & 97.68 (0.04) & 97.99 (0.04) & \textbf{98.0} (0.04) \\
 & 0.30 & 97.92 (0.05) & 96.87 (0.05) & 97.98 (0.05) & 97.61 (0.06) & 97.88 (0.05) & 97.51 (0.05) & 97.88 (0.05) & \textbf{98.1} (0.05) & \textbf{98.1} (0.05) \\
 & 0.50 & 98.08 (0.06) & 96.86 (0.06) & 98.11 (0.06) & 98.01 (0.06) & 98.07 (0.06) & 97.51 (0.06) & 98.09 (0.06) & \textbf{98.24} (0.06) & \textbf{98.24} (0.05) \\
 & 0.80 & 98.22 (0.1) & 96.87 (0.14) & 98.23 (0.1) & 98.22 (0.11) & 98.23 (0.1) & 97.52 (0.13) & 98.24 (0.11) & \textbf{98.37} (0.11) & \textbf{98.37} (0.11) \\
\cline{1-11}
\end{tabular}
}
\end{table*}

\end{document}